\begin{document}

\newif\ifappendix
\appendixfalse

%

%

\twocolumn[

\aistatstitle{Magnetic Manifold Hamiltonian Monte Carlo}

\aistatsauthor{ James A. Brofos \And Roy R. Lederman}

\aistatsaddress{ Yale University \And  Yale University} ]

\begin{abstract}
Markov chain Monte Carlo (MCMC) algorithms offer various strategies for sampling; the Hamiltonian Monte Carlo (HMC) family of samplers are MCMC algorithms which often exhibit improved mixing properties. The recently introduced magnetic HMC, a generalization of HMC motivated by the physics of particles influenced by magnetic field forces, has been demonstrated to improve the performance of HMC. In many applications, one wishes to sample from a distribution restricted to a constrained set, often manifested as an embedded manifold (for example, the surface of a sphere). We introduce magnetic manifold HMC, an HMC algorithm on embedded manifolds motivated by the physics of particles constrained to a manifold and moving under magnetic field forces. We discuss the theoretical properties of magnetic Hamiltonian dynamics on manifolds, and introduce a reversible and symplectic integrator for the HMC updates. We demonstrate that magnetic manifold HMC produces favorable sampling behaviors relative to the canonical variant of manifold-constrained HMC.
\end{abstract}

\section{INTRODUCTION}

Markov chain Monte Carlo (MCMC) is an important class of inference algorithm which has revolutionized inference in Bayesian statistical models. Originally developed by physicists, MCMC owes much to physical inspiration, including two popular techniques for Bayesian inference: the Metropolis-adjusted Langevin diffusion \citep{langevin-roberts-stramer} and Hamiltonian Monte Carlo (HMC) \citep{Duane1987216-1}. These methods may be straightforwardly applied to sample from differentiable densities on Euclidean spaces. For the history of MCMC, see \citep{mcmcrev,Robert_2011}.

Our purpose is to expand on the HMC literature for non-Euclidean spaces by continuing to draw on physics for inspiration. Examples of manifolds of interest that may be equipped with densities include the sphere, tori, the special orthogonal group ($n$-dimensional rotation matrices), and the Stiefel manifold ($n\times m$ matrices with orthogonal columns). Important contributions in these directions include \cite{Byrne_2013} and \cite{pmlr-v22-brubaker12} which develop symmetric and volume-preserving integrators based on closed-form geodesics and the generalized leapfrog algorithm, respectively.

However both of these methods, and virtually all HMC procedures besides, are based on canonical formalism of Hamiltonian dynamics from symplectic geometry. A notable exception is \cite{pmlr-v70-tripuraneni17a} which develops ``magnetic HMC'' for Euclidean spaces; magnetic HMC considers Markov chain transitions for Hamiltonian dynamics with magnetic field effects. In this work, we examine the foundations of Hamiltonian dynamics from the perspective of symplectic geometry on embedded manifolds. We propose a variant of Hamiltonian dynamics which does not conform to the canonical formalism. Instead, motion generated by these dynamics corresponds to the motion of a particle undergoing potential, magnetic field, and manifold constraint forces simultaneously. Using these dynamics as a transition mechanism, we formulate a method called magnetic manifold HMC. Although the underlying dynamics have a physical interpretation, an understanding of the physics is not critical for understanding magnetic manifold HMC as a sampler. {\it In our experimental evaluation, we show that magnetic manifold HMC produces better sampling behaviors in manifold-constrained inference tasks.} 
Magnetic manifold HMC has a degree of freedom in the choice of a magnetic structure. our experiments suggest that different choices of magnetic structures tend to recover different modes; therefore, multiple runs with different magnetic structures can be used to improve not only the local sampling properties of HMC, but also the exploration of different modalities of the posterior.

The outline of this paper is as follows. In \cref{sec:preliminaries} we examine key concepts such as symplecticness of a map, numerical integration, and Hamiltonian mechanics on embedded manifolds. In \cref{sec:hmc-on-manifold} we review the theory of Hamiltonian dynamics for use in a MCMC procedure for random variables that are constrained to a manifold. \Cref{sec:magnetic-manifold-hmc} introduces the magnetic manifold HMC sampler; we prove magnetic Hamiltonian mechanics on an embedded manifold conserve energy and the symplectic structure, and we introduce a symmetric and symplectic numerical integrator for magnetic dynamics on manifolds. In \cref{sec:experiments} we analyze the magnetic manifold HMC algorithm on inference tasks. \Cref{sec:conclusion} summarizes our contributions.

\section{PRELIMINARIES}\label{sec:preliminaries}

This section contains preliminary material to understand the majority of our paper. Topics include a construction of Hamiltonian mechanics, techniques and notions from embedded manifolds, and methods of numerical integration. The proofs in \cref{sec:magnetic-manifold-hmc} require methods from differential geometry; preliminary material for these are in \cref{app:geometry-preliminaries}. Throughout, we denote the $m\times m$ identity matrix by $\text{Id}_m$ and the $m\times m$ zero matrix by $\mathbf{0}_m$. The set of skew-symmetric $m\times m$ matrices is denoted $\text{Skew}(m)$.

\subsection{Embedded Manifolds}

In many cases, a manifold $M$ can be embedded in a Euclidean space $\R^m$ as the preimage of a constraint function $g : \R^m \to \R^k$ on the level set where $g$ takes the value zero: $M \defeq \set{q \in\R^m ~:~ g(q) = 0}$.
We denote the Jacobian of $g$ at the point $q\in M$ by $G(q)$. For any point $q\in M$, $G(q)$ is a $k \times m$ matrix. We assume that $G(q)$ is full-rank for any $q\in M$.
Many manifolds of interest may be written in this way such as the sphere, the special orthogonal group, the Stiefel manifold, and tori, among others. We define several important concepts related to embedded manifolds.

\begin{definition}[Tangent Space]
Let $q\in M$. The tangent space at $q$, denoted $\mathrm{T}_qM$, is the set of vectors satisfying,
\begin{align}
    \mathrm{T}_qM \defeq \set{\xi \in\R^m : G(q)\xi = 0}.
\end{align}
\end{definition}
where $G$ is the Jacobian of the constraint function $g$.
\begin{definition}[Cotangent Space]\label{def:cotangent-space}
The cotangent space at $q$, denoted $\mathrm{T}^*_qM$, is the set of vectors,
\begin{align}
    \mathrm{T}^*_qM \defeq \set{p \in\R^m : G(q)\nabla_p H(q, p) = 0},
\end{align}
where $H : \R^m\times\R^m \to\R$ is a smooth function. 
\end{definition}
The dependence of the cotangent space on the function $H$ is suppressed by convention. The tangent space is a vector space. When $G(q)\nabla_p H(q, p)$ is a linear function of $p$, the cotangent space is also a vector space.
\begin{definition}[Cotangent Bundle]
The set of vectors,
\begin{align}
    \mathrm{T}^*M \defeq \set{(q, p)\in\R^m\times\R^m ~:~ q\in M ~\text{and}~ p\in\mathrm{T}^*_qM}
\end{align}
is called the cotangent bundle.
\end{definition}
\begin{definition}\label{def:cotangent-embedding}
The embedding of $\mathrm{T}^*M$ in $\R^{2m}$ is defined to be the set of vectors
\begin{align}
    \set{(q,p)\in\R^{2m} : g(q) = 0 ~\text{and}~ G(q)\nabla_pH(q, p)=0}.
\end{align}
\end{definition}

\begin{definition}[Linear Maps between Tangent Spaces]
Let $M$ be a manifold. Let $\Phi : M\to M$ be a smooth function. Then $\mathrm{T}_q \Phi : \mathrm{T}_q M \to \mathrm{T}_{\Phi(q)} M$ is the linear mapping obtained by differentiating $\Phi$ at $q$. We use the notation $(\mathrm{T}_q \Phi)u$ to represent the linear map applied to $u$ yielding a vector in $\mathrm{T}_{\Phi(q)}M$. When $M$ is embedded in Euclidean space, $(\mathrm{T}_q\Phi) u = \nabla\Phi(q)^\top u$.
\end{definition}
\begin{definition}[Pullback]\label{def:pullback}
Given a map $\Omega : \mathrm{T}_q M\times \mathrm{T}_q M\to \R$, its pullback by a smooth function $\Phi : M\to M$ is the map $\Phi^*\Omega$ defined by $(\Phi^*\Omega)(u, v) \defeq \Omega((\mathrm{T}_q \Phi) u, (\mathrm{T}_q \Phi) v)$ where $u, v \in \mathrm{T}_qM$.
\end{definition}

\subsection{Hamiltonian Mechanics}

Hamiltonian mechanics are classically formulated as differential equations on the cotangent bundle of a smooth manifold $M$. Formally, given a manifold $M$, Hamiltonian mechanics give the time evolution of a point $(q, p)\in \mathrm{T}^*M$, often called phase-space in physics wherein $p$ is called the momentum. Recall that $\mathrm{T}^*M$ is an embedded manifold from \cref{def:cotangent-embedding}.

Our construction of Hamiltonian mechanics requires the specification of an object called the symplectic structure. One formulation of the symplectic structure uses a matrix associated with it.
Let $\mathbb{J}\in \text{Skew}(2m)$ be an invertible, skew-symmetric matrix.
Let $u,v\in\mathrm{T}_{(q, p)}\mathrm{T}^*M$ (i.e., two vectors, each in the tangent space to the cotangent space $\mathrm{T}^*M$, which is a manifold) with $u=(u_1,\ldots, u_{2m})$ and $v=(v_1,\ldots,v_{2m})$.
\begin{definition}[Symplectic Structure]\label{def:symplectic-structure}
The skew-symmetric, bilinear map $\Omega : \mathrm{T}_{(q, p)}\mathrm{T}^*M \times \mathrm{T}_{(q, p)}\mathrm{T}^*M \to \R$ defined by $\Omega(u, v) = u^\top \mathbb{J} v$ is called a symplectic structure on $\mathrm{T}^*M$ with matrix $\mathbb{J}$.
\end{definition}

\begin{definition}[Symplectic Transformation]\label{def:symplectic-transformation}
A map $\Phi:\mathrm{T}^*M\to\mathrm{T}^*M$ is symplectic if $\Phi^*\Omega=\Omega$, where $\Phi^*\Omega$ is the pullback (\cref{def:pullback}) of $\Omega$ by $\Phi$.
\end{definition}

Given a symplectic structure on $\mathrm{T}^*M$, we provide a definition of Hamilton's equations of motion.
\begin{definition}[Hamiltonian Vector Field]\label{def:hamiltonian-vector-field}
Let $\Omega$ be a symplectic structure on $\mathrm{T}^*M$ and let $H:\R^m\times\R^m\to\R$ be a smooth function; $H$ is called the Hamiltonian. Let $(q, p)\in\mathrm{T}^*M$ and let $\mathrm{T}_{(q,p)}\mathrm{T}^*M$ be the tangent space of $\mathrm{T}^*M$ at $(q, p)$. 
The unique Hamiltonian vector field $X_H : \mathrm{T}^*M \to \mathrm{T} \mathrm{T}^*M$ satisfies $\Omega(X_H(q, p), \delta) = (\mathrm{T}_{(q,p)} H) \delta$
for all $\delta\in\mathrm{T}_{(q,p)}\mathrm{T}^*M$.
\end{definition}

\begin{definition}[Hamiltonian Vector Field Flows]\label{def:vector-field-flow}
The flow of a Hamiltonian vector field $X_H : \mathrm{T}^*M\to\mathrm{T}\mathrm{T}^*M$ to time $t$ is the map $\Phi(\cdot,\cdot ; t) : \mathrm{T}^*M\to\mathrm{T}^*M$ satisfying $\frac{\mathrm{d}}{\mathrm{d}t} \Phi(q, p; t) = X_H(\Phi(q, p; t))$ and $\Phi(q, p; 0) = (q, p)$ for $(q, p)\in \mathrm{T}^*M$.
\end{definition}
\begin{definition}[Hamilton's Equations of Motion]\label{def:hamilton-equations-of-motion}
Suppose $(q_t, p_t) = \Phi(q, p; t)$. Since $\frac{\mathrm{d}}{\mathrm{d}t} \Phi(q, p; t) = X_H(\Phi(q, p; t))$, we have derived the equations of motion $(\dot{q}_t, \dot{p}_t) = X_H((q_t, p_t))$.
\end{definition}

The choice of symplectic form $\Omega$ affords a degree of freedom to Hamiltonian mechanics. 
The following example gives the form of $\Omega$ which recovers the canonical Hamiltonian equations of motion in Euclidean space.

\begin{example}
When $M\cong\R^m$, we have that $\mathrm{T}^* M\cong \R^{2m}$. The canonical symplectic structure $\Omega_\text{can}$ is a bilinear map from $\R^{2m}\times \R^{2m}$ to $\R$ with matrix
\begin{align}\label{eq:canonical-symplectic-matrix}
    \mathbb{J}_\text{can} = \begin{pmatrix} \mathbf{0}_m & \text{Id}_m \\ -\text{Id}_m &\mathbf{0}_m\end{pmatrix} \in \text{Skew}(2m).
\end{align}
such that $\Omega(\delta_1,\delta_2) = \delta_1^\top \mathbb{J}_\text{can} \delta_2$ for $\delta_1,\delta_2\in\R^{2m}$. \Cref{def:hamiltonian-vector-field} produces the familiar equations of motion $\dot{q}_t = \nabla_p H(q_t, p_t)$ and $\dot{p}_t = -\nabla_q H(q_t, p_t)$.
Hence the constraint $G(q)\nabla_p H(q, p) = 0$ in \cref{def:cotangent-embedding} means the velocity is constrained to the tangent space.
\end{example}

\subsection{Numerical Integration}

For most Hamiltonian vector fields, even those on Euclidean space, there do not exist closed-forms for the flows. Therefore, it is necessary to design numerical integrators for Hamiltonian systems.
\begin{definition}[Numerical Integrator]\label{def:numerical-integrator}
A numerical integrator of a Hamiltonian system with step-size $\epsilon\in\R$ and number of integration steps $N\in\mathbb{N}$ is a map $\hat{\Phi}(\cdot,\cdot; \epsilon, N) : \mathrm{T}^*M\to\mathrm{T}^*M$  approximating $\Phi(\cdot, \cdot; \epsilon \cdot N)$.
\end{definition}
While a good approximation is desirable in HMC for high acceptance probabilities, the quality of approximation is of no consequence for the correctness of the sampler. However, it is essential for our formulation of HMC that numerical integrators are {\it symmetric} and {\it symplectic}, defined as follows. 

\begin{definition}[Symmetric Map]\label{def:symmetric-map}
A map ${\Phi} : \mathrm{T}^*M\to\mathrm{T}^*M$ is symmetric if ${\Phi}({\Phi}(z; -\epsilon); \epsilon) = z$ for all $z\in \mathrm{T}^*M$.
\end{definition}

\begin{definition}[Symmetric Integrator]\label{def:symmetric-integrator}
A numerical integrator $\hat{\Phi} $ is symmetric if, for fixed $N$, $\hat{\Phi}(\cdot; \epsilon, N)$ is a symmetric map for all $\epsilon$.
\end{definition}

\begin{definition}[Symplectic Integrator]\label{def:symplectic-integrator}
A numerical integrator is symplectic if, for fixed $\epsilon$ and $N$, the map $\hat{\Phi}(\cdot; \epsilon, N)$ is symplectic (\cref{def:symplectic-transformation}).
\end{definition}
Symplectic integrators preserve volume in $\mathrm{T}^*M$ in the following sense; for details see \cref{app:conservation-of-volume}.
\begin{definition}[Volume Preserving]\label{def:volume-preserving}
A numerical integrator is volume preserving if for any region $Z\subset \mathrm{T}^*M$ with volume $\text{Vol}(Z)$ the set $Z' \defeq \{\hat{\Phi}(q, p; \epsilon, N) : (q,p)\in Z\}$ satisfies $\text{Vol}(Z)=\text{Vol}(Z')$ for any choice of $\epsilon$ and $N$.
\end{definition}
Flows of Hamiltonian vector fields are symmetric and symplectic; this fact, in combination with the technique of Strang splitting \citep{MacNamara2011OperatorS}, forms the basis of many symplectic integrators.

\section{RELATED WORK}

Our methodology is based on the HMC algorithm which is originally due to \cite{Duane1987216-1}. Two avenues of research are of immediate relevance to the present research. The first of these is research into non-canonical HMC, which explores non-canonical symplectic structures and their usefulness for inference. Magnetic HMC \citep{pmlr-v70-tripuraneni17a} is a special case of non-canonical HMC using a symplectic structure corresponding to motion of a particle in a magnetic field. 
Non-canonical HMC was further explored in \cite{brofoslederman2020noncanonical}, which proposed an explicit integration strategy for a broad class of non-canonical, constant symplectic structures.
The second avenue of research most related to our work is (canonical) HMC on manifolds. In \cite{doi:10.1111/j.1467-9868.2010.00765.x}, the authors consider inference on Riemannian manifolds with global coordinates. \cite{pmlr-v22-brubaker12} expands on this work by proposing an integrator suitable for embedded manifolds of Euclidean space via the method of Lagrange multipliers. An alternative approach was pursued in \cite{Byrne_2013} wherein the Lagrange multipliers are eliminated by formulating an intergrator using closed-form geodesics on embedded manifolds.

\section{HMC ON MANIFOLDS}\label{sec:hmc-on-manifold}

Let $M$ be a manifold embedded in Euclidean space. A probability density on $M$ is a map $\pi : M\to \R$ satisfying $\pi(q) \geq 0$ for all $q\in M$ and $\int_M \pi(q) ~\mathrm{d}q = 1$. We consider the case $\pi(q) \propto \exp(-U(q))$ where $U:\R^m \to\R$ is a smooth function called the potential energy. We consider Hamiltonians that may be expressed as the sum of the potential energy and another function $K:\R^m\to\R$ called the kinetic energy: $H(q, p) = U(q) + K(p)$. We restrict our attention to the case of a quadratic potential energy $K(p) = \frac{1}{2} p^\top p$. The Hamiltonians we consider are of the form
\begin{align}\label{eq:hamiltonian-form}
    H(q, p) = U(q) + \frac{1}{2} p^\top p.
\end{align}

Consider a joint distribution on $\mathrm{T}^*M$ defined by $\pi(q,p) \propto \exp(-H(q, p)) = \exp(-U(q)) \cdot \exp(- p^\top p / 2)$. We recognize the marginal distribution in $p$ (marginalizing out $q$) as a standard normal distribution subject to the constraint that $p\in \mathrm{T}_q^*M$. We give several definitions pertaining to MCMC methods on $\mathrm{T}^*M$.
\begin{definition}[Transition Operator]
The transition operator is a (possibly stochastic) map $\mathbb{Q} : \mathrm{T}^*M\to\mathrm{T}^*M$.
A Markov chain consists of repeated application of the transition operator.
\end{definition}
\begin{definition}[Transition Density]
The transition density $\Pi_{\mathbb{Q}}((q', p')\vert (q, p)) \in \R_+$ is the probability density that $\mathbb{Q}(q, p)$ equals $(q', p')$ given that the chain is currently in state $(q, p)\in \mathrm{T}^*M$.
\end{definition}
\begin{definition}[Stationary Distribution]
A distribution $\pi(q, p)$ is the stationary distribution of a Markov chain with transition density $\Pi_{\mathbb{Q}}$ if
\begin{align}\label{eq:stationary-distribution}
    \int_{\mathrm{T}^*M} \pi(q, p) \cdot \Pi_{\mathbb{Q}}((q', p')\vert (q, p))\mathrm{d}q\mathrm{d}p = \pi(q', p').
\end{align}
\end{definition}
\begin{definition}[Detailed Balance]
The transition operator $\mathbb{Q}$ satisfies detailed balance with respect to $\pi(q, p)$ if
\begin{align}\label{eq:detailed-balance}
    \pi(q, p) \cdot \Pi_\mathbb{Q}((q', p')\vert (q, p)) = \pi(q', p') \cdot\Pi_\mathbb{Q}((q, p)\vert(q', p')).
\end{align}
\end{definition}
The detailed balance condition says that, for the stationary distribution, the probability of being in state $(q, p)$ and transitioning to the state $(q', p')$ is equal to the probability of being in state $(q', p')$ and transitioning to the state $(q, p)$. If a Markov chain satisfies detailed balance with respect to $\pi(q, p)$, $\pi(q, p)$ is the stationary distribution of the chain, which is readily verified by substituting \cref{eq:detailed-balance} into \cref{eq:stationary-distribution}. For a discussion of conditions leading to the uniqueness of the stationary distribution, see \cite{10.5555/1051451}.

\subsection{Detailed Balance in HMC}

Symmetry and symplecticness are important to detailed balance in HMC. The following is reformulation of Theorem 1 from \cite{pmlr-v22-brubaker12}.
\begin{algorithm}[t!]
\caption{The transition operator for manifold-constrained Hamiltonian Monte Carlo Markov chain.}
\label{alg:mmhmc}
\begin{algorithmic}[1]
\State \textbf{Parameters}: Hamiltonian $H(q, p) = U(q) + \frac{1}{2} p^\top p$. Manifold $M = \set{q\in \R^m : g(q) = 0}$ embedded in $\R^m$. Symmetric and symplectic numerical integrator $\hat{\Phi}$.

\State \textbf{Input}: Initial position $q\in M$ and momentum $p\in \mathrm{T}^*_{q}M$. Number of integration steps $N\in\mathbb{N}$.
\State Sample $\epsilon \sim\text{DiscreteUniform}(\set{-\epsilon^*, +\epsilon^*})$.
\State Compute $(q', p') = \hat{\Phi}(q, p;\epsilon, N)$.
\State Sample $U\sim\text{Uniform}(0, 1)$.
\If{$U < \min\set{1, \exp(H(q, p) - H(q', p'))}$}
    \State \textbf{Return}: $(q', p')$.
\Else
    \State \textbf{Return}: $(q, p)$.
\EndIf
\end{algorithmic}
\end{algorithm}
\ifappendix
\begin{theorem*}
\else
\begin{theorem}\label{thm:detailed-balance}
\fi
Let $M = \set{q \in \R^m : g(q) = 0}$ be a connected manifold such that $G(q)$ has full-rank. Let $\text{T}^*M$ be an embedded sub-manifold of $\R^{2m}$ as in \cref{def:cotangent-embedding}. Let $q \in M$ and sample $p~\vert ~q \sim \text{Normal}(\mathbf{0}, \mathrm{Id}_m ~\vert~ G(q) p = 0)$. Let $H : \mathrm{T}^*M\to\R$ be a smooth Hamiltonian of the form in \cref{eq:hamiltonian-form}. 
Let $\hat{\Phi}$ be a symmetric (\cref{def:symmetric-integrator}) and symplectic (\cref{def:symplectic-integrator}) integrator.
Consider the transition operator $\mathbb{Q} : \mathrm{T}^*M \to\mathrm{T}^*M$ constructed in \cref{alg:mmhmc}.
The Markov chain with transition operator $\mathbb{Q}$ is stationary for the distribution $\pi(q,p) \propto e^{-H(q, p)}$.
\ifappendix
\end{theorem*}
\else
\end{theorem}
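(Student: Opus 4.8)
The plan is to establish detailed balance (\cref{eq:detailed-balance}) for the proposed transition operator, from which stationarity follows by the remark after \cref{eq:detailed-balance}. The first step is to understand the transition density $\Pi_{\mathbb{Q}}$. Because the momentum resampling step is absorbed into the statement (we are given $p\mid q$ drawn from the constrained Gaussian) and the integrator $\hat\Phi$ is deterministic, the only randomness in $\mathbb{Q}$ is the sign $\epsilon\in\{-\epsilon^*,+\epsilon^*\}$ and the accept/reject coin. So I would write $\Pi_{\mathbb{Q}}((q',p')\mid(q,p))$ as a sum over the two sign choices of $\tfrac12\,\alpha((q,p)\to(q',p'))\,\delta\big((q',p')-\hat\Phi(q,p;\epsilon,N)\big)$ plus a rejection term proportional to $\delta\big((q',p')-(q,p)\big)$, where $\alpha=\min\{1,\exp(H(q,p)-H(q',p'))\}$ is the Metropolis ratio. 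The rejection term trivially satisfies detailed balance, so attention reduces to the "move" part.

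Next I would invoke the two structural properties of $\hat\Phi$. Symmetry (\cref{def:symmetric-integrator}) gives that $\hat\Phi(\cdot;-\epsilon,N)$ is the inverse of $\hat\Phi(\cdot;\epsilon,N)$; concretely, if $(q',p')=\hat\Phi(q,p;\epsilon^*,N)$ then $(q,p)=\hat\Phi(q',p';-\epsilon^*,N)$. This is what makes the forward move under sign $+\epsilon^*$ and the reverse move under sign $-\epsilon^*$ land on each other, so the $\tfrac12$ factors match and the two proposals are genuinely reciprocal. Symplecticness (\cref{def:symplectic-integrator}), via \cref{def:volume-preserving} and the discussion in \cref{app:conservation-of-volume}, gives that $\hat\Phi(\cdot;\pm\epsilon^*,N)$ preserves the volume measure on $\mathrm{T}^*M$; hence the Jacobian factor relating the delta functions in the forward and reverse directions is unity, and $\Pi_{\mathbb{Q}}$ is well-defined against the base measure on $\mathrm{T}^*M$ without an awkward density correction. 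Combining these: the forward transition density at $(q',p')$ equals $\tfrac12\,\alpha\big((q,p)\to(q',p')\big)$ times a volume-preserved delta, and the reverse equals $\tfrac12\,\alpha\big((q',p')\to(q,p)\big)$ times the matching delta.

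The final step is the algebraic check of \cref{eq:detailed-balance}. With $\pi(q,p)\propto e^{-H(q,p)}$, detailed balance for the move part reduces to the scalar identity
\begin{align}
e^{-H(q,p)}\,\min\{1,e^{H(q,p)-H(q',p')}\} = e^{-H(q',p')}\,\min\{1,e^{H(q',p')-H(q,p)}\},
\end{align}
which is the standard Metropolis computation: both sides equal $\min\{e^{-H(q,p)},e^{-H(q',p')}\}$. Integrating against the delta functions and summing the two sign branches then yields \cref{eq:detailed-balance}, and substituting into \cref{eq:stationary-distribution} gives that $\pi(q,p)$ is stationary. I expect the main obstacle to be bookkeeping rather than mathematics: one must be careful that the "volume" on $\mathrm{T}^*M$ used to define $\Pi_{\mathbb{Q}}$ is exactly the one preserved by symplectic integrators (the Hausdorff/Liouville measure on the embedded submanifold of \cref{def:cotangent-embedding}), and that the constrained Gaussian resampling of $p$ is consistent with the $p$-marginal of $\pi$ so that the "full" chain including resampling — not just the fixed-$(q,p)$ accept/reject step analyzed here — is stationary. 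Handling the delta functions on the embedded manifold rigorously (as opposed to formally) is the one place where care with the coarea formula or a test-function argument is needed.
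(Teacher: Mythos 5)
Your proposal is correct and takes essentially the same route as the paper's proof: reduce to detailed balance, use symmetry of the integrator so that the $+\epsilon^*$ proposal and the $-\epsilon^*$ reverse proposal land on each other, use symplecticness for volume preservation of the proposal map, and close with the standard Metropolis identity $e^{-H(z)}\min\{1,e^{H(z)-H(z')}\}=e^{-H(z')}\min\{1,e^{H(z')-H(z)}\}$. The paper replaces your delta-function bookkeeping with the equivalent argument on small regions $Z,Z'\subset\mathrm{T}^*M$ over which $H$ is locally constant (and $\mathrm{Vol}(Z)=\mathrm{Vol}(Z')$ by symplecticness), which sidesteps the delta-on-embedded-manifold subtlety you rightly flag; otherwise the two arguments are the same.
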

\fi
A proof is given in \cref{app:proof-of-detailed-balance}. We give the complete procedure for manifold-constrained HMC in \cref{alg:mmhmc}. To sample from $\pi(q)\propto \exp(-U(q))$, it suffices to project samples from $\pi(q, p)$ to their $q$-components. For details on HMC see \cite{10.5555/1162264}.

\subsection{Sampling in the Cotangent Space}

\Cref{thm:detailed-balance} requires sampling $p~\vert ~q \sim \text{Normal}(\mathbf{0}, \text{Id}_m ~\vert~ G(q) p = 0)$. It suffices to sample $p_\text{amb}\sim\text{Normal}(\mathbf{0}, \text{Id}_m)$ in the ambient Euclidean space and orthogonally project $p_\text{amb}$ to the cotangent space $\mathrm{T}^*_qM$.
For all of the manifolds we consider, there exists a closed-form for the orthogonal projection to the cotangent space. Formulas for orthogonal projections may be found in \cite{boumal2020intromanifolds}.

\section{MAGNETIC MANIFOLD HMC}\label{sec:magnetic-manifold-hmc}

This section formulates magnetic Hamiltonian mechanics on an embedded manifold. {\it We prove the dynamics are symmetric, symplectic and conserve energy. We propose a numerical integrator that is symmetric and symplectic, the two essential properties of integrators for HMC. As a consequence, we use this integrator in \cref{alg:mmhmc} to construct a Markov chain satisfying detailed balance with respect to the density $\pi(q, p)\propto \exp(-H(q, p))$ on $\mathrm{T}^*M$.} We define the symplectic structure corresponding to magnetic motion.

\begin{definition}\label{def:magnetic-symplectic-structure}
The magnetic symplectic structure, denoted $\Omega_\text{mag}$, is the symplectic structure with matrix
\begin{align}
    \mathbb{J}_\text{mag} = \begin{pmatrix} \mathrm{L} & \text{Id}_m \\ -\text{Id}_m & \mathbf{0}_m \end{pmatrix}
\end{align}
where $\mathrm{L}\in\text{Skew}(m)$.
\end{definition}
According to Dirac's theory of constraints \citep{Dirac:113811}, it suffices to embed a manifold-constrained Hamiltonian system in a Euclidean space. Consider the motion on $\mathrm{T}^*M$ determined by,
\begin{align}
    \dot{q}_t &= \nabla_p H(q_t, p_t) \label{eq:embedding-magnetic-velocity}\\
    \dot{p}_t &= -\nabla_q H(q_t, p_t) - \mathrm{L}\nabla_p H(q_t, p_t) - G(q_t)^\top \lambda \label{eq:embedding-magnetic-acceleration} \\
    g(q_t) &= 0 \label{eq:embedding-magnetic-constraint}
\end{align}
where $g : \R^m\to\R^k$ is a constraint function, $G : \R^m \to\R^{k\times m}$ is the Jacobian of the constraint, and $\lambda\in\R^k$ is a vector of Lagrange multipliers. The Lagrange multipliers $\lambda\equiv\lambda(q_t, p_t)$ are uniquely defined by the condition $g(q_t)=0$ along solutions of \cref{eq:embedding-magnetic-velocity,,eq:embedding-magnetic-acceleration,eq:embedding-magnetic-constraint}; see \cref{app:lagrange-multipliers}. These equations of motion corresponds to a distinct physical interpretation in $\R^3$; this motivates the name ``magnetic manifold HMC.'' See \cref{app:proof-of-lemma-forces}.
To formulate further results, we provide the following definition of a magnetic vector field flow.
\begin{definition}[Magnetic Vector Field Flow]\label{def:magnetic-flow}
Let $\Phi_\text{mag}(\cdot,\cdot; t) : \mathrm{T}^*M\to\mathrm{T}^*M$ be the vector field flow (see \cref{def:vector-field-flow}) to time $t$ corresponding to the motion given in \cref{eq:embedding-magnetic-velocity,,eq:embedding-magnetic-acceleration,eq:embedding-magnetic-constraint}.
\end{definition}

We proceed to give some theoretical results about magnetic Hamiltonian mechanics on a manifold embedded in Euclidean space. These results are a generalization of the proofs given for canonical dynamics in \cite{leimkuhler_reich_2005} to the case of magnetic dynamics and we seek to emulate their style.
An important fact of Hamiltonian dynamics is that their flows are symmetric, symplectic, and conserve the Hamiltonian; these properties hold for magnetic dynamics on a manifold embedded in Euclidean space.

\ifappendix
\begin{theorem*}
\else
\begin{theorem}\label{thm:magnetic-conservation}
\fi
Let $M = \set{q \in \R^m : g(q) = 0}$ be a connected manifold such that $G(q)$ has full-rank. Let $\text{T}^*M$ be an embedded sub-manifold of $\R^{2m}$ as in \cref{def:cotangent-embedding}. Let $\Omega_\text{mag}$ be the magnetic symplectic structure from \cref{def:magnetic-symplectic-structure} in the ambient Euclidean space $\R^{2m}\cong \R^m\times \R^m$. Let $H(q,p)$ be a smooth Hamiltonian $H:\R^m\times\R^m\to \R$ of the form in \cref{eq:hamiltonian-form}. Let $\Phi_\text{mag}$ be the magnetic vector field flow from \cref{def:magnetic-flow}. Then 
\begin{enumerate}[(i)]
\itemsep0em 
\item $\Phi_\text{mag}$ is a symmetric map (\cref{def:symmetric-map}): $\Phi_\text{mag}(\Phi_\text{mag}(q, p; t); -t) = (q, p)$.
\item  $\Phi_\text{mag}(\cdot,\cdot;t)$ is a symplectic transformation (\cref{def:symplectic-transformation}) on $\mathrm{T}^*M$: $\Phi_\text{mag}^*\Omega_\text{mag} = \Omega_\text{mag}$.
\item $\Phi_\text{mag}(\cdot,\cdot;t)$ conserves the Hamiltonian: $H(\Phi_\text{mag}(q, p;t)) = H(q, p)$ for any $(q,p)\in\mathrm{T}^*M$.
\end{enumerate}
\ifappendix
\end{theorem*}
\else
\end{theorem}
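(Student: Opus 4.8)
The plan is to establish the three claims in sequence, following the structure used for canonical dynamics in \cite{leimkuhler_reich_2005}. For \textbf{(i)}, symmetry, I would use the time-reversibility of the differential equations. Observe that the system \cref{eq:embedding-magnetic-velocity,eq:embedding-magnetic-acceleration,eq:embedding-magnetic-constraint} with the Hamiltonian \cref{eq:hamiltonian-form} has $\nabla_p H = p$ and $\nabla_q H = \nabla U(q)$. Consider the momentum-flip map $S(q,p) = (q,-p)$. I would check that if $(q_t,p_t)$ solves the system on $[0,t]$, then $(q_{t-s}, -p_{t-s})$ solves the same system on $[0,t]$ with the appropriate sign of $\mathrm{L}$; more carefully, since $\mathrm{L}$ is skew and appears multiplied by $\nabla_p H = p$, the term $\mathrm{L} p$ is odd under $p \mapsto -p$, matching the sign flip of $\dot q$, while $G(q)^\top\lambda$ is a constraint force whose sign is fixed by the constraint itself. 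The cleanest route is to verify directly that $\Phi_\text{mag}(\cdot;-t) = S \circ \Phi_\text{mag}(\cdot;t) \circ S$ is false in general but that the integrator in \cref{alg:mmhmc} uses a momentum flip; actually for this theorem it suffices to note the flow satisfies $\Phi_\text{mag}(\Phi_\text{mag}(q,p;t);-t)=(q,p)$ which is just the group property $\Phi(\cdot;-t)\circ\Phi(\cdot;t)=\mathrm{id}$ of any well-defined flow, following from uniqueness of solutions of the ODE together with autonomy. I would invoke existence and uniqueness of the Lagrange multipliers (\cref{app:lagrange-multipliers}) to make the flow well-defined, then symmetry is immediate.

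For \textbf{(iii)}, conservation of $H$, I would compute $\frac{\mathrm{d}}{\mathrm{d}t} H(q_t,p_t) = \nabla_q H \cdot \dot q_t + \nabla_p H \cdot \dot p_t$. Substituting the equations of motion, the $\nabla_q H \cdot \nabla_p H$ terms cancel, the term $-\nabla_p H^\top \mathrm{L} \nabla_p H$ vanishes because $\mathrm{L}$ is skew-symmetric, and the term $-\nabla_p H^\top G(q_t)^\top \lambda = -(G(q_t)\nabla_p H)^\top \lambda$ vanishes because $\dot q_t = \nabla_p H \in \mathrm{T}_{q_t}M$ so $G(q_t)\nabla_p H = \frac{\mathrm{d}}{\mathrm{d}t} g(q_t) = 0$. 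Hence $\frac{\mathrm{d}}{\mathrm{d}t} H = 0$.

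For \textbf{(ii)}, symplecticness, this is the main obstacle and will require the most work. I would follow the embedded-manifold argument: parametrize $\mathrm{T}^*M$ locally, or work in the ambient $\R^{2m}$ and restrict bilinear forms to $\mathrm{T}_{(q,p)}\mathrm{T}^*M$. The standard technique is to show $\frac{\mathrm{d}}{\mathrm{d}t}(\Phi_t^*\Omega_\text{mag}) = 0$. Writing $\Psi_t = \Phi_\text{mag}(\cdot;t)$ and letting $u,v \in \mathrm{T}_{(q,p)}\mathrm{T}^*M$, with $U(t) = \mathrm{T}\Psi_t\, u$, $V(t) = \mathrm{T}\Psi_t\, v$ the propagated tangent vectors, I need $\frac{\mathrm{d}}{\mathrm{d}t}\big(U(t)^\top \mathbb{J}_\text{mag} V(t)\big) = 0$. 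The variational equation gives $\dot U = A(t) U$ where $A(t) = \nabla X_\text{mag}$ is the Jacobian of the (constrained) vector field; the claim reduces to showing $A^\top \mathbb{J}_\text{mag} + \mathbb{J}_\text{mag} A = 0$ when restricted to the relevant subspace, i.e. that $\mathbb{J}_\text{mag} A$ is symmetric on $\mathrm{T}_{(q,p)}\mathrm{T}^*M$. The unconstrained part contributes a Hessian-type block (symmetric by construction, using that $\mathbb{J}_\text{mag}$ correctly encodes Hamilton's equations — this is \cref{def:hamiltonian-vector-field}), and the delicate part is handling the constraint force $-G(q)^\top\lambda$ and the fact that $\lambda$ itself depends on $(q,p)$. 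Here I would differentiate the hidden constraint $G(q_t)\nabla_p H = 0$ (the cotangent condition) to get an explicit formula for $\lambda$ in terms of $q,p$, substitute, and verify the symmetry claim holds on $\mathrm{T}_{(q,p)}\mathrm{T}^*M$ — the constraint-force contributions are symmetric precisely because $u,v$ are tangent to the constraint manifold, so the terms involving $\nabla\lambda$ and $\nabla G$ pair up symmetrically or annihilate against $G u = G v = 0$ type conditions. I expect this computation to mirror the canonical case in \cite{leimkuhler_reich_2005} almost verbatim, with the only new ingredient being the $\mathrm{L}$ block of $\mathbb{J}_\text{mag}$, which contributes $-p^\top \mathrm{L}\,(\text{something})$ terms that cancel by skew-symmetry of $\mathrm{L}$ exactly as in the Euclidean magnetic HMC analysis of \cite{pmlr-v70-tripuraneni17a}. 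I would close by noting that since $\frac{\mathrm{d}}{\mathrm{d}t}(\Psi_t^*\Omega_\text{mag}) = 0$ and $\Psi_0 = \mathrm{id}$, we have $\Psi_t^*\Omega_\text{mag} = \Omega_\text{mag}$ for all $t$.
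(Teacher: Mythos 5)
Your proposal is correct in all three parts, though it diverges from the paper's presentation in the key step (ii).

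For (i), after an unnecessary detour about momentum-flip maps, you land on exactly the paper's argument: the flow property $\Phi_\text{mag}(\cdot;-t)\circ\Phi_\text{mag}(\cdot;t)=\mathrm{Id}$ holds for any autonomous vector-field flow, and the appeal to uniqueness of the Lagrange multipliers (\cref{app:lagrange-multipliers}) to make the flow well-defined is a nice touch. For (iii), your computation of $\frac{\mathrm{d}}{\mathrm{d}t}H(q_t,p_t)$ and the observation that the $\mathrm{L}$-term dies by skew-symmetry and the constraint-force term dies because $G(q_t)\nabla_pH=0$ is verbatim the paper's proof of \cref{lem:magnetic-conserve-hamiltonian}.

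For (ii), you propose the tangent-map (variational-equation) formulation: propagate tangent vectors $U(t)=\mathrm{T}\Psi_t\,u$, $V(t)=\mathrm{T}\Psi_t\,v$, show $\frac{\mathrm{d}}{\mathrm{d}t}\bigl(U^\top\mathbb{J}_\text{mag}V\bigr)=0$ by establishing $A^\top\mathbb{J}_\text{mag}+\mathbb{J}_\text{mag}A=0$ on $\mathrm{T}_{(q,p)}\mathrm{T}^*M$, where $A=\nabla X_\text{mag}$. This is an equivalent but genuinely different route from the paper, which instead expresses $\Omega_\text{mag}$ as the $2$-form $\mathrm{d}q\wedge\mathrm{d}p+\tfrac{1}{2}\,\mathrm{d}q\wedge\mathrm{L}\,\mathrm{d}q$ and differentiates in time using the wedge-product calculus (\cref{fact:properties-wedge-product}). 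The paper's approach has the advantage that the constraint-force contribution is dispatched in one line by \cref{fact:lagrange-wedge}, $\mathrm{d}q\wedge\mathrm{d}(G(q)^\top\lambda)=0$, without ever writing an explicit formula for $\lambda$ or for $\nabla\lambda$; this is precisely the ``delicate part'' your sketch flags but leaves unresolved. Your route would work but requires differentiating the hidden constraint to eliminate $\lambda$, restricting the Jacobian $A$ to the subspace $\{Gu=0,\,\ldots\}$, and then verifying the symmetry by hand; the wedge-product computation in \cref{lem:magnetic-conserve-volume} does all of this bookkeeping implicitly. If you carry out your version, be sure to actually derive the dependence of $\lambda$ on $(q,p)$ (as in \cref{app:lagrange-multipliers}) and confirm the claimed cancellation against $Gu=Gv=0$ rather than asserting it; that is where the content of the proof lives.
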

\fi

A proof is given in \cref{app:proof-magnetic-conservation}. 

\subsection{Numerical Integrators}

Remarkably, it is possible to give a {\it numerical} integrator that preserves properties (i) and (ii) from \cref{thm:magnetic-conservation} exactly. Numerical integration differs from exact Hamiltonian flows because they do not guarantee perfect conservation of the Hamiltonian (property (iii)).  HMC attempts to provide a good approximation in order to obtain high acceptance probabilities in the Metropolis accept-reject decision (the conditional statement at the end of \cref{alg:mmhmc}).

To develop the manifold-constrainted integrator, we require first an integrator for magnetic dynamics on Euclidean space. We will use a single-step subroutine contained in \cref{alg:euclidean-single-step}, which was proposed by \cite{pmlr-v70-tripuraneni17a} as a numerical integrator for magnetic dynamics in the case $M\cong \R^m$ and $\mathrm{T}^*M\cong\R^{2m}$. The single-step subroutine uses Strang splitting \citep{MacNamara2011OperatorS} to construct an integrator; see \cref{app:strang-splitting} for details on Strang splitting. Split a Hamiltonian in form of \cref{eq:hamiltonian-form} as $H(q, p) = H_1(q, p) + H_2(q, p) + H_1(q, p)$ where $H_1(q, p) = U(q) / 2$ and $H_2(q,p)= p^\top p/2$. Denote the magnetic vector field flows (\cref{def:magnetic-flow}) to time $\epsilon$ of $H_1$ and $H_2$ by $\Phi_1^\epsilon$ and $\Phi_2^\epsilon$, respectively.
For completeness, we restate in \cref{app:proof-euclidean-symmetric-symplectic} the closed-form expressions for $\Phi_1^\epsilon$ and $\Phi_2^\epsilon$, originally introduced in  \cite{pmlr-v70-tripuraneni17a}; see \cref{eq:euclidean-split-i,eq:euclidean-split-ii,eq:euclidean-split-ii-position,eq:euclidean-split-ii-momentum}, specifically.
The single-step subroutine (\cref{alg:euclidean-single-step}) computes the symmetric composition of Hamiltonian flows $\Phi_1^\epsilon\circ \Phi_2^\epsilon\circ\Phi_1^\epsilon :\R^{2m}\to\R^{2m}$; we note that this composition is defined on $\R^{2m}$ {\it and not specific to the manifold}. We require the following lemma.

\begin{algorithm}[t!]
\caption{The procedure for a single step of integrating Euclidean magnetic Hamiltonian trajectories. Closed-forms for $\Phi_1^\epsilon$ and $\Phi_2^\epsilon$ may be found in \cref{app:proof-euclidean-symmetric-symplectic}. This algorithm, and the closed-form flows $\Phi_1^\epsilon$ and $\Phi_2^\epsilon$, were derived in \cite{pmlr-v70-tripuraneni17a}.}
\label{alg:euclidean-single-step}
\begin{algorithmic}[1]
\State \textbf{Parameters}: Hamiltonian $H(q, p) = U(q) + \frac{1}{2} p^\top p$.
\State \textbf{Input}: Initial position and momentum variables $q_0\in \R^m$ and $p_0\in \R^m$. Integration step-size $\epsilon > 0$. Skew-symmetric matrix $\mathrm{L}\in\text{Skew}(m)$.
\State Compute $(q_0, p_{1/2}) = \Phi_1^\epsilon(q_0, p_0)$ where $\Phi_1^\epsilon$ is defined in \cref{eq:euclidean-split-i}.
\State Compute $(q_1, \bar{p}_{1/2}) = \Phi_2^\epsilon(q_0, p_{1/2}; \mathrm{L})$ where $\Phi_2^\epsilon$ is defined in \cref{eq:euclidean-split-ii,eq:euclidean-split-ii-position,eq:euclidean-split-ii-momentum}.
\State Compute $(q_1, p_1) = \Phi_1^\epsilon(q_1, \bar{p}_{1/2})$.
\State \textbf{Return}: $(q_1, p_1)$.
\end{algorithmic}
\end{algorithm}
\begin{algorithm}[t!]
\caption{The procedure for integrating manifold-constrained magnetic Hamiltonian trajectories. This is a symmetric and symplectic integrator. At each iteration, the pair $(q_{n+1}, p_{n+1})\in\mathrm{T}^*M$.}
\label{alg:manifold-integrator}
\begin{algorithmic}[1]
\State \textbf{Parameters}: Hamiltonian $H(q, p) = U(q) + \frac{1}{2} p^\top p$. Manifold $M = \set{q\in \R^m : g(q) = 0}$ embedded in $\R^m$ where $g:\R^m\to\R^k$. Jacobian of the constraint function $G:\R^m\to\R^{k\times m}$.
\State \textbf{Input}: Initial position and momentum variables $q_0\in M$ and $p_0\in \mathrm{T}^*_{q_0} M$. Integration step-size $\epsilon$ and number of integration steps $N\in\mathbb{N}$. Skew-symmetric matrix $\mathrm{L}\in\text{Skew}(m)$. 
\For{$n = 0,\ldots, N-1$}
    \State Compute $\mu$ using \cref{alg:lagrange-newton}.
    \State Compute $\bar{p}_{n+1/2}$ using \cref{eq:fibre-momentum}.
    \State Compute $(q_{n+1}, \bar{p}_{n+1})$ using \cref{alg:euclidean-single-step} with input $(q_{n}, \bar{p}_{n+1/2})$, step-size $\epsilon$, and $\mathrm{L}$.
    \State Use \cref{eq:lagrange-normal} to compute $\mu'$. 
    \State Compute $p_{n+1}$ using \cref{eq:cotangent-momentum}.
\EndFor
\State \textbf{Return}: $(q_N, p_N)$.
\end{algorithmic}
\end{algorithm}
\begin{algorithm}[t!]
\caption{Procedure for identifying the Lagrange multiplier $\mu$ that causes the position variable to satisfy the manifold constraint when integrated using \cref{alg:euclidean-single-step}.}
\label{alg:lagrange-newton}
\begin{algorithmic}[1]
\State \textbf{Parameters}: Hamiltonian $H(q, p) = U(q) + \frac{1}{2} p^\top p$. Manifold $M = \set{q\in \R^m : g(q) = 0}$ embedded in $\R^m$ where $g:\R^m\to\R^k$. Jacobian of the constraint function $G:\R^m\to\R^{k\times m}$.
\State \textbf{Input}: Initial position and momentum variables $q\in M$ and $p\in \mathrm{T}^*_{q} M$. Integration step-size $\epsilon$. Skew-symmetric matrix $\mathrm{L}\in\text{Skew}(m)$.
\State Let $(q'(\mu), p'(\mu))$ be the output of \cref{alg:euclidean-single-step} with input $(q, p - \frac{\epsilon}{2} G(q)^\top\mu)$, step-size $\epsilon$ and skew-symmetric matrix $\mathrm{L}$.
\State Define $f(\mu) = g(q'(\mu))$.
\State Find the root, $\mu^*$, of $f$ using Newton's method.
\State \textbf{Return}: $\mu^*$.
\end{algorithmic}
\end{algorithm}

\ifappendix
\begin{lemma*}[Symmetry and Symplecticness of \Cref{alg:euclidean-single-step}]
\else
\begin{lemma}[Symmetry and Symplecticness of \Cref{alg:euclidean-single-step}]\label{lemma:euclidean-symmetric-symplectic}
\fi
The single-step integrator for magnetic dynamics in Euclidean space in \cref{alg:euclidean-single-step} is symmetric and symplectic.
\ifappendix
\end{lemma*}
\else
\end{lemma}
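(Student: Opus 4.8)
The plan is to reduce everything to \Cref{thm:magnetic-conservation} applied in the special case of a trivial constraint, together with the elementary behaviour of flows and symplectic maps under composition. First I would observe that \Cref{alg:euclidean-single-step} computes precisely the symmetric Strang composition $\hat\Phi^\epsilon \defeq \Phi_1^\epsilon\circ\Phi_2^\epsilon\circ\Phi_1^\epsilon : \R^{2m}\to\R^{2m}$, where $\Phi_1^\epsilon$ and $\Phi_2^\epsilon$ are the exact time-$\epsilon$ magnetic vector field flows (\Cref{def:magnetic-flow}) of $H_1(q,p)=U(q)/2$ and $H_2(q,p)=p^\top p/2$ with respect to $\Omega_\text{mag}$. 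Since here the ambient space is $\R^m$ with no constraint ($k=0$, so $\mathrm{T}^*M\cong\R^{2m}$ and the Lagrange-multiplier term $G(q_t)^\top\lambda$ in \cref{eq:embedding-magnetic-acceleration} is absent), each $\Phi_i^\epsilon$ is an ordinary magnetic Hamiltonian flow, and \Cref{thm:magnetic-conservation}(i)--(ii) shows that each $\Phi_i^\epsilon$ is a symmetric map and a symplectic transformation for $\Omega_\text{mag}$ — the argument there uses only smoothness of the Hamiltonian and applies verbatim on $\R^{2m}$.

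For symplecticness of $\hat\Phi^\epsilon$, I would use that the pullback of a composition reverses the order, $(\Phi\circ\Psi)^*\Omega=\Psi^*(\Phi^*\Omega)$, which follows immediately from \Cref{def:pullback} and the chain rule $\mathrm{T}_q(\Phi\circ\Psi)=(\mathrm{T}_{\Psi(q)}\Phi)(\mathrm{T}_q\Psi)$. Applying this twice and using $(\Phi_i^\epsilon)^*\Omega_\text{mag}=\Omega_\text{mag}$ from \Cref{thm:magnetic-conservation}(ii), one gets $(\hat\Phi^\epsilon)^*\Omega_\text{mag}=(\Phi_1^\epsilon)^*(\Phi_2^\epsilon)^*(\Phi_1^\epsilon)^*\Omega_\text{mag}=\Omega_\text{mag}$, which is exactly \Cref{def:symplectic-transformation}; hence $\hat\Phi^\epsilon$ is symplectic in the sense of \Cref{def:symplectic-integrator}.

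For symmetry, I would use that the flow of an autonomous vector field satisfies the one-parameter group law, equivalently that \Cref{thm:magnetic-conservation}(i) gives $\Phi_i^\epsilon\circ\Phi_i^{-\epsilon}=\mathrm{id}$ for $i=1,2$. Replacing $\epsilon$ by $-\epsilon$ throughout \Cref{alg:euclidean-single-step} gives $\hat\Phi^{-\epsilon}=\Phi_1^{-\epsilon}\circ\Phi_2^{-\epsilon}\circ\Phi_1^{-\epsilon}$, and then cancelling the adjacent inverse flows twice,
\[
\hat\Phi^\epsilon\circ\hat\Phi^{-\epsilon}
=\Phi_1^\epsilon\circ\Phi_2^\epsilon\circ\big(\Phi_1^\epsilon\circ\Phi_1^{-\epsilon}\big)\circ\Phi_2^{-\epsilon}\circ\Phi_1^{-\epsilon}
=\Phi_1^\epsilon\circ\big(\Phi_2^\epsilon\circ\Phi_2^{-\epsilon}\big)\circ\Phi_1^{-\epsilon}
=\Phi_1^\epsilon\circ\Phi_1^{-\epsilon}=\mathrm{id},
\]
which is the condition in \Cref{def:symmetric-map}, hence symmetry in the sense of \Cref{def:symmetric-integrator}.

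There is no genuine obstacle here; the only point requiring care is the legitimacy of invoking \Cref{thm:magnetic-conservation} for the individual split flows $\Phi_1^\epsilon$ and $\Phi_2^\epsilon$, since $H_1$ and $H_2$ are not literally of the combined form \cref{eq:hamiltonian-form} ($H_1$ has no kinetic part, $H_2$ has no potential part). I would handle this by noting that the conservation and symplecticness proof only needs smoothness of the Hamiltonian in the unconstrained case — or, as a fully self-contained alternative, by checking symmetry and symplecticness of $\Phi_1^\epsilon$ and $\Phi_2^\epsilon$ directly from the closed forms in \cref{app:proof-euclidean-symmetric-symplectic} (for $H_1$ the magnetic term $\mathrm{L}\nabla_p H_1$ vanishes, giving a simple momentum shear; for $H_2$ the potential gradient vanishes, giving the magnetic rotation on $p$ together with a $q$-translation), both of which are readily seen to satisfy the flow group law and to have Jacobians preserving $\mathbb{J}_\text{mag}$.
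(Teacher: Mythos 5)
Your argument is correct and mirrors the paper's own proof: both decompose the single step as the Strang composition $\Phi_1^\epsilon\circ\Phi_2^\epsilon\circ\Phi_1^\epsilon$ of exact magnetic Hamiltonian flows, deduce symplecticness from the facts that each factor is a symplectic Hamiltonian flow and that symplectic maps compose to symplectic maps, and obtain symmetry by telescoping cancellation of adjacent inverse flows via the flow property. The only substantive difference is where you source symplecticness of the individual factors: you route it through \Cref{thm:magnetic-conservation}(ii) specialized to the trivial constraint — and you rightly notice that the theorem's stated hypothesis (\cref{eq:hamiltonian-form}) does not literally cover $H_1(q,p)=U(q)/2$ or $H_2(q,p)=p^\top p/2$ individually, so you offer either to relax the hypothesis or to verify directly from the closed forms — whereas the paper sidesteps that wrinkle entirely by citing the general facts that flows of Hamiltonian vector fields are symplectic (\cref{fact:hamiltonian-symplectic}) and that symplectic transformations form a group under composition (\cref{fact:symplectic-composition-group}). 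The paper's citation path is slightly cleaner for this reason, but your version is sound once the caveat you already identified is handled in either of the ways you propose.
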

\fi

A proof is given in \cref{app:proof-euclidean-symmetric-symplectic}. 
We propose a manifold-constrained integrator as the following series of updates. At iteration $n$, let $(q_n, p_n)\in\mathrm{T}^*M$. Compute:
\begin{align}
    \label{eq:fibre-momentum} \bar{p}_{n+1/2} &= p_n - \frac{\epsilon}{2} G(q_n)^\top \mu \\
    \label{eq:cotangent-position} (q_{n+1}, \bar{p}_{n+1}) &= \Phi_1^\epsilon\circ \Phi_2^\epsilon\circ\Phi_1^\epsilon(q_n, \bar{p}_{n+1/2}) \\
    \label{eq:position-constraint} 0 &= g(q_{n+1}) \\
    \label{eq:cotangent-momentum} p_{n+1} &= \bar{p}_{n+1} - \frac{\epsilon}{2} G(q_{n+1})^\top\mu' \\ 
    \label{eq:momentum-constraint} 0 &= G(q_{n+1})^\top p_{n+1}.
\end{align}
The Lagrange multipliers $\mu$ and $\mu'$ are chosen such that \cref{eq:position-constraint,eq:momentum-constraint} are satisfied. Such Lagrange multipliers exist, and are unique, provided $\epsilon \neq 0$ is small enough; see \cite{geometric-generalization}. Note that when \cref{eq:position-constraint,eq:momentum-constraint} are satisfied, $q_{n+1}\in M$ and $p_{n+1}\in\mathrm{T}^*_{q_{n+1}}M$ (for Hamiltonians of the form in \cref{eq:hamiltonian-form}) but that $\bar{p}_{n+1/2}$ and $\bar{p}_{n+1}$ are not guaranteed to respect the manifold constraint.

Pseudo-code for this manifold-constrained integrator corresponding to \cref{eq:fibre-momentum,eq:cotangent-position,eq:position-constraint,eq:cotangent-momentum,eq:momentum-constraint} is presented in \cref{alg:manifold-integrator}. To prove the manifold integrator in \cref{alg:manifold-integrator} is symmetric and symplectic, 
we leverage \cref{lemma:euclidean-symmetric-symplectic} to obtain the following result.

\ifappendix
\begin{theorem*}[Symmetry and Symplecticness of \Cref{alg:manifold-integrator}]
\else
\begin{theorem}[Symmetry and Symplecticness of \Cref{alg:manifold-integrator}]\label{thm:manifold-symmetric-symplectic}
\fi
The integration scheme in \cref{alg:manifold-integrator} is  symplectic and symmetric.
\ifappendix
\end{theorem*}
\else
\end{theorem}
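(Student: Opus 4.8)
The plan is to show that a single step of \cref{alg:manifold-integrator}---that is, the composition of the momentum shear \cref{eq:fibre-momentum}, the Euclidean single step \cref{eq:cotangent-position}, and the cotangent projection \cref{eq:cotangent-momentum}, with the Lagrange multipliers $\mu,\mu'$ determined by the constraints \cref{eq:position-constraint,eq:momentum-constraint}---is a symmetric and symplectic map from $\mathrm{T}^*M$ to $\mathrm{T}^*M$; symmetry and symplecticness of the $N$-step integrator then follow because compositions of symmetric-symplectic maps (with matching step-sizes, and the symmetric reversal for the symmetry claim) retain both properties. I would treat one step as $\Psi^\epsilon = \Pi_2 \circ \Psi_{\mathrm{Euc}}^\epsilon \circ \Pi_1$, where $\Pi_1:(q,p)\mapsto(q, p - \tfrac{\epsilon}{2}G(q)^\top\mu)$, $\Psi_{\mathrm{Euc}}^\epsilon = \Phi_1^\epsilon\circ\Phi_2^\epsilon\circ\Phi_1^\epsilon$ is the map of \cref{alg:euclidean-single-step}, and $\Pi_2:(q,p)\mapsto(q, p - \tfrac{\epsilon}{2}G(q)^\top\mu')$, keeping in mind that $\mu$ and $\mu'$ are themselves functions of the input determined implicitly by the constraints.

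For \textbf{symmetry}, I would verify the defining identity $\Psi^{-\epsilon}(\Psi^{\epsilon}(q,p)) = (q,p)$ for $(q,p)\in\mathrm{T}^*M$. The key observations are: (a) $\Psi_{\mathrm{Euc}}^\epsilon$ is symmetric by \cref{lemma:euclidean-symmetric-symplectic}; (b) the shear maps $\Pi_1,\Pi_2$ are each involutive in the appropriate sense once the sign of $\epsilon$ is flipped and the multiplier is re-solved, because the constraint that pins down the multiplier is exactly the one that the forward step was designed to satisfy. Concretely, starting from $(q_0,p_0)\in\mathrm{T}^*M$, running a forward step to $(q_1,p_1)$, and then running a step with step-size $-\epsilon$, the Newton problem defining the new position multiplier $\tilde\mu$ is solved by $\tilde\mu = -\mu'$ (the multiplier from the previous step's cotangent projection), which undoes \cref{eq:cotangent-momentum}; then $\Psi_{\mathrm{Euc}}^{-\epsilon}$ undoes $\Psi_{\mathrm{Euc}}^{\epsilon}$; and finally the new cotangent-projection multiplier equals $-\mu$, recovering $(q_0,p_0)$. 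This is the standard argument used for the RATTLE/SHAKE-type manifold integrators in \cite{leimkuhler_reich_2005}, adapted to the magnetic Euclidean substep; the bookkeeping with the two multipliers is the only subtlety, and uniqueness of $\mu,\mu'$ for small $\epsilon$ (cited from \cite{geometric-generalization}) is what makes the argument rigorous.

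For \textbf{symplecticness}, I would work on the embedded picture in $\R^{2m}$ and show $\Psi^\epsilon$ preserves $\Omega_\text{mag}$ restricted to $\mathrm{T}\mathrm{T}^*M$. The map $\Psi_{\mathrm{Euc}}^\epsilon$ preserves $\Omega_\text{mag}$ on all of $\R^{2m}$ by \cref{lemma:euclidean-symmetric-symplectic}. It remains to handle the two shear-and-project maps. Here I would use the fact that $\Pi_1$ and $\Pi_2$ are momentum translations by $G(q)^\top$ times a multiplier, i.e., shears in the conormal direction, and that such maps are symplectic with respect to $\Omega_\text{mag}$ when restricted to the constraint manifold---this is because the constraint surface $\{G(q)\nabla_p H = 0\}$ together with $\{g(q)=0\}$ is coisotropic/the conormal directions are $\Omega_\text{mag}$-orthogonal to $\mathrm{T}\mathrm{T}^*M$ after one accounts for the $\mathrm{L}$-block. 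The cleanest route is to mimic the generating-function / one-form argument of \cite{leimkuhler_reich_2005}: express one step as the composition of the exact constrained magnetic flow pieces, noting each factor $\Phi_i^\epsilon$ followed by its Lagrange-multiplier correction is the time-$\epsilon$ map of a constrained magnetic Hamiltonian system, which preserves $\Omega_\text{mag}|_{\mathrm{T}^*M}$ by \cref{thm:magnetic-conservation}(ii); the composition of symplectic maps is symplectic.

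The \textbf{main obstacle} I anticipate is the symplecticness of the constrained shear/projection steps $\Pi_1,\Pi_2$ in the presence of the nonzero $\mathrm{L}$-block of $\mathbb{J}_\text{mag}$: unlike the canonical case, $\Omega_\text{mag}$ pairs the position--position directions, so one must check carefully that the conormal shear $p\mapsto p - \tfrac{\epsilon}{2}G(q)^\top\mu$ (with $\mu$ an implicit function of $(q,p)$) is still an $\Omega_\text{mag}$-symplectomorphism of $\mathrm{T}^*M$. I would handle this by computing the differential of $\Psi^\epsilon$ on tangent vectors $\delta\in\mathrm{T}\mathrm{T}^*M$ (vectors satisfying $G(q)\delta_q = 0$ and the linearized momentum constraint), pushing the $\delta$-variation of $\mu$ and $\mu'$ through via implicit differentiation of \cref{eq:position-constraint,eq:momentum-constraint}, and verifying that all extra terms land in the $\mathbb{J}_\text{mag}$-kernel directions and hence contribute nothing to $\Omega_\text{mag}(\mathrm{d}\Psi^\epsilon\,\delta_1,\mathrm{d}\Psi^\epsilon\,\delta_2)$. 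Everything else---the composition arguments, reduction from $N$ steps to one step---is routine.
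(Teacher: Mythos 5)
Your decomposition of one step as $\Pi_2\circ\Psi_{\mathrm{Euc}}^\epsilon\circ\Pi_1$ and your reversibility bookkeeping for symmetry match the paper's argument in \cref{app:proof-manifold-symmetric-symplectic} essentially verbatim; that half is fine. The symplecticness half, however, has a genuine gap. Your route (b) is unsound: neither $\Pi_1$, $\Pi_2$, nor $\Psi_{\mathrm{Euc}}^\epsilon$ is the time-$\epsilon$ flow of a \emph{constrained} magnetic Hamiltonian system, so \cref{thm:magnetic-conservation}(ii) does not apply to the pieces. The Euclidean single step is a composition of three \emph{unconstrained} flows on $\R^{2m}$ (it ignores the manifold entirely and $\Phi_2^\epsilon$ carries $q$ off $M$), and the shears are projections by an implicitly defined multiplier, not flows of any Hamiltonian; the grouping ``each $\Phi_i^\epsilon$ plus its Lagrange correction is an exact constrained flow'' is simply false.

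Your route (a) is closer, but the ``main obstacle'' you flag — that the $\mathrm{L}$-block of $\mathbb{J}_\text{mag}$ might spoil symplecticness of the shears — is in fact a non-obstacle, and seeing why is the whole proof. The shears $\Pi_1,\Pi_2$ leave $q$ \emph{fixed}, so $\mathrm{d}q$ is unchanged and the position--position term $\frac{1}{2}\,\mathrm{d}q\wedge\mathrm{L}\,\mathrm{d}q$ of $\Omega_\text{mag}$ is trivially preserved by them. For the $\mathrm{d}q\wedge\mathrm{d}p$ term, the paper invokes the wedge-product identity $\mathrm{d}q\wedge\mathrm{d}(G(q)^\top\mu)=0$ on $\{g(q)=0\}$ (\cref{fact:lagrange-wedge}), which follows from $G(q)\,\mathrm{d}q=0$ together with the symmetry of each Hessian $\nabla^2 g_i$; this identity already absorbs the $\mathrm{d}\mu$ dependence, so no explicit implicit differentiation of $\mu,\mu'$ is needed (your proposed computation would reproduce this but the heavy bookkeeping is avoidable). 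With that lemma, each of the three substeps preserves $\Omega_\text{mag}$ on $\mathrm{T}^*M$ directly — the two shears by the argument above, the middle step because $\Psi_{\mathrm{Euc}}^\epsilon$ is $\Omega_\text{mag}$-symplectic on all of $\R^{2m}$ by \cref{lemma:euclidean-symmetric-symplectic} — and the composition is symplectic. You had the right pieces; the missing observation is \cref{fact:lagrange-wedge} combined with the fact that $\Pi_1,\Pi_2$ fix $q$.
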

\fi

A proof is given in \cref{app:proof-manifold-symmetric-symplectic}. The order  of \cref{alg:manifold-integrator} as an integrator of magnetic dynamics is derived in \cref{app:manifold-integrator-order}. Having constructed a symmetric and symplectic integrator on the manifold, we may apply \cref{alg:manifold-integrator} in \cref{alg:mmhmc} to yield a manifold-constrained magnetic HMC procedure.

{\bf Lagrange multipliers}. In practice, the Lagrange multiplier $\mu$ is identified via Newton's method. This procedure is summarized in \cref{alg:lagrange-newton}. Note that the root of $f$ in \cref{alg:lagrange-newton} is a Lagrange multiplier satisfying \cref{eq:position-constraint}. The second Lagrange multiplier $\mu'$ may be obtained in closed-form by rearranging \cref{eq:cotangent-momentum,eq:momentum-constraint} and solving the normal equations
\begin{align}\label{eq:lagrange-normal}
    \frac{\epsilon}{2} G(q_{n+1})G(q_{n+1})^\top \mu' = G(q_{n+1}) \bar{p}_{n+1},
\end{align}
which has a unique solution when $G(q_{n+1})$ has full rank (recall $G$ is the Jacobian of the constraint).

\section{EXPERIMENTS}\label{sec:experiments}

In this section we give experimental evaluations of the magnetic manifold HMC sampler. We compare against three competing methods: (i) canonical HMC on the manifold, (ii) Metropolis-adjusted Langevin diffusions on the manifold, and (iii) random walk Metropolis on the manifold, all of which were implemented according to the description in \cite{pmlr-v22-brubaker12}.
The magnetic structure $\mathrm{L}$ is a hyperparameter which we select by random search over five randomly-generated skew-symmetric matrices.

\subsection{Gaussian Under Linear Constraints}

\begin{table}[t!]
    \scriptsize
    \centering
    \caption{Minimum expected sample size (Min.) and mean expected sample size (Mean) metrics and per-second timing comparisons for the linearly-constrained Gaussian task over ten independent trials.}
    \begin{tabular}{l|rrrr}
        Method & Min. & Mean & Min. / Sec. & Mean / Sec. \\ \bottomrule
        Metropolis &  21.91 & 239.88 & 1,503.29 & 16,484.988 \\
        Langevin & 30.02 & 1,517.12 & 1,151.95 & 57,512.40 \\
        Canonical & 8,869.02 &  9,717.25 &  \bf{21,001.408} & 89,347.71 \\
        Magnetic & \bf{9,659.75} & \bf{9,914.93} & 20,768.29 & \bf{97,093.14}
    \end{tabular}
    \label{tab:lingauss}
\end{table}

Our first example considers sampling from a Gaussian distribution subject to linear constraints. Let $g(q) = Ax - b$ for $A\in\R^{m\times m}$ and $b\in \R^m$. $M= \set{q\in\R^m : g(q) =0 }$ is a linear submanifold of Euclidean space. We wish to draw samples from $\text{Normal}(\mu, \Sigma ~\vert~ g(q) = 0)$.
Following \cite{pmlr-v22-brubaker12} we set $b=(0,0)^\top$ and $A = \begin{pmatrix} 1  & 1 & 1 & 1 \\ 1 & 1 & -1 & 1 \end{pmatrix}$
and set the parameters of the normal distribution to be $\mu = (0,0,0,0)^\top$ and $\Sigma=\text{diag}(1, 1, 1/100, 1/100)$. We initialize each sampler at the mode of the distribution, which corresponds with the Gaussian mean at $\mu$. We sample 10,000 times from the target distribution and compute effective sample size statistics; we truncate the effective sample size at 10,000. We consider a grid of parameter values $\epsilon \in \set{1/10, 1/100, 1/1000}$ and $N\in\set{\mathrm{3, 5, 10, 100, 1000}}$. In computing the effective sample size, we report the best-case performance of canonical HMC, magnetic HMC, Metropolis-adjusted Langevin, and random walk Metropolis when results are averaged over ten independent trials of each parameter configuration. Results are shown in \cref{tab:lingauss}. We find that the magnetic integrator does best on the absolute measures of minimum and mean ESS, and mean ESS per second, but is worse than canonical HMC in minimum ESS per second.

\subsection{Bingham-von Mises-Fisher Distribution}

\begin{table}[t!]
    \scriptsize
    \centering
    \caption{Minimum expected sample size (Min.) and mean expected sample size (Mean) metrics and per-second timing comparisons for the Bingham-von Mises-Fisher task over ten independent trials.}
    \begin{tabular}{l|rrrr}
        Method & Min. & Mean & Min. / Sec. & Mean / Sec. \\ \bottomrule
        Metropolis & 370.013 & 445.074 & 2,541.308 & 3,057.195 \\
        Langevin & 879.768 & 1,076.740 & 4,850.082 & 5,936.750 \\
        Canonical & 6,330.063 & 8,284.166 & 6,528.653 & 9,488.570 \\
        Magnetic & \bf{10,000.0} & \bf{10,000.0} & \bf{11,268.489} &  \bf{11,268.489}
    \end{tabular}
    \label{tab:bvmf}
\end{table}

We next consider sampling from a Bingham-von Mises-Fisher distribution on $\mathbb{S}^5\subset\R^6$. This distribution is defined by $\pi(q) \propto\exp(b^\top q + q^\top Aq)$ for $b\in \R^6$ and $A\in \R^{6\times 6}$. We randomly generate a square positive definite matrix $A$ and standard normal vector $b$ and compare the performance of the four manifold samplers we consider. As in the linearly-constrained Gaussian experiments, we sample 10,000 times from the target distribution and compute effective sample size statistics; we truncate the effective sample size at 10,000. We consider a grid search over possible parameters $\epsilon\in\set{1/10, 1/100, 1/1000}$ and $N\in\set{5, 10, 100, 1000}$ and compute the best-case performance of the samplers when results are averaged over ten independent trials. Results are shown in \cref{tab:bvmf}. The magnetic integrator achieved an ESS of over 10,000 in each of the six coefficients, outperforming the other three samplers on this task; HMC can exhibit ESS exceeding the number of samples if samples are negatively correlated.

\subsection{Non-Conjugate Simplex Model}

\begin{table*}[t]
\scriptsize
\centering
\caption{Minimum expected sample size and per-second timing comparisons for the non-conjugate simplex task with varying Dirichlet parameterizations $\alpha$. Results averaged over fifty independent trials.}
\begin{tabular}{@{}rrrrrrr@{}}
\toprule
                                & \multicolumn{2}{c}{$\alpha=1$} & \multicolumn{2}{c}{$\alpha=3$} & \multicolumn{2}{c}{$\alpha=5$} \\ \midrule
\multicolumn{1}{l|}{}           & Min. ESS   & Min. ESS / Sec.   & Min. ESS   & Min. ESS / Sec.   & Min. ESS   & Min. ESS / Sec.   \\ \midrule
\multicolumn{1}{l|}{Metropolis} & 11.99      & 54.80             & 19.96    & 91.03             & 29.07      & 134.62            \\
\multicolumn{1}{l|}{Langevin}   & 13.39      & 19.32             & 23.36      & 34.63             & 34.96      & 52.53             \\
\multicolumn{1}{l|}{Canonical}  & {\bf 6987.37}    & {\bf 569.46}            & 7988.81      & 655.81            & 6601.39    & 545.24            \\
\multicolumn{1}{l|}{Magnetic}   & 6639.72    & 537.67            & {\bf 9893.07}    & {\bf 801.16}            & {\bf 9866.26}    & {\bf 822.97}            \\ \bottomrule
\end{tabular}
\label{tab:simplex}
\end{table*}
\begin{figure*}[t!]
    \centering
    \begin{subfigure}[b]{0.23\textwidth}
        \centering
        \includegraphics[width=\textwidth]{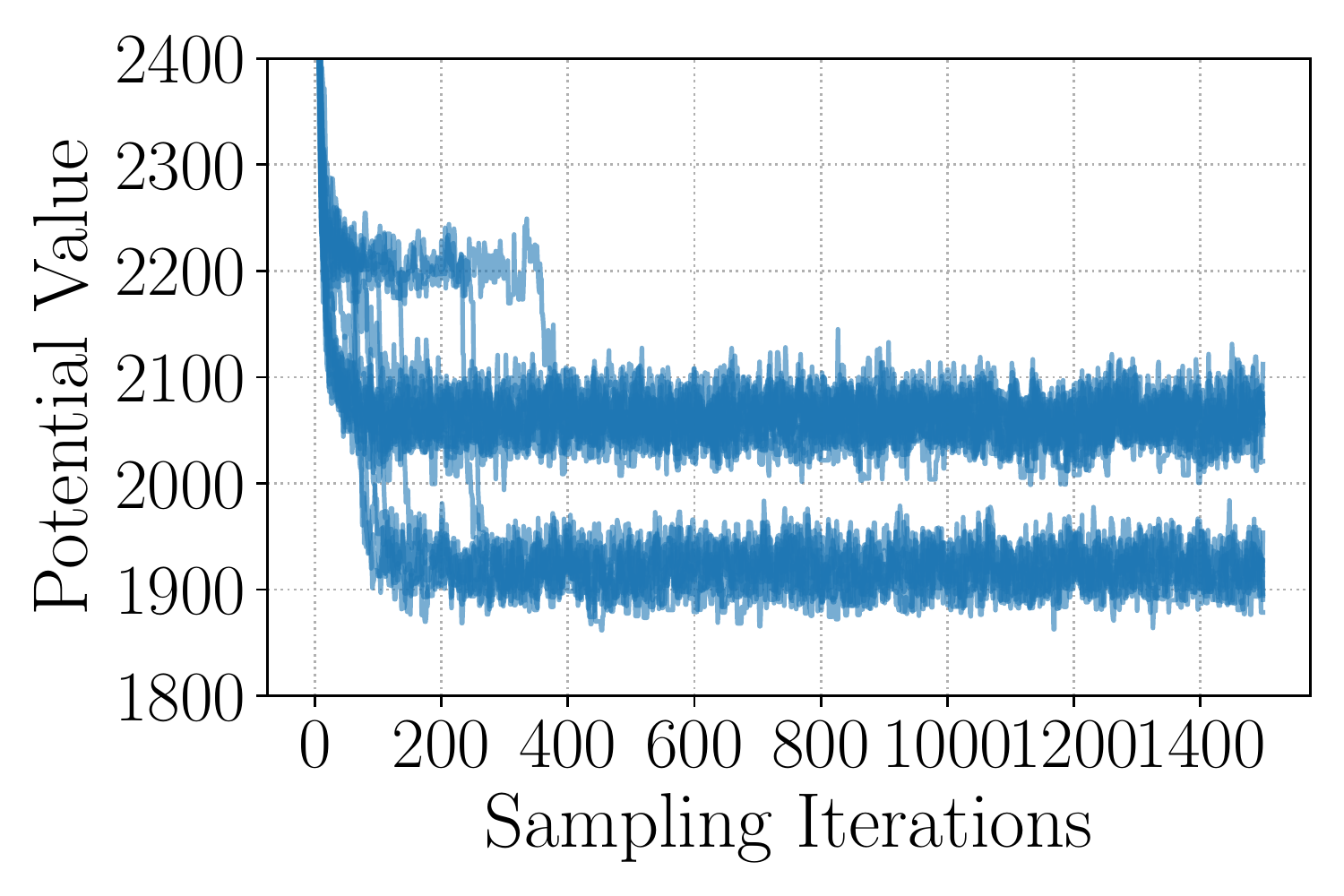}
        \caption{Canonical}
        \label{subfig:network-canonical}
    \end{subfigure}
    ~
    \begin{subfigure}[b]{0.23\textwidth}
        \centering
        \includegraphics[width=\textwidth]{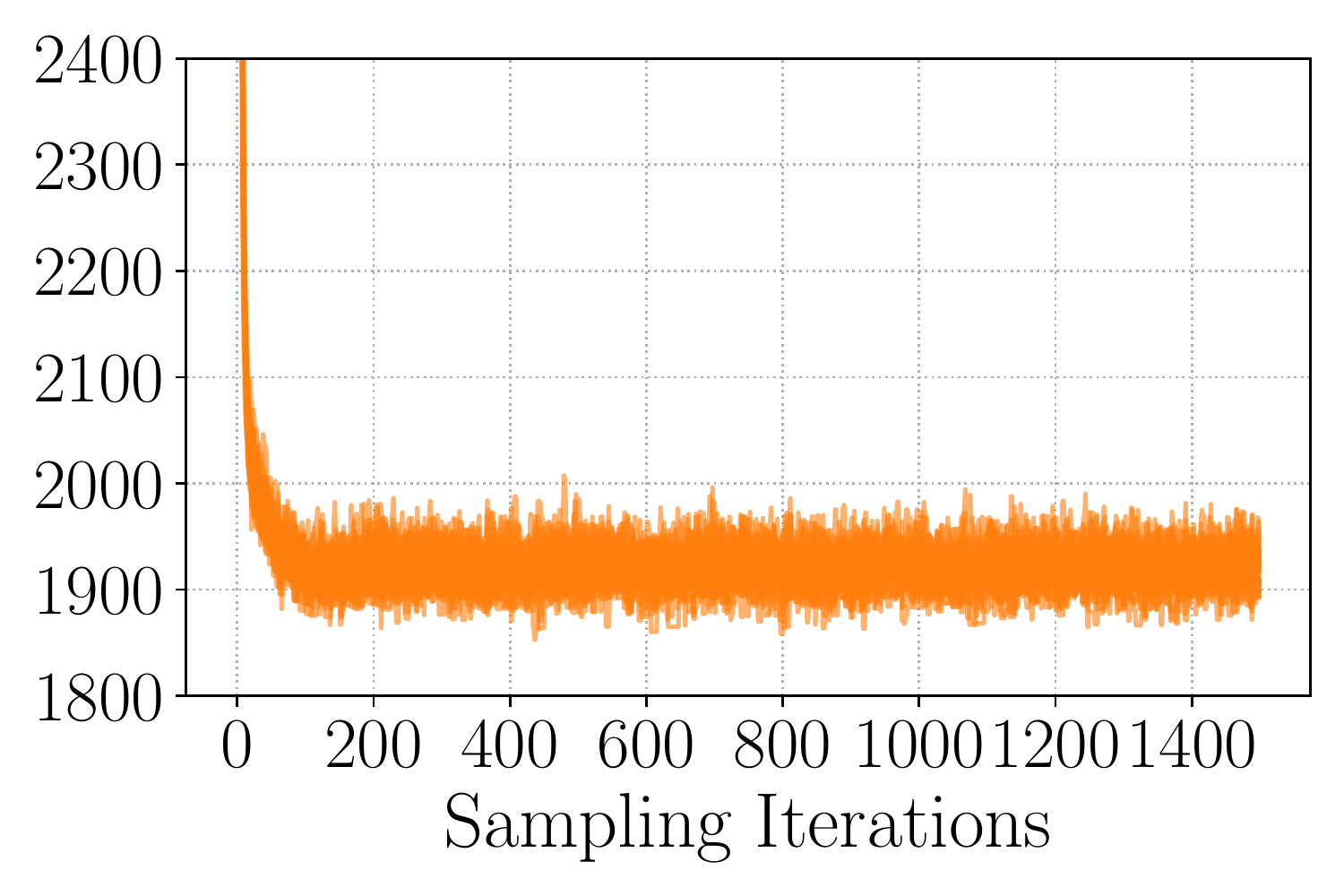}
        \caption{Large Mode Magnetic}
        \label{subfig:network-large}
    \end{subfigure}
    ~
    \begin{subfigure}[b]{0.23\textwidth}
        \centering
        \includegraphics[width=\textwidth]{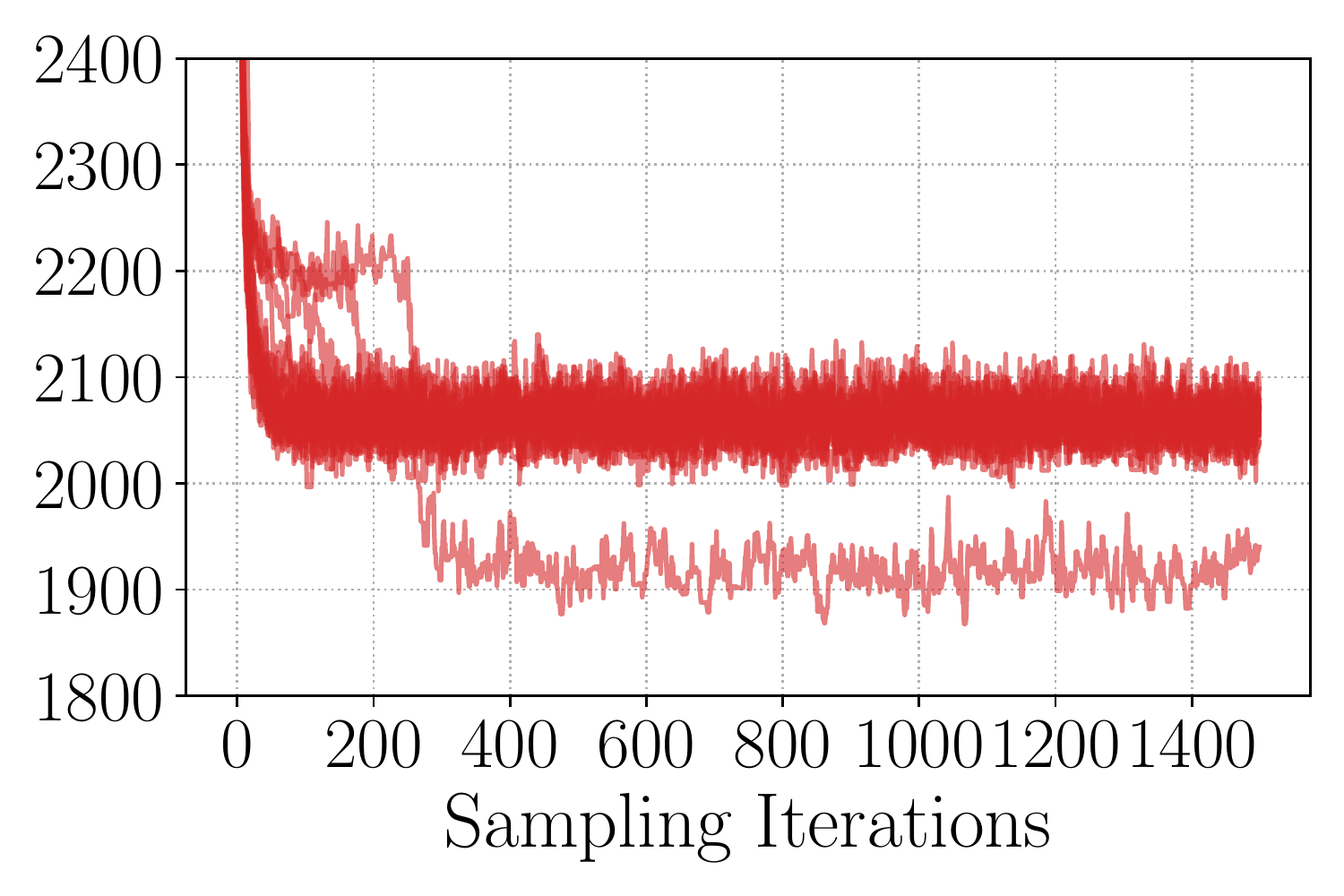}
        \caption{Small Mode Magnetic}
        \label{subfig:network-small}
    \end{subfigure}
    ~
    \begin{subfigure}[b]{0.23\textwidth}
        \centering
        \includegraphics[width=\textwidth]{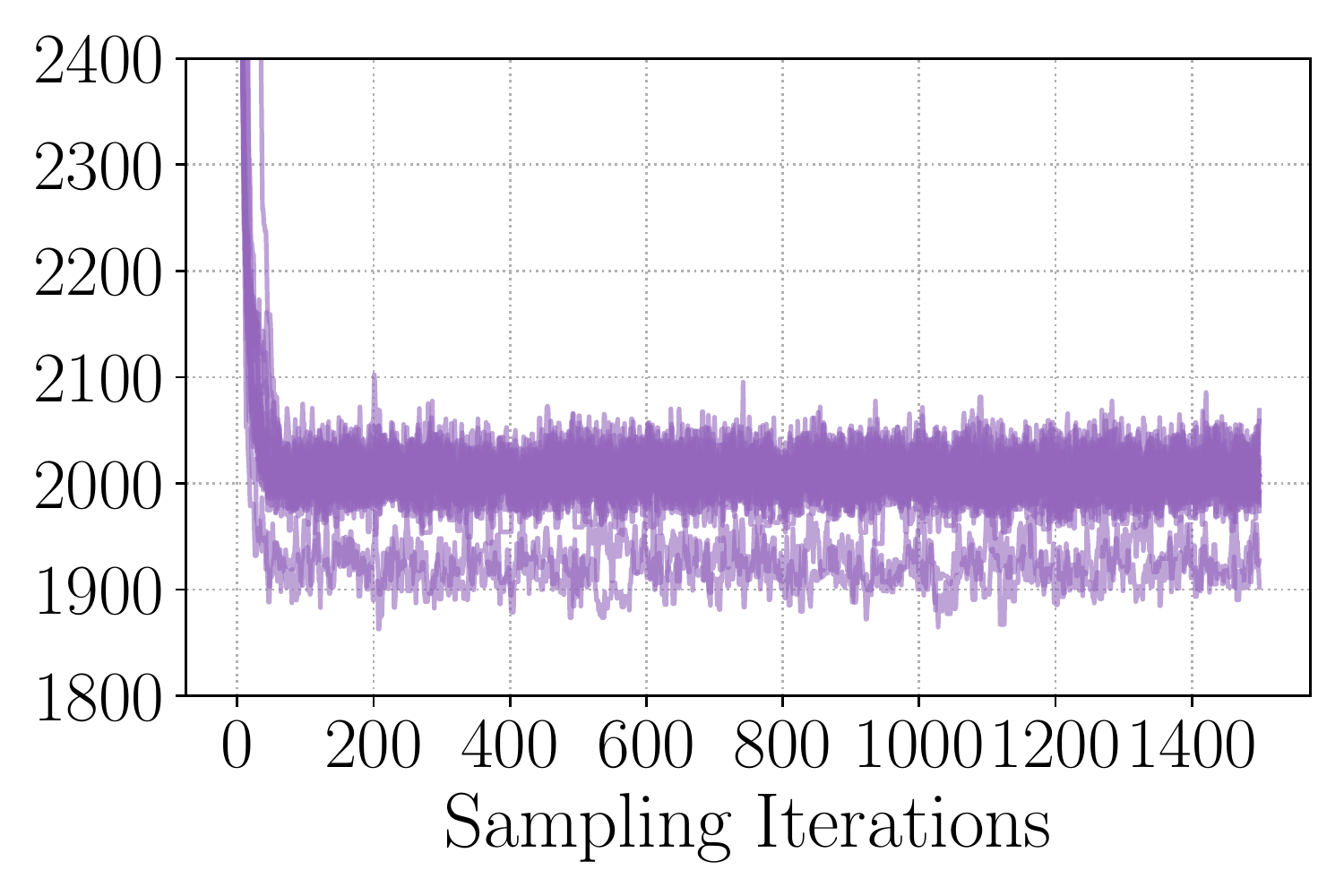}
        \caption{Rare Mode Magnetic}
        \label{subfig:network-rare}
    \end{subfigure}
    \caption{Twenty canonical and magnetic sampling trajectories on the network eigenstructure task. Magnetic dynamics can be encouraged to sample particular modes of the posterior depending on the magnetic structure. \Cref{subfig:network-canonical} is implemented using the description in \cite{pmlr-v22-brubaker12}. The magnetic manifold HMC has the matrix $\mathrm{L}$ as a parameter.
    We sampled ten different matrices; the mode-finding behavior of magnetic manifold HMC for three different choices of $\mathrm{L}$ is shown in \Cref{subfig:network-large,,subfig:network-small,subfig:network-rare}.}
    \label{fig:network-modes}
\end{figure*}

Denote the simplex embedded in $\R^n$ by $\Delta^{n-1} = \set{\theta \in\R^n : \theta \geq 0 ~\text{and}~ \sum_{i=1}^n \theta_i = 1}$. We consider the volleyball dataset from \cite{hyper2}, which consists of nine volleyball players; each player has a skill $\theta_i$ such that $(\theta_1,\ldots,\theta_9)\in\Delta^8$. For each game, players are partitioned into teams $T_1,T_2\subset\set{1,\ldots, 9}$ and the probability that $T_1$ triumphs over $T_2$ in a game of volleyball is modeled as $\sum_{i\in T_1} \theta_i / \sum_{i\in T_1\cup T_2} \theta_i$. Given a Dirichlet prior $\theta \sim \text{Dirichlet}(\alpha_1,\ldots,\alpha_9)$ and observations of teams and victories, the inference task is to sample from the posterior over $\theta$. Because the simplex is not expressible as the zero levelset of some function (due the the positivity constraint), we follow \cite{Byrne_2013} and embed the simplex into the positive orthant of $\mathbb{S}^{n-1}$ by the mapping $(q_1,\ldots, q_n) \defeq (\sqrt{\theta_1},\ldots,\sqrt{\theta_n})$ and draw samples on $\mathbb{S}^{n-1}$ instead of $\Delta^{n-1}$; see \cite{Byrne_2013} for full details. Samples on $\mathbb{S}^{n-1}$ can be transformed back to $\Delta^{n-1}$ by the map $q_i \mapsto q_i^2$. We set $\epsilon = 0.01$ and $N=20$ in these experiments and consider $\alpha_1=\ldots=\alpha_9=\alpha$ for $\alpha\in\set{1, 3, 5}$. Results are summarized in \cref{tab:simplex}; magnetic manifold HMC is strongest when an informative ($\alpha > 1$) Dirichlet prior is used whereas canonical HMC performs better in the case of the non-informative prior $\alpha=1$.

\subsection{Network Eigenmodel}

We consider Bayesian inference in the context of network analysis using the example from \cite{Byrne_2013,doi:10.1198/jcgs.2009.07177}. This application considers protein interactions in a network of 230 proteins. Formally, the observations consist of a $230\times 230$ adjacency matrix $\Delta$ whose $(i,j)$ entry, $\delta_{ij}$, equals one if the $i^\text{th}$ and $j^\text{th}$ proteins interact. Let $\phi : \R\to(0,1)$ denote the probit function. The objective is to perform inference in the following Bayesian model: 
\begin{align}
    \delta_{ij} ~\vert~ U, \Sigma, c &\sim\text{Bernoulli}(\phi((U\Sigma U^\top)_{ij} + c))
\end{align}
with priors $\sigma_i \sim \text{Normal}(0, 230)$, $c \sim\text{Normal}(0, 100)$, and $U \sim\text{Uniform}(\text{Stiefel}(230, 3))$,
where $\Sigma = \text{diag}(\sigma_1,\sigma_2, \sigma_3)$ and $\text{Stiefel}(230, 3)$ is the Stiefel manifold consisting of $230\times 3$ orthogonal matrices. This model is interpreted as identifying a low-rank eigendecomposition of a matrix whose probit transform models the probability of proteins interacting.

This is a challenging posterior for gradient-based Bayesian inference because it is multi-modal. To sample from all the modes of the distribution, it is necessary to combine multiple Markov chains with a parallel tempering scheme \citep{Byrne_2013}. This method permits HMC transitions to go between modes of the distribution. Our experiments instead consider the question of whether particular choices of magnetic structure influence which mode of the distribution magnetic manifold HMC will target. We therefore generated several magnetic structures by skew-symmetrizing a standard normal matrix and compared their mode-finding behavior.

We found that random walk Metropolis and manifold Langevin were ineffective in this task. Therefore, we restrict our discussion to the canonical and magnetic variants of HMC. Let $\Xi~\text{diag}(\alpha)~\Xi^\top$ be the rank-3 singular value decomposition of $\Delta$ where $\Xi\in\text{Stiefel}(230, 3)$ and $\alpha\in\R^3$. From the initial condition $c=0$, $\sigma_i=\alpha_i$ for $i=1,2,3$, and $U=\Xi$, canonical HMC regularly falls into one of two modes with potential values approximately 2,100 (smaller mode) and 1,900 (larger mode), a phenomenon previously observed in \citep{Byrne_2013}. Intriguingly, it is possible to prescribe magnetic structures which tend to target either of these modes, with virtually all sampling trajectories of magnetic HMC concentrating in one of the modes. Even more interesting is that there is a magnetic structure which sometimes targets a ``rare mode'' with potential value approximately 2,000 that canonical HMC never enters. All of these phenomena are illustrated across twenty random sampling trajectories in \cref{fig:network-modes}. This phenomenon could be exploited for targeting modes in multi-modal distributions.

\section{CONCLUSION}\label{sec:conclusion}

This paper presented the magnetic manifold HMC algorithm. We discussed the theory of magnetic Hamiltonian dynamics embedded in an ambient Euclidean space. We proved that these dynamics conserve energy and volume on the manifold. We proposed a symmetric and symplectic numerical integrator for these dynamics. We evaluated the magnetic manifold HMC procedure on manifold-constrained sampling tasks. Our experimental results show the promise of introducing magnetic effects into the proposal operator used in HMC.
We defer to future work the study of how magnetic structures may be generated to explore the posterior and favor certain modalities.

\section*{Acknowledgments}

This material is based upon work supported by the National Science Foundation Graduate Research Fellowship under Grant No. 1752134. Any opinion, findings, and conclusions or recommendations expressed in this material are those of the authors and do not necessarily reflect the views of the National Science Foundation. James Brofos's affiliation with The MITRE Corporation is provided for identification purposes only, and is not intended to convey or imply MITRE's concurrence with, or support for, the positions, opinions, or viewpoints expressed by the author.

\bibliography{thebib}

\newpage
\onecolumn
\appendix
\appendixtrue

\newpage
\section{Extended Preliminaries}\label{app:geometry-preliminaries}

This appendix is intended to provide preliminary mathematics, geometry, and physics for understanding the proofs. It is organized as a collection of definitions and facts which are referenced in the proofs where they are needed. Where possible, citations with page numbers are given for previously established facts.

\subsection{General Mathematics}

\begin{definition}[Permutation Group]
Denote by $S^n$ the group of permutations on $n$ elements.
\end{definition}
\begin{definition}
The sign of a permutation $\sigma\in S^n$, denoted $\text{sign}(\sigma)$, is the parity ($+1$ if even, $-1$ if odd) of the number of transpositions required to write the permutation.
\end{definition}

\begin{definition}[Skew-Symmetric Matrix]\label{def:skew-symmetric}
A matrix $\mathbb{J}$ is skew-symmetric if $\mathbb{J}^\top = -\mathbb{J}$. The set of skew-symmetric $n\times n$ matrices is denoted $\text{Skew}(n)$.
\end{definition}
\begin{fact}[Skew-Symmetric Matrices Annihilate Vectors]\label{fact:skew-symmetric-annihilation}
For a skew-symmetric matrix $\mathbb{J}\in\text{Skew}(n)$ and a vector $x\in\R^n$, $x^\top \mathbb{J} x = 0$.
\end{fact}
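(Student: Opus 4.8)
The plan is to exploit the observation that $x^\top \mathbb{J} x$ is a $1\times 1$ matrix, hence a scalar, and therefore equal to its own transpose. First I would set $s \defeq x^\top \mathbb{J} x \in \R$ and write $s = s^\top = (x^\top \mathbb{J} x)^\top = x^\top \mathbb{J}^\top x$. Next I would invoke the defining property of skew-symmetry, $\mathbb{J}^\top = -\mathbb{J}$ (\cref{def:skew-symmetric}), to obtain $s = x^\top(-\mathbb{J})x = -s$. Adding $s$ to both sides yields $2s = 0$, and since the entries are real we may divide by $2$ to conclude $s = 0$, which is exactly the claim.

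There is essentially no obstacle here; the proof is a one-liner. The only point requiring a moment's care is the final cancellation $2s = 0 \implies s = 0$, which uses that $2$ is invertible in the underlying field — true over $\R$, though the statement would fail over a field of characteristic two. If one wished to sidestep even this caveat, an alternative coordinate-level argument expands $x^\top \mathbb{J} x = \sum_{i,j} x_i \mathbb{J}_{ij} x_j$ and pairs the $(i,j)$ summand with the $(j,i)$ summand: since $\mathbb{J}_{ij} = -\mathbb{J}_{ji}$ while $x_i x_j = x_j x_i$, the off-diagonal terms cancel in pairs, and the diagonal terms vanish because $\mathbb{J}_{ii} = 0$. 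I would present the transpose argument as the main proof, since it is cleaner and matches the matrix-level style used elsewhere in the paper, and mention the coordinate computation only as a remark.
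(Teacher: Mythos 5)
Your proof is correct. The paper states this as a \emph{Fact} in \cref{app:geometry-preliminaries} without supplying a proof of its own, so there is nothing to compare against; your transpose argument, $s = x^\top \mathbb{J} x = (x^\top \mathbb{J} x)^\top = x^\top \mathbb{J}^\top x = -s$, together with the observation that $2$ is invertible over $\R$, is the standard and complete justification, and the coordinate-level cancellation you sketch as an alternative is likewise sound.
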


\begin{definition}[Skew-Symmetric Linear Map (Page 393 in \cite{10.5555/50877})]
Let $V$ be a vector space. A map 
\begin{align}
\alpha : \underbrace{V\times\cdots\times V}_{k~\text{times}}\to\R
\end{align}
is skew-symmetric if
\begin{align}
    \alpha(v_1,\ldots,v_k) = \text{sign}(\sigma) \alpha(v_{\sigma(1)}, \ldots, v_{\sigma(k)})
\end{align}
where $\sigma\in S^k$ is a permutation.
\end{definition}

\begin{fact}[Inverse Function Theorem]\label{fact:inverse-function-theorem}
Let $f:\R^k\to\R^k$ be differentiable. Suppose that at $\mu\in\R^k$ the Jacobian $\nabla_\mu f(\mu)$ has non-zero determinant. Then there exists an open set $O$ containing $\mu$ such that there exists a differentiable inverse function $f^{-1} : f(O) \to O$.
\end{fact}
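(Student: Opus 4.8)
The plan is to prove this by the contraction mapping method. The argument uses continuity of $\nabla_\mu f$ near $\mu$; this holds in all of our applications, where $f$ is smooth, so I will take $f$ to be continuously differentiable.

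First I would normalize. Let $A \defeq \nabla_\mu f(\mu)$, which is invertible by hypothesis. Replacing $f$ by the map $x\mapsto A^{-1}\big(f(x+\mu) - f(\mu)\big)$ --- a composition of $f$ with invertible affine maps, which alters neither the existence nor the differentiability of a local inverse --- I may assume $\mu = \mathbf{0}$, $f(\mathbf{0}) = \mathbf{0}$, and $\nabla f(\mathbf{0}) = \text{Id}_k$. Define $\varphi(x) \defeq x - f(x)$, so $\nabla\varphi(\mathbf{0}) = \mathbf{0}_k$. By continuity of $\nabla f$ choose $r > 0$ with $\|\nabla\varphi(x)\| \le \frac{1}{2}$ for $\|x\|\le r$; the mean value inequality then gives $\|\varphi(x_1) - \varphi(x_2)\| \le \frac{1}{2}\|x_1 - x_2\|$ on the closed ball $\bar B_r \defeq \set{x\in\R^k : \|x\|\le r}$.

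Next I would solve $f(x) = y$ via a fixed point. For $\|y\|\le r/2$, the map $T_y(x) \defeq y + \varphi(x)$ satisfies $\|T_y(x)\|\le \|y\| + \|\varphi(x) - \varphi(\mathbf{0})\| \le r$, so $T_y$ carries $\bar B_r$ into itself and is a $\frac{1}{2}$-contraction; the Banach fixed point theorem yields a unique $x\in\bar B_r$ with $f(x) = y$. Hence $f$ is a bijection from the open set $O \defeq f^{-1}(B_{r/2}) \cap B_r$ (which contains $\mathbf{0}$) onto $B_{r/2}$, defining the inverse $f^{-1} : f(O)\to O$. Writing $x_i = f^{-1}(y_i)$, the same estimate gives $\|x_1 - x_2\| \le \|f(x_1) - f(x_2)\| + \|\varphi(x_1) - \varphi(x_2)\| \le \|y_1 - y_2\| + \frac{1}{2}\|x_1 - x_2\|$, whence $\|f^{-1}(y_1) - f^{-1}(y_2)\| \le 2\|y_1 - y_2\|$, so $f^{-1}$ is Lipschitz, in particular continuous.

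Finally, differentiability. Shrinking $r$ so that $\det\nabla f(x)\neq 0$ for $\|x\|\le r$ (possible since $\det\nabla f$ is continuous and nonzero at $\mathbf{0}$), fix $y_0 = f(x_0)$ with $x_0\in O$ and set $B \defeq \nabla f(x_0)$. Using the first-order expansion of $f$ at $x_0$ together with the Lipschitz bound $\|f^{-1}(y) - x_0\|\le 2\|y - y_0\|$, one shows $f^{-1}(y) - x_0 - B^{-1}(y - y_0) = o(\|y-y_0\|)$, so $f^{-1}$ is differentiable at $y_0$ with $\nabla f^{-1}(y_0) = (\nabla f(x_0))^{-1}$. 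I expect this last step --- converting the first-order behavior of $f$ into that of $f^{-1}$ while controlling the error terms --- to be the only delicate part; the remainder is a routine application of the Banach fixed point theorem.
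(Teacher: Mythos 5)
Your proof is correct, and it is the standard contraction-mapping argument: normalize so that $\nabla f(\mathbf{0}) = \text{Id}_k$, solve $f(x)=y$ by applying the Banach fixed point theorem to $T_y(x) = y + x - f(x)$ on a small closed ball, deduce that the local inverse is Lipschitz, and then upgrade the first-order expansion of $f$ to one for $f^{-1}$. The paper does not prove this statement at all; it is listed among the background facts in the preliminaries as a classical result, so there is no internal proof to compare against. One point in your favor worth keeping: you correctly observe that the hypothesis as written (mere differentiability with $\det\nabla_\mu f(\mu)\neq 0$ at the single point $\mu$) is not sufficient --- the map $x\mapsto x + 2x^2\sin(1/x)$ on $\R$ has derivative $1$ at the origin yet is injective on no neighborhood of it --- so the continuity of $\nabla f$ near $\mu$ that you assume is genuinely needed, and it does hold everywhere the fact is invoked in the paper (the map $F_\epsilon$ in the Lagrange-multiplier uniqueness argument is smooth).
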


\subsection{Differential Forms}

Differential forms are an important topic in differential geometry. Nearly any book on differential geometry will contain a detailed discussion of these objects. For instance, \cite{lee2003introduction,10.5555/1965128,10.5555/50877} all contain detailed sections on differential forms.

\begin{definition}[Differential $k$-form (Page 129 of \cite{10.5555/1965128})]
Let $M$ be a manifold of dimension $m$ and let $q\in M$. A differential $k$-form $\alpha$ on $M$ ($k\leq m$) is a skew-symmetric linear map
\begin{align}
    \alpha : \underbrace{\mathrm{T}_qM\times\cdots\times\mathrm{T}_qM}_{k ~\text{times}}\to\R.
\end{align}
\end{definition}

For a complete appreciation of our theoretical results, an understanding of 1-, 2-, and $m$-forms will be required. The most important 1-forms are the coordinate 1-forms.
\begin{definition}[Coordinate 1-Forms]\label{def:coordinate-1-form}
Let $M$ be a manifold of dimension $m$ and let $q\in M$ with $q = (q_1,\ldots, q_m)$. The coordinate 1-forms are $\mathrm{d}q_i : \mathrm{T}_qM \to \R$ defined by $\mathrm{d}q_i(v) = v_i$ where $v = (v_1,\ldots, v_m)\in\mathrm{T}_qM$.
\end{definition}

The wedge product of differential forms is the principle tool by which differential forms are combined to give another differential form.
\begin{definition}[Wedge Product]
Let $\alpha$ be a differential $k$-form and $\beta$ a differential $l$-form. The wedge product of $\alpha$ and $\beta$, denoted $\alpha\wedge\beta$, is a $(k+l)$-form defined by
\begin{align}
    (\alpha\wedge\beta)(v_1,\ldots, v_{k+l}) \defeq \frac{1}{k!l!} \sum_{\sigma \in S^{k+l}} \text{sign}(\sigma) \alpha(v_{\sigma(1)},\ldots, v_{\sigma(k)}) \beta(v_{\sigma(k+1)},\ldots,v_{\sigma(k+l)})
\end{align}
where $S^{k+l}$ denotes the permutation group on $k+l$ elements. 
\end{definition}

\begin{fact}[Wedge Product of Coordinate 1-Forms]\label{fact:wedge-product-cooordinate-1-form}
Let $\mathrm{d}q_i$ and $\mathrm{d}q_j$ be coordinate 1-forms. Let $u,v\in\mathrm{T}_qM$ with $u=(u_1,\ldots,u_m)$ and $v=(v_1,\ldots, v_m)$. Then
\begin{align}
    (\mathrm{d}q_i\wedge \mathrm{d}q_j)(u, v) = u_iv_j - u_jv_i
\end{align}
\end{fact}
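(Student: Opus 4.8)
The plan is to unwind the definition of the wedge product in the special case $k=l=1$ and evaluate it directly on the pair $(u,v)$, then substitute the defining property of the coordinate 1-forms from \cref{def:coordinate-1-form}.

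First I would specialize the wedge product formula to $\alpha = \mathrm{d}q_i$ and $\beta = \mathrm{d}q_j$, both $1$-forms, so that $k+l=2$ and the normalizing constant is $\tfrac{1}{k!\,l!} = 1$. The sum then runs over $S^2 = \{e, \tau\}$, where $e$ is the identity and $\tau$ is the single transposition, with $\text{sign}(e) = +1$ and $\text{sign}(\tau) = -1$. Writing the two terms out with $v_1 = u$ and $v_2 = v$, the identity permutation contributes $+\,\mathrm{d}q_i(u)\,\mathrm{d}q_j(v)$ and the transposition contributes $-\,\mathrm{d}q_i(v)\,\mathrm{d}q_j(u)$, so that
\begin{align}
(\mathrm{d}q_i\wedge \mathrm{d}q_j)(u, v) = \mathrm{d}q_i(u)\,\mathrm{d}q_j(v) - \mathrm{d}q_i(v)\,\mathrm{d}q_j(u).
\end{align}

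Next I would apply \cref{def:coordinate-1-form}, which gives $\mathrm{d}q_i(w) = w_i$ for any $w = (w_1,\ldots,w_m)\in\mathrm{T}_qM$. Substituting $\mathrm{d}q_i(u) = u_i$, $\mathrm{d}q_j(v) = v_j$, $\mathrm{d}q_i(v) = v_i$, and $\mathrm{d}q_j(u) = u_j$ yields $(\mathrm{d}q_i\wedge \mathrm{d}q_j)(u, v) = u_i v_j - v_i u_j = u_i v_j - u_j v_i$, which is the claimed identity. There is no real obstacle here: the entire argument is a two-term expansion of the symmetric-group sum followed by a direct substitution, so the only thing to be careful about is bookkeeping the sign of the transposition and keeping the arguments $u$ and $v$ in the correct slots.
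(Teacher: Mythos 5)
Your proposal is correct and follows essentially the same route as the paper's own proof: expand the wedge product definition for two 1-forms, note the $\tfrac{1}{1!\,1!}=1$ normalization and the two-element sum over $S^2$, and substitute $\mathrm{d}q_i(w)=w_i$ from \cref{def:coordinate-1-form}. No gaps.
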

\begin{proof}
Using \cref{def:coordinate-1-form} and the fact that the permutation group on two elements has only two elements, direct computation yields,
\begin{align}
    (\mathrm{d}q_i\wedge \mathrm{d}q_j)(u, v) &= \frac{1}{1!1!} \mathrm{d}q_i(u)\mathrm{d}q_j(v) - \mathrm{d}q_i(v)\mathrm{d}q_j(u) \\
    &= u_iv_j - v_iu_j \\
    &= u_iv_j - u_jv_i.
\end{align}
\end{proof}

\begin{fact}[$k$-Forms from Coordinate 1-Forms (Page 131 in \cite{10.5555/1965128})]\label{fact:k-form-from-1-form}
In terms of the coordinate 1-forms, any differential $k$-form may be written as
\begin{align}
    \alpha = \sum_{i_1 < \cdots < i_k} \alpha_{i_1,\ldots,i_k}(q) ~~\mathrm{d}q_{i_1}\wedge \cdots \wedge \mathrm{d}q_{i_k}
\end{align}
where $\alpha_{i_1,\ldots,i_k} : M \to \R$ are smooth functions.
\end{fact}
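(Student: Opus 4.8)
The plan is to reduce the statement to pointwise multilinear algebra and then verify it by evaluating both sides on tuples of coordinate basis vectors. Fix $q \in M$ and let $e_1, \ldots, e_m \in \mathrm{T}_q M$ be the basis dual to the coordinate 1-forms of \cref{def:coordinate-1-form}, so that $\mathrm{d}q_i(e_j) = \delta_{ij}$ (in local coordinates, $e_i = \partial / \partial q_i$). The first ingredient is the determinant identity
\begin{align*}
    (\mathrm{d}q_{i_1} \wedge \cdots \wedge \mathrm{d}q_{i_k})(v_1, \ldots, v_k) = \det\!\big( \mathrm{d}q_{i_a}(v_b) \big)_{a,b = 1}^{k}, \qquad v_1, \ldots, v_k \in \mathrm{T}_q M,
\end{align*}
which I would establish by induction on $k$ directly from the definition of the wedge product via cofactor expansion along the first row; the case $k = 2$ is exactly \cref{fact:wedge-product-cooordinate-1-form}. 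Specializing to basis vectors gives $(\mathrm{d}q_{i_1} \wedge \cdots \wedge \mathrm{d}q_{i_k})(e_{j_1}, \ldots, e_{j_k}) = \det(\delta_{i_a j_b})$, which for strictly increasing multi-indices $i_1 < \cdots < i_k$ and $j_1 < \cdots < j_k$ equals $1$ when the two multi-indices coincide and $0$ otherwise.

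Next I would define $\alpha_{i_1, \ldots, i_k}(q) \defeq \alpha_q(e_{i_1}, \ldots, e_{i_k})$ for each increasing multi-index and form the candidate $\tilde\alpha \defeq \sum_{i_1 < \cdots < i_k} \alpha_{i_1, \ldots, i_k}(q) \, \mathrm{d}q_{i_1} \wedge \cdots \wedge \mathrm{d}q_{i_k}$. By the computation above, evaluating $\tilde\alpha$ on an increasing tuple $(e_{j_1}, \ldots, e_{j_k})$ leaves a single surviving term and returns $\alpha_{j_1, \ldots, j_k}(q) = \alpha_q(e_{j_1}, \ldots, e_{j_k})$, so $\alpha$ and $\tilde\alpha$ agree on every increasing tuple of basis vectors. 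To conclude $\alpha = \tilde\alpha$ it suffices to recall that a skew-symmetric $k$-linear map is determined by its values on such tuples: expanding a general argument tuple multilinearly in the basis, skew-symmetry annihilates every term with a repeated index and reorders the rest, up to sign, into increasing form. As $q$ was arbitrary, the equality holds as an identity of forms.

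Finally, smoothness of the coefficient functions follows because a differential form, in the sense required for the statement (i.e.\ a smooth field of alternating $k$-tensors whose components are the functions $\alpha_{i_1,\ldots,i_k} : M \to \R$), is paired with the smooth coordinate vector fields $e_i$, so each $\alpha_{i_1,\ldots,i_k}$ is a composition of smooth maps. I do not anticipate a real obstacle here: the only genuine content is the determinant identity for wedge products of coordinate 1-forms and the fact that a skew-symmetric multilinear form is pinned down by its values on increasing basis tuples, both of which are short inductions. If one wishes to stay strictly in the paper's embedded setting, a local coordinate chart on $M$ — available since $G$ has full rank — supplies the coordinate 1-forms and the dual basis vector fields, and nothing in the argument changes.
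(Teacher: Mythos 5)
The paper states this as a ``Fact'' with a textbook citation (page 131 of the referenced differential-geometry text) and does not supply a proof of its own, so there is no in-paper argument to compare against. Your proof is the standard argument and it is correct: the determinant identity $(\mathrm{d}q_{i_1}\wedge\cdots\wedge\mathrm{d}q_{i_k})(v_1,\ldots,v_k)=\det(\mathrm{d}q_{i_a}(v_b))$ holds under the paper's wedge convention (the $\frac{1}{k!l!}$ normalization in its Wedge Product definition is exactly the ``determinant convention,'' and your $k=2$ base case matches \cref{fact:wedge-product-cooordinate-1-form}), the coefficients $\alpha_{i_1,\ldots,i_k}(q)\defeq\alpha_q(e_{i_1},\ldots,e_{i_k})$ are the right ones, the orthonormality of the increasing-index wedges against increasing basis tuples isolates each coefficient, and the fact that a skew-symmetric $k$-linear map is pinned down by its values on increasing basis tuples (multilinear expansion plus vanishing of repeated-index terms) closes the pointwise identity. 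Your remark about staying in the embedded setting via local coordinates furnished by the full-rank constraint Jacobian is also apt. The only thing worth flagging is stylistic: since the paper itself treats this as citable background rather than something to be reproved, your write-up is more detailed than the paper's needs demand, but it is a complete and correct proof of the stated fact.
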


\begin{definition}[Constant Differential Form]\label{def:constant-k-form}
A differential $k$-form is called constant when, for all $i_1<\cdots<i_k$, the $\alpha_{i_1,\ldots,i_k}$ in \cref{fact:k-form-from-1-form} are all constant functions.
\end{definition}

\begin{fact}[Wedge Product and Pullback (Page 131 in \cite{10.5555/1965128})]\label{fact:wedge-product-pullback}
Let $\alpha$ be differential $k$-form and $\beta$ be a differential $l$-form on a manifold $M$. Let $\Phi:M\to M$ be a smooth function. Then,
\begin{align}
    \Phi^* (\alpha \wedge\beta) = (\Phi^* \alpha) \wedge (\Phi^*\beta)
\end{align}
\end{fact}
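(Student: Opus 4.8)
The plan is to verify the identity directly from the definitions, by evaluating both sides on an arbitrary tuple of tangent vectors. First I would note that \cref{def:pullback} extends verbatim from $2$-forms to differential $j$-forms: for a $j$-form $\omega$ and $v_1,\ldots,v_j\in\mathrm{T}_qM$, set $(\Phi^*\omega)(v_1,\ldots,v_j)\defeq\omega((\mathrm{T}_q\Phi)v_1,\ldots,(\mathrm{T}_q\Phi)v_j)$. Since $\mathrm{T}_q\Phi$ is a fixed linear map applied to each slot, precomposition preserves multilinearity and skew-symmetry, so $\Phi^*\omega$ is again a differential $j$-form; in particular both $\Phi^*(\alpha\wedge\beta)$ and $\Phi^*\alpha\wedge\Phi^*\beta$ are $(k+l)$-forms, and it suffices to show that they take the same value on any $v_1,\ldots,v_{k+l}\in\mathrm{T}_qM$.

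Next I would expand the left-hand side, applying the definition of the pullback and then the definition of the wedge product:
\begin{align}
    \big(\Phi^*(\alpha\wedge\beta)\big)(v_1,\ldots,v_{k+l})
    &= (\alpha\wedge\beta)\big((\mathrm{T}_q\Phi)v_1,\ldots,(\mathrm{T}_q\Phi)v_{k+l}\big) \\
    &= \frac{1}{k!\,l!}\sum_{\sigma\in S^{k+l}} \text{sign}(\sigma)\, \alpha\big((\mathrm{T}_q\Phi)v_{\sigma(1)},\ldots,(\mathrm{T}_q\Phi)v_{\sigma(k)}\big)\, \beta\big((\mathrm{T}_q\Phi)v_{\sigma(k+1)},\ldots,(\mathrm{T}_q\Phi)v_{\sigma(k+l)}\big).
\end{align}
For the right-hand side I would expand $\Phi^*\alpha\wedge\Phi^*\beta$ by the wedge product and then replace each factor $(\Phi^*\alpha)(v_{\sigma(1)},\ldots,v_{\sigma(k)})$ by $\alpha((\mathrm{T}_q\Phi)v_{\sigma(1)},\ldots,(\mathrm{T}_q\Phi)v_{\sigma(k)})$, and similarly for $\beta$, using the definition of the pullback. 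The resulting expression is identical, term by term, to the displayed sum, so the two sides agree for every choice of $v_1,\ldots,v_{k+l}$, which proves the fact.

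I do not expect a genuine obstacle: the statement is a formal unwinding of definitions. The only point that deserves a moment's care is the well-definedness of the pullback on $j$-forms noted above; alternatively one can invoke the coordinate representation of \cref{fact:k-form-from-1-form}, observe that $\Phi^*$ is linear and that $\Phi^*(\mathrm{d}q_i)$ is again a $1$-form, and reduce the claim to checking $\Phi^*(\mathrm{d}q_i\wedge\mathrm{d}q_j)=\Phi^*(\mathrm{d}q_i)\wedge\Phi^*(\mathrm{d}q_j)$ on the basis wedges.
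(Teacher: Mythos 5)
The paper does not prove this fact; it cites it to page 131 of \cite{10.5555/1965128} and treats it as background, so there is no in-paper argument to compare against. Your blind verification from the definitions is correct and complete: expanding both sides on an arbitrary tuple $v_1,\ldots,v_{k+l}\in\mathrm{T}_qM$, first by the definition of the pullback and then by the permutation-sum formula for the wedge product, produces identical expressions, since the pullback merely inserts $(\mathrm{T}_q\Phi)$ into every slot of $\alpha$ and of $\beta$ and this operation commutes with the sum over $\sigma\in S^{k+l}$ and with the $\text{sign}(\sigma)$ prefactor. Your preliminary remark that $\Phi^*\omega$ is well-defined as a $j$-form (precomposition with a fixed linear map applied in every slot preserves multilinearity and skew-symmetry) is the one technical point that deserves the sentence you give it, and it is needed before one can even apply the wedge-product formula to $\Phi^*\alpha$ and $\Phi^*\beta$. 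The alternative route you sketch through \cref{fact:k-form-from-1-form} and basis wedges also works, but it requires additionally establishing linearity of $\Phi^*$ over the smooth coefficient functions and reducing a general $k$-form (not just a $2$-form) to wedges of coordinate $1$-forms; the direct term-by-term comparison you give is cleaner in the framework the paper has set up.
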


\begin{definition}[Non-Vanishing Differential Form]\label{def:non-vanishing}
A differential $k$-form $\alpha$ is said to be non-vanishing if for every $q\in M$ there exists $v_1,\ldots,v_k\in \mathrm{T}_qM$ such that $\alpha(v_1,\ldots, v_k)\neq 0$.
\end{definition}
\begin{definition}[Volume Form (Page 139 in \cite{10.5555/1965128})]\label{def:volume-form}
Given a manifold $M$ of dimension $m$, a nowhere vanishing differential $m$-form on $M$ is called a volume form.
\end{definition}

\begin{fact}[Dimension of Volume Forms (Page 399 in \cite{10.5555/50877})]\label{fact:dimension-of-volume-form}
The vector space of all constant $m$-forms on $\R^{m}$ is a vector space of dimension one.
\end{fact}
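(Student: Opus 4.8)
The plan is to combine \cref{fact:k-form-from-1-form} with a direct evaluation showing that the top coordinate wedge product is non-vanishing. First I would note that the only strictly increasing multi-index $i_1 < \cdots < i_m$ with entries in $\set{1,\ldots,m}$ is $(1,2,\ldots,m)$. Hence, by \cref{fact:k-form-from-1-form}, every differential $m$-form $\alpha$ on $\R^m$ is of the form $\alpha = \alpha_{1,\ldots,m}(q)\,\mathrm{d}q_1\wedge\cdots\wedge\mathrm{d}q_m$ for a single smooth function $\alpha_{1,\ldots,m}:\R^m\to\R$, and by \cref{def:constant-k-form} such an $\alpha$ is constant exactly when $\alpha_{1,\ldots,m}$ is a constant function. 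Therefore the space of constant $m$-forms on $\R^m$ equals $\set{c\,\mathrm{d}q_1\wedge\cdots\wedge\mathrm{d}q_m : c\in\R}$, which is spanned by the single form $\omega = \mathrm{d}q_1\wedge\cdots\wedge\mathrm{d}q_m$; this already shows the dimension is at most one.

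To see the dimension is exactly one, it remains to check that $\omega \neq 0$, so that the linear map $c\mapsto c\,\omega$ is injective. I would do this by evaluating $\omega$ on the standard basis vectors $e_1,\ldots,e_m$ of $\R^m$. Since $\mathrm{d}q_i(e_j) = \delta_{ij}$, unwinding the definition of the wedge product gives $\omega(e_1,\ldots,e_m) = \det[\mathrm{d}q_i(e_j)]_{i,j} = \det(\mathrm{Id}_m) = 1 \neq 0$; the base case $m=2$ of this determinant identity is exactly \cref{fact:wedge-product-cooordinate-1-form}, and the general case follows by induction on $m$ using associativity of the wedge product together with cofactor expansion. Hence $\set{\omega}$ is a basis for the space of constant $m$-forms on $\R^m$, which therefore has dimension one.

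The only real bookkeeping — a mild obstacle — is the induction establishing $\omega(e_1,\ldots,e_m) = \det[\mathrm{d}q_i(e_j)]$: with the $\frac{1}{k!\,l!}$ normalization used in the paper's definition of the wedge product, one must verify at each inductive step that the normalization exactly cancels the over-counting of permutations, which is cleaner to manage by inducting on the number of wedge factors than by expanding the full $m!$-term permutation sum directly. Apart from this combinatorial check, the argument is essentially a one-line consequence of \cref{fact:k-form-from-1-form} specialized to the case $k=m$.
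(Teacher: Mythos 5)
Your proof is correct. The paper itself offers no proof of this Fact --- it is cited directly to the textbook reference --- so there is nothing to compare against; your argument (reduce to the single increasing multi-index $(1,\ldots,m)$ via \cref{fact:k-form-from-1-form}, then verify $\mathrm{d}q_1\wedge\cdots\wedge\mathrm{d}q_m\neq 0$ by evaluating on the standard basis) is the standard one, and you correctly flag the only subtlety, namely that the $\tfrac{1}{k!\,l!}$ normalization in the paper's wedge-product definition is exactly the convention under which the top wedge evaluates to a determinant.
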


\begin{definition}[Determinant]\label{def:determinant}
Let $\Phi : M\to M$ be a smooth map and $V$ a volume form on $M$. Then $\Phi^*V$ is another $m$-form on $M$. The function $\text{det}(\Phi) : M\to\R$ such that
\begin{align}
    \Phi^* V = \text{det}(\Phi) V
\end{align}
is called the determinant of $\Phi$.
\end{definition}
\begin{fact}[Volume Preservation and Determinant (Page 140 in \cite{10.5555/1965128})]\label{fact:determinant}
A transformation $\Phi$ is volume preserving for $V$ if and only if $\text{det}(\Phi) = 1$.
\end{fact}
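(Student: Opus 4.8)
The plan is to express volume as the integral of the volume form and reduce the claimed equivalence to a pointwise comparison of integrands via the change-of-variables formula for differential forms. For a region $Z\subset M$, write $\text{Vol}(Z) = \int_Z V$. The key input is that for an invertible transformation $\Phi$ one has $\int_{\Phi(Z)} V = \int_Z \Phi^* V$; feeding in the defining relation $\Phi^* V = \text{det}(\Phi)\, V$ from \cref{def:determinant} produces the master identity
\begin{align}
    \text{Vol}(\Phi(Z)) = \int_Z \text{det}(\Phi)(q)\, V(q),
\end{align}
from which both directions follow.

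The ``if'' direction is immediate: if $\text{det}(\Phi)(q) = 1$ for all $q$, the master identity gives $\text{Vol}(\Phi(Z)) = \int_Z V = \text{Vol}(Z)$ for every $Z$, so $\Phi$ preserves volume.

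For the ``only if'' direction, I would assume $\text{Vol}(\Phi(Z)) = \text{Vol}(Z)$ for all $Z$ and subtract from the master identity to obtain $\int_Z (\text{det}(\Phi)(q) - 1)\, V(q) = 0$ for every region $Z$. Because $\Phi$ is smooth, $\text{det}(\Phi)$ is continuous, and because $V$ is a volume form it is nowhere vanishing (\cref{def:volume-form}, \cref{def:non-vanishing}), so $(\text{det}(\Phi) - 1)V$ vanishes at a point precisely when $\text{det}(\Phi) - 1$ does. A standard localization argument then finishes: were $\text{det}(\Phi)(q_0)\neq 1$ at some $q_0$, continuity would give a small neighborhood on which $\text{det}(\Phi) - 1$ keeps a fixed sign, and integrating $(\text{det}(\Phi) - 1)V$ over that neighborhood would be nonzero, a contradiction. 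Hence $\text{det}(\Phi) \equiv 1$.

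The part requiring the most care is the change-of-variables formula $\int_{\Phi(Z)} V = \int_Z \Phi^* V$ together with the orientation bookkeeping it hides: the identity as written presumes $\Phi$ is an orientation-preserving diffeomorphism, so that no absolute value of $\text{det}(\Phi)$ intervenes. In the uses relevant to this paper $\Phi$ is a composition of Hamiltonian-type flows and Lagrange-multiplier corrections, hence invertible and connected to the identity, so it is orientation preserving and $\text{det}(\Phi) > 0$; once this is noted, the argument is the textbook one of \cite{10.5555/1965128}, reproduced here for completeness.
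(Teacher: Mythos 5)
Your proof is correct. Note that the paper does not actually prove this statement---it is listed as a ``Fact'' cited to page 140 of the referenced textbook---so there is no in-paper argument to compare against; your write-up supplies the standard textbook proof. The two-step structure (change of variables $\int_{\Phi(Z)} V = \int_Z \Phi^* V$ combined with the defining relation $\Phi^* V = \det(\Phi)\, V$, then a continuity-plus-localization argument for the converse) is exactly the expected route, and your remark about orientation is the right caveat: the unqualified ``if and only if'' implicitly assumes $\Phi$ is orientation preserving (otherwise $\det(\Phi) \equiv -1$ would also preserve unsigned volume), which holds for the flows and flow compositions used in this paper since they are connected to the identity.
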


It will be convenient to work with vectors of differential 1-forms rather than individual 1-forms. The following definition extends the wedge product of differential 1-forms to vectors of differential 1-forms.

\begin{definition}[Wedge Product of Vectors of 1-Forms]\label{def:wedge-product-vectors}
Let $\mathrm{d}a$ and $\mathrm{d}b$ be $m$-dimensional vectors of differential 1-forms. For instance $\mathrm{d}a = (\mathrm{d}a_1,\ldots, \mathrm{d}a_m)$. The wedge product of such vectors is defined by the relation 
\begin{align}
\mathrm{d}a\wedge\mathrm{d}b \defeq \sum_{i=1}^m \mathrm{d}a_i \wedge\mathrm{d}b_i
\end{align}
\end{definition}

\begin{fact}[Properties of Wedge Product (Page 64 in \cite{leimkuhler_reich_2005})]\label{fact:properties-wedge-product}
Let $\mathrm{d}a$, $\mathrm{d}b$, $\mathrm{d}c$ be $m$-dimensional vectors of differential 1-forms. For instance $\mathrm{d}a = (\mathrm{d}a_1,\ldots, \mathrm{d}a_m)$. The following are properties of the wedge product:
\begin{enumerate}
    \item Skew-symmetry:
    \begin{align}\label{eq:wedge-skew-symmetry}
    \mathrm{d}a \wedge \mathrm{d}b = -\mathrm{d}b \wedge \mathrm{d}a
    \end{align}
    \item Linearity:
    \begin{align}\label{eq:wedge-linearity}
    \mathrm{d}a \wedge (r ~\mathrm{d}b \wedge s~\mathrm{d}c) = r ~\mathrm{d}a \wedge \mathrm{d}b + s~ \mathrm{d}a\wedge\mathrm{d}c
    \end{align}
    for $r,s\in\R$.
    \item Matrix multiplication: For a matrix $\mathrm{L}\in\R^{m\times m}$,
    \begin{align}\label{eq:wedge-matrix}
    \mathrm{d}a\wedge \mathrm{L}~\mathrm{d}b = \mathrm{L}^\top~\mathrm{d}a\wedge\mathrm{d}b.
    \end{align}
    \item Annihilation: When $\mathrm{L}$ is a symmetric matrix, 
    \begin{align}\label{eq:wedge-annihilation}
    \mathrm{d}a\wedge\mathrm{L}~\mathrm{d}a = 0.
    \end{align}
\end{enumerate}
\end{fact}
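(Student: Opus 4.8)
The plan is to reduce every claim to the definition of the wedge product of two individual $1$-forms and then propagate it through \cref{def:wedge-product-vectors}. The base case is the closed form obtained by specializing the wedge product definition to $k=l=1$: for $1$-forms $\alpha,\beta$ one has $(\alpha\wedge\beta)(v_1,v_2)=\alpha(v_1)\beta(v_2)-\alpha(v_2)\beta(v_1)$ for all $v_1,v_2\in\mathrm{T}_qM$ (this also matches \cref{fact:wedge-product-cooordinate-1-form} in the coordinate case). Two sub-facts read off immediately from this formula and do all the work: (a) swapping the two factors gives $\alpha\wedge\beta=-\beta\wedge\alpha$, so in particular $\alpha\wedge\alpha=0$; and (b) since $\alpha$ and $\beta$ each appear once and linearly, $\alpha\wedge\beta$ is linear separately in $\alpha$ and in $\beta$.

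Skew-symmetry of the vector wedge, \cref{eq:wedge-skew-symmetry}, then follows by summing (a) over components: $\mathrm{d}a\wedge\mathrm{d}b=\sum_{i=1}^m \mathrm{d}a_i\wedge\mathrm{d}b_i=-\sum_{i=1}^m \mathrm{d}b_i\wedge\mathrm{d}a_i=-\mathrm{d}b\wedge\mathrm{d}a$. Linearity, \cref{eq:wedge-linearity}, is the componentwise identity $\mathrm{d}a_i\wedge(r\,\mathrm{d}b_i+s\,\mathrm{d}c_i)=r\,\mathrm{d}a_i\wedge\mathrm{d}b_i+s\,\mathrm{d}a_i\wedge\mathrm{d}c_i$ from (b), summed over $i$, using that $r\,\mathrm{d}b+s\,\mathrm{d}c$ is by definition the vector of $1$-forms with $i$-th component $r\,\mathrm{d}b_i+s\,\mathrm{d}c_i$. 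For the matrix-multiplication identity \cref{eq:wedge-matrix} I would expand both sides in coordinates. Since $\mathrm{L}\,\mathrm{d}b$ is the vector of $1$-forms with $(\mathrm{L}\,\mathrm{d}b)_i=\sum_{j}\mathrm{L}_{ij}\,\mathrm{d}b_j$, \cref{def:wedge-product-vectors} together with (b) gives
\begin{align}
\mathrm{d}a\wedge\mathrm{L}\,\mathrm{d}b=\sum_{i=1}^m \mathrm{d}a_i\wedge\Big(\sum_{j=1}^m \mathrm{L}_{ij}\,\mathrm{d}b_j\Big)=\sum_{i,j=1}^m \mathrm{L}_{ij}\,\mathrm{d}a_i\wedge\mathrm{d}b_j,
\end{align}
while $(\mathrm{L}^\top\mathrm{d}a)_j=\sum_i\mathrm{L}_{ij}\,\mathrm{d}a_i$ yields $\mathrm{L}^\top\mathrm{d}a\wedge\mathrm{d}b=\sum_{j}\big(\sum_i\mathrm{L}_{ij}\,\mathrm{d}a_i\big)\wedge\mathrm{d}b_j=\sum_{i,j}\mathrm{L}_{ij}\,\mathrm{d}a_i\wedge\mathrm{d}b_j$; the two sums coincide.

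Finally, the annihilation property \cref{eq:wedge-annihilation} needs no new computation: for symmetric $\mathrm{L}$, apply \cref{eq:wedge-matrix} with $\mathrm{d}b=\mathrm{d}a$, then \cref{eq:wedge-skew-symmetry}, then $\mathrm{L}^\top=\mathrm{L}$, to obtain $\mathrm{d}a\wedge\mathrm{L}\,\mathrm{d}a=\mathrm{L}^\top\mathrm{d}a\wedge\mathrm{d}a=-\mathrm{d}a\wedge\mathrm{L}^\top\mathrm{d}a=-\mathrm{d}a\wedge\mathrm{L}\,\mathrm{d}a$, hence $\mathrm{d}a\wedge\mathrm{L}\,\mathrm{d}a=0$; alternatively one splits the double sum above into diagonal terms (which vanish since $\mathrm{d}a_i\wedge\mathrm{d}a_i=0$) and off-diagonal pairs $(i,j),(j,i)$ whose combined contribution carries the factor $\mathrm{L}_{ij}-\mathrm{L}_{ji}=0$. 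There is no genuine obstacle in this argument; the one place demanding care is the placement of the transpose in \cref{eq:wedge-matrix}, which the index bookkeeping in the displayed computation is designed to make unambiguous.
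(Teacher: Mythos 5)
Your proof is correct. The paper itself offers no proof of this fact---it is stated with a citation to page 64 of \cite{leimkuhler_reich_2005}---so there is no in-paper argument to compare against; your componentwise reduction to the two-argument formula $(\alpha\wedge\beta)(v_1,v_2)=\alpha(v_1)\beta(v_2)-\alpha(v_2)\beta(v_1)$ is the standard route and is consistent in style with how the paper does prove its other wedge-product facts (e.g.\ \cref{fact:wedge-product-cooordinate-1-form}). The index computation for \cref{eq:wedge-matrix} and both derivations of \cref{eq:wedge-annihilation} check out, and you correctly read the evident typo in \cref{eq:wedge-linearity} (the left-hand side should be $\mathrm{d}a\wedge(r\,\mathrm{d}b+s\,\mathrm{d}c)$) as bilinearity.
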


\begin{fact}[Differential 2-forms and Symplectic Structures]\label{fact:2-form-symplectic-structure}
Let $q = (q_1,\ldots,q_m)\in M$ and $p=(p_1,\ldots,p_m)\in \mathrm{T}_q^*M$ and set $z = (q, p) \in \mathrm{T}^*M$. A symplectic structure $\Omega$ (see \cref{def:symplectic-structure}) with matrix $\mathbb{J} \in \text{Skew}(2m)$ may be written in terms of wedge products as
\begin{align}
    \Omega &= \sum_{i<j} \mathbb{J}_{ij} ~\mathrm{d}z_i\wedge\mathrm{d}z_j \label{eq:symplectic-wedge-i} \\
    &= \frac{1}{2} ~\mathrm{d}z \wedge  \mathbb{J}\mathrm{d}z \label{eq:symplectic-wedge-ii}
\end{align}
\end{fact}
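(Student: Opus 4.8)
The plan is to verify each equality by evaluating both sides as skew-symmetric bilinear maps on an arbitrary pair of tangent vectors $u, v \in \mathrm{T}_{(q,p)}\mathrm{T}^*M$; since a differential 2-form is determined by its values on such pairs, equality of the forms will follow from equality of these values. Throughout, $z = (z_1,\dots,z_{2m})$ denotes the ambient coordinates on $\R^{2m}\cong\R^m\times\R^m$ restricted to $\mathrm{T}^*M$, so that the coordinate 1-forms $\mathrm{d}z_i$ of \cref{def:coordinate-1-form} satisfy $\mathrm{d}z_i(w) = w_i$ for every $w \in \mathrm{T}_{(q,p)}\mathrm{T}^*M \subseteq \R^{2m}$.

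To establish \cref{eq:symplectic-wedge-i}, start from $\Omega(u,v) = u^\top\mathbb{J}v = \sum_{i,j}\mathbb{J}_{ij}u_iv_j$ (\cref{def:symplectic-structure}) and split the double sum according to $i<j$, $i=j$, and $i>j$. The diagonal terms vanish because $\mathbb{J}_{ii}=0$ (\cref{def:skew-symmetric}), and relabelling the $i>j$ block together with $\mathbb{J}_{ji}=-\mathbb{J}_{ij}$ turns it into $-\sum_{i<j}\mathbb{J}_{ij}u_jv_i$, so that $\Omega(u,v) = \sum_{i<j}\mathbb{J}_{ij}(u_iv_j - u_jv_i)$. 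By \cref{fact:wedge-product-cooordinate-1-form}, $(\mathrm{d}z_i\wedge\mathrm{d}z_j)(u,v) = u_iv_j - u_jv_i$, which identifies the preceding expression with $\bigl(\sum_{i<j}\mathbb{J}_{ij}\,\mathrm{d}z_i\wedge\mathrm{d}z_j\bigr)(u,v)$; since $u,v$ were arbitrary, \cref{eq:symplectic-wedge-i} follows.

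To establish \cref{eq:symplectic-wedge-ii}, expand the vector wedge product via \cref{def:wedge-product-vectors} and linearity (\cref{fact:properties-wedge-product}), using $(\mathbb{J}\mathrm{d}z)_i = \sum_j \mathbb{J}_{ij}\mathrm{d}z_j$, to get $\mathrm{d}z\wedge\mathbb{J}\mathrm{d}z = \sum_{i,j}\mathbb{J}_{ij}\,\mathrm{d}z_i\wedge\mathrm{d}z_j$. Splitting the sum exactly as before, the diagonal dies because $\mathrm{d}z_i\wedge\mathrm{d}z_i = 0$, while in the $i>j$ block the two sign changes coming from $\mathbb{J}_{ji}=-\mathbb{J}_{ij}$ and $\mathrm{d}z_j\wedge\mathrm{d}z_i = -\mathrm{d}z_i\wedge\mathrm{d}z_j$ cancel, so that block equals the $i<j$ block; hence $\mathrm{d}z\wedge\mathbb{J}\mathrm{d}z = 2\sum_{i<j}\mathbb{J}_{ij}\,\mathrm{d}z_i\wedge\mathrm{d}z_j$. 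Combining with \cref{eq:symplectic-wedge-i} and dividing by $2$ gives \cref{eq:symplectic-wedge-ii}. (Note the matrix-multiplication rule \cref{eq:wedge-matrix} on its own yields only the tautology $\mathrm{d}z\wedge\mathbb{J}\mathrm{d}z = -\mathbb{J}\mathrm{d}z\wedge\mathrm{d}z$, so the component expansion, which supplies the factor of $2$, is the substantive step.)

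This is in essence a bookkeeping argument and I do not expect a deep obstacle. The two points demanding care are conceptual rather than computational: first, treating the right-hand sides, which are built from coordinate 1-forms, as honest skew-symmetric bilinear maps on $\mathrm{T}_{(q,p)}\mathrm{T}^*M$ and justifying that equality of 2-forms may be checked pointwise on pairs of tangent vectors; and second, keeping the signs straight when the skew-symmetry of $\mathbb{J}$ and the skew-symmetry of the wedge product are used simultaneously in the $i>j$ block (one sign change in \cref{eq:symplectic-wedge-i}, two in \cref{eq:symplectic-wedge-ii}, which is exactly what produces the difference between the two displayed expressions).
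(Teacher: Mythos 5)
Your proposal is correct and takes essentially the same route as the paper: both arguments expand the double sum over $i,j$, kill the diagonal using skew-symmetry of $\mathbb{J}$ (or of the wedge), and fold the $i>j$ block into the $i<j$ block using $\mathbb{J}_{ji}=-\mathbb{J}_{ij}$, which for \cref{eq:symplectic-wedge-ii} combines with $\mathrm{d}z_j\wedge\mathrm{d}z_i=-\mathrm{d}z_i\wedge\mathrm{d}z_j$ to produce the factor of $2$. The only difference is cosmetic: you work from $\Omega(u,v)$ toward the wedge expression, while the paper runs the same chain of identities in the opposite direction.
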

\begin{proof}
Let $u, v\in \mathrm{T}_z\mathrm{T}^*M$. The relation \cref{eq:symplectic-wedge-i} is standard and may be found in \cite{10.5555/1965128} on page 147. To prove it, it suffices to use \cref{def:coordinate-1-form} and \cref{fact:wedge-product-cooordinate-1-form} which yields
\begin{align}
    \sum_{i<j} \mathbb{J}_{ij} ~\mathrm{d}z_i\wedge\mathrm{d}z_j(u, v) &= \sum_{i<j} \mathbb{J}_{ij} u_iv_j - \mathbb{J}_{ij} u_jv_i \\
    &= \sum_{i<j} \mathbb{J}_{ij} u_iv_j + \mathbb{J}_{ji} u_jv_i \\
    &= \sum_{i=1}^{2m} \sum_{j=1}^{2m} \mathbb{J}_{ij} u_iv_j \\
    &= u^\top\mathbb{J}v \\
    &= \Omega(u, v)
\end{align}

\Cref{eq:symplectic-wedge-ii} follows first from
\begin{align}
    \sum_{i=1}^{2m}\sum_{j=1}^{2m} \mathbb{J}_{ij} \mathrm{d}z_i \wedge\mathrm{d}z_j  &=\sum_{i<j} \mathbb{J}_{ij} \mathrm{d}z_i \wedge\mathrm{d}z_j + \mathbb{J}_{ji} \mathrm{d}z_j \wedge\mathrm{d}z_i \\
    &= \sum_{i<j} \mathbb{J}_{ij} \mathrm{d}z_i \wedge\mathrm{d}z_j - \mathbb{J}_{ij} \mathrm{d}z_j \wedge\mathrm{d}z_i \\
    &= \sum_{i<j} \mathbb{J}_{ij} \mathrm{d}z_i \wedge\mathrm{d}z_j -  \mathrm{d}z_j \wedge \mathbb{J}_{ij}\mathrm{d}z_i \\
    &= \sum_{i<j} \mathbb{J}_{ij} \mathrm{d}z_i \wedge\mathrm{d}z_j +  \mathbb{J}_{ij}\mathrm{d}z_i\wedge \mathrm{d}z_j \\
    &= \sum_{i<j} 2 ~\mathbb{J}_{ij} \mathrm{d}z_i \wedge\mathrm{d}z_j
\end{align}
and, using \cref{def:wedge-product-vectors}, from
\begin{align}
    \sum_{i<j} \mathbb{J}_{ij} \mathrm{d}z_i \wedge\mathrm{d}z_j &= \frac{1}{2}\sum_{i=1}^{2m}\sum_{j=1}^{2m} \mathbb{J}_{ij} \mathrm{d}z_i \wedge\mathrm{d}z_j \\
    &= \frac{1}{2}\sum_{i=1}^{2m}\sum_{j=1}^{2m} \mathrm{d}z_i \wedge \mathbb{J}_{ij} \mathrm{d}z_j \\
    &= \frac{1}{2}\sum_{i=1}^{2m} \mathrm{d}z_i \wedge \sum_{j=1}^{2m} \mathbb{J}_{ij} \mathrm{d}z_j \\
    &= \frac{1}{2} \mathrm{d}z \wedge \mathbb{J}\mathrm{d}z
\end{align}
\end{proof}

\begin{fact}[Constant Symplectic Structure]
The symplectic structures we consider are constant (see \cref{def:constant-k-form}) since $\mathbb{J}_{ij}$ does not depend on $z$.
\end{fact}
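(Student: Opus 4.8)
The plan is to unwind the definitions; nothing more is needed. By \cref{fact:2-form-symplectic-structure}, equation \cref{eq:symplectic-wedge-i}, any symplectic structure $\Omega$ with matrix $\mathbb{J}\in\text{Skew}(2m)$ admits the expression $\Omega = \sum_{i<j}\mathbb{J}_{ij}\,\mathrm{d}z_i\wedge\mathrm{d}z_j$ in terms of the coordinate 1-forms $\mathrm{d}z_1,\dots,\mathrm{d}z_{2m}$ on $\mathrm{T}^*M$ (where $z=(q,p)$). I would then compare this with the canonical representation of a differential 2-form from \cref{fact:k-form-from-1-form}, namely $\alpha = \sum_{i<j}\alpha_{ij}(z)\,\mathrm{d}z_i\wedge\mathrm{d}z_j$ with $\alpha_{ij}:M\to\R$ smooth, and read off that the coefficient functions associated with $\Omega$ are $\alpha_{ij}(z)=\mathbb{J}_{ij}$.

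The remaining step is to observe that each $\alpha_{ij}$ is a constant function of $z=(q,p)$, which is immediate from the way the two symplectic structures of interest are defined. In \cref{eq:canonical-symplectic-matrix} the entries of $\mathbb{J}_\text{can}$ are the fixed numbers $0$ and $\pm 1$ coming from the blocks $\mathbf{0}_m$ and $\pm\text{Id}_m$; in \cref{def:magnetic-symplectic-structure} the entries of $\mathbb{J}_\text{mag}$ are the fixed numbers coming from $\mathbf{0}_m$ and $\text{Id}_m$ together with the entries of the prescribed matrix $\mathrm{L}\in\text{Skew}(m)$, which is a parameter of the sampler and not a function of the phase-space point. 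In neither case does any entry depend on $z$, so $\alpha_{ij}(z)=\mathbb{J}_{ij}$ is constant for every $i<j$, and by \cref{def:constant-k-form} the 2-form $\Omega$ is constant, which is the claim.

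There is no genuine obstacle here: the fact is a direct consequence of matching \cref{fact:2-form-symplectic-structure} against \cref{def:constant-k-form}, using only that $\mathrm{L}$ (and hence $\mathbb{J}_\text{mag}$) is a fixed matrix. The one point worth making explicit for the reader is that ``constant'' is meant in the precise sense of \cref{def:constant-k-form} — constancy of the coefficient functions in the coordinate-1-form basis — rather than, for instance, invariance under the Hamiltonian flow.
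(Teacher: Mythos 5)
Your proposal is correct and matches the paper's (one-line) justification exactly: the paper simply asserts that $\mathbb{J}_{ij}$ does not depend on $z$, and you have spelled out the same argument by matching \cref{fact:2-form-symplectic-structure} against \cref{def:constant-k-form} and noting that the entries of $\mathbb{J}_\text{can}$ and $\mathbb{J}_\text{mag}$ (including those of the fixed matrix $\mathrm{L}$) are constants. Nothing more is needed.
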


\begin{fact}[Magnetic Symplectic Structure]\label{fact:magnetic-symplectic-structure}
In the particular case corresponding to a magnetic symplectic structure we will have
\begin{align}
    \mathbb{J}_\text{mag} = \begin{pmatrix} \mathrm{L} & \text{Id}_m \\ -\text{Id}_m & \mathbf{0}_m \end{pmatrix} \in \text{Skew}(2m)
\end{align}
for some skew-symmetric matrix $\mathrm{L}\in\R^{m\times m}$. Applying \cref{def:wedge-product-vectors,fact:2-form-symplectic-structure}, the symplectic form can be expressed as
\begin{align}
    \sum_{i<j} \mathbb{J}_{ij} \mathrm{d}z_i \wedge\mathrm{d}z_j &= \sum_{i=1}^n \mathrm{d}q_i\wedge\mathrm{d}p_i + \sum_{i<j} \mathrm{L}_{ij} \mathrm{d}q_i\wedge \mathrm{d}q_j \\
    &= \mathrm{d}q \wedge\mathrm{d}p + \frac{1}{2} \mathrm{d}q \wedge \mathrm{L}~\mathrm{d}q.\label{eq:magnetic-wedge-product}
\end{align}
\end{fact}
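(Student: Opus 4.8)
The plan is to verify the two displayed equalities by elementary algebra, since the statement merely recasts the bilinear form $\Omega_\text{mag}(u,v) = u^\top \mathbb{J}_\text{mag} v$ in wedge-product notation; I would reuse \cref{fact:2-form-symplectic-structure} (which already gives $\Omega = \sum_{i<j} \mathbb{J}_{ij}\,\mathrm{d}z_i\wedge\mathrm{d}z_j = \tfrac12\,\mathrm{d}z\wedge\mathbb{J}\mathrm{d}z$ for any $\mathbb{J}\in\text{Skew}(2m)$) together with the manipulation rules for wedge products of vectors of $1$-forms from \cref{fact:properties-wedge-product}.

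For the first equality I would exploit the block structure of $z = (q,p)$: writing $z_i = q_i$ for $1\le i\le m$ and $z_{m+l} = p_l$ for $1\le l\le m$, I split the sum $\sum_{1\le i<j\le 2m}$ into the three ranges (a) $1\le i<j\le m$, (b) $1\le i\le m<j\le 2m$, and (c) $m<i<j\le 2m$, and read off the corresponding entries of $\mathbb{J}_\text{mag} = \begin{pmatrix}\mathrm{L} & \text{Id}_m\\ -\text{Id}_m & \mathbf{0}_m\end{pmatrix}$. Range (a) contributes $\mathrm{L}_{ij}\,\mathrm{d}q_i\wedge\mathrm{d}q_j$; range (b) contributes $(\text{Id}_m)_{i,\,j-m}\,\mathrm{d}q_i\wedge\mathrm{d}p_{j-m} = \delta_{i,\,j-m}\,\mathrm{d}q_i\wedge\mathrm{d}p_{j-m}$, so only the terms with $j = m+i$ survive and they sum to $\sum_{i=1}^m \mathrm{d}q_i\wedge\mathrm{d}p_i$; range (c) involves entries of $\mathbf{0}_m$ and hence contributes nothing. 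Adding these gives the middle expression in the statement.

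For the second equality, \cref{def:wedge-product-vectors} gives $\mathrm{d}q\wedge\mathrm{d}p = \sum_{i=1}^m \mathrm{d}q_i\wedge\mathrm{d}p_i$ immediately, so it remains to identify $\tfrac12\,\mathrm{d}q\wedge\mathrm{L}\,\mathrm{d}q$ with $\sum_{i<j}\mathrm{L}_{ij}\,\mathrm{d}q_i\wedge\mathrm{d}q_j$. Expanding, $\mathrm{d}q\wedge\mathrm{L}\,\mathrm{d}q = \sum_{i=1}^m \mathrm{d}q_i\wedge(\mathrm{L}\,\mathrm{d}q)_i = \sum_{i,j}\mathrm{L}_{ij}\,\mathrm{d}q_i\wedge\mathrm{d}q_j$; the diagonal terms vanish because $\mathrm{d}q_i\wedge\mathrm{d}q_i = 0$ (\cref{eq:wedge-skew-symmetry}), and relabeling $i\leftrightarrow j$ in the $i>j$ part and using skew-symmetry of both $\mathrm{L}$ and the wedge product (the two sign changes cancel) shows that part equals the $i<j$ part, whence $\mathrm{d}q\wedge\mathrm{L}\,\mathrm{d}q = 2\sum_{i<j}\mathrm{L}_{ij}\,\mathrm{d}q_i\wedge\mathrm{d}q_j$. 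Equivalently, one may read this off directly from \cref{eq:symplectic-wedge-ii,eq:wedge-matrix} applied with $\mathbb{J} = \mathbb{J}_\text{mag}$.

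There is no genuine obstacle here: the proof is a bookkeeping exercise, and the only places needing care are decoding the four blocks of $\mathbb{J}_\text{mag}$ against the split index ranges in the first step, and tracking the two independent sign flips — from $\mathrm{L}_{ji} = -\mathrm{L}_{ij}$ and from $\mathrm{d}q_j\wedge\mathrm{d}q_i = -\mathrm{d}q_i\wedge\mathrm{d}q_j$ — when folding the $i>j$ terms onto the $i<j$ terms. I would present it as a short direct calculation in the style of the proof of \cref{fact:2-form-symplectic-structure}.
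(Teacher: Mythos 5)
Your proposal is correct and matches the computation the paper implicitly intends (the paper states this Fact without a written-out proof, simply citing \cref{def:wedge-product-vectors,fact:2-form-symplectic-structure}); your block-decomposition of the index range $\{(i,j): 1\le i<j\le 2m\}$ against the block structure of $\mathbb{J}_\text{mag}$, together with the diagonal-vanishing and double-sign-cancellation argument for folding $\tfrac12\,\mathrm{d}q\wedge\mathrm{L}\,\mathrm{d}q$ into $\sum_{i<j}\mathrm{L}_{ij}\,\mathrm{d}q_i\wedge\mathrm{d}q_j$, is exactly the bookkeeping that the paper leaves to the reader. (Incidentally, the upper summation limit $n$ in the paper's first displayed equality should read $m$; your write-up correctly uses $m$ throughout.)
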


\begin{fact}[Magnetic Motion]\label{fact:magnetic-motion}
For a Hamiltonian $H : \R^m\times\R^m\to\R$, the motion of $(q, p)\in\R^{2m}$ under a magnetic symplectic structure is given by
\begin{align}
    \dot{q} &= \nabla_pH(q, p) \\
    \dot{p} &= -\nabla_qH(q, p) - \mathrm{L}\nabla_pH(q, p) 
\end{align}
\end{fact}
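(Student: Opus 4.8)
The plan is to derive the equations of motion directly from the defining property of the Hamiltonian vector field attached to $\Omega_\text{mag}$, and then invoke \cref{def:hamilton-equations-of-motion}. Since the statement concerns the Euclidean case $M\cong\R^m$, we have $\mathrm{T}^*M\cong\R^{2m}$ and $\mathrm{T}_{(q,p)}\mathrm{T}^*M\cong\R^{2m}$, so the differential reduces to $(\mathrm{T}_{(q,p)}H)\delta = \nabla H(q,p)^\top\delta$ with $\nabla H = (\nabla_q H,\nabla_p H)$. Combining \cref{def:hamiltonian-vector-field} and \cref{def:symplectic-structure}, the vector field $X_H$ is characterized by $X_H^\top\,\mathbb{J}_\text{mag}\,\delta = \nabla H(q,p)^\top\delta$ for all $\delta\in\R^{2m}$.

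First I would note that, since this holds for every $\delta$, it is equivalent to the vector equation $\mathbb{J}_\text{mag}^\top X_H = \nabla H$, i.e. $-\mathbb{J}_\text{mag}X_H = \nabla H$ by skew-symmetry (\cref{def:skew-symmetric}, \cref{fact:magnetic-symplectic-structure}). The one piece of genuine computation is to solve this for $X_H$ by inverting the block matrix; direct multiplication shows
\begin{align}
    \mathbb{J}_\text{mag}^{-1} = \begin{pmatrix} \mathbf{0}_m & -\text{Id}_m \\ \text{Id}_m & \mathrm{L} \end{pmatrix},
\end{align}
since the product $\begin{pmatrix}\mathrm{L} & \text{Id}_m \\ -\text{Id}_m & \mathbf{0}_m\end{pmatrix}\begin{pmatrix}\mathbf{0}_m & -\text{Id}_m \\ \text{Id}_m & \mathrm{L}\end{pmatrix}$ has diagonal blocks equal to $\text{Id}_m$ and off-diagonal blocks $-\mathrm{L}+\mathrm{L}=\mathbf{0}_m$ and $\mathbf{0}_m$, hence equals $\text{Id}_{2m}$ (this also confirms $\mathbb{J}_\text{mag}$ is invertible). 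Therefore $X_H = -\mathbb{J}_\text{mag}^{-1}\nabla H$.

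Substituting the block form of $\mathbb{J}_\text{mag}^{-1}$ and writing out the product gives $X_H = (\nabla_p H(q,p),\ -\nabla_q H(q,p) - \mathrm{L}\nabla_p H(q,p))$, after which \cref{def:hamilton-equations-of-motion} yields $(\dot q,\dot p) = X_H(q,p)$, which is exactly the claimed system. I do not expect a real obstacle here: it is a short linear-algebra computation, and the only step warranting care is the transpose-and-sign bookkeeping in passing from $X_H^\top\mathbb{J}_\text{mag} = \nabla H^\top$ to $X_H = -\mathbb{J}_\text{mag}^{-1}\nabla H$. As an alternative one could instead work from the wedge-product representation $\Omega_\text{mag} = \mathrm{d}q\wedge\mathrm{d}p + \frac{1}{2}\,\mathrm{d}q\wedge\mathrm{L}\,\mathrm{d}q$ of \cref{fact:magnetic-symplectic-structure}, contract with $X_H$, and match the coefficients of the independent 1-forms $\mathrm{d}q$ and $\mathrm{d}p$; the matrix route is shorter, so that is the one I would take.
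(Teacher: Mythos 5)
Your argument is correct and follows essentially the same route as the paper's own proof: start from the defining relation $X_H^\top\mathbb{J}_\text{mag}\,\delta = \nabla H^\top\delta$ for all $\delta$, use skew-symmetry of $\mathbb{J}_\text{mag}$ to pass to $X_H = (-\mathbb{J}_\text{mag})^{-1}\nabla H$, and read off the block-matrix product. The only cosmetic difference is that you compute $\mathbb{J}_\text{mag}^{-1}$ and then negate rather than computing $(-\mathbb{J}_\text{mag})^{-1}$ directly; the resulting block matrix and the conclusion agree with the paper.
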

\begin{proof}
Identify $z=(q, p)$. Given a magnetic symplectic structure with matrix,
\begin{align}
    \mathbb{J}_\text{mag} = \begin{pmatrix} \mathrm{L} & \text{Id}_m \\ -\text{Id}_m & \mathbf{0}_m \end{pmatrix}
\end{align}
for $\mathrm{L}\in\text{Skew}(m)$, the Hamiltonian vector field $X_H$ is defined by
\begin{align}
    & \Omega_\text{mag}(X_H(z), \delta) = \nabla_z H(z)^\top \delta \\
    \implies& X_H(z)^\top \mathbb{J}_\text{mag} \delta = \nabla_z H(z)^\top \delta \\
    \implies& \mathbb{J}_\text{mag}^\top X_H(z) = \nabla_zH(z) \\
    \implies& X_H(z) = (-\mathbb{J}_\text{mag})^{-1} \nabla_zH(z)
\end{align}
where we have used that $\mathbb{J}_\text{mag}$ is skew-symmetric and therefore satisfies $\mathbb{J}_\text{mag}^\top = -\mathbb{J}_\text{mag}$ from \cref{def:skew-symmetric}. Moreover, the inverse of $-\mathbb{J}_\text{mag}$ is
\begin{align}
    (-\mathbb{J}_\text{mag})^{-1} = \begin{pmatrix} \mathbf{0}_m & \text{Id}_m \\ -\text{Id}_m & -\mathrm{L} \end{pmatrix}.
\end{align}
Therefore,
\begin{align}\label{eq:magnetic-equations-of-motion}
    \begin{pmatrix} \dot{q} \\ \dot{p} \end{pmatrix} = \begin{pmatrix} \mathbf{0}_m & \text{Id}_m \\ -\text{Id}_m & -\mathrm{L} \end{pmatrix} \begin{pmatrix} \nabla_qH(q, p) \\ \nabla_pH(q, p) \end{pmatrix}
\end{align}
is the Hamiltonian vector field.
\end{proof}

An important volume form for Hamiltonian mechanics is the Liouville volume form, which is constructed from differential 2-forms.
\begin{definition}[Liouville Volume Form (Page 149 in \cite{10.5555/1965128})]\label{def:liouville-volume-form}
Let $M$ be a manifold of dimenion $m$ and let $\Omega$ be a symplectic 2-form on $M$. The Liouville volume form on $\mathrm{T}^*M$ is the $2m$-form defined by,
\begin{align}
    \Lambda \defeq \frac{(-1)^{m(m-1)/2}}{m!} \Omega\wedge \cdots\wedge\Omega ~~~~~~\text{(there are $m$ copies of $\Omega$ in the wedge products)}.
\end{align}
\end{definition}
When $\Omega = \Omega_\text{can} = \mathrm{d}q\wedge\mathrm{d}p$, denote the Liouville volume form by $\Lambda_\text{can}$. The Liouville volume form $\Lambda_\text{can}(v_1,\ldots, v_{2m})$ with $v_i = (v_{i,1},\ldots, v_{i,2m})$ is proportional to the determinant of the matrix whose $(i,j)$ entry is $v_i^j$, which, in turn, is the signed volume of parallelpiped spanned by the columns of that matrix.

\begin{definition}[Diffeomorphism of $\mathrm{T}^*M$]\label{def:diffeomorphism}
Let $\Phi : \mathrm{T}^*M\to\mathrm{T}^*M$ be a smooth, invertible mapping. Then $\Phi$ is called a diffeomorphism of $\mathrm{T}^*M$
\end{definition}
\begin{fact}[Differential Forms and Change-of-Variables (Page 62 in \cite{leimkuhler_reich_2005})]\label{fact:change-of-variables}
Let $M$ be a manifold of dimension $m$ with $z\in \mathrm{T}^*M$. Let $\mathrm{d}z$ be the vector of coordinate 1-forms; see \cref{def:coordinate-1-form}. Let $\Phi$ be a smooth function and let $\hat{z} = \Phi(z)$. Then the coordinate 1-forms of $\hat{z}$ are transformations of the coordinate 1-forms of $z$:
\begin{align}
    \mathrm{d}\hat{z}_i &= \sum_{j=1}^{2m} \frac{\partial \hat{z}_i}{\partial z_j} \mathrm{d}z_j \\
    &= \sum_{j=1}^{2m} \frac{\partial \Phi(z)_i}{\partial z_j} \mathrm{d}z_j
\end{align}
Or, letting $\mathrm{d}z = (\mathrm{d}z_1,\ldots,\mathrm{d}z_{2m})$,
\begin{align}
    \mathrm{d}\hat{z} = \nabla_z\Phi(z)^\top \mathrm{d}z
\end{align}
where $\nabla_z\Phi(z)^\top$ is the Jacobian of $\Phi$.
\end{fact}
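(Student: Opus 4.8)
The plan is to identify the claimed relation as a direct consequence of the chain rule together with the definition of a coordinate 1-form. First I would make the notation precise: by \cref{def:coordinate-1-form}, a coordinate 1-form evaluates a tangent vector by reading off a single component, and for any smooth scalar function $f$ on $\mathrm{T}^*M$ the differential $\mathrm{d}f$ is the 1-form sending a tangent vector $v$ to the directional derivative of $f$ along $v$. Since $\hat{z}_i = \Phi(z)_i$ is the $i$-th component function of $\Phi$, the symbol $\mathrm{d}\hat{z}_i$ denotes its differential, understood as a 1-form on $\mathrm{T}_z\mathrm{T}^*M$ (equivalently, $\mathrm{d}\hat{z}_i$ evaluated on a tangent vector $v \in \mathrm{T}_z\mathrm{T}^*M$ means $\mathrm{d}\hat{z}_i\bigl((\mathrm{T}_z\Phi)v\bigr)$, the usual pullback convention). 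Stating this reading explicitly removes the only genuine ambiguity in the statement.

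Next I would verify the first displayed identity by evaluating both sides on an arbitrary $v = (v_1,\dots,v_{2m}) \in \mathrm{T}_z\mathrm{T}^*M$. On the left: let $\gamma$ be a curve in $\mathrm{T}^*M$ with $\gamma(0) = z$ and $\dot\gamma(0) = v$; differentiating $t \mapsto \Phi(\gamma(t))_i$ at $t=0$, the chain rule gives that the $i$-th component of the pushforward $(\mathrm{T}_z\Phi)v$ equals $\sum_{j=1}^{2m} \frac{\partial \Phi(z)_i}{\partial z_j}\, v_j$; applying \cref{def:coordinate-1-form} then gives $\mathrm{d}\hat{z}_i(v) = \sum_{j=1}^{2m} \frac{\partial \Phi(z)_i}{\partial z_j}\, v_j$. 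On the right: by linearity of evaluation and $\mathrm{d}z_j(v) = v_j$, the form $\sum_{j} \frac{\partial \Phi(z)_i}{\partial z_j}\, \mathrm{d}z_j$ sends $v$ to the same quantity. Since $v$ was arbitrary the two 1-forms coincide, which is the first displayed line; the second displayed line is then just the definition $\hat{z}_i = \Phi(z)_i$.

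To obtain the vector form $\mathrm{d}\hat{z} = \nabla_z\Phi(z)^\top\, \mathrm{d}z$, I would stack the component identities into a vector of coordinate 1-forms (in the sense of \cref{def:wedge-product-vectors}). The one point requiring care is the transpose: with the convention adopted for maps between tangent spaces of embedded manifolds, $(\mathrm{T}_z\Phi)u = \nabla_z\Phi(z)^\top u$, so that $\bigl(\nabla_z\Phi(z)^\top\bigr)_{ij} = \frac{\partial \Phi(z)_i}{\partial z_j}$ is exactly the coefficient appearing above; hence $\mathrm{d}\hat{z}_i = \sum_j \bigl(\nabla_z\Phi(z)^\top\bigr)_{ij}\, \mathrm{d}z_j$, i.e.\ $\mathrm{d}\hat{z} = \nabla_z\Phi(z)^\top\, \mathrm{d}z$, with $\nabla_z\Phi(z)^\top$ the ordinary Jacobian of $\Phi$. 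The only obstacle is the conceptual one flagged above --- keeping track of which tangent space $\mathrm{d}\hat{z}_i$ is regarded as living on --- after which every step is immediate from \cref{def:coordinate-1-form} and the chain rule.
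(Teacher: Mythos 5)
Your proposal is correct. The paper does not actually prove this statement --- it is recorded as a \emph{Fact} with a citation to page 62 of \cite{leimkuhler_reich_2005} and used as a black box elsewhere (e.g.\ in \cref{fact:symplecticness-and-2-forms} and \cref{lem:differential-ii}). Your argument is the standard one: evaluate both sides on an arbitrary $v\in\mathrm{T}_z\mathrm{T}^*M$, use a curve $\gamma$ with $\dot\gamma(0)=v$ and the chain rule to identify the components of the pushforward, and invoke \cref{def:coordinate-1-form} to read off $\mathrm{d}z_j(v)=v_j$. You are also right to flag the transpose convention as the only delicate point; since the paper takes $(\mathrm{T}_q\Phi)u=\nabla\Phi(q)^\top u$, the matrix $\nabla_z\Phi(z)^\top$ has $(i,j)$ entry $\partial\Phi(z)_i/\partial z_j$, which is exactly what the componentwise identity requires. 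Nothing is missing; your write-up supplies a proof the paper delegates to the reference.
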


\begin{fact}[Symplecticness and Differential 2-Forms]\label{fact:symplecticness-and-2-forms}
Let $\Omega$ be a symplectic structure with matrix $\mathbb{J}$. A map $\Phi:\mathrm{T}^*M\to\mathrm{T}^*M$ is symplectic with respect to $\Omega$ if and only if
\begin{align}
    \frac{1}{2} \mathrm{d}\hat{z}\wedge\mathbb{J}\mathrm{d}\hat{z} = \frac{1}{2} \mathrm{d}z\wedge\mathbb{J}\mathrm{d}z
\end{align}
where $\mathrm{d}\hat{z} = \nabla_z\Phi(z)^\top \mathrm{d}z$.
\end{fact}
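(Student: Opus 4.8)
The plan is to unwind the definition of symplecticness (\cref{def:symplectic-transformation}), which reads $\Phi^*\Omega=\Omega$, and to show that the pullback $\Phi^*\Omega$ — once it is expressed in terms of coordinate $1$-forms on $\mathrm{T}_z\mathrm{T}^*M$ — is exactly $\frac{1}{2}\,\mathrm{d}\hat z\wedge\mathbb{J}\,\mathrm{d}\hat z$ with $\mathrm{d}\hat z=\nabla_z\Phi(z)^\top\mathrm{d}z$. Since $\Omega$ itself equals $\frac{1}{2}\,\mathrm{d}z\wedge\mathbb{J}\,\mathrm{d}z$ by \cref{fact:2-form-symplectic-structure}, the asserted equivalence then drops out, with both directions of the ``if and only if'' handled at once.

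First I would fix $z\in\mathrm{T}^*M$ and write $\hat z=\Phi(z)$. By \cref{fact:2-form-symplectic-structure}, at the point $\hat z$ we have $\Omega=\sum_{i<j}\mathbb{J}_{ij}\,\mathrm{d}\hat z_i\wedge\mathrm{d}\hat z_j$ in terms of the coordinate $1$-forms at $\hat z$; here I use that the symplectic structures under consideration are constant (the ``Constant Symplectic Structure'' fact), so the matrix $\mathbb{J}$ is literally the same at $z$ and at $\hat z$. Next, because $\Phi^*$ is linear and distributes over wedge products (\cref{fact:wedge-product-pullback}), $\Phi^*\Omega=\sum_{i<j}\mathbb{J}_{ij}\,(\Phi^*\mathrm{d}\hat z_i)\wedge(\Phi^*\mathrm{d}\hat z_j)$. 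The coordinate function $\hat z_i$ pulled back along $\Phi$ is the component function $\Phi_i$, so $\Phi^*\mathrm{d}\hat z_i=\mathrm{d}\Phi_i=\sum_j(\partial\Phi_i/\partial z_j)\,\mathrm{d}z_j$; collecting these into a vector, $\Phi^*\mathrm{d}\hat z=\nabla_z\Phi(z)^\top\mathrm{d}z$, which is precisely the $\mathrm{d}\hat z$ of \cref{fact:change-of-variables}. Re-summing using \cref{def:wedge-product-vectors} and the same index bookkeeping already carried out in the proof of \cref{fact:2-form-symplectic-structure} yields $\Phi^*\Omega=\frac{1}{2}\,\mathrm{d}\hat z\wedge\mathbb{J}\,\mathrm{d}\hat z$.

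Finally, with $\Phi^*\Omega=\frac{1}{2}\,\mathrm{d}\hat z\wedge\mathbb{J}\,\mathrm{d}\hat z$ and $\Omega=\frac{1}{2}\,\mathrm{d}z\wedge\mathbb{J}\,\mathrm{d}z$ both regarded as $2$-forms on $\mathrm{T}_z\mathrm{T}^*M$, the requirement $\Phi^*\Omega=\Omega$ from \cref{def:symplectic-transformation} is exactly the displayed identity, and conversely; this settles both implications simultaneously. As a cross-check one may argue pointwise: for $u,v\in\mathrm{T}_z\mathrm{T}^*M$, \cref{def:pullback} together with the identification $(\mathrm{T}_z\Phi)w=\nabla\Phi(z)^\top w$ gives $(\Phi^*\Omega)(u,v)=(\nabla\Phi(z)^\top u)^\top\mathbb{J}(\nabla\Phi(z)^\top v)$, which matches $\frac{1}{2}\,\mathrm{d}\hat z\wedge\mathbb{J}\,\mathrm{d}\hat z$ evaluated on $(u,v)$ by the computation in the proof of \cref{fact:2-form-symplectic-structure}. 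The only place that demands care is the two-base-point bookkeeping — keeping track that $\Phi^*\Omega$ lives at $z$ while $\Omega$ before pullback lives at $\hat z$, and invoking constancy of $\mathbb{J}$ so that the two expressions are directly comparable — since beyond that the argument is a short substitution with no genuine computation.
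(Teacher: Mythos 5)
Your proposal is correct and follows essentially the same route as the paper: both identify $\Phi^*\Omega$ with $\tfrac{1}{2}\,\mathrm{d}\hat z\wedge\mathbb{J}\,\mathrm{d}\hat z$ via the substitution $\mathrm{d}\hat z=\nabla_z\Phi(z)^\top\mathrm{d}z$ from \cref{fact:change-of-variables} and the expansion in \cref{fact:2-form-symplectic-structure}, and your pointwise cross-check $(\nabla\Phi(z)^\top u)^\top\mathbb{J}(\nabla\Phi(z)^\top v)=u^\top\mathbb{J}v$ is exactly the matrix condition $\nabla_z\Phi(z)\mathbb{J}\nabla_z\Phi(z)^\top=\mathbb{J}$ that the paper extracts by pushing the Jacobians through the wedge product. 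The only cosmetic difference is that you phrase the computation as a pullback of coordinate $1$-forms while the paper phrases it as a change of variables; the content is the same.
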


\begin{proof}
A symplectic tranformation is one that preserves the symplectic structure under pullback. If $\Phi:\mathrm{T}^*M\to\mathrm{T}^*M$ then
\begin{align}
    (\Phi^*\Omega)(u, v) \defeq \Omega((\mathrm{T}_z\Phi) u, (\mathrm{T}_z\Phi) v) = \Omega(u, v) \iff \Phi~\text{is symplectic}
\end{align}
for all $u, v\in\mathrm{T}_z\mathrm{T}^*M$. Letting $u=(u_1,\ldots, u_{2m})$ and $v=(v_1,\ldots, v_{2m})$, in terms of the matrix $\mathbb{J}$, this is nothing but
\begin{align}
    (\nabla_z\Phi(z)^\top u)^\top \mathbb{J} (\nabla_z\Phi(z)^\top v) = u^\top \mathbb{J}v
\end{align}
or
\begin{align}
    \nabla_z\Phi(z)\mathbb{J}\nabla_z\Phi(z)^\top = \mathbb{J}.
\end{align}
We can now establish that if $\hat{z} = \Phi(z)$ then symplecticness of $\Phi$ is equivalent to conservation of the 2-form. From \cref{fact:2-form-symplectic-structure}, $\Omega$ can be written in terms of the wedge product as,
\begin{align}
    \Omega = \frac{1}{2} \mathrm{d}z \wedge\mathbb{J}\mathrm{d}z
\end{align}
Using \cref{fact:change-of-variables}, under the change-of-variables $\hat{z} = \Phi(z)$, the symplectic structure changes to
\begin{align}
    \hat{\Omega} &= \frac{1}{2} \mathrm{d}\hat{z}\wedge\mathbb{J}\mathrm{d}z \\
    &= \frac{1}{2} \nabla_z\Phi(z)^\top~\mathrm{d}z \wedge\mathbb{J}\nabla_z\Phi(z)^\top~\mathrm{d}z.
\end{align}
Using \cref{fact:properties-wedge-product},
\begin{align}
    \hat{\Omega} &= \frac{1}{2} ~\mathrm{d}z \wedge\nabla_z\Phi(z) \mathbb{J}\nabla_z\Phi(z)^\top~\mathrm{d}z
\end{align}
Hence we see that $\hat{\Omega}=\Omega$ when $\nabla_z\Phi(z) \mathbb{J}\nabla_z\Phi(z)^\top =\mathbb{J}$, which conforms with the definition of symplecticness.
\end{proof}

\begin{fact}[Time Derivative and Symplecticness]\label{fact:symplectic-time-derivative}
Let $\Phi(\cdot ; t) :\mathrm{T}^*M\to\mathrm{T}^*M$ be a smooth function. Let $\hat{z}_t = \Phi(z;t)$ be a change-of-variables given $z\in\mathrm{T}^*M$ such that $z = \Phi(z;0)$. Let $\hat{\Omega}_t \defeq \frac{1}{2} \mathrm{d}\hat{z}_t \wedge\mathbb{J}~\mathrm{d}\hat{z}_t$. Then $\Phi(\cdot;t)$ is symplectic with respect to $\Omega =\frac{1}{2} \mathrm{d}z\wedge\mathbb{J}~\mathrm{d}z$ if $\frac{\mathrm{d}}{\mathrm{d}t} \hat{\Omega}_t = 0$.
\end{fact}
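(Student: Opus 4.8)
The plan is to derive this as an immediate corollary of \cref{fact:symplecticness-and-2-forms}, which already characterizes symplecticness of a \emph{fixed} map $\Phi$ with respect to $\Omega$ as the equality of the transported $2$-form $\frac12\mathrm{d}\hat z\wedge\mathbb{J}\mathrm{d}\hat z$ with $\frac12\mathrm{d}z\wedge\mathbb{J}\mathrm{d}z$. So for each fixed $t$ it suffices to prove $\hat\Omega_t=\Omega$, and the hypothesis $\frac{\mathrm{d}}{\mathrm{d}t}\hat\Omega_t=0$ together with the initial condition $\Phi(z;0)=z$ will give exactly that.

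First I would record the base case $t=0$. Because $\Phi(z;0)=z$ for all $z\in\mathrm{T}^*M$, the map $\Phi(\cdot;0)$ is the identity, so $\nabla_z\Phi(z;0)=\mathrm{Id}_{2m}$, and \cref{fact:change-of-variables} gives $\mathrm{d}\hat z_0=\nabla_z\Phi(z;0)^\top\mathrm{d}z=\mathrm{d}z$. Hence $\hat\Omega_0=\frac12\mathrm{d}z\wedge\mathbb{J}\mathrm{d}z=\Omega$.

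Next I would make precise that a time-dependent $2$-form with vanishing time derivative is independent of $t$. By \cref{fact:change-of-variables} and the computation in the proof of \cref{fact:symplecticness-and-2-forms}, one has $\hat\Omega_t=\frac12\,\mathrm{d}z\wedge\big(\nabla_z\Phi(z;t)\,\mathbb{J}\,\nabla_z\Phi(z;t)^\top\big)\mathrm{d}z$, and by \cref{fact:2-form-symplectic-structure} this is $\sum_{i<j}c_{ij}(z,t)\,\mathrm{d}z_i\wedge\mathrm{d}z_j$, where $c_{ij}(z,t)$ is the $(i,j)$ entry of $\nabla_z\Phi(z;t)\,\mathbb{J}\,\nabla_z\Phi(z;t)^\top$; these coefficients are smooth in $(z,t)$ since $\Phi$ is smooth. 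The derivative $\frac{\mathrm{d}}{\mathrm{d}t}\hat\Omega_t$ acts coefficient-wise, so the hypothesis says $\partial_t c_{ij}(z,t)=0$ for all $i<j$, all $z$, and all $t$. Fixing $z$ and applying the one-variable fundamental theorem of calculus to each $c_{ij}(z,\cdot)$ shows $c_{ij}(z,t)=c_{ij}(z,0)$, i.e. $\hat\Omega_t=\hat\Omega_0$ as $2$-forms; combined with the previous paragraph, $\hat\Omega_t=\Omega$ for every $t$.

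Finally I would invoke \cref{fact:symplecticness-and-2-forms} once more in the reverse direction: the identity $\frac12\mathrm{d}\hat z_t\wedge\mathbb{J}\mathrm{d}\hat z_t=\frac12\mathrm{d}z\wedge\mathbb{J}\mathrm{d}z$ just established is precisely the condition for $\Phi(\cdot;t)$ to be symplectic with respect to $\Omega$, which completes the proof. The only place any care is required is the middle step — reading ``the time derivative of the form is zero'' coefficient-wise and using smoothness of $\Phi$ to legitimize differentiating and integrating the coefficients in $t$ — but there is no genuine analytic or geometric obstacle here; the converse implication (symplecticness for all $t$ forces $\frac{\mathrm{d}}{\mathrm{d}t}\hat\Omega_t=0$) is equally immediate and is not needed for the ``if'' statement claimed.
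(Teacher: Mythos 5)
Your proof is correct and follows essentially the same route as the paper's: apply the fundamental theorem of calculus in $t$ to conclude $\hat{\Omega}_t=\hat{\Omega}_0$, use the initial condition $\Phi(z;0)=z$ to identify $\hat{\Omega}_0=\Omega$, and then invoke \cref{fact:symplecticness-and-2-forms}. You simply unpack the FTC step coefficient-wise, which is a small extra justification rather than a different argument.
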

\begin{proof}
By the fundamental theorem of calculus,
\begin{align}
    \hat{\Omega}_t - \hat{\Omega}_0 = \int_0^t \frac{\mathrm{d}}{\mathrm{d}s} \hat{\Omega}_s \mathrm{d}s.
\end{align}
If $\frac{\mathrm{d}}{\mathrm{d}t} \hat{\Omega}_t = 0$ then
\begin{align}
    \hat{\Omega}_t = \hat{\Omega}_0.
\end{align}
Since $\hat{z}_0 = z$, $\hat{\Omega}_t = \Omega$. The map $\Phi(\cdot;t)$ is symplectic by \cref{fact:symplecticness-and-2-forms}.
\end{proof}

\subsection{Hamiltonian Dynamics}

\begin{fact}[Flow Property (Page 209 in \cite{lee2003introduction})]\label{fact:flow-property}
Let $\Phi(\cdot; t)$ be a vector field flow to time $t$. Vector field flows satisfy the {\it flow property}:
\begin{align}\label{eq:flow-property}
\Phi(\Phi(q, p; t); -t) = (q, p)
\end{align}
or, equivalently,
\begin{align}\label{eq:flow-property-composition}
    \Phi(\cdot;-t)\circ \Phi(\cdot;t) = \text{Id}
\end{align}
\end{fact}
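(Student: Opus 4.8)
The plan is to derive this directly from the defining property of a Hamiltonian vector field flow in \cref{def:vector-field-flow}, combined with uniqueness of solutions to the initial value problem for the autonomous vector field $X_H$. The crucial structural fact is that $X_H$ depends only on the Hamiltonian $H$ and the symplectic matrix $\mathbb{J}$, not on $t$; consequently the integral curves of $X_H$ form a one-parameter family that is invariant under time translation, and the flow property is simply the $\{t,-t\}$ instance of the one-parameter group law $\Phi(\cdot;s)\circ\Phi(\cdot;r) = \Phi(\cdot;s+r)$.

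Concretely, I would fix $(q,p)\in\mathrm{T}^*M$, write $z \defeq \Phi(q,p;t)$, and compare the two curves $\gamma(s)\defeq\Phi(z;s)$ and $\delta(s)\defeq\Phi(q,p;t+s)$ on a neighborhood of $s=0$. By \cref{def:vector-field-flow}, $\gamma$ satisfies $\dot\gamma(s) = X_H(\gamma(s))$ with $\gamma(0)=z$. For $\delta$, the chain rule and a second application of \cref{def:vector-field-flow} give $\dot\delta(s) = \frac{\mathrm{d}}{\mathrm{d}r}\Phi(q,p;r)\big|_{r=t+s} = X_H(\Phi(q,p;t+s)) = X_H(\delta(s))$, while $\delta(0) = \Phi(q,p;t) = z$. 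Thus $\gamma$ and $\delta$ solve the same initial value problem for the smooth --- hence locally Lipschitz --- vector field $X_H$, so by the uniqueness theorem for ordinary differential equations (Picard--Lindel\"of) they coincide on their common interval of definition. Evaluating at $s=-t$ gives $\Phi(\Phi(q,p;t);-t) = \gamma(-t) = \delta(-t) = \Phi(q,p;0) = (q,p)$, which is \cref{eq:flow-property}. The composition statement \cref{eq:flow-property-composition} is then immediate: the map $\Phi(\cdot;-t)\circ\Phi(\cdot;t)$ sends every $(q,p)$ to $\Phi(\Phi(q,p;t);-t) = (q,p)$, so it equals $\text{Id}$ on $\mathrm{T}^*M$.

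The one point requiring care --- and the step I expect to be the main obstacle --- is making sure the flow is actually defined at the times $t$, $t+s$, and $-t$ that appear in the argument, i.e.\ that the integral curves of $X_H$ do not leave $\mathrm{T}^*M$ in finite time. For the embedded manifolds used in this paper, $M$ is compact (the sphere, $\mathrm{SO}(n)$, a Stiefel manifold), and by \cref{thm:magnetic-conservation}(iii) the flow conserves $H(q,p) = U(q) + \frac12 p^\top p$; since the kinetic term is proper on each cotangent fibre, every trajectory stays in a compact sublevel set of $H$, so $X_H$ is complete and the uniqueness argument applies globally with no restriction on $|t|$. In the non-compact case one runs exactly the same comparison of $\gamma$ and $\delta$ on the maximal interval of existence of the relevant integral curves, which is all that the statement requires.
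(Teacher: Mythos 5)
Your proof is correct and is essentially the standard argument: the paper itself offers no proof of this fact, deferring entirely to the citation of \cite{lee2003introduction}, and the argument given there is exactly your uniqueness-of-integral-curves comparison of $\gamma(s)=\Phi(z;s)$ and $\delta(s)=\Phi(q,p;t+s)$ yielding the one-parameter group law. Your additional remark on completeness is a sensible (and non-circular) refinement, since conservation of $H$ is established by direct differentiation independently of the flow property.
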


\begin{fact}[Flows of Hamiltonian Vector Fields are Symplectic (Page 185 in \cite{10.5555/1965128})]\label{fact:hamiltonian-symplectic}
Let $\Phi(\cdot; t) : \mathrm{T}^*M\to\mathrm{T}^*M$ be the vector field flow (see \cref{def:vector-field-flow}) to time $t$ of a Hamiltonian vector field $X_H$ (see \cref{def:hamiltonian-vector-field}). Then $\Phi(\cdot; t)$ is symplectic for every $t$.
\end{fact}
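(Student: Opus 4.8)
The plan is to reduce the statement to the infinitesimal criterion of \cref{fact:symplectic-time-derivative}. Writing $\hat z_t = \Phi(z;t)$ and $\hat\Omega_t \defeq \frac{1}{2}\,\mathrm{d}\hat z_t\wedge\mathbb{J}\,\mathrm{d}\hat z_t$, the initial condition $\Phi(z;0)=z$ gives $\hat\Omega_0 = \Omega$, so it suffices to prove $\frac{\mathrm{d}}{\mathrm{d}t}\hat\Omega_t = 0$ for all $t$; \cref{fact:symplectic-time-derivative} then yields that $\Phi(\cdot;t)$ is symplectic.

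To carry this out I would first record the form of the Hamiltonian vector field: running the computation in the proof of \cref{fact:magnetic-motion} for a general invertible $\mathbb{J}\in\text{Skew}(2m)$ (only $\mathbb{J}^\top=-\mathbb{J}$ is used there), the defining relation $\Omega(X_H(z),\delta)=(\mathrm{T}_zH)\delta$ forces $\dot{\hat z}_t = X_H(\hat z_t) = -\mathbb{J}^{-1}\nabla H(\hat z_t)$. Differentiating the coordinate $1$-forms in $t$ and applying the chain rule to \cref{fact:change-of-variables} gives $\mathrm{d}\dot{\hat z}_t = -\mathbb{J}^{-1}S_t\,\mathrm{d}\hat z_t$, where $S_t\defeq\nabla^2 H(\hat z_t)$ is the symmetric Hessian. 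The product rule for the wedge product (applied term by term through \cref{def:wedge-product-vectors}) together with the matrix and skew-symmetry rules of \cref{fact:properties-wedge-product} then gives
\begin{align}
\frac{\mathrm{d}}{\mathrm{d}t}\hat\Omega_t
&= \frac{1}{2}\left( \mathrm{d}\dot{\hat z}_t\wedge\mathbb{J}\,\mathrm{d}\hat z_t + \mathrm{d}\hat z_t\wedge\mathbb{J}\,\mathrm{d}\dot{\hat z}_t \right) \\
&= -\frac{1}{2}\left( (\mathbb{J}^{-1}S_t)\,\mathrm{d}\hat z_t\wedge\mathbb{J}\,\mathrm{d}\hat z_t + \mathrm{d}\hat z_t\wedge S_t\,\mathrm{d}\hat z_t \right),
\end{align}
and both summands vanish: the second is $\mathrm{d}\hat z_t\wedge S_t\,\mathrm{d}\hat z_t = 0$ by the annihilation identity \cref{eq:wedge-annihilation} (as $S_t$ is symmetric), and moving the matrix across the wedge in the first via \cref{eq:wedge-matrix}, using $(\mathbb{J}^{-1})^\top = -\mathbb{J}^{-1}$, rewrites it as $\mathrm{d}\hat z_t\wedge S_t\,\mathrm{d}\hat z_t$ as well. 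Hence $\frac{\mathrm{d}}{\mathrm{d}t}\hat\Omega_t = 0$ and the proof is complete.

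The step I expect to need the most care is the passage to a genuinely embedded $\mathrm{T}^*M$: there the Hamiltonian vector field is the unique field in $\mathrm{T}_z\mathrm{T}^*M$ with $\Omega(X_H,\delta) = (\mathrm{T}_zH)\delta$ for all $\delta\in\mathrm{T}_z\mathrm{T}^*M$, which in general is \emph{not} the ambient expression $-\mathbb{J}^{-1}\nabla H$ (that vector need not be tangent to $\mathrm{T}^*M$), so the matrix manipulation above applies verbatim only in the unconstrained Euclidean case --- which is, however, exactly what is needed for the split sub-flows $\Phi_1^\epsilon,\Phi_2^\epsilon$ of \cref{alg:euclidean-single-step}. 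For the general manifold statement the clean argument, and the one used in the cited reference, is intrinsic and avoids choosing the field at all: the definition of $X_H$ reads $\iota_{X_H}\Omega = \mathrm{d}H$, the symplectic form has constant coefficients hence is closed ($\mathrm{d}\Omega=0$), so Cartan's magic formula gives $\mathcal{L}_{X_H}\Omega = \iota_{X_H}\mathrm{d}\Omega + \mathrm{d}(\iota_{X_H}\Omega) = \mathrm{d}(\mathrm{d}H) = 0$, and therefore $\frac{\mathrm{d}}{\mathrm{d}t}\Phi(\cdot;t)^*\Omega = \Phi(\cdot;t)^*(\mathcal{L}_{X_H}\Omega) = 0$, giving $\Phi(\cdot;t)^*\Omega = \Phi(\cdot;0)^*\Omega = \Omega$. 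Everything else is routine linear algebra with wedge products.
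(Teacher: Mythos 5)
The paper does not actually prove this statement: it is presented as a cited fact (Arnold, p.\ 185) and is invoked downstream in \cref{lem:euclidean-symplectic} and in \cref{app:strang-splitting}, in both cases only for the \emph{unconstrained} Euclidean flows $\Phi_1^\epsilon,\Phi_2^\epsilon$. So there is no in-paper proof to match against, but your argument via \cref{fact:symplectic-time-derivative} and explicit wedge manipulation is in exactly the same style as the paper's proof of the analogous constrained result \cref{lem:magnetic-conserve-volume}, with the Lagrange-multiplier term and the $\mathrm{L}$-block stripped away. Your computation is correct modulo a harmless sign slip: pushing $\mathbb{J}$ across the wedge by \cref{eq:wedge-matrix} gives $(\mathbb{J}^{-1}S_t)\,\mathrm{d}\hat z_t\wedge\mathbb{J}\,\mathrm{d}\hat z_t = -\mathrm{d}\hat z_t\wedge S_t\,\mathrm{d}\hat z_t$, not $+\mathrm{d}\hat z_t\wedge S_t\,\mathrm{d}\hat z_t$, but both vanish by \cref{eq:wedge-annihilation} since $S_t$ is symmetric, so the conclusion stands.

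You are right to flag the passage to embedded $\mathrm{T}^*M$: the defining relation in \cref{def:hamiltonian-vector-field} only pins down $X_H$ against $\delta\in\mathrm{T}_{(q,p)}\mathrm{T}^*M$, so $X_H = -\mathbb{J}^{-1}\nabla H$ is an ambient identity that holds verbatim only when $\mathrm{T}^*M\cong\R^{2m}$ — which, as you observe, is precisely where the paper invokes this fact. Your fallback for the general case, Cartan's formula $\mathcal{L}_{X_H}\Omega = \iota_{X_H}\,\mathrm{d}\Omega + \mathrm{d}\,\iota_{X_H}\Omega = 0$ (closedness of $\Omega$, $\iota_{X_H}\Omega = \mathrm{d}H$) followed by $\frac{\mathrm{d}}{\mathrm{d}t}\Phi(\cdot;t)^*\Omega = \Phi(\cdot;t)^*\mathcal{L}_{X_H}\Omega = 0$, is the standard intrinsic proof and is indeed the argument in the cited Arnold reference. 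In short: your first computation replicates the paper's own wedge-calculus technique in the Euclidean setting that the paper needs; your Cartan argument reproduces the cited source's proof for the general case. Both are sound.
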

\begin{fact}[Composition of Symplectic Maps Form a Group (Page Page 72 in \cite{10.5555/1965128})]\label{fact:symplectic-composition-group}
Let $\Omega$ be a symplectic 2-form on $\mathrm{T}^*M$. The collection of all maps $\Phi : \mathrm{T}^*M\to\mathrm{T}^*M$ such that $\Phi^*\Omega = \Omega$ forms a group under function composition.
\end{fact}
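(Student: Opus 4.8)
The plan is to verify the four group axioms for the collection $\mathcal{G}$ of maps $\Phi : \mathrm{T}^*M \to \mathrm{T}^*M$ satisfying $\Phi^*\Omega = \Omega$, with function composition as the operation. Associativity is immediate, since composition of functions is always associative, so the real content is showing that the identity lies in $\mathcal{G}$, that $\mathcal{G}$ is closed under composition, and that $\mathcal{G}$ is closed under inversion. Throughout I read ``map'' as a diffeomorphism of $\mathrm{T}^*M$ in the sense of \cref{def:diffeomorphism}: a symplectic transformation forces each $\mathrm{T}_z\Phi$ to be a linear isomorphism (because $\Omega$ is nonvanishing), and the passage from that pointwise statement to a genuine two-sided inverse on all of $\mathrm{T}^*M$ is exactly the diffeomorphism hypothesis, consistent with the canonical reference.

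The one tool that does all the work is the functoriality of the pullback: for smooth maps $\Phi,\Psi:\mathrm{T}^*M\to\mathrm{T}^*M$ and any bilinear map $\Omega$ on tangent spaces, $(\Phi\circ\Psi)^*\Omega = \Psi^*(\Phi^*\Omega)$. First I would establish this. By the chain rule for tangent maps, $\mathrm{T}_z(\Phi\circ\Psi) = (\mathrm{T}_{\Psi(z)}\Phi)\circ(\mathrm{T}_z\Psi)$, which in the embedded setting of \cref{def:cotangent-embedding} is just the Jacobian identity $\nabla(\Phi\circ\Psi)(z)^\top = \nabla\Phi(\Psi(z))^\top\,\nabla\Psi(z)^\top$. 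Substituting this into \cref{def:pullback} gives, for $u,v\in\mathrm{T}_z\mathrm{T}^*M$,
\begin{align}
    \big((\Phi\circ\Psi)^*\Omega\big)(u,v) &= \Omega\big((\mathrm{T}_{\Psi(z)}\Phi)(\mathrm{T}_z\Psi)u,\ (\mathrm{T}_{\Psi(z)}\Phi)(\mathrm{T}_z\Psi)v\big) \\
    &= \big(\Phi^*\Omega\big)\big((\mathrm{T}_z\Psi)u,\ (\mathrm{T}_z\Psi)v\big) = \big(\Psi^*(\Phi^*\Omega)\big)(u,v),
\end{align}
which is the asserted identity.

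With functoriality in hand the remaining axioms are one line each. Since $\mathrm{T}_z\mathrm{Id}$ is the identity linear map, $\mathrm{Id}^*\Omega = \Omega$, so $\mathrm{Id}\in\mathcal{G}$. If $\Phi,\Psi\in\mathcal{G}$, then $(\Phi\circ\Psi)^*\Omega = \Psi^*(\Phi^*\Omega) = \Psi^*\Omega = \Omega$, so $\mathcal{G}$ is closed under composition. If $\Phi\in\mathcal{G}$, applying functoriality to $\Phi\circ\Phi^{-1} = \mathrm{Id}$ yields $(\Phi^{-1})^*(\Phi^*\Omega) = \mathrm{Id}^*\Omega = \Omega$, and since $\Phi^*\Omega=\Omega$ this collapses to $(\Phi^{-1})^*\Omega = \Omega$, so $\Phi^{-1}\in\mathcal{G}$. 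Hence $\mathcal{G}$ is a group under composition. I do not anticipate a genuine obstacle here; the only point that deserves care rather than computation is the invertibility issue flagged in the first paragraph — once one agrees to work within the diffeomorphisms of $\mathrm{T}^*M$, everything above is formal and uses nothing beyond the chain rule and \cref{def:pullback}. This is precisely why we state it as a \textbf{fact} with a citation rather than a theorem.
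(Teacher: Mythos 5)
Your proof is correct and is the standard argument; the paper itself states this as a cited fact without providing a proof, and your derivation via the contravariant functoriality of the pullback, $(\Phi\circ\Psi)^*\Omega = \Psi^*(\Phi^*\Omega)$, is exactly the textbook route the citation points to. The only genuine subtlety --- that the statement as literally worded ("all maps") only yields a group if one restricts to diffeomorphisms, since the symplectic condition makes each tangent map invertible but does not by itself make $\Phi$ globally invertible --- is one you correctly identify and resolve by invoking \cref{def:diffeomorphism}.
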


\subsection{Embbeded Geometry}

\begin{definition}[Embedded Cotangent Space]\label{def:embedded-cotangent}
Let $H : \R^m\times\R^m \to\R$ be a smooth function. Let $M$ be manifold that can be embedded in $\R^m$ as the preimage of the zero level set of a constraint function $g : \R^m\to\R^k$; that is, let $M = \set{q\in\R^m : g(q) = 0}$. To view $\mathrm{T}^*M$ as an embedded sub-manifold of $\R^{2m}$ means that $\mathrm{T}^*M$ should be identified with the set
\begin{align}
    \set{(q, p)\in\R^{2m} : g(q) = 0 ~\text{and}~ G(q) \nabla_p H(q, p) = 0}
\end{align}
where $G(q) \in\R^{k\times m}$ is the Jacobian of the constraint function at $q$.
\end{definition}

\begin{fact}[Velocity Constraint]\label{fact:velocity-constraint}
View $\mathrm{T}^*M$ as an embedded sub-manifold of $\R^{2m}$. Given the constraint $g(q) = 0$, we may differentiate this constraint with respect to time to obtain a constraint on the velocity. Namely,
\begin{align}
    \frac{\mathrm{d}}{\mathrm{d}t} g(q) = G(q) \dot{q} = 0.
\end{align}

\end{fact}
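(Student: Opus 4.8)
The plan is to recognize that the claimed identity is just the chain rule applied to a function that is constant in time. A trajectory $t\mapsto q_t$ that lies on $M$ satisfies $q_t\in M$ for every $t$ in its domain, and by the defining level-set description $M=\set{q\in\R^m:g(q)=0}$ this means the $\R^k$-valued map $t\mapsto g(q_t)$ is identically the zero map. Since the derivative of a map that is identically zero is zero, the entire content of the fact reduces to computing that derivative.

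First I would invoke smoothness of $g:\R^m\to\R^k$, whose Jacobian at $q$ is the $k\times m$ matrix $G(q)$, together with differentiability of the curve $t\mapsto q_t$, and apply the chain rule to obtain
\begin{align}
\frac{\mathrm{d}}{\mathrm{d}t}\, g(q_t) = G(q_t)\,\dot q_t .
\end{align}
Then, because $g(q_t)=0$ for all $t$ (not merely at the initial time), the left-hand side vanishes identically, and hence $G(q_t)\,\dot q_t = 0$ for every $t$. Evaluating at a fixed time and comparing with the Tangent Space definition shows $\dot q\in\mathrm{T}_qM$; via the equation of motion $\dot q=\nabla_pH(q,p)$ this is exactly the velocity constraint $G(q)\nabla_pH(q,p)=0$ appearing in \cref{def:cotangent-embedding}.

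There is essentially no obstacle here: the argument uses only the chain rule, the smoothness of $g$, and the level-set definition of $M$. The single point worth emphasizing — and the only place a reader could slip — is that one must use that the trajectory \emph{stays} on $M$, so that $g(q_t)\equiv 0$ holds as an identity in $t$ and therefore its time derivative, and not merely its value at one instant, is zero.
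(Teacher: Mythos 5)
Your argument is correct and is precisely the content the paper intends: the statement is given as an unproved ``fact,'' and the implicit justification is exactly your chain-rule computation combined with the observation that $g(q_t)\equiv 0$ holds as an identity in $t$ (not just at one instant), so its time derivative vanishes. Your closing remarks about $\dot q\in\mathrm{T}_qM$ and the link to $G(q)\nabla_pH(q,p)=0$ correctly anticipate the paper's very next fact (Velocity and Hamiltonian) and the cotangent-embedding condition, so the proposal matches the paper's development.
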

\begin{fact}[Velocity and Hamiltonian]\label{fact:velocity-and-hamiltonian}
In Hamiltonian mechanics, $\dot{q} = \nabla_p H(q, p)$. Hence, $G(q) \dot{q} = G(q) \nabla_p H(q, p)= 0$ is the constraint on $p$.
\end{fact}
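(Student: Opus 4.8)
The plan is to chain together two facts that are already in place. First I would invoke \cref{fact:velocity-constraint}: because the trajectory is constrained to $M = \set{q\in\R^m : g(q) = 0}$, the identity $g(q_t) = 0$ holds for all $t$, and differentiating with respect to time via the chain rule gives $G(q_t)\dot q_t = 0$, where $G(q)$ is the Jacobian of the constraint function. This expresses the fact that the velocity of any admissible trajectory must lie in the tangent space $\mathrm{T}_{q_t}M$.

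Next I would substitute the position half of Hamilton's equations of motion. For Hamiltonians of the form in \cref{eq:hamiltonian-form}, the canonical equations and the magnetic equations of motion (\cref{eq:embedding-magnetic-velocity}) share the identical position equation $\dot q_t = \nabla_p H(q_t, p_t)$. Plugging this into $G(q_t)\dot q_t = 0$ yields $G(q_t)\nabla_p H(q_t, p_t) = 0$. Comparing with \cref{def:cotangent-space} (equivalently the embedded description in \cref{def:cotangent-embedding} and \cref{def:embedded-cotangent}), this is exactly the defining condition for $p_t\in\mathrm{T}^*_{q_t}M$, so the velocity constraint on $q$ translates directly into the claimed constraint on the momentum $p$.

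There is essentially no obstacle: the entire argument is a one-line substitution. The only point worth flagging explicitly is that $\dot q_t = \nabla_p H(q_t,p_t)$ must be the governing dynamics for the substitution to be legitimate; this holds uniformly for all the Hamiltonian systems considered in this paper, both the canonical case and the magnetic case of \crefrange{eq:embedding-magnetic-velocity}{eq:embedding-magnetic-constraint}.
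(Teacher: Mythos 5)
Your proposal is correct and matches the paper's (implicit) argument exactly: differentiate the constraint $g(q_t)=0$ in time to get $G(q_t)\dot q_t = 0$ (\cref{fact:velocity-constraint}), then substitute the position equation $\dot q = \nabla_p H(q,p)$, which holds for both the canonical and magnetic dynamics, to recover the defining condition of the cotangent space in \cref{def:cotangent-space}. Nothing further is needed.
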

\begin{fact}[Cotangent Space of Embedded Cotangent Bundle (Page 187 in \cite{leimkuhler_reich_2005})]\label{fact:cotangent-cotangent}
View $\mathrm{T}^*M$ as an embedded sub-manifold of $\R^{2m}$. The embedded cotangent space of $\mathrm{T}^*M$, denoted $\mathrm{T}^*\mathrm{T}^*M$, is a subset of $\mathrm{T}^*\R^{2m}$. Let $\mathrm{d}q_1,\ldots,\mathrm{d}q_m,\mathrm{d}p_1,\ldots,\mathrm{d}p_m \in \mathrm{T}^*\R^{2m}$ be the coordinate 1-forms in the Euclidean space (see \cref{def:coordinate-1-form}). The restriction of these differential 1-forms to $\mathrm{T}^*\mathrm{T}^*M$ implies that they satisfy,
\begin{align}
    G(q) ~\mathrm{d}q &= 0 \\
    f_q(q, p) ~\mathrm{d}q + f_p(q, p)~\mathrm{d}p &= 0,
\end{align}
where $f(q,p) \defeq G(q)\nabla_p H(q, p)$ is the velocity constraint from \cref{fact:velocity-constraint,fact:velocity-and-hamiltonian} and $f_q(q,p)$ (resp. $f_p(q, p)$) represents its Jacobian with respect to $q$ (resp. $p$).
\end{fact}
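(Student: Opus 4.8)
The plan is to reduce the statement to two routine unwindings of definitions: (a) an explicit description of the tangent space $\mathrm{T}_{(q,p)}\mathrm{T}^*M$ when $\mathrm{T}^*M$ is viewed as an embedded submanifold of $\R^{2m}$, and (b) the action of the ambient coordinate $1$-forms, and their restrictions, on such tangent vectors. For (a), I would start from \cref{def:embedded-cotangent} (equivalently \cref{def:cotangent-embedding}), which identifies $\mathrm{T}^*M$ with $\set{(q,p)\in\R^{2m} : g(q)=0,\ f(q,p)=0}$, where $f(q,p)\defeq G(q)\nabla_p H(q,p)$ is the velocity constraint of \cref{fact:velocity-constraint,fact:velocity-and-hamiltonian}. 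Take an arbitrary smooth curve $t\mapsto(q(t),p(t))$ lying in $\mathrm{T}^*M$ with $(q(0),p(0))=(q,p)$ and differentiate the two defining constraints at $t=0$: the first gives $G(q)\,\delta q=0$ and the second gives $f_q(q,p)\,\delta q+f_p(q,p)\,\delta p=0$, where $(\delta q,\delta p)\defeq(\dot q(0),\dot p(0))$. The full-rank hypothesis on $G$ (together with the resulting full row rank of the combined constraint Jacobian, via \cref{fact:inverse-function-theorem}) ensures $\mathrm{T}^*M$ is an honest submanifold of dimension $2m-2k$ whose tangent space is exactly the kernel of these linearized constraints; for the statement, though, I only need the forward implication that \emph{every} tangent vector obeys these two linear relations.

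Next I would spell out what "restriction of the coordinate $1$-forms to $\mathrm{T}^*\mathrm{T}^*M$" means. The inclusion $\iota:\mathrm{T}^*M\hookrightarrow\R^{2m}$ has differential equal to the inclusion $\mathrm{T}_{(q,p)}\mathrm{T}^*M\hookrightarrow\R^{2m}$, so the restricted (pulled-back) $1$-form $\iota^*\mathrm{d}q_i$ acts on a tangent vector $v=(\delta q,\delta p)\in\mathrm{T}_{(q,p)}\mathrm{T}^*M$ by $\iota^*\mathrm{d}q_i(v)=\mathrm{d}q_i(v)=\delta q_i$, and likewise $\iota^*\mathrm{d}p_i(v)=\delta p_i$, by \cref{def:coordinate-1-form}. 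Stacking these, the vector of $1$-forms $\mathrm{d}q=(\mathrm{d}q_1,\dots,\mathrm{d}q_m)$ evaluates on $v$ to the vector $\delta q$ and $\mathrm{d}p$ to $\delta p$. Consequently, writing $G(q)\,\mathrm{d}q$ for the vector whose $i$th entry is the $1$-form $\sum_j G(q)_{ij}\,\mathrm{d}q_j$, and similarly for $f_q(q,p)\,\mathrm{d}q+f_p(q,p)\,\mathrm{d}p$, we have $(G(q)\,\mathrm{d}q)(v)=G(q)\,\delta q$ and $(f_q\,\mathrm{d}q+f_p\,\mathrm{d}p)(v)=f_q\,\delta q+f_p\,\delta p$.

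Combining the two steps closes the argument: for every $v\in\mathrm{T}_{(q,p)}\mathrm{T}^*M$, step (a) gives $G(q)\,\delta q=0$ and $f_q\,\delta q+f_p\,\delta p=0$, so step (b) gives $(G(q)\,\mathrm{d}q)(v)=0$ and $(f_q\,\mathrm{d}q+f_p\,\mathrm{d}p)(v)=0$; since these hold for all tangent vectors $v$, the $1$-forms $G(q)\,\mathrm{d}q$ and $f_q(q,p)\,\mathrm{d}q+f_p(q,p)\,\mathrm{d}p$ vanish identically on $\mathrm{T}^*M$, which is the asserted pair of relations. I expect the only genuinely delicate point to be step (a): carefully justifying — using the full-rank hypothesis on $G$ and the constant rank of the stacked constraint Jacobian — that $\mathrm{T}^*M$ really is an embedded submanifold and that its tangent space consists precisely of the velocities of curves constrained to the level set, so that differentiating the defining equations is legitimate. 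The remainder is a matter of chasing the definitions of coordinate $1$-forms and pullback, with no substantive computation.
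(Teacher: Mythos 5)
Your proposal is correct. Note that the paper does not actually prove this statement --- it is quoted as an established fact from Leimkuhler and Reich (page 187) and used without derivation --- so there is no in-paper proof to compare against. Your argument is the standard one: differentiate the two defining constraints $g(q)=0$ and $f(q,p)=0$ along curves in $\mathrm{T}^*M$ to see that every tangent vector $(\delta q,\delta p)$ lies in the kernel of the linearized constraints, then observe that the restricted coordinate $1$-forms evaluate on such a vector to its components, so the stated $1$-form identities are exactly the vanishing of the linearized constraints on all tangent vectors. The closest the paper comes is \cref{lem:differential-ii}, which obtains $G(q_t)\,\mathrm{d}q_t=0$ by applying \cref{fact:change-of-variables} to $\mathrm{d}(g(q_t))=0$ --- the same computation as your step (a) for the first relation, phrased in terms of differentials rather than curves. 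Your one flagged subtlety (that the stacked constraint Jacobian has full rank so the level set is a genuine submanifold) is handled by the paper's standing assumptions: for Hamiltonians of the form in \cref{eq:hamiltonian-form} one has $f_p(q,p)=G(q)$, so the combined Jacobian is block triangular with full-rank diagonal blocks.
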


\begin{fact}[Wedge Product with Lagrange Multipliers Vanish (Page 187 in \cite{leimkuhler_reich_2005})]\label{fact:lagrange-wedge}
Let $q\in\R^n$. Let $g:\R^n\to\R^k$ be the constraint function with Jacobian $G(q)\in\R^{k\times n}$. Suppose $g(q) = 0$. Then for any $\mu\in\R^k$,
\begin{align}
    \mathrm{d}q \wedge \mathrm{d}(G(q)^\top \mu) = 0
\end{align}
\end{fact}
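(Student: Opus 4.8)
The plan is to expand $\mathrm{d}(G(q)^\top\mu)$ component-by-component and to observe that, after the product rule for exterior derivatives, every surviving piece is of one of two harmless types: a piece built from a Hessian of a component of $g$, which is symmetric and hence annihilated when wedged against $\mathrm{d}q$; and a piece built from $G(q)\,\mathrm{d}q$, which vanishes once the coordinate $1$-forms are restricted to the constrained bundle. Concretely, writing $(G(q)^\top\mu)_i = \sum_{l=1}^k \frac{\partial g_l}{\partial q_i}(q)\,\mu_l$ and using $\mathrm{d}\!\left(\frac{\partial g_l}{\partial q_i}\right) = \sum_{j=1}^n \frac{\partial^2 g_l}{\partial q_i\,\partial q_j}(q)\,\mathrm{d}q_j$, the product rule gives
\begin{align}
  \mathrm{d}(G(q)^\top\mu)_i = \sum_{j=1}^n S_{ij}\,\mathrm{d}q_j + \sum_{l=1}^k \frac{\partial g_l}{\partial q_i}(q)\,\mathrm{d}\mu_l,
\end{align}
where $S$ is the matrix with entries $S_{ij} = \sum_{l=1}^k \mu_l\,\frac{\partial^2 g_l}{\partial q_i\,\partial q_j}(q)$, and the final sum is present only when the Lagrange multiplier $\mu$ is allowed to depend on the state (if $\mu$ is a genuine constant vector it disappears and the hypothesis $g(q)=0$ is not needed).

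First I would dispatch the $S$-term: $S$ is symmetric because each $\frac{\partial^2 g_l}{\partial q_i\,\partial q_j}$ is, so forming $\sum_{i=1}^n \mathrm{d}q_i\wedge(S\,\mathrm{d}q)_i$ and applying \cref{def:wedge-product-vectors} produces $\mathrm{d}q\wedge S\,\mathrm{d}q$, which is $0$ by the annihilation property for symmetric matrices, \cref{eq:wedge-annihilation} of \cref{fact:properties-wedge-product}. Next I would dispatch the $\mathrm{d}\mu$-term: regrouping by linearity gives
\begin{align}
  \sum_{i=1}^n \mathrm{d}q_i \wedge \sum_{l=1}^k \frac{\partial g_l}{\partial q_i}(q)\,\mathrm{d}\mu_l = \sum_{l=1}^k \bigl(G(q)\,\mathrm{d}q\bigr)_l \wedge \mathrm{d}\mu_l,
\end{align}
and here the hypothesis $g(q)=0$ enters: restricting the coordinate $1$-forms to the embedded (co)tangent bundle forces $G(q)\,\mathrm{d}q = 0$ by \cref{fact:cotangent-cotangent} (equivalently, this follows by differentiating the constraint $g(q)=0$, as in \cref{fact:velocity-constraint}), so every term vanishes. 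Adding the two contributions yields $\mathrm{d}q\wedge\mathrm{d}(G(q)^\top\mu) = 0$.

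The bookkeeping of the wedge products — expanding via \cref{fact:wedge-product-cooordinate-1-form} and collecting like terms through the identities of \cref{fact:properties-wedge-product} — is routine. The only point demanding care is the status of $\mu$: one must be explicit about whether it is a fixed vector or a function of $(q,p)$, since the hypothesis $g(q)=0$ is invoked precisely to kill the $G(q)\,\mathrm{d}q$ factor appearing in the $\mathrm{d}\mu$ term. Once this distinction is settled, the identity is immediate.
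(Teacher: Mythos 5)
Your proof follows the same route as the paper's: apply the product rule to split $\mathrm{d}(G(q)^\top\mu)$ into a Hessian term (your $S$, the paper's $\sum_i \mu_i\Gamma_i$), which is annihilated by symmetry via \cref{eq:wedge-annihilation}, and a $G(q)^\top\mathrm{d}\mu$ term, which is regrouped as $G(q)\,\mathrm{d}q\wedge\mathrm{d}\mu$ and vanishes by the restriction to the constraint surface. Your explicit remark that the $\mathrm{d}\mu$ term only appears when $\mu$ is state-dependent (which it is in all applications of this fact, since the Lagrange multipliers are uniquely determined as functions of $(q,p)$) is a worthwhile clarification that the paper leaves implicit, but it does not change the substance of the argument.
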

\begin{proof}
We have
\begin{align}
    \mathrm{d}q \wedge \mathrm{d}(G(q)^\top \mu) = \mathrm{d}q \wedge G(q)^\top \mathrm{d}\mu + \sum_{i=1}^k \mathrm{d}q\wedge\mu_i \Gamma_i \mathrm{d}q
\end{align}
where $\Gamma_i$ is the Hessian of the $i^\text{th}$ constraint function. By symmetry of the Hessian and \cref{eq:wedge-annihilation} from \cref{fact:properties-wedge-product}, the second term is zero. The first term is also zero because $g(q) = 0\implies G(q)\mathrm{d}q = 0$ and since $\mathrm{d}q \wedge G(q)^\top \mathrm{d}\mu = G(q)\mathrm{d}q \wedge\mathrm{d}\mu =0$. 
\end{proof}

\subsection{Physics}

\begin{fact}[Total Force]\label{fact:total-force}
The total force acting on an object is the sum of all individual forces.
\end{fact}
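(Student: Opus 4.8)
This assertion is a physical postulate --- the principle of superposition of forces --- rather than a proposition with genuine mathematical content, so, strictly speaking, there is nothing to derive: one adopts it as part of the Newtonian model of mechanics (it is Newton's ``Corollary~I'' to the laws of motion, equivalently the parallelogram law for the composition of forces). The plan is therefore to justify \cref{fact:total-force} by (a) appealing to a standard treatment of classical mechanics, and (b) exhibiting where the statement enters the constructions of this paper, so that the reader sees it is consistent with --- indeed built into --- the formalism we use.

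For part (b): in our setting the kinetic energy is $\tfrac{1}{2} p^\top p$, so $p$ is the momentum and $\dot{p}_t$ is its rate of change, i.e.\ the total force acting on the particle. The equation of motion \cref{eq:embedding-magnetic-acceleration} writes this quantity as $\dot{p}_t = -\nabla_q H(q_t, p_t) - \mathrm{L}\nabla_p H(q_t, p_t) - G(q_t)^\top\lambda$, a sum of three terms: the potential force $-\nabla_q H$, the magnetic (Lorentz-type) force $-\mathrm{L}\nabla_p H$, and the constraint (normal reaction) force $-G(q_t)^\top\lambda$. Thus the decomposition of the net force into individual contributions is encoded directly in \cref{eq:embedding-magnetic-velocity,eq:embedding-magnetic-acceleration,eq:embedding-magnetic-constraint}, and \cref{fact:total-force} simply records the converse reading --- that these individual contributions combine additively --- which is precisely how the right-hand side of \cref{eq:embedding-magnetic-acceleration} is assembled. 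The fact is then available to name and recombine the separate force terms when giving the physical interpretation of the magnetic dynamics.

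The only ``obstacle'' here is conceptual rather than technical: one must be comfortable treating superposition as an axiom of the mechanical model rather than as a theorem to be proved. No estimates, no case analysis, and no appeal to the differential-geometric machinery of the earlier sections are required.
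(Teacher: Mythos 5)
Your reading is correct: the paper states \cref{fact:total-force} as a physical axiom with no proof, exactly as you treat it, and invokes it only in \cref{app:proof-of-lemma-forces} to assemble the potential, magnetic, and constraint forces into the left-hand side of Newton's second law. Your identification of where the fact is used matches the paper's usage, so there is nothing to add.
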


\begin{fact}[D'Alembert's Principle]\label{fact:dalemberts-principle}
Constraint forces act in the normal direction to the constraint surface. Given a constraint function $g:\R^m\to\R^k$, constraint forces are therefore represented by $-G(q)^\top\lambda$ for $\lambda\in\R^k$.
\end{fact}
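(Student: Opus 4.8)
The statement is a translation of the physical principle of virtual work into the linear-algebraic language of the embedded constraint $M = \set{q\in\R^m : g(q)=0}$, and since it is ultimately a postulate of mechanics the "proof" consists of making that translation precise. First I would invoke D'Alembert's principle in its primitive form: a constraint force $F_c$ performs no work against any \emph{virtual displacement}, i.e.\ any infinitesimal displacement $\xi$ consistent with the constraint $g(q)=0$. Differentiating $g(q)=0$ along such a displacement gives $G(q)\xi = 0$, so the admissible virtual displacements are exactly the vectors of the tangent space $\mathrm{T}_qM = \set{\xi\in\R^m : G(q)\xi = 0}$. Hence the no-work condition reads $F_c^\top\xi = 0$ for every $\xi\in\mathrm{T}_qM$; that is, $F_c$ lies in the orthogonal complement of $\mathrm{T}_qM$ inside the ambient $\R^m$, which is precisely what ``normal to the constraint surface'' means.

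The second step is elementary linear algebra. Since $\mathrm{T}_qM$ is the kernel of $G(q)$, its orthogonal complement is the row space of $G(q)$, namely $\set{G(q)^\top\lambda : \lambda\in\R^k}$. Here the standing assumption that $G(q)$ has full rank $k$ enters: it guarantees this space is genuinely $k$-dimensional, so $\lambda\mapsto G(q)^\top\lambda$ is a bijection onto the normal space and every constraint force is uniquely labelled by a multiplier $\lambda\in\R^k$. Combining the two steps, $F_c = G(q)^\top\lambda$ for some $\lambda$; writing it as $-G(q)^\top\lambda$ is merely a sign convention (replace $\lambda$ by $-\lambda$) chosen so the constraint term carries the same sign as $-\nabla_qH$ in \cref{eq:embedding-magnetic-acceleration}.

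There is no genuine obstacle beyond bookkeeping. The only points requiring care are to state explicitly that ``normal direction'' is taken with respect to the Euclidean inner product on the ambient $\R^m$, and to flag that the full-rank hypothesis on $G$ is exactly what makes $\set{G(q)^\top\lambda : \lambda\in\R^k}$ coincide with the entire orthogonal complement of the tangent space rather than a proper subspace of it. Optionally one might add the remark that, because $F_c$ is orthogonal to every admissible velocity $\dot q = \nabla_pH(q,p)\in\mathrm{T}_qM$, the constraint force contributes nothing to $\frac{\mathrm{d}}{\mathrm{d}t}H$, which previews the energy-conservation part of \cref{thm:magnetic-conservation}.
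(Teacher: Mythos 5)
The paper offers no proof of this statement at all: it is listed in the ``Physics'' subsection of the extended preliminaries as one of three unproven physical postulates (alongside the total-force fact and the Lorentz force law), and it is simply invoked in \cref{app:proof-of-lemma-forces} to label the term $-G(q)^\top\lambda$ as a constraint force. Your derivation is therefore supplying an argument the authors chose to omit, and it is the standard, correct one: starting from the primitive virtual-work form of D'Alembert's principle, you identify the admissible virtual displacements with $\ker G(q) = \mathrm{T}_qM$ by differentiating $g(q)=0$, conclude $F_c \perp \mathrm{T}_qM$, and then use the elementary fact that $(\ker G(q))^\perp$ is the row space $\{G(q)^\top\lambda : \lambda\in\R^k\}$. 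Your two caveats are exactly the right ones to flag: ``normal'' must be read with respect to the ambient Euclidean inner product (the paper implicitly does this throughout, e.g.\ in its orthogonal projections onto $\mathrm{T}^*_qM$), and the full-rank hypothesis on $G(q)$ is what makes the parametrization by $\lambda$ both surjective onto the normal space and injective --- the latter being what the paper later relies on in \cref{app:lagrange-multipliers} to get uniqueness of $\lambda$. The closing observation that $F_c^\top\dot q = 0$ previews \cref{lem:magnetic-conserve-hamiltonian} is accurate and a nice touch. The only thing to keep in mind is that the first sentence of your argument (``constraint forces do no virtual work'') is itself the physical axiom, so the overall logical status of the statement is unchanged: it is a postulate dressed in linear algebra, which is presumably why the paper states it as a fact rather than a lemma.
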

\begin{fact}[Lorentz Force Law]\label{fact:lorentz-force-law}
The force on a particle $q\in\R^3$ under the influence of a magnetic field is given by $m \times \frac{\mathrm{d}}{\mathrm{d}t} q$ where $m\in\R^3$ represents parameters of the magnetic field and $\times$ is the vector cross-product; that is, 
\begin{align}\label{eq:lorentz-force-law}
    m \times \frac{\mathrm{d}}{\mathrm{d}t} q = \begin{pmatrix} 0 & m_1 & -m_2 \\ -m_1 & 0 & m_3 \\ m_2 & -m_3 & 0 \end{pmatrix} \begin{pmatrix} \dot{q}_1 \\ \dot{q}_2\\\dot{q}_3 \end{pmatrix}
\end{align}
\end{fact}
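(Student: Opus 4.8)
The plan is to verify the displayed identity by a direct coordinate computation of the cross product, and then to record the elementary linear-algebra fact that makes this relevant to the magnetic dynamics. Recall that for $m = (m_1,m_2,m_3)\in\R^3$ and $v = (v_1,v_2,v_3)\in\R^3$ the cross product has components $(m\times v)_i = \sum_{j,k=1}^3 \varepsilon_{ijk}\, m_j\, v_k$, where $\varepsilon_{ijk}$ is the Levi-Civita symbol. First I would set $v = \dot q$ and expand the three components of $m\times\dot q$ explicitly, obtaining a vector whose $i$-th entry is a linear combination of $\dot q_1,\dot q_2,\dot q_3$ with coefficients built from $m_1,m_2,m_3$.

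Next I would collect these linear combinations into a single matrix $\mathcal{M}(m)$ so that $m\times\dot q = \mathcal{M}(m)\,\dot q$; by construction the $(i,k)$ entry of $\mathcal{M}(m)$ is $\sum_j \varepsilon_{ijk}\, m_j$. Since $\varepsilon_{ijk} = -\varepsilon_{kji}$, the matrix $\mathcal{M}(m)$ is skew-symmetric, hence determined by its three strictly upper-triangular entries; reading these off from the expansion reproduces the matrix displayed in \cref{fact:lorentz-force-law}. The only delicate point is index- and sign-bookkeeping: depending on the orientation convention one adopts for the cross product — equivalently, on which labeling of the triple $(m_1,m_2,m_3)$ one calls ``the parameters of the magnetic field'' — one may need to relabel or negate the components of $m$, so I would fix that convention at the outset in order to make the signs in \cref{eq:lorentz-force-law} appear verbatim.

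Finally, to tie this back to \cref{sec:magnetic-manifold-hmc}, I would remark that the assignment $m\mapsto\mathcal{M}(m)$ is a linear isomorphism from $\R^3$ onto $\text{Skew}(3)$: it is injective because $\mathcal{M}(m)\mathbf{e}_1$ and $\mathcal{M}(m)\mathbf{e}_2$ already recover $m_1,m_2,m_3$, and it is surjective by the dimension count $\dim\R^3 = \dim\text{Skew}(3) = 3$. Consequently every $\mathrm{L}\in\text{Skew}(3)$ equals $\mathcal{M}(m)$ for a unique $m$, which is precisely what licenses interpreting the term $-\mathrm{L}\nabla_pH(q,p) = -\mathrm{L}\dot q$ in \cref{eq:embedding-magnetic-acceleration} as a Lorentz force on a particle in $\R^3$. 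There is no genuine obstacle here — the statement is a routine identity — so the ``hard part'' amounts to nothing more than careful bookkeeping of the Levi-Civita signs against the convention that produces the exact matrix in the statement.
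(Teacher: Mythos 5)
The paper states this as an unproven background \emph{Fact} in its physics preliminaries, so there is no proof of record to compare against; your direct coordinate verification is the right (and essentially only) way to check it. Your computation is correct in substance, and you have correctly isolated the one genuine issue: under the standard orientation convention $(m\times v)_i=\sum_{j,k}\varepsilon_{ijk}m_jv_k$, the cross-product matrix is
\begin{align}
[m]_\times = \begin{pmatrix} 0 & -m_3 & m_2 \\ m_3 & 0 & -m_1 \\ -m_2 & m_1 & 0\end{pmatrix},
\end{align}
which does \emph{not} coincide with the matrix displayed in \cref{eq:lorentz-force-law}; the paper's matrix equals $[\tilde m]_\times$ for the relabeled vector $\tilde m = (-m_3,-m_2,-m_1)$. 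So the displayed identity holds only after fixing the nonstandard labeling of ``the parameters of the magnetic field,'' exactly as you propose to do at the outset. This is harmless for everything downstream (notably \cref{lem:forces}), because the only property actually used is the one you establish in your final paragraph: $m\mapsto\mathcal{M}(m)$ is a linear bijection of $\R^3$ onto $\text{Skew}(3)$, so every skew-symmetric $\mathrm{L}$ acting on $\dot q$ is a Lorentz-type force for a unique field vector. Your argument for that bijection (injectivity from the action on basis vectors plus the dimension count $\dim\text{Skew}(3)=3$) is correct and is the substantive content the paper needs from this Fact.
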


\subsection{Numerical Integration}

\begin{definition}[Order of Integration]\label{def:order-of-integration}
Let $\hat{\Phi}(\cdot; \epsilon, 1) : \mathrm{T}^*M\to\mathrm{T}^*M$ be a single step numerical integrator (\cref{def:numerical-integrator}) for the Hamiltonian vector field flow $\Phi(\cdot; \epsilon) : \mathrm{T}^*M\to\mathrm{T}^*M$ (\cref{def:vector-field-flow}). Then $\hat{\Phi}$ is said to have order $k\in\mathbb{N}$ if for any $(q, p)\in\mathrm{T}^*M$ we have
\begin{align}
    \hat{\Phi}((q, p); \epsilon, 1) - \Phi((q, p); \epsilon) = \mathcal{O}(\epsilon^{k+1})
\end{align}
\end{definition}

\begin{fact}[Symmetric Order of Integration (Page 86 in \cite{leimkuhler_reich_2005})]\label{fact:symmetric-order-integration}
Let $\hat{\Phi}(\cdot; \epsilon, 1) : \mathrm{T}^*M\to\mathrm{T}^*M$ be a single step numerical integrator (\cref{def:numerical-integrator}) for the Hamiltonian vector field flow $\Phi(\cdot; \epsilon) : \mathrm{T}^*M\to\mathrm{T}^*M$ (\cref{def:vector-field-flow}). Suppose further that $\hat{\Phi}$ is a symmetric integrator (\cref{def:symmetric-integrator}). Then the order of $\hat{\Phi}$ is even.
\end{fact}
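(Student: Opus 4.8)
The plan is to compare the $\epsilon$-expansions of the single-step map $\hat{\Phi}(z;\epsilon,1)$ and of the exact flow $\Phi(z;\epsilon)$ and to extract a parity constraint on the order from the symmetry hypothesis together with the flow property of \cref{fact:flow-property}. I would work throughout in the ambient coordinates of $\R^{2m}$, in which both maps take values in $\mathrm{T}^*M\subset\R^{2m}$ so that differences of points make sense, and I assume the single-step map is smooth in $\epsilon$ near $\epsilon=0$ (automatic for the Strang-splitting integrators considered in this paper). If $\hat{\Phi}$ coincides with $\Phi$ there is nothing to prove, so let $r\geq 0$ be the exact, finite order of $\hat{\Phi}$ in the sense of \cref{def:order-of-integration}. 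By Taylor expansion in $\epsilon$ there is then a smooth ``principal error'' map $C:\mathrm{T}^*M\to\R^{2m}$, not identically zero (otherwise the order would exceed $r$), with
\begin{align}
\hat{\Phi}(z;\epsilon,1) = \Phi(z;\epsilon) + \epsilon^{r+1} C(z) + \mathcal{O}(\epsilon^{r+2}),
\end{align}
uniformly on compact subsets of $\mathrm{T}^*M$, where $z=(q,p)$. The goal is to show that $r$ is even.

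Evaluating this expansion at $-\epsilon$ and using $(-\epsilon)^{r+1}=(-1)^{r+1}\epsilon^{r+1}$ shows that $\hat{\Phi}(z;-\epsilon,1)$ differs from $\Phi(z;-\epsilon)$ by $(-1)^{r+1}\epsilon^{r+1}C(z)+\mathcal{O}(\epsilon^{r+2})$. I would then compose the two steps: write $w=\hat{\Phi}(z;-\epsilon,1)=\Phi(z;-\epsilon)+\delta$ with $\delta=(-1)^{r+1}\epsilon^{r+1}C(z)+\mathcal{O}(\epsilon^{r+2})$, apply $\hat{\Phi}(\cdot;\epsilon,1)$, and expand, using (i) the flow property $\Phi(\Phi(z;-\epsilon);\epsilon)=z$ (\cref{fact:flow-property} with $t=-\epsilon$); (ii) $D_w\Phi(w;\epsilon)\big|_{w=\Phi(z;-\epsilon)}=\mathrm{Id}_{2m}+\mathcal{O}(\epsilon)$, since $\Phi(\cdot;0)=\mathrm{Id}$; and (iii) $C(w)=C(z)+\mathcal{O}(\epsilon)$ by smoothness of $C$. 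Retaining only the $\epsilon^{r+1}$ contributions — the perturbation $\delta$ is carried through by the identity to leading order, and the outer step contributes a second copy of $\epsilon^{r+1}C(z)$ — this gives
\begin{align}
\hat{\Phi}\bigl(\hat{\Phi}(z;-\epsilon,1);\epsilon,1\bigr) = z + \bigl[(-1)^{r+1}+1\bigr]\epsilon^{r+1}C(z) + \mathcal{O}(\epsilon^{r+2}).
\end{align}
By \cref{def:symmetric-integrator} with $N=1$ and \cref{def:symmetric-map}, the left-hand side equals $z$ for every $\epsilon$, so the coefficient of $\epsilon^{r+1}$ must vanish identically: $\bigl[(-1)^{r+1}+1\bigr]C(z)=0$ for all $z$. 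Since $C\not\equiv 0$, this forces $(-1)^{r+1}=-1$, hence $r$ is even. Equivalently, $\hat{\Phi}(\cdot;-\epsilon,1)^{-1}$ is the adjoint of the method, whose principal error function is $(-1)^{r}C$, and symmetry identifies it with $C$, yielding the same conclusion.

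I expect the main obstacle to be the careful bookkeeping in the composition of the two asymptotic expansions: one must check that every cross term — those coming from the Jacobian $D_w\Phi$, from $C(w)-C(z)$, and from the quadratic remainder $\mathcal{O}(|\delta|^2)$ — is genuinely $\mathcal{O}(\epsilon^{r+2})$, which needs the remainder estimates to be uniform on compact sets and $\Phi$, $\hat{\Phi}$ to be jointly smooth in $(z,\epsilon)$. There is also a minor conceptual point in making sense of ``the order'' and of the principal error map $C$ on the manifold $\mathrm{T}^*M$, which is why I would carry out the argument in ambient Euclidean coordinates. Once the expansion framework is in place, the algebraic heart of the proof — the cancellation that produces the factor $(-1)^{r+1}+1$ — is short.
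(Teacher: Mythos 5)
Your proof is correct and is essentially the standard textbook argument; the paper does not prove this fact itself but simply cites it to page~86 of Leimkuhler and Reich (2005), which gives the same expansion-and-composition argument you reconstruct (and the ``adjoint method'' reformulation you mention in your last sentence is the form in which it is most commonly phrased). Two small observations: first, you implicitly assume the exact finite order $r$ is well-defined, which requires smoothness in $\epsilon$ together with a nondegeneracy assumption that the local error is not flat at $\epsilon=0$ — this is automatic for the Strang-splitting integrators the paper actually uses, and you correctly dispose of the trivial case $\hat{\Phi}\equiv\Phi$ — and second, the paper's \cref{def:order-of-integration} technically says the integrator ``has order $k$'' rather than ``has order exactly $k$,'' so your move of fixing $r$ to be the maximal order with nonvanishing principal error $C$ is the right way to make the statement precise; these are conventions, not gaps, and your bookkeeping of the $\mathcal{O}(\epsilon^{r+2})$ remainders (in particular noting that $\mathcal{O}(\epsilon)\cdot\delta$ and $\mathcal{O}(|\delta|^2)$ are both higher order for $r\ge 0$) is sound.
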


\newpage
\section{Physical Interpretation of Motion}\label{app:proof-of-lemma-forces}

This result requires \cref{fact:total-force,fact:dalemberts-principle,,fact:lorentz-force-law}.

\ifappendix
\begin{lemma*}
\else
\begin{lemma}[Physical Interpretation of Motion]\label{lem:forces}
\fi
Let $M = \set{q \in \R^3 : g(q) = 0}$ be a manifold such that $G(q)$ has full-rank. Consider a Hamiltonian of the form $H(q, p) = U(q) + \frac{1}{2b}p^\top p$ with $(q,p)\in \mathrm{T}^*M$ and $b\in \R_+$. Then the equations of motion in \cref{eq:embedding-magnetic-velocity,eq:embedding-magnetic-acceleration,eq:embedding-magnetic-constraint} correspond to the motion of a particle, with mass $b$, simultaneously undergoing potential, magnetic, and manifold constraint forces.
\ifappendix
\end{lemma*}
\else
\end{lemma}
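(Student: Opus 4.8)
The plan is to specialize the abstract magnetic equations of motion \cref{eq:embedding-magnetic-velocity,eq:embedding-magnetic-acceleration,eq:embedding-magnetic-constraint} to the Hamiltonian $H(q,p) = U(q) + \frac{1}{2b} p^\top p$ and then recognize the resulting second-order ODE as Newton's second law with a transparent decomposition of the total force. First I would compute $\nabla_p H(q,p) = p/b$ and $\nabla_q H(q,p) = \nabla U(q)$. Substituting the former into \cref{eq:embedding-magnetic-velocity} gives $\dot q_t = p_t/b$, i.e. $p_t = b\,\dot q_t$ — the relation ``momentum equals mass times velocity'' with mass $b$ — and differentiating once more yields $\dot p_t = b\,\ddot q_t$.

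Next I would substitute $p_t = b\,\dot q_t$ and $\dot p_t = b\,\ddot q_t$ into \cref{eq:embedding-magnetic-acceleration} to obtain
\begin{align}
  b\,\ddot q_t = -\nabla U(q_t) - \mathrm{L}\,\dot q_t - G(q_t)^\top \lambda .
\end{align}
This is exactly of the form ``mass $\times$ acceleration $=$ total force,'' so by \cref{fact:total-force} it only remains to identify the three summands on the right as the three asserted forces. The term $-\nabla U(q_t)$ is, by definition, the conservative force associated with the potential $U$. The term $-G(q_t)^\top\lambda$ is, by \cref{fact:dalemberts-principle}, precisely the form D'Alembert's principle prescribes for a force normal to the constraint surface, i.e. the manifold constraint force; correspondingly, the algebraic constraint $g(q_t)=0$ in \cref{eq:embedding-magnetic-constraint} is exactly the requirement that the trajectory remain on $M$, with $\lambda$ the associated Lagrange multiplier (well-defined as noted in the main text).

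For the magnetic term $-\mathrm{L}\,\dot q_t$ I would use that $\mathrm{L}\in\text{Skew}(3)$, so there is a unique vector $m\in\R^3$ with
\begin{align}
  -\mathrm{L} = \begin{pmatrix} 0 & m_1 & -m_2 \\ -m_1 & 0 & m_3 \\ m_2 & -m_3 & 0 \end{pmatrix},
\end{align}
whence $-\mathrm{L}\,\dot q_t = m \times \dot q_t$ is exactly the Lorentz force of \cref{fact:lorentz-force-law} for a magnetic field with parameter vector $m$. Assembling the three identifications shows that the specialized system is Newton's equation for a particle of mass $b$ subject simultaneously to the potential force $-\nabla U(q)$, the magnetic (Lorentz) force $m\times\dot q$, and the constraint force $-G(q)^\top\lambda$, which is the claim.

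I expect the only mild obstacle to be the bookkeeping in this last step: confirming that every $3\times 3$ skew-symmetric matrix is the cross-product matrix of a unique vector, and that the sign convention lines up with the one fixed in \cref{fact:lorentz-force-law} (so that $-\mathrm{L}$, rather than $\mathrm{L}$, is the matrix appearing there). Everything else reduces to direct substitution and the three cited physical facts, so the argument should be short.
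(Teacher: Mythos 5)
Your argument is correct and follows the same route as the paper's own proof: substitute the kinetic-energy Hamiltonian to get $p_t = b\,\dot q_t$ and hence $b\,\ddot q_t = -\nabla U(q_t) - \mathrm{L}\,\dot q_t - G(q_t)^\top\lambda$, then identify the three summands via the potential gradient, D'Alembert's principle (\cref{fact:dalemberts-principle}), and the Lorentz force law (\cref{fact:lorentz-force-law}) after writing $-\mathrm{L}$ as the cross-product matrix of a vector $m\in\R^3$. Your remark about checking the sign convention $-\mathrm{L}$ (rather than $\mathrm{L}$) against \cref{fact:lorentz-force-law} matches exactly how the paper handles it.
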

\fi

\begin{proof}
It is common to express potential forces as the negative gradient of some function $U:\R^3\to\R$ called the potential function. The Hamiltonian equations of motion for $H(q, p)$ with a magnetic symplectic structure $\Omega_\text{mag}$ are:
\begin{align}
    \dot{q}_t &= \frac{p_t}{b} \\
    \dot{p}_t &= -\nabla_qU(q_t) - \mathrm{L} \frac{p_t}{b} - G(q_t)^\top \lambda \\
    g(q_t) &= 0
\end{align}
Now noting that the momentum variables $p_t = b~\dot{q}_t$ by substitution we obtain,
\begin{align}
    \dot{q}_t &= \dot{q}_t \\
    \underbrace{b~\ddot{q}_t}_{\text{total force}} &= ~~\underbrace{-\nabla_qU(q_t)}_{\text{potential force}}~~ + \underbrace{m\times \dot{q}_t}_{\text{magnetic force}} + ~~\underbrace{G(q_t)^\top (-\lambda)}_{\text{constraint force}}
\end{align}
where, since $\mathrm{L}$ is a skew-symmetric matrix, we have used \cref{fact:lorentz-force-law} to identify
\begin{align}
    -\mathrm{L} = \begin{pmatrix} 0 & m_1 & -m_2 \\ -m_1 & 0 & m_3 \\ m_2 & -m_3 & 0 \end{pmatrix}.
\end{align}
We have used \cref{fact:dalemberts-principle} to identify constraint forces and \cref{fact:total-force} to recognize that the sum of these three forces is the total force acting on the particle. Thus we see, by Newton's second law of motion, that the Hamiltonian equations of motion are equivalent to Newtonian mechanics describing a particle subject to potential, magnetic, and constraint forces.
\end{proof}
\newpage
\section{Embedded Manifold Examples}\label{app:embedded-manifold-examples}

\begin{example}[Euclidean Space]\label{ex:euclidean}
Consider $\mathbb{R}^m$ which may degenerately be regarded as an embedded manifold whose constraint function is $g(q)=0$ for all $q\in\R^m$ (i.e., the euclidean space, unconstrained). The Jacobian of the constraint is the $1\times m$ vector of zeros, which is evidently not full-rank. Nevertheless, continuing the development shows that, for instance, $\mathrm{T}_q\R^m =\mathrm{T}^*_q\R^m= \R^m$ and $\mathrm{T}^*\R^m=\R^m\times\R^m\cong\R^{2m}$.
\end{example}

\begin{example}[The Sphere]
As a second example, consider $\mathbb{S}^2$, the sphere, embedded in $\R^3$ as the preimage of the constraint function $g(q) = q^\top q - 1$ on the zero level set. The Jacobian of the constraint at $q\in\mathbb{S}^2$ is $G(q) = 2q^\top$ which has full-rank as a $1\times 3$ matrix. The tangent space at $q\in \mathbb{S}^2$ is $\mathrm{T}_q\mathbb{S}^2 = \set{\xi \in\R^3 : 2q^\top \xi = 0}$, the set of vectors orthogonal to $q$. Let $\mathrm{R}$ be a $3\times 3$ rotation matrix; an example of a mapping from $\mathbb{S}^2\to\mathbb{S}^2$ is $\Phi(q) = \mathrm{R}q$, the rotation of $q$ by $\mathrm{R}$. In this case, $\mathrm{T}_q\Phi = \mathrm{R}$ so that $(\mathrm{T}_q\Phi)\xi = \mathrm{R}\xi$, the rotation of the tangent vector by $\mathrm{R}$.
\end{example}

\newpage
\section{Comparison of Magnetic Geodesics}\label{app:magnetic-geodesics}

To give intuition for the motion generated by manifold-constrained magnetic Hamiltonian dynamics, we consider the motion of a particle under a Hamiltonian consisting purely of kinetic energy: $H(q, p) = \frac{1}{2} p^\top p$. In the case of canonical dynamics, motion in $q$ generated by this Hamiltonian can be shown to produce geodesic movement on a manifold \citep{10.5555/1965128}; that is, motion for which the particle experiences zero acceleration on the manifold. When a magnetic field is introduced, the resulting motion in $q$ is called a ``magnetic geodesic.'' 

\begin{figure*}[t!]
    \centering
    \begin{subfigure}[b]{0.3\textwidth}
        \centering
        \includegraphics[width=\textwidth]{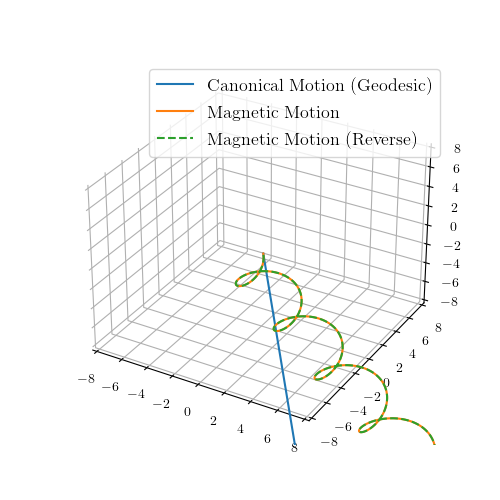}
        \caption{$\R^3$}
    \end{subfigure}
    ~
    \begin{subfigure}[b]{0.3\textwidth}
        \centering
        \includegraphics[width=\textwidth]{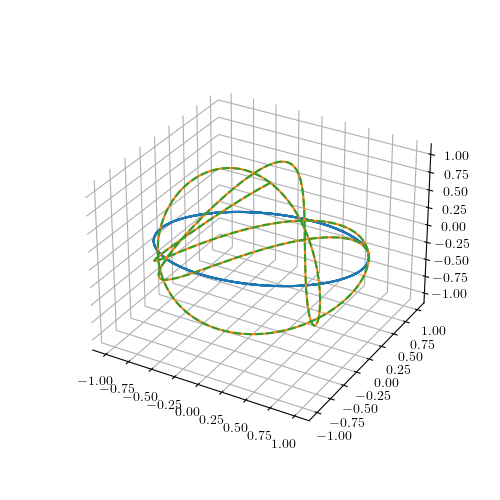}
        \caption{$\mathbb{S}^2$}
    \end{subfigure}
    ~
    \begin{subfigure}[b]{0.3\textwidth}
        \centering
        \includegraphics[width=\textwidth]{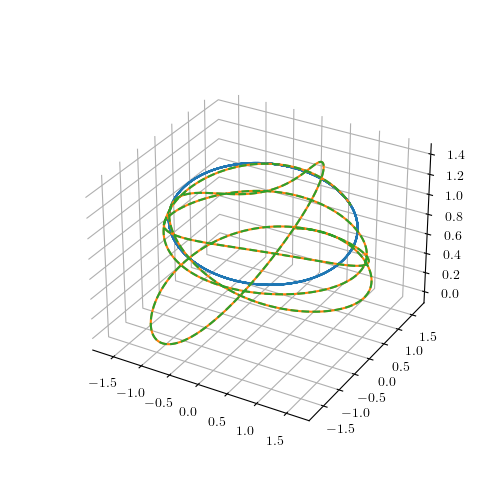}
        \caption{Action of $\text{SO}(3)$}
    \end{subfigure}
    \caption{Visualization canonical and magnetic geodesics for a Hamiltonian consisting purely of kinetic energy. Magnetic trajectories tend to exhibit more unusual behavior. We verify the reversibility of our numerical integrator by integrating the magnetic geodesic forward, then backward, in time. The forward and reverse trajectories trace the same paths because the integrator is reversible.}
    \label{fig:magnetic-geodesic}
\end{figure*}

We visualize the magnetic geodesic for a randomly generated $\mathrm{L}$ in $\R^3$, the sphere, and the special orthogonal group in \cref{fig:magnetic-geodesic}. Whereas the Euclidean geodesic is a straight line, the magnetic geodesic proceeds in a helix. On the sphere, the geodesic corresponds to great circles. The magnetic geodesic on the sphere is much more complicated, visiting many distinct regions of the sphere compared to the usual geodesic which returns to its initial position. We visualize a magnetic geodesic on $\mathrm{SO}(3)$ via its {\it action} on the vector $(1, 1, 1)^\top$. The action of the usual geodesic causes the vector to move about in a circle. The magnetic geodesic yields yields a more unusual and complicated motion of this vector.

We also illustrate that our integrator is reversible under applying a sign flip to the integration step-size $\epsilon\mapsto-\epsilon$. These reverse trajectories have initial condition equal to the terminal condition of the forward trajectory and are integrated for the same number of integration steps with the reversed step-size. We see that in every case the reverse trajectory proceeds backwards along the magnetic geodesic, demonstrating symmetry of the integrator.

\newpage
\section{Proof of \Cref{thm:magnetic-cotangent}}\label{app:proof-magnetic-cotangent}

This result requires \cref{fact:velocity-and-hamiltonian,def:coordinate-1-form,def:embedded-cotangent,fact:cotangent-cotangent,fact:change-of-variables}.

\begin{theorem}\label{thm:magnetic-cotangent}
Let $M = \set{q \in \R^m : g(q) = 0}$ be a connected manifold such that $G(q)$ has full-rank. Let $\text{T}^*M$ be an embedded sub-manifold of $\R^{2m}$ as in \cref{def:cotangent-embedding}. Let $\Omega_\text{mag}$ be the magnetic symplectic structure from \cref{def:magnetic-symplectic-structure} in the ambient Euclidean space $\R^{2m}\cong \R^m\times \R^m$. Let $H(q,p)$ be a smooth Hamiltonian $H:\R^m\times\R^m\to \R$ of the form in \cref{eq:hamiltonian-form}. Let $\Phi_\text{mag}$ be the magnetic vector field flow from \cref{def:magnetic-flow}. Let $(q, p)\in\mathrm{T}^*M$ and $(q_t,p_t)=\Phi_\text{mag}((q, p); t)$. Then the embedded differential one-forms $\mathrm{d}q_t$ and $\mathrm{d}p_t$ respect the manifold constraints such that $(\mathrm{d}q_t,\mathrm{d}p_t)$ are elements of $\mathrm{T}^*_{(q_t,p_t)}\mathrm{T}^*M$.
\end{theorem}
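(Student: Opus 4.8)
The plan is to verify directly the two conditions that, by \cref{fact:cotangent-cotangent}, characterize $\mathrm{T}^*_{(q_t,p_t)}\mathrm{T}^*M$ once $\mathrm{T}^*M$ is regarded as an embedded submanifold of $\R^{2m}$: with $f(q,p) \defeq G(q)\nabla_p H(q,p)$ the velocity constraint, one must show that the ambient coordinate one-forms, transported along the flow to time $t$ and restricted to $\mathrm{T}^*\mathrm{T}^*M$, satisfy $G(q_t)\,\mathrm{d}q_t = 0$ and $f_q(q_t,p_t)\,\mathrm{d}q_t + f_p(q_t,p_t)\,\mathrm{d}p_t = 0$. For a Hamiltonian of the form \cref{eq:hamiltonian-form} one has $\nabla_p H(q,p) = p$, so $f(q,p) = G(q)p$, $f_p(q,p) = G(q)$, and $f_q(q,p)$ is the Jacobian in $q$ of the map $q \mapsto G(q)p$.

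First I would observe that both constraint maps vanish identically \emph{as functions of the initial point over $\mathrm{T}^*M$}. The position constraint $g(q_t) = 0$ is imposed in the definition of the flow (\cref{eq:embedding-magnetic-constraint}), so $z \mapsto g(q_t(z))$ is identically zero on $\mathrm{T}^*M$. Differentiating $g(q_t) = 0$ in $t$ (\cref{fact:velocity-constraint}) and substituting $\dot q_t = \nabla_p H(q_t,p_t) = p_t$ (\cref{fact:velocity-and-hamiltonian}) gives $G(q_t)p_t = \frac{\mathrm{d}}{\mathrm{d}t}g(q_t) = 0$; since the initial point $(q,p)\in\mathrm{T}^*M$ already satisfies $G(q)p = 0$, this shows $f(q_t,p_t) = G(q_t)p_t = 0$ for all $t$ and every starting point in $\mathrm{T}^*M$ (in particular, $\Phi_\text{mag}(\cdot;t)$ indeed maps $\mathrm{T}^*M$ into $\mathrm{T}^*M$). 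Hence $z \mapsto G(q_t(z))p_t(z)$ is also identically zero on $\mathrm{T}^*M$.

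Next I would differentiate these two identities over $\mathrm{T}^*M$ and render the result in one-forms using \cref{fact:change-of-variables}, which gives $\mathrm{d}q_t = \nabla_z(q_t)^\top\,\mathrm{d}z$ and $\mathrm{d}p_t = \nabla_z(p_t)^\top\,\mathrm{d}z$ with $z = (q,p)$. Taking the exterior derivative of the identity $g(q_t) \equiv 0$ and applying the chain rule to the Jacobians yields $0 = \mathrm{d}(g(q_t)) = G(q_t)\,\mathrm{d}q_t$ upon restriction to $\mathrm{T}^*\mathrm{T}^*M$, which is the first condition. Taking the exterior derivative of $G(q_t)p_t \equiv 0$ yields $0 = \mathrm{d}(G(q_t)p_t) = f_q(q_t,p_t)\,\mathrm{d}q_t + f_p(q_t,p_t)\,\mathrm{d}p_t$, since the $q$-block of the Jacobian of $(q,p)\mapsto G(q)p$ is exactly $f_q$ and the $p$-block is $G = f_p$; this is the second condition. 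Invoking \cref{fact:cotangent-cotangent} then gives $(\mathrm{d}q_t,\mathrm{d}p_t)\in\mathrm{T}^*_{(q_t,p_t)}\mathrm{T}^*M$, completing the argument.

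The one substantive point — the step I would write most carefully — is the preservation of the velocity constraint $G(q_t)p_t = 0$ along the magnetic dynamics for \emph{every} initial point of $\mathrm{T}^*M$; this is precisely what promotes $g(q_t)$ and $G(q_t)p_t$ to functions that are globally zero on $\mathrm{T}^*M$ and therefore have vanishing exterior derivative. Everything else is the chain rule expressed in the ``restricted coordinate one-form'' formalism of \cref{fact:cotangent-cotangent,fact:change-of-variables}, and the overall argument parallels the canonical treatment in \cite{leimkuhler_reich_2005}, whose style we follow.
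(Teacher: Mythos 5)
Your proposal is correct and, while it targets the same two constraint conditions from \cref{fact:cotangent-cotangent} as the paper does, it arrives at the second one by a genuinely cleaner route. For the first condition ($G(q_t)\,\mathrm{d}q_t = 0$) you and the paper do the same thing: take the differential of the identically-zero function $g(q_t)$. For the second condition the paper differentiates the one-form identity $G(q_t)\,\mathrm{d}q_t = 0$ with respect to time, expands the product rule to get $[\nabla G(q_t)\cdot\dot q_t]\,\mathrm{d}q_t + G(q_t)\,\mathrm{d}\dot q_t = 0$, and then separately computes $f_q\,\mathrm{d}q_t + f_p\,\mathrm{d}p_t$ using the Hessian structure of the separable Hamiltonian ($H_{pq}=0$, $\mathrm{d}\dot q_t = H_{pp}\,\mathrm{d}p_t$) to show that these two expressions agree. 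You instead observe that $z\mapsto G(q_t(z))p_t(z)$ is identically zero over $\mathrm{T}^*M$ (because the flow preserves the cotangent-bundle constraint, which you correctly flag as the load-bearing fact), and then apply the chain rule to $f\circ\Phi_\text{mag}(\cdot;t)$: the $q$- and $p$-blocks of the Jacobian of $f$ are exactly $f_q$ and $f_p$, so the exterior derivative is $f_q\,\mathrm{d}q_t + f_p\,\mathrm{d}p_t$, and it vanishes because the composite function does. This bypasses the explicit Hessian bookkeeping (\cref{lem:differential-i,lem:differential-iii}) entirely. What the paper's version buys is that it stays closer to the machinery developed for \cref{lem:magnetic-conserve-volume} (the time-derivative-of-one-forms technique); what yours buys is a shorter argument and a clearer identification of the single substantive fact (flow invariance of $\mathrm{T}^*M$) on which everything rests.
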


In proving these results we will adopt the shorthand notation $H_{q}$ to denote the partial derivative of $H$ with respect to $q$ regarded as a row vector. The notation $H_{qp}$ denotes the matrix of partial derivatives of $H$ with respect to $q$ and $p$. Other quantities similarly defined. 

We will require preliminary lemmas before proving the theorems.
\begin{lemma}\label{lem:differential-i}
For Hamiltonians of the form in \cref{eq:hamiltonian-form},
\begin{align}
    H_{qp}(q, p) = H_{pq}(q, p) = \mathbf{0}_m
\end{align}
\end{lemma}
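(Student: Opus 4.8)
The plan is to exploit the separable structure of the Hamiltonian in \cref{eq:hamiltonian-form}, namely $H(q,p) = U(q) + \frac{1}{2} p^\top p$, which splits into a term depending only on $q$ plus a term depending only on $p$. First I would compute the first partial derivative with respect to $q$: the kinetic term $\frac{1}{2} p^\top p$ contributes nothing to $\partial_q H$, so $H_q(q,p) = \nabla_q U(q)^\top$, a function of $q$ alone. Differentiating this expression with respect to $p$ therefore yields $H_{qp}(q,p) = \mathbf{0}_m$.

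Symmetrically, I would compute $H_p(q,p) = p^\top$ (the potential term $U(q)$ contributing nothing), which is a function of $p$ alone, so differentiating with respect to $q$ gives $H_{pq}(q,p) = \mathbf{0}_m$. This establishes both claimed equalities directly. Alternatively, since $H$ is smooth one could invoke equality of mixed partial derivatives to get $H_{qp} = H_{pq}$ and then compute only one of the two blocks.

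There is no genuine obstacle here; the only thing requiring a small amount of care is the row/column-vector bookkeeping introduced immediately above the lemma (treating $H_q$ as a row vector and $H_{qp}$ as the matrix of mixed second partials), but the conclusion is immediate from the additive separation of $H$ into a purely-$q$ part and a purely-$p$ part.
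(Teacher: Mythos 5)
Your proof is correct and matches the paper's argument: both rely on the additive separability of $H(q,p)=U(q)+\tfrac{1}{2}p^\top p$, so that each first partial depends on only one of the two variables and the mixed second partials vanish. You are merely slightly more explicit in writing out $H_q$ and $H_p$ before differentiating again, which is a fine presentation choice.
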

\begin{proof}
The Hamiltonian is separable so that the potential energy $U(q)$ is a function of $q$ alone and the kinetic energy $K(p) = \frac{1}{2} p^\top p$ is a function of $p$ alone. Differentiating with respect to $q$ and then with respect to $p$, or with respect to $p$ and then with respect to $q$ causes all terms to vanish.
\end{proof}

\begin{lemma}\label{lem:differential-ii}
Given the constraint $g(q_t) = 0$, the differential $\mathrm{d}q_t$ must obey $G(q_t) ~\mathrm{d}q_t = 0$.
\end{lemma}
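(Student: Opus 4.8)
The plan is to use the fact that the constraint $g(q_t)=0$ from \cref{eq:embedding-magnetic-constraint} holds not merely along one trajectory but identically as $(q,p)$ ranges over $\mathrm{T}^*M$: by \cref{def:magnetic-flow} the flow $\Phi_\text{mag}(\cdot,\cdot;t)$ maps $\mathrm{T}^*M$ into itself, so the component functions $(q,p)\mapsto g_i(q_t)$ are identically zero on $\mathrm{T}^*M$. Because these are the zero functions, their differentials vanish, and unpacking that vanishing via the chain rule is exactly the claimed relation.

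Concretely, first I would write $q_t = \bigl(\Phi_\text{mag}((q,p);t)\bigr)_q$ as a smooth function of the initial data and record the identity $g(q_t)\equiv 0$ on $\mathrm{T}^*M$. Then I would apply the differential operator to each coordinate of this vector identity. Using the chain rule for coordinate $1$-forms (\cref{fact:change-of-variables}), $\mathrm{d}\bigl(g_i(q_t)\bigr) = \sum_{j=1}^m \tfrac{\partial g_i}{\partial q_j}(q_t)\,(\mathrm{d}q_t)_j = \bigl(G(q_t)\,\mathrm{d}q_t\bigr)_i$, since $\tfrac{\partial g_i}{\partial q_j}(q_t) = G(q_t)_{ij}$. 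The left-hand side is the differential of the zero function, hence equal to zero, so $G(q_t)\,\mathrm{d}q_t = 0$, as desired.

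There is no real obstacle here; the only point requiring care is to treat $g(q_t)=0$ as an identity in the initial conditions, so that its differential is a genuine relation among the embedded $1$-forms $\mathrm{d}q_t$ on $\mathrm{T}^*\mathrm{T}^*M$ rather than a statement about a single fixed curve. This is precisely the content of the first displayed relation in \cref{fact:cotangent-cotangent} applied at the image point $q_t$, so the lemma amounts to transporting that fact along the magnetic flow. (Alternatively, one could differentiate $g(q_t)=0$ in $t$ to recover the velocity constraint of \cref{fact:velocity-constraint} and note the analogous structure, but the direct chain-rule computation above is the cleanest route.)
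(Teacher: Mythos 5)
Your proof is correct and follows the same route as the paper's: apply \cref{fact:change-of-variables} (the chain rule for coordinate $1$-forms) to the identity $g(q_t)=0$, whose differential vanishes, yielding $G(q_t)\,\mathrm{d}q_t=0$. You add useful clarification that the paper leaves implicit—that $g(q_t)=0$ must be read as an identity on $\mathrm{T}^*M$ so that taking its differential is legitimate—but the core step is identical.
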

\begin{proof}
Applying \cref{fact:change-of-variables}
\begin{align}
    & \mathrm{d}(g(q_t)) = \mathrm{d}0 = 0\\
    \implies& G(q_t) ~\mathrm{d}q_t  =0
\end{align}
\end{proof}

\begin{lemma}\label{lem:differential-iii}
\begin{align}
\mathrm{d} \dot{q}_t &= H_{qp}(q_t, p_t) \mathrm{d}q_t + H_{pp}(q_t, p_t) \mathrm{d}p \\
\mathrm{d} \dot{p}_t &= -H_{qq}(q_t, p_t) \mathrm{d}q_t - H_{qp}(q_t, p_t)\mathrm{d}p_t - \mathrm{L}H_{pq}(q_t, p_t)\mathrm{d}q_t - \mathrm{L}H_{pp}(q_t, p_t)\mathrm{d}p_t - \mathrm{d}(G(q_t)^\top\lambda)
\end{align}
\end{lemma}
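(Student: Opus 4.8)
The plan is to apply the differential (exterior derivative) to both sides of the equations of motion \cref{eq:embedding-magnetic-velocity,eq:embedding-magnetic-acceleration}, regarding $q_t$ and $p_t$ as smooth functions of the initial condition $(q,p)$ via the flow $\Phi_\text{mag}$. Each right-hand side is the composition of a fixed smooth map ($\nabla_p H$, $\nabla_q H$, or $q\mapsto G(q)^\top\lambda$) with the flow map, so the chain rule for coordinate $1$-forms, \cref{fact:change-of-variables}, applies componentwise.

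First I would differentiate $\dot q_t = \nabla_p H(q_t,p_t)$. Applying \cref{fact:change-of-variables} to each component of $\nabla_p H$ and collecting terms gives $\mathrm{d}\dot q_t = H_{pq}(q_t,p_t)\,\mathrm{d}q_t + H_{pp}(q_t,p_t)\,\mathrm{d}p_t$, where $H_{pq}$ (resp.\ $H_{pp}$) is the Jacobian of $\nabla_p H$ in $q$ (resp.\ $p$). By symmetry of mixed second partials $H_{pq}=H_{qp}$, which yields the first claimed identity.

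Next I would differentiate $\dot p_t$ term by term using linearity of $\mathrm{d}$. The term $-\nabla_q H(q_t,p_t)$ contributes $-H_{qq}(q_t,p_t)\,\mathrm{d}q_t - H_{qp}(q_t,p_t)\,\mathrm{d}p_t$ by the same argument. For the term $-\mathrm{L}\nabla_p H(q_t,p_t)$, since $\mathrm{L}$ is a constant matrix the differential commutes with left multiplication by $\mathrm{L}$, so this contributes $-\mathrm{L}H_{pq}(q_t,p_t)\,\mathrm{d}q_t - \mathrm{L}H_{pp}(q_t,p_t)\,\mathrm{d}p_t$ (again using $H_{pq}=H_{qp}$ if one prefers). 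The constraint term $-G(q_t)^\top\lambda$ contributes $-\mathrm{d}\bigl(G(q_t)^\top\lambda\bigr)$, which I would leave unexpanded. Summing the four contributions gives the second identity.

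The only delicate point is the constraint-force term: because $\lambda\equiv\lambda(q_t,p_t)$ is defined implicitly by the condition $g(q_t)=0$ along trajectories (\cref{app:lagrange-multipliers}), one cannot pull $\lambda$ outside the differential, and $\mathrm{d}(G(q_t)^\top\lambda)$ genuinely involves $\mathrm{d}\lambda$. For this lemma that term is simply carried along; it is dealt with later (via \cref{fact:lagrange-wedge}, using $g(q_t)=0$) once the expression is paired against $\mathrm{d}q_t$ in a wedge product. Everything else is a routine application of the chain rule for differential forms.
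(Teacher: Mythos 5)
Your proposal matches the paper's proof, which simply applies the differential (via the change-of-variables rule for coordinate $1$-forms, \cref{fact:change-of-variables}) to each side of the equations of motion \cref{eq:embedding-magnetic-velocity,eq:embedding-magnetic-acceleration} and reads off the coefficient matrices. One small caution: for a general scalar Hamiltonian, Clairaut gives $H_{pq}=H_{qp}^\top$ rather than $H_{pq}=H_{qp}$, but this distinction is immaterial here because \cref{lem:differential-i} shows both blocks vanish for the separable Hamiltonians in \cref{eq:hamiltonian-form}.
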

\begin{proof}
The equations of motion are
\begin{align}
    \dot{q}_t &= \nabla_p H(q_t, p_t) \\
    \dot{p}_t &= -\nabla_q H(q_t, p_t) - \mathrm{L}\nabla_p H(q_t, p_t) - G(q_t)^\top\lambda \\
    g(q_t) &= 0
\end{align}
Computing the differential yields,
\begin{align}
\mathrm{d} \dot{q}_t &= H_{pq}(q_t, p_t) \mathrm{d}q_t + H_{pp}(q_t, p_t) \mathrm{d}p_t \\
\mathrm{d} \dot{p}_t &= -H_{qq}(q_t, p_t) \mathrm{d}q_t - H_{qp}(q_t, p_t)\mathrm{d}p_t - \mathrm{L}H_{pq}(q_t, p_t)\mathrm{d}q_t - \mathrm{L}H_{pp}(q_t, p_t)\mathrm{d}p_t - \mathrm{d}(G(q_t)^\top\lambda)
\end{align}
\end{proof}

We may now prove \cref{thm:magnetic-cotangent}.

\begin{proof}
We want to show that the magnetic Hamiltonian dynamics
\begin{align}
    \dot{q}_t &= \nabla_p H(q_t, p_t) \label{eq:velocity}\\
    \dot{p}_t &= -\nabla_q H(q_t, p_t) - \mathrm{L} \nabla_pH(q_t, p_t) -G(q)^\top \lambda \\
    0 &= g(q_t) \label{eq:manifold-constraint}
\end{align}
have the property that the embedded differential one-forms $\mathrm{d}q$ and $\mathrm{d}p$ satisfy the manifold constraints such that $(\mathrm{d}q,\mathrm{d}p)$ are elements of the cotangent space of the manifold $\mathrm{T}^*\mathrm{T}^*M$ viewed as an embedded submanifold of $\mathrm{T}^*\R^{2m}$. From \cref{fact:cotangent-cotangent}, this is equivalent to verifying that the solution to the magnetic Hamiltonian dynamics obey:
\begin{align}
    G(q_t) ~\mathrm{d}q_t &= 0 \\
    f_q(q_t, p_t) ~\mathrm{d}q_t + f_p(q_t, p_t)~\mathrm{d}p_t &= 0,
\end{align}
where $f(q,p) \defeq G(q)\nabla_p H(q, p)$. Applying \cref{lem:differential-ii} immediately gives the first condition. The second condition follows from computing the time-derivative of $G(q_t) ~\mathrm{d}q_t = 0$:
\begin{align}
    \frac{\mathrm{d}}{\mathrm{d}t} \left[G(q_t)\mathrm{d}q_t\right] &= \left[\frac{\mathrm{d}}{\mathrm{d}t} G(q_t)\right] \mathrm{d}q_t + G(q_t)\mathrm{d}\dot{q}_t \\
    &= \left[\nabla G(q_t) \cdot \dot{q}_t\right]\mathrm{d}q_t + G(q_t) \mathrm{d}\dot{q}_t \\
    &= 0 \label{eq:differential-zero}
\end{align}
Using \cref{fact:velocity-and-hamiltonian,lem:differential-i}, computing the differentials of $f(q, p)$ yields
\begin{align}
    f_q(q_t, p_t) ~\mathrm{d}q_t &= \left[\nabla G(q_t)\cdot\dot{q}_t + G(q_t) H_{pq}(q_t, p_t)\right] ~\mathrm{d}q_t \\
    &= \left[\nabla G(q_t)\cdot\dot{q}_t \right] ~\mathrm{d}q_t \\
    f_p(q_t, p_t)~\mathrm{d}p_t &= G(q_t) H_{pp}(q_t, p_t)~\mathrm{d}p_t \label{eq:momentum-constraint-partial-p}
\end{align}
Using \cref{lem:differential-iii,lem:differential-i}, the differential of \cref{eq:velocity} gives the relation
\begin{align}
    &\mathrm{d}\dot{q}_t = H_{pq}(q_t, p_t) ~\mathrm{d}q_t + H_{pp}(q_t, p_t)~\mathrm{d}p_t \\
    \implies &\mathrm{d}\dot{q}_t = H_{pp}(q_t, p_t)~\mathrm{d}p_t 
\end{align}
whereupon substitution into \cref{eq:momentum-constraint-partial-p} yields,
\begin{align}
    f_p(q_t, p_t)~\mathrm{d}p_t &= G(q_t)\mathrm{d}\dot{q}_t
\end{align}
Therefore,
\begin{align}
    f_q(q_t, p_t) ~\mathrm{d}q_t + f_p(q_t, p_t)~\mathrm{d}p_t &= \left[\nabla G(q_t)\cdot\dot{q}_t\right]~\mathrm{d}q_t + G(q_t)\mathrm{d}\dot{q}_t \\
    &= 0
\end{align}
from \cref{eq:differential-zero}.
\end{proof}
\newpage
\section{Proof of \Cref{thm:magnetic-conservation}}\label{app:proof-magnetic-conservation}

This result requires \cref{def:coordinate-1-form,def:wedge-product-vectors,fact:lagrange-wedge,fact:properties-wedge-product,fact:2-form-symplectic-structure,def:embedded-cotangent,fact:magnetic-symplectic-structure,fact:velocity-constraint,fact:skew-symmetric-annihilation,fact:velocity-and-hamiltonian,fact:symplectic-time-derivative,lem:differential-iii,fact:flow-property}.

There are three statements. First that the magnetic vector field flow is symmetric, second that it is symplectic, and third that it conserves the Hamiltonian. We will prove the three individually.

\begin{lemma}\label{lem:magnetic-symmetric}
Let $M = \set{q \in \R^m : g(q) = 0}$ be a connected manifold such that $G(q)$ has full-rank. View $\text{T}^*M$ as an embedded sub-manifold of $\R^{2m}$. Let $\Omega_\text{mag}$ be the magnetic symplectic structure in the ambient Euclidean space $\R^{2m}\cong \R^m\times \R^m$. Let $H(q,p)$ be a smooth Hamiltonian $H:\R^m\times\R^m\to \R$ of the form in \cref{eq:hamiltonian-form}. Then $\Phi_\text{mag}$ is symmetric: $\Phi_\text{mag}(\Phi_\text{mag}(q, p; t); -t) = (q, p)$.
\end{lemma}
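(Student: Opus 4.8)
The plan is to recognize $\Phi_\text{mag}$ as the flow of a genuine \emph{autonomous} vector field on $\mathrm{T}^*M$ and then invoke the flow property of \cref{fact:flow-property}, which asserts precisely that vector field flows satisfy $\Phi(\Phi(q,p;t);-t) = (q,p)$. The only substantive thing to check is therefore that \cref{eq:embedding-magnetic-velocity,eq:embedding-magnetic-acceleration,eq:embedding-magnetic-constraint} really do define such a flow: that after eliminating the Lagrange multiplier the right-hand side depends on $(q_t,p_t)$ alone, with no explicit time dependence, and that the resulting vector field is tangent to $\mathrm{T}^*M$ so that $\Phi_\text{mag}$ maps $\mathrm{T}^*M$ into itself.

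First I would eliminate $\lambda$. Differentiating $g(q_t)=0$ once gives $G(q_t)\dot q_t = 0$, and differentiating again, using $\dot q_t = \nabla_p H(q_t,p_t)$ together with $H_{pq}=\mathbf{0}_m$ and $H_{pp}=\mathrm{Id}_m$ (from \cref{lem:differential-i} and the form \cref{eq:hamiltonian-form}), produces a linear system $G(q_t)G(q_t)^\top \lambda = r(q_t,p_t)$ whose right-hand side $r$ is an explicit expression in $(q_t,p_t)$ only. Since $G$ is full-rank, $GG^\top$ is invertible and $\lambda = \lambda(q_t,p_t)$; substituting back, the equations of motion become an autonomous ODE $(\dot q_t,\dot p_t) = X_{H,\text{mag}}(q_t,p_t)$, with $X_{H,\text{mag}}$ smooth and tangent to $\mathrm{T}^*M$ (the latter being the content of the surrounding construction, consistent with \cref{thm:magnetic-cotangent}). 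Smoothness gives uniqueness of solutions through a prescribed initial condition, for short enough times in both time directions.

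With autonomy in hand, symmetry follows from uniqueness exactly as in \cref{fact:flow-property}: fixing $(q,p)$ and $t$, the curves $s\mapsto \Phi_\text{mag}(q,p;t-s)$ and $s\mapsto \Phi_\text{mag}(\Phi_\text{mag}(q,p;t);-s)$ both solve $\dot\gamma(s) = -X_{H,\text{mag}}(\gamma(s))$ with the common initial value $\gamma(0) = \Phi_\text{mag}(q,p;t)$, hence coincide; evaluating at $s=t$ yields $(q,p) = \Phi_\text{mag}(\Phi_\text{mag}(q,p;t);-t)$. Alternatively, once $\Phi_\text{mag}$ is identified as a vector field flow in the sense of \cref{def:vector-field-flow}, one may simply cite \cref{fact:flow-property}.

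The main obstacle is not computational but conceptual bookkeeping around the constraint: one must verify that reversing time does not alter which vector field governs the motion, and it is precisely the state-dependence (rather than explicit time-dependence) of $\lambda$ that guarantees this; one must also ensure the backward flow stays on $\mathrm{T}^*M$ and exists on the required interval. Once autonomy and tangency are secured, the conclusion is immediate.
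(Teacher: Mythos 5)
Your proof is correct and takes essentially the same route as the paper: both arguments reduce the claim to Fact~\ref{fact:flow-property} (the flow property of vector field flows). The paper's proof is a one-liner — it observes that $\Phi_\text{mag}$ is a vector field flow \emph{by definition} (\cref{def:magnetic-flow}) and cites \cref{fact:flow-property} — whereas you additionally verify the hypotheses of that fact (state-dependence of $\lambda$ via elimination, autonomy of the resulting ODE, and tangency to $\mathrm{T}^*M$); these checks are exactly what the paper defers to \cref{app:lagrange-multipliers} and \cref{thm:magnetic-cotangent}, so your added detail is consistent with, rather than divergent from, the paper's logic.
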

\begin{proof}
The map $\Phi_\text{mag} :\mathrm{T}^*M\to\mathrm{T}^*M$ is a vector field flow by definition (see \cref{def:magnetic-flow}). By \cref{eq:flow-property} in \cref{fact:flow-property}, it is symmetric.
\end{proof}

\begin{lemma}\label{lem:magnetic-conserve-volume}
Let $M = \set{q \in \R^m : g(q) = 0}$ be a connected manifold such that $G(q)$ has full-rank. View $\text{T}^*M$ as an embedded sub-manifold of $\R^{2m}$. Let $\Omega_\text{mag}$ be the magnetic symplectic structure in the ambient Euclidean space $\R^{2m}\cong \R^m\times \R^m$. Let $H(q,p)$ be a smooth Hamiltonian $H:\R^m\times\R^m\to \R$ of the form in \cref{eq:hamiltonian-form}. Then $\Phi_\text{mag}(\cdot, \cdot;t)$ is a symplectic transformation (\cref{def:symplectic-transformation}) on $\mathrm{T}^*M$ for any $t$.
\end{lemma}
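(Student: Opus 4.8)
The plan is to work entirely with differential forms and reduce the claim to the vanishing of the time derivative of the transported symplectic form. By \cref{fact:symplectic-time-derivative}, it suffices to set $\hat z_t = \Phi_\text{mag}(q,p;t) = (q_t, p_t)$, define $\hat\Omega_t \defeq \tfrac12\,\mathrm{d}\hat z_t \wedge \mathbb{J}_\text{mag}\,\mathrm{d}\hat z_t$, and verify $\frac{\mathrm{d}}{\mathrm{d}t}\hat\Omega_t = 0$ along the flow; since $\hat z_0 = (q,p)$ this forces $\hat\Omega_t = \Omega_\text{mag}$ for every $t$, which by \cref{fact:symplecticness-and-2-forms} is precisely symplecticness of $\Phi_\text{mag}(\cdot,\cdot;t)$. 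The whole computation takes place on the embedded cotangent bundle: \cref{thm:magnetic-cotangent} guarantees that $(\mathrm{d}q_t, \mathrm{d}p_t)$ remains in $\mathrm{T}^*_{(q_t,p_t)}\mathrm{T}^*M$, so restricting the ambient $2$-form is legitimate.

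Next I would expand using \cref{fact:magnetic-symplectic-structure}, writing $\hat\Omega_t = \mathrm{d}q_t \wedge \mathrm{d}p_t + \tfrac12\,\mathrm{d}q_t \wedge \mathrm{L}\,\mathrm{d}q_t$, and differentiate in $t$ by the product rule, producing terms in $\mathrm{d}\dot q_t$ and $\mathrm{d}\dot p_t$. These differentiated equations of motion are supplied by \cref{lem:differential-iii}, and \cref{lem:differential-i} (which records $H_{qp} = H_{pq} = \mathbf{0}_m$ for separable Hamiltonians, together with $H_{pp} = \text{Id}_m$ for the quadratic kinetic energy) collapses them to $\mathrm{d}\dot q_t = \mathrm{d}p_t$ and $\mathrm{d}\dot p_t = -H_{qq}(q_t,p_t)\,\mathrm{d}q_t - \mathrm{L}\,\mathrm{d}p_t - \mathrm{d}(G(q_t)^\top\lambda)$. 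Substituting these and invoking the wedge identities of \cref{fact:properties-wedge-product}: skew-symmetry kills $\mathrm{d}p_t \wedge \mathrm{d}p_t$; the annihilation property \cref{eq:wedge-annihilation} kills $\mathrm{d}q_t \wedge H_{qq}\,\mathrm{d}q_t$ since $H_{qq}$ is symmetric; and the matrix-multiplication rule \cref{eq:wedge-matrix} together with skew-symmetry of $\mathrm{L}$ rewrites $\mathrm{d}p_t \wedge \mathrm{L}\,\mathrm{d}q_t$ as $\mathrm{d}q_t \wedge \mathrm{L}\,\mathrm{d}p_t$. The remaining contributions then cancel in pairs, giving $\frac{\mathrm{d}}{\mathrm{d}t}\hat\Omega_t = 0$.

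The one genuinely nontrivial ingredient is the Lagrange-multiplier term $\mathrm{d}q_t \wedge \mathrm{d}(G(q_t)^\top\lambda)$, which is exactly where the manifold constraint enters; this is the content of \cref{fact:lagrange-wedge}, whose hypothesis $g(q_t) = 0$ holds identically along the flow, so the term vanishes. (The other place the constraint is felt is in justifying that the restricted $2$-form is the right object to differentiate, which \cref{thm:magnetic-cotangent} handles.) I would also remark that the argument uses only the flow structure and the differentiated dynamics — never a step-size or number of steps — so it is valid for all $t$; the rest is routine bookkeeping in collecting the expanded product-rule terms.
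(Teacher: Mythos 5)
Your proposal is correct and follows essentially the same route as the paper: reduce via \cref{fact:symplectic-time-derivative} to $\tfrac{\mathrm{d}}{\mathrm{d}t}\hat\Omega_t=0$, expand via \cref{fact:magnetic-symplectic-structure} and \cref{lem:differential-iii}, kill terms with \cref{fact:properties-wedge-product} and \cref{fact:lagrange-wedge}. One small inaccuracy: \cref{lem:differential-i} records only $H_{qp}=H_{pq}=\mathbf{0}_m$, not $H_{pp}=\text{Id}_m$ (the latter is immediate from the Hamiltonian's form, and the paper's version of the calculation avoids needing it by letting the symmetric $H_{pp}$ terms cancel directly via the annihilation and matrix-multiplication wedge identities).
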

\begin{proof}
We want to show that $\Phi_\text{mag}$ is a symplectic transformation (\cref{def:symplectic-transformation}). By \cref{fact:magnetic-symplectic-structure} the magnetic symplectic structure can be written in terms of the wedge product as
\begin{align}
    \mathrm{d}q \wedge\mathrm{d}p + \frac{1}{2} \mathrm{d}q \wedge \mathrm{L}~\mathrm{d}q
\end{align}
from \cref{eq:magnetic-wedge-product}. Let $(q_t, p_t) = \Phi_\text{mag}(q, p; t)$. Denote $\Omega_\text{mag}^t = \mathrm{d}q_t \wedge\mathrm{d}p_t + \frac{1}{2} \mathrm{d}q_t \wedge \mathrm{L}~\mathrm{d}q_t$. From \cref{fact:symplectic-time-derivative}, $\Phi_\text{mag}$ is symplectic for the magnetic 2-form $\Omega_\text{mag}$ if
\begin{align}
    \frac{\mathrm{d}}{\mathrm{d}t} \Omega_\text{mag}^t = 0.
\end{align}
Hence, our proof strategy will establish $\frac{\mathrm{d}}{\mathrm{d}t} \Omega_\text{mag}^t = 0$ which will imply that $\Phi_\text{mag}$ is symplectic.

We use the differentials computed in \cref{lem:differential-iii}. The notation $H_{qp}$ denotes the matrix of partial derivatives of $H$ with respect to $q$ and $p$. Symmetry of partial derivatives yields $H_{pq} = H_{qp}^\top$. The Hessian matrix with respect to $q$ (resp. $p$) is denoted $H_{qq}$ (resp. $H_{pp}$).

Computing the time derivative of $\Omega_\text{mag}^t$, we have that the magnetic symplectic form is preserved under the solution to the constrained system. 
\begin{align}
    \frac{\mathrm{d}}{\mathrm{d}t} \Omega^t_\text{mag} &= \mathrm{d}\dot{q}_t\wedge \mathrm{d}p_t + \mathrm{d}q_t \wedge\mathrm{d}\dot{p}_t + \frac{1}{2} \mathrm{d}\dot{q}_t \wedge \mathrm{L}\mathrm{d}q_t + \frac{1}{2} \mathrm{d}q_t\wedge \mathrm{L}\mathrm{d}\dot{q}_t \\
    &= \mathrm{d}\dot{q}_t\wedge \mathrm{d}p_t + \mathrm{d}q_t \wedge\mathrm{d}\dot{p}_t + \mathrm{d}\dot{q}_t \wedge \mathrm{L}\mathrm{d}q_t \\
    \begin{split}
    &= H_{pq} \mathrm{d}q_t\wedge \mathrm{d}p_t + H_{pp} \mathrm{d}p_t\wedge \mathrm{d}p_t  \\
    &\qquad -\mathrm{d}q_t \wedge H_{qq} \mathrm{d}q_t - \mathrm{d}q_t \wedge H_{qp}\mathrm{d}p_t - \mathrm{d}q_t \wedge \mathrm{L}H_{pq}\mathrm{d}q_t \\
    &\qquad - \mathrm{d}q_t \wedge \mathrm{L}H_{pp}\mathrm{d}p_t - \mathrm{d}q_t \wedge \mathrm{d}(G(q_t)^\top\lambda) \\
    &\qquad + H_{pq} \mathrm{d}q_t\wedge \mathrm{L}\mathrm{d}q_t + H_{pp} \mathrm{d}p_t\wedge \mathrm{L}\mathrm{d}q_t
    \end{split} \\
    \begin{split}
    &= H_{pq} \mathrm{d}q_t\wedge \mathrm{d}p_t +  \\
    &\qquad - \mathrm{d}q_t \wedge H_{qp}\mathrm{d}p_t - \mathrm{d}q_t \wedge \mathrm{L}H_{pq}\mathrm{d}q_t \\
    &\qquad - \mathrm{d}q_t \wedge \mathrm{L}H_{pp}\mathrm{d}p_t - \mathrm{d}q_t \wedge \mathrm{d}(G(q_t)^\top\lambda) \\
    &\qquad + H_{pq} \mathrm{d}q_t\wedge \mathrm{L}\mathrm{d}q_t + H_{pp} \mathrm{d}p_t\wedge \mathrm{L}\mathrm{d}q_t
    \end{split} \\
    \begin{split}
    &= H_{pq} \mathrm{d}q_t\wedge \mathrm{d}p_t - H_{pq}\mathrm{d}q_t \wedge \mathrm{d}p_t \\
    & \qquad - \mathrm{d}q_t \wedge \mathrm{L}H_{pq}\mathrm{d}q_t - \mathrm{L} H_{pq} \mathrm{d}q_t\wedge \mathrm{d}q_t \\
    &\qquad - \mathrm{d}q_t \wedge \mathrm{L}H_{pp}\mathrm{d}p_t - \mathrm{L} H_{pp} \mathrm{d}p_t\wedge \mathrm{d}q_t  \\
    &\qquad - \mathrm{d}q_t \wedge \mathrm{d}(G(q_t)^\top\lambda)
    \end{split} \\
    &= 0
\end{align}
The final equality comes from manipulations of the wedge product using \cref{fact:properties-wedge-product} and using the fact that $\mathrm{L}$ is a skew-symmetric matrix; in particular, we use \cref{eq:wedge-matrix,,eq:wedge-skew-symmetry,eq:wedge-annihilation}. That $\mathrm{d}q_t \wedge \mathrm{d}(G(q_t)^\top\lambda) = 0$ from \cref{fact:lagrange-wedge} was also used.
\end{proof}

\begin{lemma}\label{lem:magnetic-conserve-hamiltonian}
Let $M = \set{q \in \R^m : g(q) = 0}$ be a connected manifold such that $G(q)$ has full-rank. View $\text{T}^*M$ as an embedded sub-manifold of $\R^{2m}$. Let $\Omega_\text{mag}$ be the magnetic symplectic structure in the ambient Euclidean space $\R^{2m}\cong \R^m\times \R^m$. Let $H(q,p)$ be a smooth Hamiltonian $H:\R^m\times\R^m\to \R$ of the form in \cref{eq:hamiltonian-form}. Then $H(\Phi_\text{mag}(q, p;t)) = H(q, p)$ for any $(q, p)\in\mathrm{T}^*M$ so that the Hamiltonian energy is conserved.
\end{lemma}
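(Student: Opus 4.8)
The plan is the standard energy-conservation argument: differentiate $H$ along the flow and show the derivative vanishes identically. Write $(q_t,p_t) = \Phi_\text{mag}((q,p);t)$, which by \cref{def:magnetic-flow} solves \cref{eq:embedding-magnetic-velocity,eq:embedding-magnetic-acceleration,eq:embedding-magnetic-constraint}. By the chain rule,
\begin{align}
\frac{\mathrm{d}}{\mathrm{d}t} H(q_t,p_t) = \nabla_q H(q_t,p_t)^\top \dot q_t + \nabla_p H(q_t,p_t)^\top \dot p_t.
\end{align}
Substituting $\dot q_t = \nabla_p H(q_t,p_t)$ and $\dot p_t = -\nabla_q H(q_t,p_t) - \mathrm{L}\nabla_p H(q_t,p_t) - G(q_t)^\top\lambda$ yields three groups of terms.

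First, the ``canonical'' contribution $\nabla_q H(q_t,p_t)^\top \nabla_p H(q_t,p_t) - \nabla_p H(q_t,p_t)^\top \nabla_q H(q_t,p_t) = 0$ since these are transposes of one another (both scalars). Second, the magnetic contribution is $-\nabla_p H(q_t,p_t)^\top \mathrm{L}\nabla_p H(q_t,p_t)$, which vanishes because $\mathrm{L}\in\text{Skew}(m)$ and skew-symmetric matrices annihilate vectors (\cref{fact:skew-symmetric-annihilation}). Third, the constraint contribution is $-\nabla_p H(q_t,p_t)^\top G(q_t)^\top\lambda = -\lambda^\top\big(G(q_t)\nabla_p H(q_t,p_t)\big)$. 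Since the flow preserves the constraint $g(q_t) = 0$ for all $t$ (this is \cref{eq:embedding-magnetic-constraint}), differentiating in time and using $\dot q_t = \nabla_p H(q_t,p_t)$ gives $G(q_t)\nabla_p H(q_t,p_t) = 0$ by \cref{fact:velocity-constraint,fact:velocity-and-hamiltonian}; hence this term is also zero. Therefore $\frac{\mathrm{d}}{\mathrm{d}t} H(q_t,p_t) = 0$, and integrating from $0$ to $t$ gives $H(\Phi_\text{mag}(q,p;t)) = H(q,p)$.

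I do not anticipate a serious obstacle here; the only point requiring care is the justification that $G(q_t)\nabla_p H(q_t,p_t) = 0$ holds along the trajectory, i.e.\ that the trajectory stays in the embedded $\mathrm{T}^*M$. This is exactly the velocity constraint obtained by differentiating $g(q_t)=0$ (and is consistent with \cref{thm:magnetic-cotangent}), so it is immediate given the preliminaries. The argument is otherwise a direct computation mirroring the canonical case in \cite{leimkuhler_reich_2005}, with the single new ingredient being the skew-symmetry of $\mathrm{L}$ that kills the magnetic term.
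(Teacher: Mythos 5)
Your argument is exactly the paper's: differentiate $H$ along the flow, substitute the equations of motion, observe the canonical cross-terms cancel, kill the magnetic term via skew-symmetry of $\mathrm{L}$ (\cref{fact:skew-symmetric-annihilation}), kill the constraint term via $G(q_t)\nabla_p H(q_t,p_t)=0$ (\cref{fact:velocity-constraint,fact:velocity-and-hamiltonian}), then integrate. No gap; the reasoning and the cited facts match the paper's proof.
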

\begin{proof}
Let $(q_t, p_t) = \Phi_\text{mag}(q, p; t)$. To prove that the Hamiltonian is conserved, we verify that the time derivative of $H(q_t, p_t)$ equals zero. 
\begin{align}
    \frac{\mathrm{d}}{\mathrm{d}t} H(q_t, p_t) &= \nabla_q H(q_t, p_t)\cdot \dot{q}_t + \nabla_p H(q_t, p_t) \cdot\dot{p} \\
    &= \nabla_q H(q_t, p_t)\cdot \nabla_p H(q_t, p_t) - \nabla_p H(q_t, p_t)\cdot (\nabla_q H(q_t, p_t) + \mathrm{L}\nabla_p H(q_t, p_t) + G(q_t)^\top \lambda) \\
    &= -\nabla_p H(q_t, p_t)\cdot \mathrm{L}\nabla_p H(q_t, p_t) - \nabla_p H(q_t, p_t)\cdot G(q_t)^\top \lambda \\ 
    &= 0
\end{align}
by \cref{fact:skew-symmetric-annihilation} using that $\mathrm{L}$ is skew-symmetric and since $G(q_t)\nabla_p H(q_t, p_t) = G(q_t)p_t = 0$ from \cref{fact:velocity-and-hamiltonian,fact:velocity-constraint}. Therefore, by the Fundamental Theorem of Calculus:
\begin{align}
    H(q_t, p_t) - H(q_0, p_0) = \int_0^t \frac{\mathrm{d}}{\mathrm{d}s} H(q_s, p_s) ~\mathrm{d}s = 0.
\end{align}
Therefore $H(q_t, p_t) = H(q_0, p_0)$ and since $(q_0, p_0) = (q, p)$ we have shown that $H(q_t, p_t) = H(q, p)$.
\end{proof}

We now give a proof of \cref{thm:magnetic-conservation}.
\begin{proof}
Apply \cref{lem:magnetic-symmetric,,lem:magnetic-conserve-volume,lem:magnetic-conserve-hamiltonian}.
\end{proof}
\newpage
\section{Proof of \Cref{lemma:euclidean-symmetric-symplectic}}\label{app:proof-euclidean-symmetric-symplectic}

This result requires \cref{fact:hamiltonian-symplectic,,fact:magnetic-motion,fact:symplectic-composition-group,,fact:flow-property}.

The magnetic integrator for Euclidean spaces is derived as the symmetric composition of three magnetic Hamiltonian vector field flows (\cref{def:magnetic-flow}). Consider a Hamiltonian of the form $H(q, p) = U(q) + \frac{1}{2} p^\top p$. The integrator is derived from a Strang splitting of the Hamiltonian
\begin{align}
H(q, p) = \underbrace{\frac{1}{2}U(q)}_{H_1(q, p)} + \underbrace{\frac{1}{2} p^\top p}_{H_2(q, p)} + \underbrace{\frac{1}{2} U(q)}_{H_1(q, p)}
\end{align}
The complete algorithm is given in \cref{alg:euclidean-single-step}.

The following lemmas are proved in \cite{pmlr-v70-tripuraneni17a}. They can both be derived from the motion established in \cref{eq:magnetic-equations-of-motion} from \cref{fact:magnetic-motion}.
\begin{lemma}\label{lem:euclidean-split-i}
Let $(q_0, p_0)\in\R^{2m}$. Denote the magnetic vector field flow (\cref{def:magnetic-flow}) to time $\epsilon$ of $H_1$ under a magnetic symplectic structure by $\Phi_1^\epsilon(\cdot, \cdot) :\R^{2m}\to\R^{2m}$. Then
\begin{align}
    \Phi_1^\epsilon(q_0, p_0) = (q_0, p_0 - \epsilon / 2 \cdot \nabla U(q_0)).\label{eq:euclidean-split-i}
\end{align}
\end{lemma}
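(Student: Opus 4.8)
The plan is to read off the equations of motion for the split Hamiltonian $H_1(q,p) = \frac{1}{2}U(q)$ directly from \cref{fact:magnetic-motion} (equivalently \cref{eq:magnetic-equations-of-motion}) and then integrate them in closed form, exploiting that $H_1$ depends on the position variable alone.

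First I would specialize \cref{fact:magnetic-motion} to $H_1$. Since $H_1$ is a function of $q$ only, $\nabla_p H_1(q,p) = 0$ and $\nabla_q H_1(q,p) = \frac{1}{2}\nabla U(q)$. Substituting into the magnetic equations of motion gives $\dot{q}_t = \nabla_p H_1(q_t,p_t) = 0$ and $\dot{p}_t = -\nabla_q H_1(q_t,p_t) - \mathrm{L}\nabla_p H_1(q_t,p_t) = -\frac{1}{2}\nabla U(q_t)$. The key observation is that the magnetic term $\mathrm{L}\nabla_p H_1$ drops out identically, precisely because the velocity $\nabla_p H_1$ vanishes; hence the flow of $H_1$ coincides with its canonical counterpart and carries no dependence on $\mathrm{L}$.

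Next, from $\dot{q}_t = 0$ and the initial condition $q(0) = q_0$, I conclude $q_t = q_0$ for all $t$. In particular the right-hand side of the momentum equation is the constant vector $-\frac{1}{2}\nabla U(q_0)$, so integrating from $0$ to $\epsilon$ yields $p_\epsilon = p_0 - \frac{\epsilon}{2}\nabla U(q_0)$, which is exactly the claimed expression for $\Phi_1^\epsilon(q_0,p_0)$. Uniqueness of solutions to the initial value problem — which holds because $\nabla U$ is smooth, hence locally Lipschitz — guarantees that this solution is the vector field flow map of \cref{def:magnetic-flow}, not merely one solution among many.

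I do not anticipate any real obstacle here: once the magnetic equations of motion from \cref{fact:magnetic-motion} are in hand, the result is a one-line integration, and the only point worth emphasizing in writing is the vanishing of the $\mathrm{L}$-dependent term due to the absence of $p$-dependence in $H_1$.
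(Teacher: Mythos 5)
Your proof is correct and matches the approach the paper implies (it cites the magnetic equations of motion in \cref{fact:magnetic-motion} and performs the same elementary integration explicitly for the canonical analogue in \cref{lem:leapfrog-splitting-i}): since $\nabla_p H_1 = 0$, the magnetic term $\mathrm{L}\nabla_p H_1$ vanishes, $q$ stays fixed, and the momentum ODE integrates to the stated closed form. Your remark on uniqueness of the IVP solution is a small welcome addition that the paper leaves implicit.
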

\begin{lemma}\label{lem:euclidean-split-ii}
Let $(q_0, p_0)\in\R^{2m}$. Denote the magnetic vector field flow (\cref{def:magnetic-flow}) to time $\epsilon$ of $H_2$ under a magnetic symplectic structure by $\Phi_2^\epsilon(\cdot, \cdot; \mathrm{L}) :\R^{2m}\to\R^{2m}$. Then $\Phi_2^\epsilon$ has a closed-form expression given by
\begin{align}\label{eq:euclidean-split-ii}
    (p', q') = \Phi_2^\epsilon(q, p; \mathrm{L})
\end{align}
where
\begin{align}\label{eq:euclidean-split-ii-momentum}
    p' &\defeq \exp(-\epsilon~\mathrm{L}) p \\ \label{eq:euclidean-split-ii-position}
    q' &\defeq q + \begin{pmatrix} U_{\mathbf{D}} & U_\mathbf{0} \end{pmatrix} \begin{pmatrix} \mathbf{D}^{-1} (\exp(\epsilon\mathbf{D}) - \text{Id}) & \mathbf{0} \\ \mathbf{0} & \epsilon \text{Id}\end{pmatrix}\begin{pmatrix} U_{\mathbf{D}} & U_\mathbf{0} \end{pmatrix}^{-1} p 
\end{align}
where
\begin{align}
    -\mathrm{L} = \begin{pmatrix} U_{\mathbf{D}} & U_\mathbf{0} \end{pmatrix} \begin{pmatrix} \mathbf{D} & \mathbf{0} \\ \mathbf{0} & \mathbf{0} \end{pmatrix} \begin{pmatrix} U_{\mathbf{D}} & U_\mathbf{0} \end{pmatrix}^{-1}
\end{align}
is an eigen-decomposition of $-\mathrm{L}$ so that $\mathbf{D}$ is the diagonal matrix of non-zero eigenvalues, $U_{\mathbf{D}}$ is the matrix of eigenvectors for the non-zero eigenvalues, and $U_\mathbf{0}$ is the matrix of eigenvectors for the zero eigenvalues.
\end{lemma}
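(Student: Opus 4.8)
The plan is to integrate the magnetic equations of motion for the purely kinetic Hamiltonian $H_2(q,p) = \frac{1}{2}p^\top p$ directly; because $H_2$ is quadratic and independent of $q$, the flow solves a linear, decoupled system and admits a closed form. First I would specialize \cref{fact:magnetic-motion}: since $\nabla_q H_2 = 0$ and $\nabla_p H_2 = p$, the magnetic equations of motion \cref{eq:magnetic-equations-of-motion} reduce to
\begin{align}
    \dot q_t = p_t, \qquad \dot p_t = -\mathrm{L}\,p_t.
\end{align}
The momentum equation is an autonomous linear ODE with solution $p_t = \exp(-t\,\mathrm{L})\,p_0$; evaluating at $t=\epsilon$ gives $p' = \exp(-\epsilon\,\mathrm{L})\,p$, which is \cref{eq:euclidean-split-ii-momentum}.

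Next I would integrate the position equation by substituting the solved momentum and integrating from $0$ to $\epsilon$:
\begin{align}
    q' = q_\epsilon = q_0 + \left( \int_0^\epsilon \exp(-t\,\mathrm{L})\,\mathrm{d}t \right) p_0 .
\end{align}
To evaluate the matrix integral I would use the eigen-decomposition of $-\mathrm{L}$ displayed in the lemma, in which the invertible diagonal block $\mathbf{D}$ of non-zero eigenvalues is separated from the kernel block. Conjugation commutes with both the matrix exponential and the integral, so $\exp(-t\,\mathrm{L})$ is block-diagonal with blocks $\exp(t\mathbf{D})$ and $\mathrm{Id}$, and hence
\begin{align}
    \int_0^\epsilon \exp(-t\,\mathrm{L})\,\mathrm{d}t = \begin{pmatrix} U_\mathbf{D} & U_\mathbf{0} \end{pmatrix} \begin{pmatrix} \mathbf{D}^{-1}(\exp(\epsilon\mathbf{D}) - \mathrm{Id}) & \mathbf{0} \\ \mathbf{0} & \epsilon\,\mathrm{Id} \end{pmatrix} \begin{pmatrix} U_\mathbf{D} & U_\mathbf{0} \end{pmatrix}^{-1},
\end{align}
where the upper block comes from integrating $\exp(t\mathbf{D})$ termwise (legitimate because $\mathbf{D}$ is diagonal with non-zero entries) and the lower block from $\int_0^\epsilon \mathrm{Id}\,\mathrm{d}t = \epsilon\,\mathrm{Id}$. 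Multiplying through by $p_0$ recovers exactly \cref{eq:euclidean-split-ii-position}.

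There is no genuine obstacle here; the computation is elementary once one recognizes the decoupled linear structure. The one point requiring care is the treatment of the zero eigenvalues of $\mathrm{L}$: since a real skew-symmetric matrix typically has a nontrivial kernel, the decomposition must keep the non-zero spectrum $\mathbf{D}$ (which is invertible) apart from the kernel block, which is precisely why the antiderivative of $\exp(-t\,\mathrm{L})$ splits into the $\mathbf{D}^{-1}(\exp(\epsilon\mathbf{D}) - \mathrm{Id})$ piece and the $\epsilon\,\mathrm{Id}$ piece rather than a naive $-\mathrm{L}^{-1}(\exp(-\epsilon\,\mathrm{L}) - \mathrm{Id})$. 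I would also remark that although $\mathbf{D}$ and $U_\mathbf{D}$ are complex in general (the non-zero eigenvalues of a skew-symmetric matrix are purely imaginary), the resulting $q'$ is real, being $q_0$ plus the integral of the real matrix function $t\mapsto\exp(-t\,\mathrm{L})$ applied to the real vector $p_0$.
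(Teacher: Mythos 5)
Your derivation is correct and follows exactly the route the paper indicates: the paper defers the proof to \cite{pmlr-v70-tripuraneni17a}, noting only that the lemma "can be derived from the motion established in \cref{eq:magnetic-equations-of-motion} from \cref{fact:magnetic-motion}," which is precisely what you do by specializing to $\dot q_t = p_t$, $\dot p_t = -\mathrm{L}p_t$ and integrating the resulting linear system via the block eigendecomposition. Your added remarks on the kernel block of $\mathrm{L}$ and on the reality of $q'$ despite the complex eigenvectors are accurate and fill in details the paper leaves implicit.
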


We will now prove \cref{lemma:euclidean-symmetric-symplectic}. This was already proven in \cite{pmlr-v70-tripuraneni17a}. Here we offer an alternative proof. There are two statements: (i) that the integrator is symmetric and (ii) that the integrator is symplectic. We will prove each individually.

\begin{lemma}\label{lem:euclidean-symmetric}
The single-step integrator in \cref{alg:euclidean-single-step} is symmetric.
\end{lemma}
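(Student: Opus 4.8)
The plan is to exploit the fact that $\hat{\Phi}$ is built as a symmetric composition of the two magnetic vector field flows $\Phi_1^\epsilon$ and $\Phi_2^\epsilon$, each of which individually satisfies the flow property. Concretely, write $\hat{\Phi}(\cdot;\epsilon) = \Phi_1^\epsilon\circ\Phi_2^\epsilon\circ\Phi_1^\epsilon$ as in \cref{alg:euclidean-single-step}, and note that running the integrator with step-size $-\epsilon$ yields $\hat{\Phi}(\cdot;-\epsilon) = \Phi_1^{-\epsilon}\circ\Phi_2^{-\epsilon}\circ\Phi_1^{-\epsilon}$: the closed forms in \cref{eq:euclidean-split-i,eq:euclidean-split-ii,eq:euclidean-split-ii-position,eq:euclidean-split-ii-momentum} are precisely the time-$\epsilon$ flows of $H_1$ and $H_2$, so substituting $-\epsilon$ (with $\mathrm{L}$ held fixed) produces the corresponding time-$(-\epsilon)$ flows.

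First I would invoke \cref{fact:flow-property}: because $\Phi_1^\epsilon$ and $\Phi_2^\epsilon$ are vector field flows to time $\epsilon$ in the sense of \cref{def:magnetic-flow}, each satisfies $\Phi_i^{-\epsilon}\circ\Phi_i^{\epsilon} = \text{Id}$, and hence also $\Phi_i^{\epsilon}\circ\Phi_i^{-\epsilon} = \text{Id}$ (apply the same identity with $\epsilon$ replaced by $-\epsilon$). In other words, $\Phi_i^{-\epsilon}$ is a two-sided inverse of $\Phi_i^{\epsilon}$ for $i=1,2$.

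Then I would simply compute the composition required by \cref{def:symmetric-map}, expanding both factors and collapsing adjacent cancelling pairs from the inside out:
\begin{align}
\hat{\Phi}(\cdot;\epsilon)\circ\hat{\Phi}(\cdot;-\epsilon)
&= \Phi_1^\epsilon\circ\Phi_2^\epsilon\circ\Phi_1^\epsilon\circ\Phi_1^{-\epsilon}\circ\Phi_2^{-\epsilon}\circ\Phi_1^{-\epsilon} \\
&= \Phi_1^\epsilon\circ\Phi_2^\epsilon\circ\Phi_2^{-\epsilon}\circ\Phi_1^{-\epsilon} \\
&= \Phi_1^\epsilon\circ\Phi_1^{-\epsilon} \\
&= \text{Id},
\end{align}
so that $\hat{\Phi}(\hat{\Phi}(z;-\epsilon);\epsilon) = z$ for all $z\in\R^{2m}$, which is the condition in \cref{def:symmetric-map}.

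I do not expect a genuine obstacle here; the only points that warrant a sentence of care are (a) confirming that replacing $\epsilon$ by $-\epsilon$ in \cref{alg:euclidean-single-step} truly gives the negative-time flows of the two split Hamiltonians rather than something else — which follows directly from the closed forms, since $\mathrm{L}$ is a fixed parameter — and (b) keeping track of composition order, since the integrator is a left-to-right composition of three maps while \cref{def:symmetric-map} composes the two runs $\hat{\Phi}(\cdot;-\epsilon)$ and $\hat{\Phi}(\cdot;\epsilon)$ in that order, so one must check that the cancellations line up as written above.
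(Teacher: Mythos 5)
Your proof is correct and takes essentially the same route as the paper: expand the symmetric composition $\Phi_1^\epsilon\circ\Phi_2^\epsilon\circ\Phi_1^\epsilon$, negate the step-size, and collapse adjacent pairs from the inside out via the flow property (\cref{fact:flow-property}). The only cosmetic difference is that you compute $\hat{\Phi}(\cdot;\epsilon)\circ\hat{\Phi}(\cdot;-\epsilon)$, which matches \cref{def:symmetric-map} directly, whereas the paper computes the reverse composition $\hat{\Phi}(\cdot;-\epsilon)\circ\hat{\Phi}(\cdot;\epsilon)$; these are equivalent since each flow is a two-sided inverse of its negative-time counterpart, a point you explicitly and correctly note.
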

\begin{proof}
A single step of the {\it numerical} integrator is the symmetric composition of Hamiltonian flows for three sub-Hamiltonians; that is, it is the composition
\begin{align}
\hat{\Phi}(\cdot;\epsilon) \defeq \Phi_1^\epsilon\circ\Phi_2^\epsilon\circ\Phi_1^\epsilon : \R^{2m} \to\R^{2m},
\end{align}
where $\Phi_1^\epsilon$ is the magnetic Hamiltonian vector field flow defined in \cref{lem:euclidean-split-i} and $\Phi_2^\epsilon$ is the magnetic Hamiltonian vector field flow defined in \cref{lem:euclidean-split-ii}. Because the composition is symmetric, it is reversible under negation of the step-size $\epsilon \mapsto -\epsilon$ by the flow property of differential equations from \cref{fact:flow-property} using \cref{eq:flow-property-composition}:
\begin{align}
    \hat{\Phi}(\cdot;-\epsilon) \circ \hat{\Phi}(\cdot;\epsilon) &= \Phi_1^{-\epsilon}\circ\Phi_2^{-\epsilon}\circ\Phi_1^{-\epsilon} \circ \Phi_1^\epsilon\circ\Phi_2^\epsilon\circ\Phi_1^\epsilon \\
    &= \Phi_1^{-\epsilon}\circ\Phi_2^{-\epsilon} \circ\Phi_2^\epsilon\circ\Phi_1^\epsilon \\
    &= \Phi_1^{-\epsilon}\circ\Phi_1^\epsilon \\
    &= \text{Id}.
\end{align}
\end{proof}

\begin{lemma}\label{lem:euclidean-symplectic}
The single-step integrator in \cref{alg:euclidean-single-step} is symplectic.
\end{lemma}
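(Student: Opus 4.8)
The plan is to prove symplecticity abstractly, without ever manipulating the closed-form expressions in \cref{eq:euclidean-split-i,eq:euclidean-split-ii-momentum,eq:euclidean-split-ii-position}. The starting point is the same structural observation used in the proof of \cref{lem:euclidean-symmetric}: by construction (see \cref{alg:euclidean-single-step} and the discussion preceding it), the single-step map is the threefold composition
\begin{align}
  \hat{\Phi}(\cdot;\epsilon) = \Phi_1^\epsilon\circ\Phi_2^\epsilon\circ\Phi_1^\epsilon : \R^{2m}\to\R^{2m},
\end{align}
where $\Phi_1^\epsilon$ and $\Phi_2^\epsilon$ are, respectively, the magnetic Hamiltonian vector field flows (\cref{def:magnetic-flow}) to time $\epsilon$ of $H_1(q,p)=U(q)/2$ and $H_2(q,p)=p^\top p/2$, both taken with respect to the magnetic symplectic structure $\Omega_\text{mag}$. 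Because we are in the Euclidean case $M\cong\R^m$, so that $\mathrm{T}^*M\cong\R^{2m}$ and there are no constraint forces, \cref{lem:euclidean-split-i,lem:euclidean-split-ii} confirm that the closed forms used in the algorithm are \emph{exact} flows of these magnetic Hamiltonian vector fields, not numerical approximations.

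Given this, the first step is to apply \cref{fact:hamiltonian-symplectic}: the flow of any Hamiltonian vector field is a symplectic transformation for every time. Applied with the symplectic structure $\Omega_\text{mag}$ and the Hamiltonians $H_1$ and $H_2$, this yields that both $\Phi_1^\epsilon$ and $\Phi_2^\epsilon$ are symplectic with respect to $\Omega_\text{mag}$. The second step is to invoke \cref{fact:symplectic-composition-group}: the maps $\R^{2m}\to\R^{2m}$ preserving $\Omega_\text{mag}$ under pullback form a group under composition. Since $\hat{\Phi}(\cdot;\epsilon)$ is a composition of three such maps, it too preserves $\Omega_\text{mag}$, i.e., it is symplectic, which is the claim.

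I expect the only subtle point to be the one flagged above: one must be sure that \cref{fact:hamiltonian-symplectic} is being applied to genuine Hamiltonian \emph{flows}, which is exactly the content of \cref{lem:euclidean-split-i,lem:euclidean-split-ii} — these verify that the stated closed forms solve \cref{eq:magnetic-equations-of-motion} for $H_1$ and $H_2$, so that $\Phi_1^\epsilon$ (a pure momentum kick) really is the time-$\epsilon$ flow of $H_1$ under $\mathbb{J}_\text{mag}$, with the dependence on $\mathrm{L}$ entering only through $\Phi_2^\epsilon$. A more pedestrian alternative, should one prefer to keep the proof self-contained, would be to differentiate the closed forms to obtain the Jacobians of $\Phi_1^\epsilon$ and $\Phi_2^\epsilon$ and check directly that $\nabla_z\Phi_i^\epsilon\,\mathbb{J}_\text{mag}\,(\nabla_z\Phi_i^\epsilon)^\top = \mathbb{J}_\text{mag}$ via \cref{fact:symplecticness-and-2-forms}, then chain the three steps using bilinearity and the properties of the wedge product in \cref{fact:properties-wedge-product}; this is routine but messier, so I would relegate it to a remark.
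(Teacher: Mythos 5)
Your proof is correct and follows the same route as the paper: the single step is the composition $\Phi_1^\epsilon\circ\Phi_2^\epsilon\circ\Phi_1^\epsilon$ of exact magnetic Hamiltonian flows (established by \cref{lem:euclidean-split-i,lem:euclidean-split-ii}), each of which is symplectic by \cref{fact:hamiltonian-symplectic}, and symplectic maps are closed under composition by \cref{fact:symplectic-composition-group}. The paper states this argument a bit more tersely, but the logic and the facts invoked are identical.
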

\begin{proof}
A single step of the {\it numerical} integrator is the symmetric composition of Hamiltonian flows for three sub-Hamiltonians; that is, it is the composition
\begin{align}
\hat{\Phi}(\cdot;\epsilon) \defeq \Phi_1^\epsilon\circ\Phi_2^\epsilon\circ\Phi_1^\epsilon.
\end{align}
where $\Phi_1^\epsilon$ is the magnetic Hamiltonian vector field flow defined in \cref{lem:euclidean-split-i} and $\Phi_2^\epsilon$ is the magnetic Hamiltonian vector field flow defined in \cref{lem:euclidean-split-ii}. Hamiltonian flows are symplectic from \cref{fact:hamiltonian-symplectic} and form a group under composition from \cref{fact:symplectic-composition-group}. Therefore, the integrator, which is a composition of three Hamiltonian flows, is symplectic.
\end{proof}

We now give the proof of \cref{lemma:euclidean-symmetric-symplectic}.
\begin{proof}
Apply \cref{lem:euclidean-symmetric,lem:euclidean-symplectic}.
\end{proof}

\newpage
\section{Proof of \Cref{thm:manifold-symmetric-symplectic}}\label{app:proof-manifold-symmetric-symplectic}

This result requires \cref{lemma:euclidean-symmetric-symplectic,fact:lagrange-wedge,def:embedded-cotangent,fact:magnetic-symplectic-structure,fact:symplecticness-and-2-forms,fact:inverse-function-theorem}.

To prove this theorem, we'll first establish several related lemmas. The first result is a quick verification that the integrator is constrained to the manifold.

\begin{lemma}
Let $M = \set{q \in \R^m : g(q) = 0}$ be a connected manifold such that $G(q)$ has full-rank. Let $\text{T}^*M$ be an embedded sub-manifold of $\R^{2m}$ as in \cref{def:cotangent-embedding}. Let $\mu$ and $\mu'$ be Lagrange multipliers such that \cref{eq:position-constraint,eq:momentum-constraint} are satisfied. Then \cref{alg:manifold-integrator} maps $(q_n, p_n)\in\mathrm{T}^*M$ to $(q_{n+1}, p_{n+1})\in\mathrm{T}^*M$.
\end{lemma}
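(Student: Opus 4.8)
The plan is to check directly that the output pair $(q_{n+1},p_{n+1})$ of one iteration of \cref{alg:manifold-integrator} satisfies the two defining conditions of the embedded cotangent bundle in \cref{def:cotangent-embedding}. Since the Hamiltonian has the separable quadratic form \cref{eq:hamiltonian-form}, we have $\nabla_p H(q,p)=p$, so those two conditions specialize to $g(q_{n+1})=0$ and $G(q_{n+1})\,p_{n+1}=0$. Both of these are, up to unpacking notation, precisely the equations that the Lagrange multipliers $\mu$ and $\mu'$ are assumed to enforce, so the argument is essentially bookkeeping once the form of $H$ is used.

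First I would handle the position component. By hypothesis $\mu$ is chosen so that \cref{eq:position-constraint} holds, which reads $g(q_{n+1})=0$, hence $q_{n+1}\in M$. I would also remark that nothing needs to be verified for the intermediate variables $\bar p_{n+1/2}$ and $\bar p_{n+1}$: the updates \cref{eq:fibre-momentum,eq:cotangent-position} are computed entirely in the ambient space $\R^{2m}$ through \cref{alg:euclidean-single-step}, and these quantities are not required to respect the manifold constraint.

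Next I would handle the momentum component. By hypothesis $\mu'$ is chosen so that \cref{eq:momentum-constraint} holds, i.e.\ $G(q_{n+1})\,p_{n+1}=0$. Because $\nabla_p H(q_{n+1},p_{n+1})=p_{n+1}$, this is exactly the condition $G(q_{n+1})\nabla_p H(q_{n+1},p_{n+1})=0$ of \cref{def:cotangent-space}, so $p_{n+1}\in\mathrm{T}^*_{q_{n+1}}M$. Combining the two clauses, $(q_{n+1},p_{n+1})$ meets both requirements of \cref{def:cotangent-embedding} and therefore lies in $\mathrm{T}^*M$; iterating this over $n=0,\dots,N-1$ from the input $(q_0,p_0)\in\mathrm{T}^*M$ shows that every iterate of \cref{alg:manifold-integrator} stays on the bundle.

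The only genuinely substantive point is the existence (and uniqueness) of Lagrange multipliers $\mu,\mu'$ for which \cref{eq:position-constraint,eq:momentum-constraint} \emph{can} be satisfied in the first place — that is where the real work lives — but it is already granted in the statement (valid for $\epsilon\neq 0$ small enough by \cite{geometric-generalization}, with $\mu'$ available in closed form from the normal equations \cref{eq:lagrange-normal} once $G(q_{n+1})$ has full rank), so the remaining argument is precisely the elementary reduction sketched above.
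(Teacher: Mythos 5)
Your argument is correct and coincides with the paper's own proof: both check the two defining conditions of \cref{def:cotangent-embedding}, use $\nabla_p H(q,p)=p$ for Hamiltonians of the form \cref{eq:hamiltonian-form} to reduce the cotangent-space condition to $G(q_{n+1})p_{n+1}=0$, and observe that $\mu,\mu'$ are by hypothesis chosen to enforce exactly \cref{eq:position-constraint,eq:momentum-constraint}. Your extra remarks about the intermediate variables and about existence/uniqueness of the multipliers are sound but not needed for the lemma as stated.
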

\begin{proof}
Recall that $M = g^{-1}(0)$ so that $q\in M\iff g(q)=0$. If $\mu$ is a Lagrange multiplier such that \cref{eq:position-constraint} is satisfied, it is immediate that $q_{n+1}\in M$. From \cref{def:cotangent-space}, $p\in \mathrm{T}_q^*M \iff G(q)\nabla_p H(q, p) = 0$. For Hamiltonians in the form of \cref{eq:hamiltonian-form}, $\nabla_p H(q, p) = p$ so that $p\in \mathrm{T}_q^*M \iff G(q)p = 0$. If $\mu'$ is a Lagrange multiplier satisfying \cref{eq:momentum-constraint}, then it is immediate that $p_{n+1}\in\mathrm{T}_{q_{n+1}}^*M$. Thus, by \cref{def:cotangent-embedding}, $(q_{n+1}, p_{n+1})\in\mathrm{T}^*M$.
\end{proof}

\begin{lemma}\label{lem:manifold-integrator-i}
Let $g : \R^m\to\R^k$ be a constraint function with full-rank Jacobian $G : \R^m\to\R^{k\times m}$. Let $q_n\in\R^m$ satisfy $g(q_n) = 0$ and let $p_n\in\R^m$. Let $\bar{p}_{n+1/2} = p_n - \frac{\epsilon}{2} G(q_n)^\top \mu$ for $\mu\in\R^k$. Then,
\begin{align}
\mathrm{d}q_n \wedge\mathrm{d}\bar{p}_{n+1/2} = \mathrm{d}q_n \wedge \mathrm{d}p_n
\end{align}
\end{lemma}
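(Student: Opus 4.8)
The plan is to expand the differential of $\bar{p}_{n+1/2}$ and invoke the vanishing property of wedge products involving Lagrange multipliers. Since $\bar{p}_{n+1/2} = p_n - \frac{\epsilon}{2} G(q_n)^\top \mu$, applying the change-of-variables rule for coordinate 1-forms (\cref{fact:change-of-variables}) and linearity gives
\begin{align}
\mathrm{d}\bar{p}_{n+1/2} = \mathrm{d}p_n - \frac{\epsilon}{2} \mathrm{d}\!\left(G(q_n)^\top \mu\right).
\end{align}

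Next I would wedge both sides with $\mathrm{d}q_n$ on the left and use the linearity property of the wedge product from \cref{fact:properties-wedge-product} (\cref{eq:wedge-linearity}) to obtain
\begin{align}
\mathrm{d}q_n \wedge \mathrm{d}\bar{p}_{n+1/2} = \mathrm{d}q_n \wedge \mathrm{d}p_n - \frac{\epsilon}{2}\, \mathrm{d}q_n \wedge \mathrm{d}\!\left(G(q_n)^\top \mu\right).
\end{align}
The hypothesis $g(q_n) = 0$ is exactly what is needed to apply \cref{fact:lagrange-wedge}, which states that $\mathrm{d}q_n \wedge \mathrm{d}(G(q_n)^\top \mu) = 0$ for any $\mu \in \R^k$. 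Substituting this in yields $\mathrm{d}q_n \wedge \mathrm{d}\bar{p}_{n+1/2} = \mathrm{d}q_n \wedge \mathrm{d}p_n$, which is the claim.

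There is no substantive obstacle here; the lemma is essentially an application of \cref{fact:lagrange-wedge} wrapped around the explicit form of the fibre-momentum update \cref{eq:fibre-momentum}. The only thing to be careful about is that $\epsilon$ is a constant (not a coordinate), so $\mathrm{d}\epsilon = 0$ and the scalar factor $\frac{\epsilon}{2}$ passes through the wedge product cleanly via \cref{eq:wedge-linearity}. This lemma will then feed into the proof that the manifold integrator in \cref{alg:manifold-integrator} preserves the magnetic symplectic form, since it shows the first (half-)momentum update contributes nothing to $\mathrm{d}q \wedge \mathrm{d}p$; an analogous statement for the update \cref{eq:cotangent-momentum} at step $n+1$ will presumably be handled the same way.
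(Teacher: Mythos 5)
Your proof is correct and follows essentially the same route as the paper: expand $\mathrm{d}\bar{p}_{n+1/2}$ by linearity, factor out the constant $\frac{\epsilon}{2}$, and kill the remaining term with \cref{fact:lagrange-wedge} under the hypothesis $g(q_n)=0$. The paper's proof is a one-line direct calculation doing exactly this.
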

\begin{proof}
By direct calculation using \cref{eq:wedge-linearity} from \cref{fact:properties-wedge-product},
\begin{align}
\mathrm{d}q_n \wedge\mathrm{d}\bar{p}_{n+1/2} &= \mathrm{d}q_n \wedge \mathrm{d}p_n - \mathrm{d}q_n \wedge \mathrm{d}(\frac{\epsilon}{2} G(q_n)^\top \mu) \\
&= \mathrm{d}q_n \wedge \mathrm{d}p_n - \frac{\epsilon}{2} \paren{\mathrm{d}q_n \wedge \mathrm{d}(G(q_n)^\top \mu)} \\
&= \mathrm{d}q_n \wedge \mathrm{d}p_n
\end{align}
by \cref{fact:lagrange-wedge}. \Cref{fact:lagrange-wedge} can be applied since $g(q_n)=0$ and $\mu\in\R^k$ by assumption.

\end{proof}
\begin{corollary}\label{cor:manifold-integrator-i}
For a skew-symmetric matrix $\mathrm{L}\in\text{Skew}(m)$,
\begin{align}
    \mathrm{d}q_n \wedge\mathrm{d}\bar{p}_{n+1/2} + \frac{1}{2} \mathrm{d}q_{n}\wedge \mathrm{L}\mathrm{d}q_{n} = \mathrm{d}q_n \wedge\mathrm{d}p_{n} + \frac{1}{2} \mathrm{d}q_{n}\wedge \mathrm{L}\mathrm{d}q_{n}
\end{align}
\end{corollary}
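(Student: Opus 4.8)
The plan is to obtain this as an immediate consequence of \cref{lem:manifold-integrator-i}. That lemma already establishes, under exactly the hypotheses in force here (namely $g(q_n) = 0$ and $\bar{p}_{n+1/2} = p_n - \frac{\epsilon}{2} G(q_n)^\top \mu$ with $\mu \in \R^k$), the identity $\mathrm{d}q_n \wedge \mathrm{d}\bar{p}_{n+1/2} = \mathrm{d}q_n \wedge \mathrm{d}p_n$. So the corollary is not a new computation but a repackaging of that fact.

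The single step is to add the same 2-form $\frac{1}{2}\mathrm{d}q_n \wedge \mathrm{L}\mathrm{d}q_n$ to both sides of the identity from \cref{lem:manifold-integrator-i}. Because the wedge product of vectors of differential 1-forms is bilinear over addition (\cref{fact:properties-wedge-product}, in particular \cref{eq:wedge-linearity}), the left-hand side becomes $\mathrm{d}q_n \wedge \mathrm{d}\bar{p}_{n+1/2} + \frac{1}{2}\mathrm{d}q_n \wedge \mathrm{L}\mathrm{d}q_n$ and the right-hand side becomes $\mathrm{d}q_n \wedge \mathrm{d}p_n + \frac{1}{2}\mathrm{d}q_n \wedge \mathrm{L}\mathrm{d}q_n$, which is precisely the claimed equation. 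No skew-symmetry of $\mathrm{L}$ is actually needed for the equality itself; the hypothesis $\mathrm{L}\in\text{Skew}(m)$ is carried along only because the combination $\mathrm{d}q_n\wedge\mathrm{d}p_n + \frac{1}{2}\mathrm{d}q_n\wedge\mathrm{L}\mathrm{d}q_n$ is the magnetic symplectic 2-form of \cref{fact:magnetic-symplectic-structure} (see \cref{eq:magnetic-wedge-product}).

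I do not expect any obstacle here; the content is entirely bookkeeping. The reason for isolating it as a corollary is downstream: it will be composed with the symplecticness of the Euclidean single-step flow $\Phi_1^\epsilon\circ\Phi_2^\epsilon\circ\Phi_1^\epsilon$ (\cref{lemma:euclidean-symmetric-symplectic}, via \cref{fact:symplecticness-and-2-forms}) and with an analogous identity for the terminal projection $\bar{p}_{n+1} \mapsto p_{n+1}$ in \cref{eq:cotangent-momentum}, so that the chain of equalities shows the full manifold update $(q_n, p_n)\mapsto(q_{n+1}, p_{n+1})$ preserves $\Omega_\text{mag}$ — the symplecticness half of \cref{thm:manifold-symmetric-symplectic}.
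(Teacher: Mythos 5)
Your argument is correct and matches the paper's proof exactly: both invoke \cref{lem:manifold-integrator-i} and then add $\frac{1}{2}\mathrm{d}q_n \wedge \mathrm{L}\mathrm{d}q_n$ to both sides. The extra remarks about skew-symmetry being unused and the downstream role of the corollary are accurate but not part of the paper's one-line proof.
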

\begin{proof}
Using \cref{lem:manifold-integrator-i}, add $\frac{1}{2} \mathrm{d}q_{n}\wedge \mathrm{L}\mathrm{d}q_{n}$ to both sides.
\end{proof}

\begin{lemma}\label{lem:manifold-integrator-ii}
Let $(q_n, \bar{p}_{n+1/2})\in\R^{2m}$. Let $\Phi: \R^{2m}\to\R^{2m}$ be a symplectic transformation with respect to the magnetic symplectic form $\Omega_\text{mag}$. If $(q_{n+1}, \bar{p}_{n+1}) = \Phi(q_n, \bar{p}_{n+1/2})$ then,
\begin{align}
    \mathrm{d}q_{n+1}\wedge \mathrm{d}\bar{p}_{n+1} + \frac{1}{2} \mathrm{d}q_{n+1}\wedge \mathrm{L}\mathrm{d}q_{n+1} &= \mathrm{d}q_{n}\wedge \mathrm{d}\bar{p}_{n+1/2} + \frac{1}{2} \mathrm{d}q_{n}\wedge \mathrm{L}\mathrm{d}q_{n}
\end{align}
\end{lemma}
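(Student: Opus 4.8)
The plan is to observe that the asserted equality is nothing more than the definition of ``$\Phi$ is symplectic with respect to $\Omega_\text{mag}$'' (\cref{def:symplectic-transformation}), rewritten in the wedge-product notation adapted to the $(q,p)$-splitting. First I would recall from \cref{fact:magnetic-symplectic-structure} that, writing $z=(q,p)$, the magnetic symplectic structure with matrix $\mathbb{J}_\text{mag}$ can be expressed as
\begin{align}
    \Omega_\text{mag} = \frac{1}{2}\,\mathrm{d}z\wedge\mathbb{J}_\text{mag}\,\mathrm{d}z = \mathrm{d}q\wedge\mathrm{d}p + \frac{1}{2}\,\mathrm{d}q\wedge\mathrm{L}\,\mathrm{d}q,
\end{align}
which is \cref{eq:magnetic-wedge-product}.

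Next I would apply \cref{fact:symplecticness-and-2-forms} with $\mathbb{J}=\mathbb{J}_\text{mag}$ (the statement there is phrased for maps on $\mathrm{T}^*M$, but via \cref{ex:euclidean} it covers maps on $\R^{2m}\cong\mathrm{T}^*\R^m$, which is the setting here): the hypothesis that $\Phi$ is symplectic with respect to $\Omega_\text{mag}$ is equivalent to
\begin{align}
    \frac{1}{2}\,\mathrm{d}\hat z\wedge\mathbb{J}_\text{mag}\,\mathrm{d}\hat z = \frac{1}{2}\,\mathrm{d}z\wedge\mathbb{J}_\text{mag}\,\mathrm{d}z, \qquad \mathrm{d}\hat z = \nabla_z\Phi(z)^\top\,\mathrm{d}z,
\end{align}
where $\hat z = \Phi(z)$ and the change-of-variables rule for the coordinate $1$-forms is \cref{fact:change-of-variables}. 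Specializing to $z=(q_n,\bar p_{n+1/2})$, so that $\hat z=(q_{n+1},\bar p_{n+1})$, and expanding each side back into the $(q,p)$ form by exactly the computation behind \cref{eq:magnetic-wedge-product}, this equality becomes precisely the claimed identity $\mathrm{d}q_{n+1}\wedge\mathrm{d}\bar p_{n+1}+\frac{1}{2}\mathrm{d}q_{n+1}\wedge\mathrm{L}\,\mathrm{d}q_{n+1}=\mathrm{d}q_{n}\wedge\mathrm{d}\bar p_{n+1/2}+\frac{1}{2}\mathrm{d}q_{n}\wedge\mathrm{L}\,\mathrm{d}q_{n}$.

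There is no real obstacle beyond this bookkeeping; the one point to keep straight is that the hypothesis must be symplecticness with respect to $\Omega_\text{mag}$ and not merely with respect to the canonical structure, so that the term $\frac{1}{2}\,\mathrm{d}q\wedge\mathrm{L}\,\mathrm{d}q$ is transported along with $\mathrm{d}q\wedge\mathrm{d}p$. In the intended use $\Phi=\Phi_1^\epsilon\circ\Phi_2^\epsilon\circ\Phi_1^\epsilon$ is the Euclidean magnetic single-step map, which is symplectic with respect to $\Omega_\text{mag}$ by \cref{lemma:euclidean-symmetric-symplectic}, so the hypothesis holds. Chaining this lemma together with \cref{cor:manifold-integrator-i} and the momentum update \cref{eq:cotangent-momentum} (invoking \cref{fact:lagrange-wedge} to discard the Lagrange-multiplier contributions $\mathrm{d}q\wedge\mathrm{d}(G(q)^\top\mu)$) is then what yields symplecticness of the full manifold integrator in the proof of \cref{thm:manifold-symmetric-symplectic}.
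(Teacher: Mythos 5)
Your proposal is correct and takes essentially the same approach as the paper: both simply unpack the definition $\Phi^*\Omega_\text{mag}=\Omega_\text{mag}$ using the wedge-product expression $\Omega_\text{mag}=\mathrm{d}q\wedge\mathrm{d}p+\frac{1}{2}\mathrm{d}q\wedge\mathrm{L}\,\mathrm{d}q$ from \cref{fact:magnetic-symplectic-structure}. Your added invocations of \cref{fact:symplecticness-and-2-forms} and \cref{fact:change-of-variables}, and your remark that symplecticness must hold with respect to $\Omega_\text{mag}$ rather than $\Omega_\text{can}$, merely make explicit the same bookkeeping the paper carries out.
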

\begin{proof}
Since $\Phi$ is symplectic we have,
\begin{align}
    \Phi^*\Omega_\text{mag} = \Omega_\text{mag}.
\end{align}
Using \cref{eq:magnetic-wedge-product} from \cref{fact:magnetic-symplectic-structure} we express the symplecticness of $\Phi$ using coordinate differential one-forms (\cref{def:coordinate-1-form}):
\begin{align}
    \Phi^*\Omega_\text{mag} = \Omega_\text{mag} \implies \mathrm{d}q_{n+1}\wedge \mathrm{d}\bar{p}_{n+1} + \frac{1}{2} \mathrm{d}q_{n+1}\wedge \mathrm{L}\mathrm{d}q_{n+1} &= \mathrm{d}q_{n}\wedge \mathrm{d}\bar{p}_{n+1/2} + \frac{1}{2} \mathrm{d}q_{n}\wedge \mathrm{L}\mathrm{d}q_{n}
\end{align}
\end{proof}

The statement of the theorem consists of two parts. That the integrator is symplectic and that the integrator is symmetric. We prove each condition individually.
\begin{lemma}\label{lem:manifold-integrator-symplectic}
Let $\mu$ and $\mu'$ be Lagrange multipliers such that \cref{eq:position-constraint,eq:momentum-constraint} are satisfied. The integrator in \cref{alg:manifold-integrator} is symplectic.
\end{lemma}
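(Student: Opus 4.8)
The plan is to show that the one-step map $\Psi:(q_n,p_n)\mapsto(q_{n+1},p_{n+1})$ defined by \cref{eq:fibre-momentum,eq:cotangent-position,eq:position-constraint,eq:cotangent-momentum,eq:momentum-constraint} preserves the magnetic symplectic structure. By \cref{fact:symplecticness-and-2-forms} together with the wedge-product expression for $\Omega_\text{mag}$ from \cref{fact:magnetic-symplectic-structure}, it is enough to establish the identity of differential two-forms
\begin{align}
    \mathrm{d}q_{n+1}\wedge\mathrm{d}p_{n+1} + \frac{1}{2}\,\mathrm{d}q_{n+1}\wedge\mathrm{L}\,\mathrm{d}q_{n+1} = \mathrm{d}q_{n}\wedge\mathrm{d}p_{n} + \frac{1}{2}\,\mathrm{d}q_{n}\wedge\mathrm{L}\,\mathrm{d}q_{n}.
\end{align}
I would obtain this by composing the two-form equalities attached to the three successive sub-steps of the integrator; since each sub-step identity will be derived as an equality of ambient two-forms on $\R^{2m}$, it restricts to $\mathrm{T}^*\mathrm{T}^*M$ and hence feeds directly into \cref{fact:symplecticness-and-2-forms} for the restricted map.

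First, the half-step momentum correction $\bar p_{n+1/2}=p_n-\tfrac{\epsilon}{2}G(q_n)^\top\mu$: because $g(q_n)=0$, \cref{cor:manifold-integrator-i} (i.e.\ \cref{lem:manifold-integrator-i} with the $\mathrm{L}$-term appended) gives
\begin{align}
    \mathrm{d}q_n\wedge\mathrm{d}\bar p_{n+1/2} + \frac{1}{2}\,\mathrm{d}q_n\wedge\mathrm{L}\,\mathrm{d}q_n = \mathrm{d}q_n\wedge\mathrm{d}p_n + \frac{1}{2}\,\mathrm{d}q_n\wedge\mathrm{L}\,\mathrm{d}q_n.
\end{align}
Second, the Euclidean update $(q_{n+1},\bar p_{n+1})=\Phi_1^\epsilon\circ\Phi_2^\epsilon\circ\Phi_1^\epsilon(q_n,\bar p_{n+1/2})$ is symplectic on $\R^{2m}$ with respect to $\Omega_\text{mag}$ by \cref{lemma:euclidean-symmetric-symplectic}, so \cref{lem:manifold-integrator-ii} applies and yields $\mathrm{d}q_{n+1}\wedge\mathrm{d}\bar p_{n+1}+\tfrac12\mathrm{d}q_{n+1}\wedge\mathrm{L}\,\mathrm{d}q_{n+1}=\mathrm{d}q_{n}\wedge\mathrm{d}\bar p_{n+1/2}+\tfrac12\mathrm{d}q_{n}\wedge\mathrm{L}\,\mathrm{d}q_{n}$. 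Third, the final half-step $p_{n+1}=\bar p_{n+1}-\tfrac{\epsilon}{2}G(q_{n+1})^\top\mu'$: since $\mu$ has been chosen to enforce \cref{eq:position-constraint} we have $g(q_{n+1})=0$, whence \cref{fact:lagrange-wedge} gives $\mathrm{d}q_{n+1}\wedge\mathrm{d}(G(q_{n+1})^\top\mu')=0$ and, adding the $\mathrm{L}$-term, $\mathrm{d}q_{n+1}\wedge\mathrm{d}p_{n+1}+\tfrac12\mathrm{d}q_{n+1}\wedge\mathrm{L}\,\mathrm{d}q_{n+1}=\mathrm{d}q_{n+1}\wedge\mathrm{d}\bar p_{n+1}+\tfrac12\mathrm{d}q_{n+1}\wedge\mathrm{L}\,\mathrm{d}q_{n+1}$. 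Chaining the three equalities proves the required identity, and \cref{fact:symplecticness-and-2-forms} then gives that $\Psi$ is symplectic.

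The step that needs the most care — and the reason \cref{fact:inverse-function-theorem} is on the list of prerequisites — is not any of the wedge-product algebra but rather the well-definedness of the differentials $\mathrm{d}\bar p_{n+1/2}$ and $\mathrm{d}p_{n+1}$: the Lagrange multipliers $\mu$ and $\mu'$ must be regarded as smooth functions of the current state $(q_n,p_n)$. Smoothness of $\mu$ follows by applying the inverse function theorem to the map $f(\mu)=g(q'(\mu))$ of \cref{alg:lagrange-newton}, whose Jacobian is nonsingular for small $\epsilon\neq0$ (this is precisely the existence/uniqueness condition cited from \cite{geometric-generalization}), while $\mu'$ is the explicit solution of the full-rank normal equations \cref{eq:lagrange-normal}. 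Once this is in place, the key observation is that the cancellations in the first and third sub-steps are insensitive to $\mathrm{d}\mu$ and $\mathrm{d}\mu'$: \cref{fact:lagrange-wedge} annihilates the whole wedge $\mathrm{d}q\wedge\mathrm{d}(G(q)^\top\mu)$ using only $g(q)=0$ (equivalently $G(q)\,\mathrm{d}q=0$) and symmetry of the constraint Hessians, so the constraint-projection bookkeeping contributes nothing to the symplectic form. Consequently the only genuinely substantive input is the Euclidean symplecticity of \cref{alg:euclidean-single-step} established in \cref{lemma:euclidean-symmetric-symplectic}, and the manifold case is assembled from it essentially for free.
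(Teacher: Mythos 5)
Your proof is correct and follows essentially the same strategy as the paper: reduce symplecticity via \cref{fact:symplecticness-and-2-forms} to preservation of the two-form $\mathrm{d}q\wedge\mathrm{d}p+\tfrac{1}{2}\,\mathrm{d}q\wedge\mathrm{L}\,\mathrm{d}q$, then chain that preservation through the three sub-steps using \cref{cor:manifold-integrator-i} (equivalently \cref{fact:lagrange-wedge}, which requires only $g=0$ at the relevant point) for the two projection half-steps and Euclidean symplecticity of \cref{alg:euclidean-single-step} (via \cref{lem:manifold-integrator-ii} and \cref{lemma:euclidean-symmetric-symplectic}) for the middle step. Your closing paragraph on the smooth state-dependence of $\mu,\mu'$ — needed for $\mathrm{d}\bar p_{n+1/2}$ and $\mathrm{d}p_{n+1}$ to be well-defined one-forms, and secured by \cref{fact:inverse-function-theorem} for $\mu$ and the full-rank normal equations for $\mu'$ — spells out a point the paper leaves implicit, and you correctly observe that the $G(q)^\top\mathrm{d}\mu$ term is already annihilated inside \cref{fact:lagrange-wedge} so it contributes nothing to the two-form.
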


\begin{proof}
At iteration $n$ of the integrator, assume $(q_n, p_n)\in\mathrm{T}^*M$. The integrator in \cref{alg:manifold-integrator} consists of three steps as follows. Let $(q_n, p_n)\in \mathrm{T}^*M$.
\begin{enumerate}
    \item Set $\bar{p}_{n+1/2} = p_{n} - \frac{\epsilon}{2} G(q_n)^\top \mu$.
    \item Compute $(q_{n+1}, \bar{p}_{n+1})$ using \cref{alg:euclidean-single-step} with input $(q_{n}, \bar{p}_{n+1/2})$, step-size $\epsilon$, and skew-symmetric matrix $\mathrm{L}$.
    \item Set $p_{n+1} = \bar{p}_{n+1} - \frac{\epsilon}{2} G(q_{n+1})^\top \mu'$.
\end{enumerate}
Notice that $\bar{p}_{n+1/2}, \bar{p}_{n+1}\in\R^m$ but that $(q_{n+1}, p_{n+1})\in\mathrm{T}^*M$ by the choice of Lagrange multipliers $\mu$ and $\mu'$.
By \cref{eq:magnetic-wedge-product} from \cref{fact:magnetic-symplectic-structure}, the magnetic symplectic 2-form can be written in terms of wedge products as,
\begin{align}
    \mathrm{d}q_n \wedge\mathrm{d}p_{n} + \frac{1}{2} \mathrm{d}q_{n}\wedge \mathrm{L}\mathrm{d}q_{n}
\end{align}
By \cref{fact:symplecticness-and-2-forms}, it suffices to show that the integrator conserves the symplectic 2-form on $\mathrm{T}^*M$ under the map $(q_n, p_n)\mapsto (q_{n+1}, p_{n+1})$. Therefore, our proof strategy will be to show that
\begin{align}
    \mathrm{d}q_{n+1}\wedge \mathrm{d}p_{n+1} + \frac{1}{2} \mathrm{d}q_{n+1}\wedge \mathrm{L}\mathrm{d}q_{n+1} = \mathrm{d}q_n \wedge\mathrm{d}p_{n} + \frac{1}{2} \mathrm{d}q_{n}\wedge \mathrm{L}\mathrm{d}q_{n}
\end{align}

Since $g(q_n)=0$ by assumption (since $q_n\in M$) and $\mu\in\R^k$, we may apply \cref{cor:manifold-integrator-i} to the first step of the integrator to show that
\begin{align}
    \mathrm{d}q_n \wedge\mathrm{d}\bar{p}_{n+1/2} + \frac{1}{2} \mathrm{d}q_{n}\wedge \mathrm{L}\mathrm{d}q_{n} = \mathrm{d}q_n \wedge\mathrm{d}p_{n} + \frac{1}{2} \mathrm{d}q_{n}\wedge \mathrm{L}\mathrm{d}q_{n}
\end{align}
Applying \cref{lem:manifold-integrator-ii} to $(q_{n+1}, \bar{p}_{n+1})$ in the second step and using the fact that the integrator in \cref{alg:euclidean-single-step} is symplectic by \cref{lemma:euclidean-symmetric-symplectic} shows that,
\begin{align}
    \mathrm{d}q_{n+1}\wedge \mathrm{d}\bar{p}_{n+1} + \frac{1}{2} \mathrm{d}q_{n+1}\wedge \mathrm{L}\mathrm{d}q_{n+1} &= \mathrm{d}q_{n}\wedge \mathrm{d}\bar{p}_{n+1/2} + \frac{1}{2} \mathrm{d}q_{n}\wedge \mathrm{L}\mathrm{d}q_{n} \\
    &= \mathrm{d}q_n \wedge\mathrm{d}p_{n} + \frac{1}{2} \mathrm{d}q_{n}\wedge \mathrm{L}\mathrm{d}q_{n}
\end{align}
Since $g(q_{n+1})=0$ by construction and since $\mu'\in\R^k$, applying \cref{cor:manifold-integrator-i} a second time to the third step yields
\begin{align}
    \mathrm{d}q_{n+1}\wedge \mathrm{d}p_{n+1} + \frac{1}{2} \mathrm{d}q_{n+1}\wedge \mathrm{L}\mathrm{d}q_{n+1} &= \mathrm{d}q_{n+1}\wedge \mathrm{d}\bar{p}_{n+1} + \frac{1}{2} \mathrm{d}q_{n+1}\wedge \mathrm{L}\mathrm{d}q_{n+1} \\
    &= \mathrm{d}q_n \wedge\mathrm{d}p_{n} + \frac{1}{2} \mathrm{d}q_{n}\wedge \mathrm{L}\mathrm{d}q_{n}
\end{align}
This verifies that the symplectic structure $\Omega_\text{can}$ is preserved. Therefore, the integrator is symplectic by \cref{fact:symplecticness-and-2-forms}.
\end{proof}

\begin{lemma}\label{lem:manifold-integrator-symmetric}
Let $\mu$ and $\mu'$ be Lagrange multipliers such that \cref{eq:position-constraint,eq:momentum-constraint} are satisfied. Let $\epsilon$ be the integration step-size. Then the integrator in \cref{alg:manifold-integrator} is symmetric under $\epsilon\mapsto-\epsilon$.
\end{lemma}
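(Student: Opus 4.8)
The plan is to show that composing one step of \cref{alg:manifold-integrator} with step-size $-\epsilon$ onto one step with step-size $+\epsilon$ is the identity on $\mathrm{T}^*M$. Write the forward step as the composition of three maps: the first momentum half-kick $(q_n,p_n)\mapsto(q_n,\bar p_{n+1/2})$ with $\bar p_{n+1/2}=p_n-\tfrac{\epsilon}{2}G(q_n)^\top\mu$; the Euclidean single-step map $\hat\Phi(\cdot;\epsilon)=\Phi_1^\epsilon\circ\Phi_2^\epsilon\circ\Phi_1^\epsilon$ applied to $(q_n,\bar p_{n+1/2})$, producing $(q_{n+1},\bar p_{n+1})$; and the second half-kick $p_{n+1}=\bar p_{n+1}-\tfrac{\epsilon}{2}G(q_{n+1})^\top\mu'$. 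Here $\mu$ is the root found by \cref{alg:lagrange-newton} (so that $g(q_{n+1})=0$) and $\mu'$ is the solution of the normal equations \cref{eq:lagrange-normal} (so that $G(q_{n+1})p_{n+1}=0$); both are unique for $\epsilon\neq 0$ small enough by \cite{geometric-generalization}.

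First I would run the $-\epsilon$ step from $(q_{n+1},p_{n+1})$ using the \emph{trial} multipliers $\mu'$ for the first half-kick and $\mu$ for the second, and check that everything telescopes. The first half-kick gives $p_{n+1}-\tfrac{-\epsilon}{2}G(q_{n+1})^\top\mu'=p_{n+1}+\tfrac{\epsilon}{2}G(q_{n+1})^\top\mu'=\bar p_{n+1}$, exactly undoing the forward second half-kick. By \cref{lemma:euclidean-symmetric-symplectic} the map $\hat\Phi(\cdot;-\epsilon)$ is the inverse of $\hat\Phi(\cdot;\epsilon)$, so applying it to $(q_{n+1},\bar p_{n+1})$ returns $(q_n,\bar p_{n+1/2})$. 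Finally, the last half-kick gives $\bar p_{n+1/2}-\tfrac{-\epsilon}{2}G(q_n)^\top\mu=\bar p_{n+1/2}+\tfrac{\epsilon}{2}G(q_n)^\top\mu=p_n$. Thus with this particular choice of multipliers the reverse step outputs exactly $(q_n,p_n)$.

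It then remains to argue that these trial multipliers are the ones \cref{alg:manifold-integrator} actually selects on the reverse pass. For the first multiplier, \cref{alg:lagrange-newton} selects it so that the position after the Euclidean single-step lies on $M$; with the trial value that position is $q_n$, which lies on $M$ by hypothesis. For the second, \cref{eq:lagrange-normal} selects it so that the output momentum lies in the cotangent space at the output position; with the trial value the output is $(q_n,p_n)\in\mathrm{T}^*M$, so $G(q_n)p_n=0$ holds. Invoking the uniqueness of the Lagrange multipliers (for $\epsilon\neq 0$ small, by \cite{geometric-generalization}), the trial values must coincide with the algorithm's choices, so the reverse step indeed maps $(q_{n+1},p_{n+1})$ to $(q_n,p_n)$, which is the claimed symmetry under $\epsilon\mapsto-\epsilon$.

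The main obstacle is precisely this last uniqueness step: the telescoping computation only shows that \emph{some} admissible pair of multipliers for the reverse step inverts the forward step, so one genuinely needs the defining constraint equation for $\mu$ and the normal equation for $\mu'$ to have unique solutions in order to conclude the algorithm has no other choice — this is where the small-$\epsilon$ hypothesis is essential and cannot be dropped. A minor but error-prone bookkeeping point is keeping track of the sign flip $\epsilon\mapsto-\epsilon$ simultaneously in the half-kick formulas $\bar p=p-\tfrac{\epsilon}{2}G(q)^\top(\cdot)$ and inside the Euclidean map $\hat\Phi(\cdot;\epsilon)$; the symmetry of $\hat\Phi$ from \cref{lemma:euclidean-symmetric-symplectic} is what makes the inner factor cancel cleanly.
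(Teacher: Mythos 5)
Your proof is correct and follows essentially the same telescoping argument as the paper: undo the second half-kick with $\mu'$, invoke the symmetry of \cref{alg:euclidean-single-step} from \cref{lemma:euclidean-symmetric-symplectic} to undo the inner Euclidean step, then undo the first half-kick with $\mu$. You are in fact slightly more careful than the paper's own proof on one point: the paper directly asserts that the reverse pass uses $\mu'$ then $\mu$ as its Lagrange multipliers without saying why the algorithm would select exactly those values, whereas you explicitly close this gap by checking that the trial multipliers satisfy the defining constraint and normal equations and then invoking uniqueness for small $\epsilon\neq 0$ to conclude they must coincide with the algorithm's choices.
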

\begin{proof}
The integrator in \cref{alg:manifold-integrator} integrator consists of three steps as follows. Let $(q_{n}, p_{n})\in\mathrm{T}^*M$.
\begin{enumerate}
    \item Set $\bar{p}_{n+1/2} = p_{n} - \frac{\epsilon}{2} G(q_n)^\top \mu$.
    \item Compute $(q_{n+1}, \bar{p}_{n+1})$ using \cref{alg:euclidean-single-step} with input $(q_{n}, \bar{p}_{n+1/2})$, step-size $\epsilon$, and skew-symmetric matrix $\mathrm{L}$.
    \item Set $p_{n+1} = \bar{p}_{n+1} - \frac{\epsilon}{2} G(q_{n+1})^\top \mu'$.
\end{enumerate}
Notice that $\bar{p}_{n+1/2}, \bar{p}_{n+1}\in\R^m$ but that $(q_{n+1}, p_{n+1})\in\mathrm{T}^*M$ by the choice of Lagrange multipliers $\mu$ and $\mu'$.
To show that the integration scheme is symmetric, consider beginning from position $(q_{n+1}, p_{n+1})$ and applying the three integration steps with a reversed step-size. In the first step, we obtain the update
\begin{align}
    \bar{p}_{n+1+1/2} &= p_{n+1} + \frac{\epsilon}{2} G(q_{n+1})^\top \mu' \\
    &= \bar{p}_{n+1}
\end{align}
where the last equality derives from rearranging the defining relation in the third step. Since the integrator in \cref{alg:euclidean-single-step} is symmetric by \cref{lemma:euclidean-symmetric-symplectic}, applying the integrator with step-size $-\epsilon$ maps $(q_{n+1}, \bar{p}_{n+1})$ to $(q_n, \bar{p}_{n+1/2})$. The third integration step with Lagrange multiplier $\mu$ yields the update,
\begin{align}
    p_{n+2} &= \bar{p}_{n+1/2} + G(q_n)^\top \mu \\
    &= p_n
\end{align}
By assumption, $(q_n, p_n)\in\mathrm{T}^*M$ so that $g(q_n)=0$ and $G(q_n)p_n = 0$. This completes the reversibility argument.
\end{proof}

We may now prove \cref{thm:manifold-symmetric-symplectic}.
\begin{proof}
Apply \cref{lem:manifold-integrator-symplectic,lem:manifold-integrator-symmetric}
\end{proof}

It remains to be discussed the uniqueness of the Lagrange multipliers $\mu$ and $\mu'$ appearing in \cref{alg:manifold-integrator}. The following result shows that the Lagrange multipliers are uniquely determined when $\epsilon$, the integration step-size, is sufficiently small. The following proof technique is taken from Theorem 4.1 in \cite{geometric-generalization}.
\begin{proposition}
Let $g : \R^m\to\R^k$ be a constraint function with full-rank Jacobian $G : \R^m\to\R^{k\times m}$. Let $(q,p)\in\mathrm{T}^*M$. Define,
\begin{align}
    [(q, p)] \defeq \set{\paren{q, p-G(q)^\top \mu} : \mu\in\R^k}.
\end{align}
Let $\text{Proj}_q : \R^m\times\R^m\to\R^m$ be defined by $\text{Proj}_q(q, p) = q$. Let $\Phi_q^\epsilon\defeq \text{Proj}_q\circ \Phi^\epsilon_1\circ \Phi^\epsilon_2 \circ \Phi_1^\epsilon$ be the projection to the $q$-variables of the approximate integrator of magnetic dynamics in Euclidean space from \cref{app:proof-euclidean-symmetric-symplectic}. Then, for $\epsilon$ sufficiently small, the equation
\begin{align}
    g(\Phi^\epsilon(q^+, p^+)) =0~~~\text{such that}~~~ (q^+, p^+) \in [(q, p)]
\end{align}
has a unique solution in a neighborhood of $\mu=\mathbf{0}$.
\end{proposition}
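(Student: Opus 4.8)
Abbreviate the $q$-component of $\Phi_1^\epsilon\circ\Phi_2^\epsilon\circ\Phi_1^\epsilon$ by $\Phi_q^\epsilon$, as in the statement, and parametrise the equivalence class $[(q,p)]$ by $\mu\mapsto(q,\,p-G(q)^\top\mu)$. Define
\begin{align}
    F(\mu,\epsilon) \defeq g\!\paren{\Phi_q^\epsilon\!\paren{q,\, p - G(q)^\top\mu}} \in \R^k .
\end{align}
We must show that, for $\epsilon$ small, $F(\cdot,\epsilon)$ has a unique zero in a neighbourhood of $\mu=\mathbf{0}$. The obvious idea --- apply the inverse function theorem (\cref{fact:inverse-function-theorem}) to $F(\cdot,\epsilon)$ at $\epsilon=0$ --- does not work: $\Phi_q^0$ is the identity on the position coordinate, so $F(\mu,0)=g(q)=0$ for \emph{every} $\mu$, and the zero is totally degenerate at $\epsilon=0$. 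Following the device in Theorem~4.1 of \cite{geometric-generalization}, the remedy is to factor out one power of $\epsilon$.

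First I would extract the leading-order form of $\Phi_q^\epsilon$ from the closed forms \cref{eq:euclidean-split-i,eq:euclidean-split-ii,eq:euclidean-split-ii-position,eq:euclidean-split-ii-momentum}: the two outer maps $\Phi_1^\epsilon$ leave the position unchanged, and $\Phi_2^\epsilon$ sends $(q_0,p_0)$ to $(q_0 + \epsilon\,B(\epsilon)\,p_0,\ \exp(-\epsilon\mathrm{L})p_0)$, where $B(\cdot)$ is smooth near $0$ with $B(0)=\mathrm{Id}_m$ --- this last point because the middle block in \cref{eq:euclidean-split-ii-position} equals $\epsilon\,\mathrm{Id}+\mathcal{O}(\epsilon^2)$ (since $\mathbf{D}^{-1}(\exp(\epsilon\mathbf{D})-\mathrm{Id})=\epsilon\,\mathrm{Id}+\mathcal{O}(\epsilon^2)$). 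Hence
\begin{align}
    \Phi_q^\epsilon\!\paren{q,\, p - G(q)^\top\mu} = q + \epsilon\,B(\epsilon)\!\paren{p - G(q)^\top\mu - \tfrac{\epsilon}{2}\nabla U(q)},
\end{align}
which again shows $F(\mu,0)=g(q)=0$ for all $\mu$. By Hadamard's lemma (Taylor with integral remainder in $\epsilon$) there is a jointly smooth map $\widetilde F$ with $F(\mu,\epsilon)=\epsilon\,\widetilde F(\mu,\epsilon)$. Differentiating the display above in $\epsilon$ at $\epsilon=0$ and using $g(q)=0$, $B(0)=\mathrm{Id}_m$, and $G(q)p=0$ (valid because $p\in\mathrm{T}^*_qM$ and $\nabla_pH(q,p)=p$ for Hamiltonians of the form \cref{eq:hamiltonian-form}), I get
\begin{align}
    \widetilde F(\mu,0) = G(q)\,B(0)\!\paren{p - G(q)^\top\mu} = -\,G(q)G(q)^\top\mu .
\end{align}

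Now I would invoke the inverse function theorem (\cref{fact:inverse-function-theorem}), applied in the usual way to the map $(\mu,\epsilon)\mapsto(\widetilde F(\mu,\epsilon),\epsilon)$ on $\R^{k}\times\R$, whose Jacobian at $(\mathbf{0},0)$ is block triangular with diagonal blocks $\partial_\mu\widetilde F(\mathbf{0},0)=-\,G(q)G(q)^\top$ and $1$; it therefore requires only that the Gram matrix $G(q)G(q)^\top\in\R^{k\times k}$ be invertible, which holds since $G(q)$ has full rank $k$. The theorem then produces a neighbourhood $W$ of $\epsilon=0$, a neighbourhood $V$ of $\mu=\mathbf{0}$, and a smooth $\mu^*:W\to V$ with $\mu^*(0)=\mathbf{0}$ such that, for each $\epsilon\in W$, $\mu^*(\epsilon)$ is the \emph{unique} $\mu\in V$ with $\widetilde F(\mu,\epsilon)=0$. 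For $\epsilon\in W\setminus\set{0}$ this says exactly $F(\mu,\epsilon)=0$, i.e.\ $g(\Phi_q^\epsilon(q^+,p^+))=0$ with $(q^+,p^+)=(q,\,p-G(q)^\top\mu)\in[(q,p)]$, and uniqueness near $\mu=\mathbf{0}$ is part of the conclusion.

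The main obstacle is the degeneracy at $\epsilon=0$, which rules out a direct application of the inverse function theorem in $\mu$; everything turns on (i) pulling the common factor $\epsilon$ out of $F$ and (ii) checking that the leading coefficient $B(0)$ of the position update of the Euclidean magnetic integrator is $\mathrm{Id}_m$ rather than something singular, so that the Jacobian of the rescaled map $\widetilde F$ at the origin is the invertible matrix $-G(q)G(q)^\top$. Both are short once the closed forms of $\Phi_1^\epsilon$ and $\Phi_2^\epsilon$ are in hand.
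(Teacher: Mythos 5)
Your proposal is correct and follows essentially the same route as the paper: both pull a factor of $\epsilon$ out of $g(\Phi_q^\epsilon(q,p-G(q)^\top\mu))$ (the paper via a mean-value integral, you via Hadamard's lemma), compute the Jacobian of the rescaled map at $(\mu,\epsilon)=(\mathbf{0},0)$ to be $\pm G(q)G(q)^\top$ using $G(q)p=0$, and invoke the inverse function theorem, citing the same device from Theorem~4.1 of \cite{geometric-generalization}. The only cosmetic difference is that you apply the IFT once to the augmented map $(\mu,\epsilon)\mapsto(\widetilde F(\mu,\epsilon),\epsilon)$, whereas the paper applies it at $\epsilon=0$ and then argues by continuity of $\epsilon\mapsto\det\nabla_\mu F_\epsilon$ for small $\epsilon$.
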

\begin{proof}
Define the map $F_\epsilon : \R^k \to\R^k$ by
\begin{align}
    F_\epsilon(\mu) = \int_0^1 \frac{\partial}{\partial \epsilon'} \paren{g(\Phi_q^{\epsilon'}(q, p -  G(q)^\top\mu))} \bigg|_{\epsilon\tau} ~\mathrm{d}\tau.
\end{align}
If $\epsilon\neq 0$ then,
\begin{align}
    F_\epsilon(\mu) &= \frac{g(\Phi_q^{\epsilon}(q, p - G(q)^\top\mu)) - g(\Phi_q^{0}(q, p - G(q)^\top\mu))}{\epsilon} \\
    &= \frac{g(\Phi_q^{\epsilon}(q, p -  G(q)^\top\mu))}{\epsilon}.
\end{align}
On the other hand if $\epsilon=0$ then,
\begin{align}
    F_0(\mu) &= G(q) \text{Proj}_q \paren{\frac{\mathrm{d}}{\mathrm{d}\epsilon} (\Phi^\epsilon_1\circ \Phi^\epsilon_2 \circ \Phi_1^\epsilon)(q, p - G(q)^\top\mu)} \\
    &= G(q)(p - G(q)^\top\mu) \\
    &= G(q)p - G(q)G(q)^\top\mu
\end{align}
since $\Phi^\epsilon_1\circ \Phi^\epsilon_2 \circ \Phi_1^\epsilon$ has order greater than one. Note that $\nabla_\mu F_0(\mu)=G(q)G(q)^\top$. Thus, if $G(q)$ has full-rank, then $G(q)G(q)^\top$ is invertible and, by the inverse function theorem (\cref{fact:inverse-function-theorem}), there is a neighborhood $O$ of $\mathbf{0}\in\R^k$ such that $F_0$ is a diffeomorphism of $O$ and $F_0(O)$. Moreover, since $F_\epsilon$ depends smoothly on $\epsilon$, and since $\epsilon\mapsto\text{det}(\nabla_\mu F_\epsilon)$ is continuous, it follows that for sufficiently small $\epsilon$, there exists a neighborhood $O_\epsilon$ of $\mathbf{0}\in\R^k$ such that $F_\epsilon$ is a diffeomorphism of $O_\epsilon$ and $F_\epsilon(O_\epsilon)$. Moreover, since $F_0(\mathbf{0}) = \mathbf{0}$ (since $(q, p)\in\mathrm{T}^*M$), we have that $\mathbf{0}\in F_0(O)$. Therefore, for small enough $\epsilon$, it also follows that $\mathbf{0}\in F_\epsilon(O_\epsilon)$.
\end{proof}

\begin{proposition}
Let $g : \R^m\to\R^k$ be a constraint function with full-rank Jacobian $G : \R^m\to\R^{k\times m}$, let $q_n, p_n\in\R^m$ with $g(q_n) = 0$. Let $\mu'\in\R^k$ be a Lagrange multiplier  chosen such that $G(q_n)\left[p_n - \frac{\epsilon}{2} G(q_n)^\top\mu'\right] = 0$. Then $\mu'$ is uniquely defined.
\end{proposition}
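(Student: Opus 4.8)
The plan is to reduce the defining condition on $\mu'$ to a square linear system with an invertible coefficient matrix, from which uniqueness is immediate. First I would expand the constraint: since $\nabla_p H(q_n, p_n) = p_n$ for Hamiltonians of the form in \cref{eq:hamiltonian-form}, the requirement $G(q_n)\left[p_n - \tfrac{\epsilon}{2} G(q_n)^\top \mu'\right] = 0$ rearranges to
\begin{align}
    \frac{\epsilon}{2} G(q_n) G(q_n)^\top \mu' = G(q_n) p_n,
\end{align}
which is precisely the normal equation \cref{eq:lagrange-normal} (with $q_{n+1}$ replaced by $q_n$ and $\bar p_{n+1}$ by $p_n$).

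Next I would observe that $G(q_n) \in \R^{k \times m}$ has full rank $k$ by hypothesis, so its Gram matrix $G(q_n) G(q_n)^\top \in \R^{k\times k}$ is symmetric positive definite, hence invertible. Consequently, provided $\epsilon \neq 0$, the matrix $\tfrac{\epsilon}{2} G(q_n) G(q_n)^\top$ is invertible and the linear system above has the unique solution
\begin{align}
    \mu' = \frac{2}{\epsilon} \left(G(q_n) G(q_n)^\top\right)^{-1} G(q_n) p_n.
\end{align}
This establishes that $\mu'$ is uniquely defined.

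There is no real obstacle here: the statement is a routine consequence of the invertibility of the Gram matrix of a full-rank matrix, and the only point requiring a word of care is the harmless nondegeneracy assumption $\epsilon \neq 0$ (the step-size of the integrator), which is implicit throughout \cref{alg:manifold-integrator}. One could also phrase the uniqueness variationally — $\mu'$ is the unique minimizer of $\mu \mapsto \tfrac12\|p_n - \tfrac{\epsilon}{2}G(q_n)^\top\mu\|^2$ restricted to the range of $G(q_n)^\top$ — but the direct linear-algebra argument is cleanest and mirrors the derivation already given for \cref{eq:lagrange-normal}.
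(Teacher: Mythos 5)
Your proposal is correct and follows essentially the same route as the paper: rearrange the constraint into the normal equations, note that $G(q_n)G(q_n)^\top$ is invertible because $G(q_n)$ has full row rank, and conclude uniqueness for $\epsilon\neq 0$. The added spelling-out of positive definiteness and the variational remark are harmless embellishments, not a different argument.
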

\begin{proof}
The condition $G(q_n)\left[p_n - \frac{\epsilon}{2} G(q_n)^\top\mu'\right] = 0$ can be rearranged as,
\begin{align}
    G(q_n)p_n = \frac{\epsilon}{2} G(q_n)G(q_n)^\top\mu'.
\end{align}
Since $\frac{\epsilon}{2} G(q_n)G(q_n)^\top$ is invertible for non-zero $\epsilon$, $\mu'$ is uniquely determined.
\end{proof}
\newpage
\section{Order of Manifold Integrator}\label{app:manifold-integrator-order}

This result requires \cref{fact:symmetric-order-integration,def:order-of-integration,thm:manifold-symmetric-symplectic}.

\begin{theorem}\label{thm:manifold-second-order}
The integrator in \cref{alg:manifold-integrator} has order (see \cref{def:order-of-integration}) at least two.
\end{theorem}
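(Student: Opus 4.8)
The plan is to leverage symmetry. \cref{thm:manifold-symmetric-symplectic} already shows that one pass $\hat\Phi(\cdot;\epsilon,1)$ of the loop in \cref{alg:manifold-integrator} is a symmetric map, and \cref{fact:symmetric-order-integration} states that a symmetric single-step integrator has even order. Hence, once we know that $\hat\Phi$ is \emph{consistent} --- i.e. that it has order at least one in the sense of \cref{def:order-of-integration}, $\hat\Phi((q,p);\epsilon,1) - \Phi_\text{mag}((q,p);\epsilon) = \mathcal O(\epsilon^2)$ with $\Phi_\text{mag}$ the magnetic vector field flow of \cref{def:magnetic-flow} --- its order is an even integer that is at least one, and therefore at least two. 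So the entire content of the theorem reduces to this consistency estimate.

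To prove consistency I would Taylor-expand one step of \cref{alg:manifold-integrator} in powers of $\epsilon$ about $\epsilon = 0$ and match it against the first-order expansion of the exact constrained dynamics \cref{eq:embedding-magnetic-velocity,eq:embedding-magnetic-acceleration,eq:embedding-magnetic-constraint}, whose derivative at $t = 0$ is $\dot q = \nabla_p H(q,p) = p$ and $\dot p = -\nabla_q H(q,p) - \mathrm L\nabla_p H(q,p) - G(q)^\top\lambda(q,p)$, with $\lambda$ the exact Lagrange multiplier. Two ingredients feed in. First, expanding the closed forms of $\Phi_1^\epsilon$ and $\Phi_2^\epsilon$ in \cref{eq:euclidean-split-i,eq:euclidean-split-ii,eq:euclidean-split-ii-position,eq:euclidean-split-ii-momentum} (using $\exp(-\epsilon\mathrm L) = \mathrm{Id} - \epsilon\mathrm L + \mathcal O(\epsilon^2)$ and $\mathbf D^{-1}(\exp(\epsilon\mathbf D) - \mathrm{Id}) = \epsilon\,\mathrm{Id} + \mathcal O(\epsilon^2)$) shows that $\Phi_1^\epsilon\circ\Phi_2^\epsilon\circ\Phi_1^\epsilon$ agrees to first order with the \emph{unconstrained} magnetic vector field of \cref{fact:magnetic-motion}; concretely, starting from $(q_n,\bar p_{n+1/2})$ one gets $q_{n+1} = q_n + \epsilon\bar p_{n+1/2} + \mathcal O(\epsilon^2)$ and $\bar p_{n+1} = \bar p_{n+1/2} - \epsilon\nabla U(q_n) - \epsilon\mathrm L\bar p_{n+1/2} + \mathcal O(\epsilon^2)$. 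Second, I would control the Lagrange multipliers: substituting $\bar p_{n+1/2} = p_n - \frac\epsilon2 G(q_n)^\top\mu$ into $g(q_{n+1}) = 0$, the terms of orders $\epsilon^0$ and $\epsilon^1$ cancel because $g(q_n) = 0$ and $G(q_n)p_n = 0$ (valid since $(q_n,p_n)\in\mathrm{T}^*M$), leaving at order $\epsilon^2$ a linear system $\frac12 G(q_n)G(q_n)^\top\mu = c(q_n,p_n) + \mathcal O(\epsilon)$; full rank of $G$ makes $G(q_n)G(q_n)^\top$ invertible, so $\mu = \mu_0 + \mathcal O(\epsilon)$ is bounded, and comparing $c(q_n,p_n)$ with the expression obtained by differentiating $G(q_t)p_t = 0$ twice along the exact flow identifies $\mu_0 = \lambda(q_n,p_n)$. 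An analogous expansion of the normal equations \cref{eq:lagrange-normal} gives $\mu' = \lambda(q_n,p_n) + \mathcal O(\epsilon)$. Assembling these: the position update is $q_{n+1} = q_n + \epsilon p_n + \mathcal O(\epsilon^2)$ (the $\mu$-term in $\bar p_{n+1/2}$ contributes only at order $\epsilon^2$), matching $\dot q = p$; and $p_{n+1} = \bar p_{n+1} - \frac\epsilon2 G(q_{n+1})^\top\mu'$ becomes $p_n - \epsilon\nabla U(q_n) - \epsilon\mathrm L p_n - \frac\epsilon2 G(q_n)^\top(\mu_0 + \mu_0') + \mathcal O(\epsilon^2) = p_n - \epsilon\nabla U(q_n) - \epsilon\mathrm L p_n - \epsilon G(q_n)^\top\lambda(q_n,p_n) + \mathcal O(\epsilon^2)$, matching $\dot p$. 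This gives $\hat\Phi((q,p);\epsilon,1) - \Phi_\text{mag}((q,p);\epsilon) = \mathcal O(\epsilon^2)$, and combined with the first paragraph finishes the proof.

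The main obstacle I anticipate is the second ingredient: tracking the Lagrange multipliers to the right order. The subtlety is that $\mu$ enters $q_{n+1}$ only at order $\epsilon^2$, yet the resulting momentum correction $\frac\epsilon2 G(q_n)^\top\mu$ is a genuine first-order term, so one must expand both $g(q_{n+1}) = 0$ and \cref{eq:lagrange-normal} one order beyond the naive leading order, rely on the cancellations forced by $(q_n,p_n)\in\mathrm{T}^*M$, and verify that the limiting values $\mu_0, \mu_0'$ coincide with the exact multiplier $\lambda$ so that the discrete constraint force reproduces the projected force in \cref{eq:embedding-magnetic-acceleration}. Everything else is routine Taylor expansion of the explicit Euclidean flows.
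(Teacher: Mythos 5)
Your proposal follows the same strategy as the paper: establish first-order consistency of \cref{alg:manifold-integrator} with the exact constrained flow, then invoke symmetry from \cref{thm:manifold-symmetric-symplectic} together with \cref{fact:symmetric-order-integration} to upgrade to order at least two. The paper's own consistency argument (its Lemma on first-order behavior) does essentially the same Taylor expansion you describe and arrives at a numerical update that matches the exact first-order flow \emph{provided} the averaged Lagrange multiplier $\tfrac{1}{2}(\mu + \mu')$ is identified with the exact multiplier $\lambda$; the paper states this identification without proof. Your proposal is more careful here: you explicitly expand the discrete constraint equations to show that $\mu$ and $\mu'$ each equal $\lambda(q_n,p_n) + \mathcal{O}(\epsilon)$, using full rank of $G(q_n)G(q_n)^\top$ and the cancellations from $(q_n,p_n)\in\mathrm{T}^*M$. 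That step is indeed the crux (as you anticipate in your last paragraph), and it checks out against the explicit expression for $\lambda$ derived in the paper's appendix on Lagrange multipliers. One small slip: you should identify $\mu_0$ by differentiating the velocity constraint $G(q_t)\nabla_pH(q_t,p_t)=0$ \emph{once} along the flow (equivalently, $g(q_t)=0$ twice), not $G(q_t)p_t=0$ twice; the intended comparison is still the right one. Overall, same route as the paper, with the Lagrange-multiplier limit made explicit rather than implicit.
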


To prove this result we will first require the following lemma, which was proved in \cite{pmlr-v70-tripuraneni17a}
\begin{lemma}\label{lem:single-step-order}
The single-step subroutine in \cref{alg:euclidean-single-step} has order at least two.
\end{lemma}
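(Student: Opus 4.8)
The plan is to obtain order two from two facts already established for this map: it is \emph{consistent} (order at least one), and it is a \emph{symmetric} map, whence \cref{fact:symmetric-order-integration} forces its order to be even; an even integer which is at least one is at least two.

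First I would record the structure of the sub-flows. By \cref{fact:magnetic-motion} (see \cref{eq:magnetic-equations-of-motion}) the magnetic Hamiltonian vector field of a Hamiltonian $H$ is $X_H = (-\mathbb{J}_\text{mag})^{-1}\nabla H$, which is linear in $H$. Let $X_U$ denote the magnetic vector field of the Hamiltonian $q\mapsto U(q)$ and $X_K$ that of $p\mapsto \frac12 p^\top p$; then $X_U = (\mathbf{0}, -\nabla U)$, $X_K = (p, -\mathrm{L}p)$, and the splitting $H = \frac12 U + \frac12 p^\top p + \frac12 U$ gives $X_H = X_U + X_K$. Moreover $\Phi_1^\epsilon$ of \cref{lem:euclidean-split-i} is the exact time-$\epsilon$ flow of the vector field $\frac12 X_U$, and $\Phi_2^\epsilon$ of \cref{lem:euclidean-split-ii} is the exact time-$\epsilon$ flow of $X_K$; in particular $\Phi_i^0 = \mathrm{Id}$, $\partial_\epsilon \Phi_1^\epsilon|_{\epsilon=0} = \frac12 X_U$, and $\partial_\epsilon \Phi_2^\epsilon|_{\epsilon=0} = X_K$.

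Next I would verify consistency by differentiating the composition $\hat{\Phi}(z;\epsilon) = \Phi_1^\epsilon(\Phi_2^\epsilon(\Phi_1^\epsilon(z)))$ at $\epsilon = 0$. Since all three factors equal the identity at $\epsilon = 0$, the chain rule gives $\hat{\Phi}(z;0) = z$ and $\partial_\epsilon \hat{\Phi}(z;\epsilon)|_{\epsilon=0} = \frac12 X_U(z) + X_K(z) + \frac12 X_U(z) = X_H(z)$. The exact magnetic flow $\Phi(\cdot;\epsilon)$ of \cref{def:magnetic-flow} (taken on $M\cong\R^m$, i.e.\ without constraint) satisfies $\Phi(z;0) = z$ and $\partial_\epsilon\Phi(z;\epsilon)|_{\epsilon=0} = X_H(z)$ as well, so a Taylor expansion yields $\hat{\Phi}(z;\epsilon) - \Phi(z;\epsilon) = \mathcal O(\epsilon^2)$, i.e.\ the single-step subroutine has order at least one in the sense of \cref{def:order-of-integration}.

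Finally I would invoke symmetry: \cref{lem:euclidean-symmetric} (part of \cref{lemma:euclidean-symmetric-symplectic}) shows $\hat{\Phi}$ is a symmetric map, so \cref{fact:symmetric-order-integration} applies and its order is even. Being both even and at least one, the order is at least two, which is the claim. The only subtle point is the bookkeeping of the linearity of $X_H$ in $H$ — that $\Phi_1^\epsilon$, as the flow of $U/2$, contributes exactly $\frac12 X_U$ and that $X_U + X_K = X_H$; once this is in place, both the consistency computation and the appeal to \cref{fact:symmetric-order-integration} are routine. (One could instead expand $\hat{\Phi}$ to third order in $\epsilon$ using the symmetric Baker--Campbell--Hausdorff formula and check directly that all $\mathcal O(\epsilon^2)$ discrepancies cancel, but the symmetry argument avoids that computation.)
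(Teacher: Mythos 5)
Your argument is correct. Note that the paper does not actually prove \cref{lem:single-step-order}; it defers to the citation of Tripuraneni et al., so there is no in-paper proof to compare against. What you have written is a valid self-contained derivation, and it uses exactly the strategy the paper itself deploys one level up for \cref{thm:manifold-second-order}: establish order at least one by matching the first-order Taylor term of the exact flow (your consistency computation, the analogue of \cref{lem:manifold-first-order}), then invoke symmetry of the map together with \cref{fact:symmetric-order-integration} to upgrade to order two. The key bookkeeping — that $\Phi_1^\epsilon$ is the exact time-$\epsilon$ flow of $U/2$ contributing $\tfrac12 X_U$ twice, that $\Phi_2^\epsilon$ contributes $X_K$, and that the sum is $X_H$ by linearity of $H\mapsto(-\mathbb{J}_\text{mag})^{-1}\nabla H$ — is right, as is the chain-rule evaluation at $\epsilon=0$ using $\Phi_i^0=\mathrm{Id}$. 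The only thing worth flagging is the final inference ``even and at least one implies at least two'': this implicitly treats the order as a well-defined integer attached to the scheme, which is how \cref{fact:symmetric-order-integration} is stated and how the paper itself reasons in the proof of \cref{thm:manifold-second-order}, so the step is consistent with the paper's conventions.
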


\begin{lemma}\label{lem:manifold-first-order}
Let $\mu$ and $\mu'$ be constraint-preserving Lagrange multipliers. The integrator in \cref{alg:manifold-integrator} has order at least one.
\end{lemma}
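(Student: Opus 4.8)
The plan is to verify that the one-step map of \cref{alg:manifold-integrator} agrees with the exact magnetic flow $\Phi_\text{mag}(\cdot,\cdot;\epsilon)$ through first order in $\epsilon$; by \cref{def:order-of-integration} this is precisely the assertion that the integrator has order at least one. First I would Taylor-expand the exact flow. Since $\nabla_p H(q,p) = p$ for Hamiltonians of the form \cref{eq:hamiltonian-form}, \cref{eq:embedding-magnetic-velocity,eq:embedding-magnetic-acceleration,eq:embedding-magnetic-constraint} give $q(\epsilon) = q_n + \epsilon p_n + \mathcal{O}(\epsilon^2)$ and $p(\epsilon) = p_n - \epsilon\left(\nabla U(q_n) + \mathrm{L}p_n + G(q_n)^\top\lambda_n\right) + \mathcal{O}(\epsilon^2)$, where $\lambda_n = \lambda(q_n,p_n)$ is the exact Lagrange multiplier obtained by differentiating the constraint $g(q_t) = 0$ twice in time, namely the unique solution of $G(q_n)G(q_n)^\top\lambda_n = (\nabla G(q_n)\cdot p_n)\,p_n - G(q_n)\nabla U(q_n) - G(q_n)\mathrm{L}p_n$.

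Next I would expand the numerical step. From the closed forms \cref{eq:euclidean-split-i,eq:euclidean-split-ii,eq:euclidean-split-ii-position,eq:euclidean-split-ii-momentum} (or, more cheaply, from \cref{lem:single-step-order}), the composition $\Phi_1^\epsilon\circ\Phi_2^\epsilon\circ\Phi_1^\epsilon$ carries an input $(q_n,p)$ to $\left(q_n + \epsilon p + \mathcal{O}(\epsilon^2),\ p - \epsilon(\nabla U(q_n) + \mathrm{L}p) + \mathcal{O}(\epsilon^2)\right)$. Substituting $p = \bar p_{n+1/2} = p_n - \frac{\epsilon}{2}G(q_n)^\top\mu$ and then applying the final correction \cref{eq:cotangent-momentum}, $p_{n+1} = \bar p_{n+1} - \frac{\epsilon}{2}G(q_{n+1})^\top\mu'$, yields $q_{n+1} = q_n + \epsilon p_n + \mathcal{O}(\epsilon^2)$ and $p_{n+1} = p_n - \epsilon\left(\nabla U(q_n) + \mathrm{L}p_n + \frac12 G(q_n)^\top(\mu+\mu')\right) + \mathcal{O}(\epsilon^2)$, provided $\mu$ and $\mu'$ stay bounded as $\epsilon\to 0$. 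Hence $q_{n+1}$ already matches $q(\epsilon)$ to first order, and matching $p_{n+1}$ to $p(\epsilon)$ reduces to showing $\frac12(\mu+\mu') = \lambda_n + \mathcal{O}(\epsilon)$.

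The identification of the $\epsilon$-expansions of $\mu$ and $\mu'$ is the step I expect to be the main obstacle. By the uniqueness propositions, $\mu(\epsilon)$ and $\mu'(\epsilon)$ depend smoothly on $\epsilon$, so it suffices to compute their limits. For $\mu$, I would expand the position constraint $g(q_{n+1}) = 0$: because $g(q_n) = 0$ and $G(q_n)p_n = 0$, the $\epsilon^0$ and $\epsilon^1$ coefficients vanish automatically, and setting the $\epsilon^2$ coefficient to zero gives $G(q_n)G(q_n)^\top\mu = (\nabla G(q_n)\cdot p_n)\,p_n - G(q_n)\nabla U(q_n) - G(q_n)\mathrm{L}p_n + \mathcal{O}(\epsilon)$, i.e.\ $\mu = \lambda_n + \mathcal{O}(\epsilon)$; in particular $\mu = \mathcal{O}(1)$. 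For $\mu'$, the normal equations \cref{eq:lagrange-normal} read $\frac{\epsilon}{2}G(q_{n+1})G(q_{n+1})^\top\mu' = G(q_{n+1})\bar p_{n+1}$; expanding $\bar p_{n+1}$ and $G(q_{n+1})$ and again using $G(q_n)p_n = 0$ shows the right-hand side is $\mathcal{O}(\epsilon)$, so $\mu' = \mathcal{O}(1)$, and matching the $\epsilon^1$ coefficients gives $\mu' = 2\lambda_n - \mu + \mathcal{O}(\epsilon)$, hence $\mu' = \lambda_n + \mathcal{O}(\epsilon)$ after inserting $\mu = \lambda_n + \mathcal{O}(\epsilon)$. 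Therefore $\frac12(\mu+\mu') = \lambda_n + \mathcal{O}(\epsilon)$, the expansion of $p_{n+1}$ coincides with that of $p(\epsilon)$ through $\mathcal{O}(\epsilon)$, and the one-step error is $\mathcal{O}(\epsilon^2)$. The only delicate part is this second-order bookkeeping together with the smooth dependence of $\mu(\epsilon),\mu'(\epsilon)$ on $\epsilon$ supplied by the uniqueness propositions; the remaining manipulations are routine Taylor expansion.
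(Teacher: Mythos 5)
Your proof follows the same overall strategy as the paper's: expand both the exact constrained magnetic flow and the one-step numerical map in $\epsilon$ and compare first-order coefficients, with the Euclidean single-step's second-order property (\cref{lem:single-step-order}) driving the expansion. The crucial difference is what you do at the end. The paper's argument arrives at an effective multiplier term $-\epsilon\, G(q_n)^\top\frac{\mu+\mu'}{2}$ in the numerical step, and a term $-\epsilon\, G(q_n)^\top\lambda$ in the exact flow, and then closes simply by declaring ``with $\lambda = \frac{\mu+\mu'}{2}$'' — treating the exact Lagrange multiplier as if it were free to be matched. But the exact $\lambda(q_n,p_n)$ is not free: it is uniquely determined by the twice-differentiated constraint (see \cref{app:lagrange-multipliers}), so first-order agreement genuinely requires the identity $\frac{\mu+\mu'}{2} = \lambda(q_n,p_n) + \mathcal{O}(\epsilon)$, which the paper never establishes.

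You supply exactly that missing verification and also make explicit a second tacit assumption. You observe that the algebraic absorptions in the paper's first display (e.g.\ replacing $\nabla_q H(q_n,\, p_n-\tfrac{\epsilon}{2}G(q_n)^\top\mu)$ by $\nabla_q H(q_n,p_n)+\mathcal{O}(\epsilon)$, and discarding the $\tfrac{\epsilon^2}{2}G^\top\mu$ remainder) are valid only if $\mu = \mathcal{O}(1)$ as $\epsilon\to 0$; you then derive $\mu = \lambda_n + \mathcal{O}(\epsilon)$ from the $\epsilon^2$ coefficient of $g(q_{n+1})=0$ (using $g(q_n)=0$ and $G(q_n)p_n=0$ to kill the $\epsilon^0$ and $\epsilon^1$ terms), and $\mu' = 2\lambda_n - \mu + \mathcal{O}(\epsilon)$ from the $\epsilon^1$ coefficient of the normal equations \cref{eq:lagrange-normal}, giving both the boundedness and the desired $\tfrac{\mu+\mu'}{2}=\lambda_n+\mathcal{O}(\epsilon)$. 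So your route is the same in outline but strictly tighter: the paper's proof as written has a gap, and your proposal identifies and fills it. One minor remark: once you know $\mu + \mu' = 2\lambda_n + \mathcal{O}(\epsilon)$ directly from the normal-equation expansion, you do not separately need $\mu = \lambda_n + \mathcal{O}(\epsilon)$ to conclude — though it is still the cleanest way to establish $\mu=\mathcal{O}(1)$, which the rest of the expansion needs.
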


Before proving \cref{lem:manifold-first-order}, recall that the equations of motion for magnetic Hamiltonian dynamics from \cref{eq:embedding-magnetic-velocity,eq:embedding-magnetic-acceleration,eq:embedding-magnetic-constraint} are
\begin{align}
    \dot{q}_t &= \nabla_p H(q_t, p_t) \\
    \dot{p}_t &= -\nabla_q H(q_t, p_t) - \mathrm{L}\nabla_p H(q_t, p_t) - G(q_t)^\top \lambda \\
    g(q_t) &= 0. 
\end{align}
The equations of motion may be written in matrix form as,
\begin{align}\label{eq:first-order-flow}
    \begin{pmatrix} \dot{q}_t\\\dot{p}_t\end{pmatrix} = \begin{pmatrix} \mathbf{0} & \text{Id} \\ -\text{Id} & -\mathrm{L} \end{pmatrix} \begin{pmatrix} \nabla_q H(q_t, p_t) \\ \nabla_pH(q_t, p_t) \end{pmatrix} - \begin{pmatrix} \mathbf{0} \\ G(q_t)^\top \lambda \end{pmatrix}.
\end{align}
Notice that \cref{eq:first-order-flow} is the first-order term in the Taylor series expansion of the vector field flow in the time variable:
\begin{align}
    \begin{pmatrix} q_\epsilon \\ p_\epsilon \end{pmatrix} = \begin{pmatrix} q_0 \\ p_0\end{pmatrix} + \epsilon \begin{pmatrix} \dot{q}_0\\\dot{p}_0\end{pmatrix} + \mathcal{O}(\epsilon^2).
\end{align}
Therefore, our proof strategy will be to establish that the vector field flow and the numerical integrator agree to first order.

\begin{proof}[Proof of \Cref{lem:manifold-first-order}]
Recall further that the manifold integrator in \cref{alg:manifold-integrator} consists of the following three steps.
\begin{enumerate}
    \item Set $\bar{p}_{n+1/2} = p_{n} - \frac{\epsilon}{2}G(q_n)^\top \mu$.
    \item Compute $(q_{n+1}, \bar{p}_{n+1})$ using \cref{alg:euclidean-single-step} with input $(q_{n}, \bar{p}_{n+1/2})$, step-size $\epsilon$, and skew-symmetric matrix $\mathrm{L}$.
    \item Set $p_{n+1} = \bar{p}_{n+1} - \frac{\epsilon}{2}G(q_{n+1})^\top \mu'$.
\end{enumerate}
From the fact that \cref{alg:euclidean-single-step} is second order from \cref{lem:single-step-order} we have that,
\begin{align}
    \begin{pmatrix} q_{n+1} \\ \bar{p}_{n+1} \end{pmatrix} &= \begin{pmatrix} q_{n} \\ \bar{p}_{n+1/2} \end{pmatrix} + \epsilon \begin{pmatrix} \mathbf{0} & \text{Id} \\ -\text{Id} & -\mathrm{L} \end{pmatrix} \begin{pmatrix} \nabla_q H(q_n, \bar{p}_{n+1/2}) \\ \nabla_pH(q_n, \bar{p}_{n+1/2}) \end{pmatrix} + \mathcal{O}(\epsilon^2) \\
    &= \begin{pmatrix} q_{n} \\ p_{n} - \frac{\epsilon}{2}G(q_n)^\top \mu \end{pmatrix} + \epsilon \begin{pmatrix} \mathbf{0} & \text{Id} \\ -\text{Id} & -\mathrm{L} \end{pmatrix} \begin{pmatrix} \nabla_q H(q_n, p_{n} - \frac{\epsilon}{2}G(q_n)^\top \mu) \\ \nabla_pH(q_n, p_{n} - \frac{\epsilon}{2}G(q_n)^\top \mu) \end{pmatrix} + \mathcal{O}(\epsilon^2) \\
    &= \begin{pmatrix} q_{n} \\ p_{n} \end{pmatrix}  + \epsilon \begin{pmatrix} \mathbf{0} & \text{Id} \\ -\text{Id} & -\mathrm{L} \end{pmatrix} \begin{pmatrix} \nabla_q H(q_n, p_n) \\ \nabla_pH(q_n, p_{n}) \end{pmatrix} - \frac{\epsilon}{2} \begin{pmatrix} \mathbf{0} \\  G(q_n)^\top \mu \end{pmatrix} + \mathcal{O}(\epsilon^2). \label{eq:order-second-step}
\end{align}
Now expanding $G(q_{n+1})$ as a Taylor series in $\epsilon$ shows $G(q_{n+1}) = G(q_n) + \mathcal{O}(\epsilon)$. Therefore,
\begin{align}
    p_{n+1} &= \bar{p}_{n+1} - \frac{\epsilon}{2} G(q_{n+1})^\top \mu' \\\label{eq:order-third-step}
    &= \bar{p}_{n+1} - \frac{\epsilon}{2} G(q_n)^\top \mu' + \mathcal{O}(\epsilon^2).
\end{align}
Combining \cref{eq:order-second-step,eq:order-third-step} yields,
\begin{align}\label{eq:order-combined-steps}
    \begin{pmatrix} q_{n+1} \\ p_{n+1} \end{pmatrix} &= \begin{pmatrix} q_{n} \\ p_{n} \end{pmatrix}  + \epsilon \begin{pmatrix} \mathbf{0} & \text{Id} \\ -\text{Id} & -\mathrm{L} \end{pmatrix} \begin{pmatrix} \nabla_q H(q_n, p_n) \\ \nabla_pH(q_n, p_{n}) \end{pmatrix} - \epsilon \begin{pmatrix} \mathbf{0} \\  G(q_n)^\top \paren{\frac{\mu + \mu'}{2}} \end{pmatrix} + \mathcal{O}(\epsilon^2).
\end{align}
Comparing \cref{eq:order-combined-steps} and \cref{eq:first-order-flow} with $\lambda = \frac{\mu + \mu'}{2}$ shows that the integrator has order at least one.
\end{proof}

\begin{proof}[Proof of \Cref{thm:manifold-second-order}]
From \cref{lem:manifold-first-order} we know that the manifold integrator has order at least one. From \cref{thm:manifold-symmetric-symplectic} we know the manifold integrator is symmetric. However, from \cref{fact:symmetric-order-integration}, symmetric integrators must have even orders. Therefore, \cref{alg:manifold-integrator} has order at least two.
\end{proof}
\newpage
\section{Proof of \Cref{thm:detailed-balance}}\label{app:proof-of-detailed-balance}

\begin{proof}
In this proof, let $z=(q, p)$ and let $H(z) \equiv H(q, p)$. To establish stationarity, it suffices to show that the transition satisfies detailed balance. Let $Z\subset \mathrm{T}^*M$ be a region of the cotangent bundle. Suppose that $Z'$ is the image of $Z$ under $\mathbb{Q}$ when the positive step-size $\epsilon^*$ is randomly chosen. Suppose further that $Z$ is chosen sufficiently small that the value of the Hamiltonian is constant over $Z$ with value $H(Z)$ and over $Z'$ with value $H(Z')$. By virtue of the fact that the integrator is symplectic, we know $\text{Vol}(Z) = \text{Vol}(Z')$. Let $\delta_{\epsilon}(z\to z')$ be the indicator function for the condition that $z$ is transformed to $z'$ under $\mathbb{Q}$ with the integration step-size $\epsilon$. The probability that a randomly generated $z\sim \pi(z)$ will lie in $Z$, that the positive step-size $\epsilon=+\epsilon^*$ is chosen, and that $z$ will subsequently transition from $Z$ to $Z'$ is,
\begin{align}
    &\int_{Z'}\int_Z \frac{\exp(-H(z))}{\mathcal{Z}_H} \cdot \frac{1}{2} \cdot \min\set{1, e^{H(z') - H(z)}}\cdot \delta_{+\epsilon^*}(z\to z') ~\mathrm{d}z\mathrm{d}z' \\
    &\qquad =\frac{\exp(-H(Z))}{\mathcal{Z}_H} \cdot\text{Vol}(Z) \cdot\frac{1}{2} \cdot \min\set{1, e^{H(Z) - H(Z')}} \\
    &\qquad =\frac{\exp(-H(Z'))}{\mathcal{Z}_H} \cdot\text{Vol}(Z') \cdot\frac{1}{2} \cdot \min\set{1, e^{H(Z') - H(Z)}} \\
    &\qquad =\int_{Z}\int_{Z'} \frac{\exp(-H(z'))}{\mathcal{Z}_H} \cdot \frac{1}{2} \cdot \min\set{1, e^{H(z) - H(z')}}\cdot \delta_{-\epsilon^*}(z'\to z) ~\mathrm{d}z'\mathrm{d}z
\end{align}
This last equality is the probability that a randomly generated point $z'\sim \pi(z)$ will lie in $Z'$, that the negative step-size $\epsilon=-\epsilon^*$ is chosen, and that $z'$ will subsequently transition to $Z$. Therefore detailed balance is satisfied, establishing stationarity of $\pi$ for the Markov chain. 

Notice that the random selection of the step-size is necessary for this proof to hold. If $\epsilon$ were fixed (say, $\epsilon=+\epsilon^*$) then there could no guarantee that $Z$ overlaps the image of $Z'$ under $\mathbb{Q}$ (with the positive step-size). In this case, the probability to transition from $Z'$ to $Z$ would be zero making satisfaction of the detailed balance condition impossible.
\end{proof}
\newpage
\section{Symplectic Maps Conserve Volume}\label{app:conservation-of-volume}

This result requires \cref{def:determinant,fact:determinant,def:volume-form,,def:liouville-volume-form,,def:constant-k-form,,def:non-vanishing,,fact:wedge-product-pullback}.

For a closer look at the differential geometry, one might ask, ``In what sense does conservation of the symplectic structure imply conservation of volume?'' 
\begin{theorem}\label{thm:symplectic-volume-preservation}
Transformations that preserve the symplectic structure under pullback preserve the Liouville volume form from \cref{def:liouville-volume-form}.
\end{theorem}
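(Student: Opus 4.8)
The plan is to reduce the statement to two facts already in hand: that pullback distributes over the wedge product (\cref{fact:wedge-product-pullback}), and that a transformation is volume preserving exactly when its determinant is identically one (\cref{fact:determinant}). Everything else is a short algebraic manipulation.

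First I would recall from \cref{def:liouville-volume-form} that the Liouville volume form is $\Lambda = c_m\, \Omega \wedge \cdots \wedge \Omega$ with $m$ copies of $\Omega$, where $c_m = (-1)^{m(m-1)/2}/m!$ is a nonzero constant. Let $\Phi : \mathrm{T}^*M \to \mathrm{T}^*M$ preserve the symplectic structure, i.e.\ $\Phi^*\Omega = \Omega$ in the sense of \cref{def:symplectic-transformation}. Applying \cref{fact:wedge-product-pullback} inductively on the number of wedge factors, together with linearity of pullback over scalar multiples, gives
\begin{align}
    \Phi^*\Lambda = c_m\, (\Phi^*\Omega)\wedge\cdots\wedge(\Phi^*\Omega) = c_m\, \Omega\wedge\cdots\wedge\Omega = \Lambda,
\end{align}
which is precisely the assertion that $\Phi$ preserves the Liouville volume form.

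To connect this with volume preservation in the sense of \cref{def:volume-preserving}, I would then invoke \cref{def:determinant}: since $\Lambda$ is a volume form, hence a nowhere-vanishing top-degree form by \cref{def:volume-form}, the determinant $\det(\Phi)$ is the unique scalar function satisfying $\Phi^*\Lambda = \det(\Phi)\,\Lambda$. Comparing this with the identity $\Phi^*\Lambda = \Lambda$ just obtained, and using that $\Lambda$ is non-vanishing to cancel it, forces $\det(\Phi)\equiv 1$. Then \cref{fact:determinant} yields that $\Phi$ is volume preserving, completing the argument.

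The one place requiring care — and the main, albeit mild, obstacle — is the passage from $\Phi^*\Lambda=\Lambda$ to $\det(\Phi)=1$: this needs $\Lambda$ to be a genuine volume form, i.e.\ nowhere vanishing on $\mathrm{T}^*M$, which ultimately rests on the nondegeneracy of $\Omega$ restricted to the embedded cotangent bundle rather than merely on the ambient space $\R^{2m}$. Once that is granted, the rest of the proof is a direct consequence of the functoriality of pullback with respect to the wedge product and requires no computation.
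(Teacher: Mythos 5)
Your first paragraph reproduces the paper's proof exactly: apply the functoriality of pullback over wedge products (\cref{fact:wedge-product-pullback}) to the definition $\Lambda = c_m\,\Omega\wedge\cdots\wedge\Omega$ and use $\Phi^*\Omega=\Omega$ to conclude $\Phi^*\Lambda=\Lambda$, which is all the theorem asks for. Your subsequent discussion of $\det(\Phi)=1$ is correct but goes beyond the stated claim; it essentially anticipates the follow-up result in the same appendix that connects $\Lambda_\text{mag}$ to $\Lambda_\text{can}$ via \cref{fact:dimension-of-volume-form} and \cref{fact:determinant}.
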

\begin{proof}
The Liouville volume form is defined by,
\begin{align}
    \Lambda \defeq \frac{(-1)^{m(m-1)/2}}{m!} \Omega\wedge \cdots\wedge\Omega
\end{align}
If $\Phi$ is symplectic so that $\Phi^* \Omega = \Omega$, then using \cref{fact:wedge-product-pullback} immediately implies $\Phi^*\Lambda = \Lambda$ so that the volume measure is conserved under $\Phi$.
\end{proof}

\begin{theorem}
Let $\Phi_\text{mag}$ be the magnetic vector field from from \cref{def:magnetic-flow}. Then $\Phi_\text{mag}$ preserves the canonical Liouville volume form $\Lambda_\text{can}$ from \cref{def:liouville-volume-form}.
\end{theorem}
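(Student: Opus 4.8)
The plan is to bootstrap from the symplecticness of $\Phi_\text{mag}$ with respect to the \emph{magnetic} structure, which has already been established, and then to observe that the magnetic and canonical symplectic structures induce \emph{the same} Liouville volume form. Concretely: by \cref{thm:magnetic-conservation}(ii), $\Phi_\text{mag}(\cdot,\cdot;t)$ is a symplectic transformation for $\Omega_\text{mag}$, i.e. $\Phi_\text{mag}^*\Omega_\text{mag} = \Omega_\text{mag}$. The argument of \cref{thm:symplectic-volume-preservation} uses only \cref{fact:wedge-product-pullback} and therefore applies verbatim with $\Omega_\text{mag}$ in place of $\Omega_\text{can}$: if we set
\begin{align}
    \Lambda_\text{mag} \defeq \frac{(-1)^{m(m-1)/2}}{m!}\,\underbrace{\Omega_\text{mag}\wedge\cdots\wedge\Omega_\text{mag}}_{m\text{ copies}},
\end{align}
then $\Phi_\text{mag}^*\Lambda_\text{mag} = \Lambda_\text{mag}$.

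The crux of the proof is then the identity $\Lambda_\text{mag} = \Lambda_\text{can}$. Using \cref{eq:magnetic-wedge-product} from \cref{fact:magnetic-symplectic-structure}, write $\Omega_\text{mag} = \Omega_\text{can} + B$ with $B \defeq \frac{1}{2}\,\mathrm{d}q\wedge\mathrm{L}\,\mathrm{d}q = \sum_{i<j}\mathrm{L}_{ij}\,\mathrm{d}q_i\wedge\mathrm{d}q_j$ and $\Omega_\text{can} = \mathrm{d}q\wedge\mathrm{d}p = \sum_{i=1}^m \mathrm{d}q_i\wedge\mathrm{d}p_i$. Since $\Omega_\text{can}$ and $B$ are both $2$-forms, they commute under the wedge product (a standard consequence of \cref{eq:wedge-skew-symmetry}), so the binomial expansion is valid:
\begin{align}
    \Omega_\text{mag}\wedge\cdots\wedge\Omega_\text{mag} = \sum_{k=0}^m \binom{m}{k}\,\Omega_\text{can}^{\wedge(m-k)}\wedge B^{\wedge k}.
\end{align}
Now every nonvanishing term in $\Omega_\text{can}^{\wedge(m-k)}$ uses exactly $m-k$ distinct coordinate $1$-forms $\mathrm{d}q_i$, whereas every nonvanishing term in $B^{\wedge k}$ uses exactly $2k$ distinct $\mathrm{d}q_i$; for $k\geq 1$ this would require $(m-k)+2k = m+k > m$ distinct indices drawn from $\{1,\dots,m\}$, which is impossible, so by pigeonhole each such product contains a repeated $\mathrm{d}q_i$ and hence vanishes (because $\mathrm{d}q_i\wedge\mathrm{d}q_i = 0$). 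Only the $k=0$ term survives, giving $\Omega_\text{mag}^{\wedge m} = \Omega_\text{can}^{\wedge m}$, hence $\Lambda_\text{mag} = \Lambda_\text{can}$. (Restricting the forms to $\mathrm{T}^*\mathrm{T}^*M$ does not affect this identity.)

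Combining the two parts yields $\Phi_\text{mag}^*\Lambda_\text{can} = \Phi_\text{mag}^*\Lambda_\text{mag} = \Lambda_\text{mag} = \Lambda_\text{can}$, which is the claim. I expect the only delicate point to be the combinatorial bookkeeping in the middle step — verifying that each cross term $\Omega_\text{can}^{\wedge(m-k)}\wedge B^{\wedge k}$ with $k\geq 1$ genuinely vanishes — while the passage from symplecticness to volume preservation is an immediate reuse of \cref{thm:symplectic-volume-preservation}.
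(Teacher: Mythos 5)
Your proof is correct, and the overall architecture matches the paper: both start from $\Phi_\text{mag}^*\Omega_\text{mag} = \Omega_\text{mag}$ (via \cref{thm:magnetic-conservation}(ii) and \cref{thm:symplectic-volume-preservation}) to conclude $\Phi_\text{mag}^*\Lambda_\text{mag}=\Lambda_\text{mag}$, and then transfer this to $\Lambda_\text{can}$. Where you diverge is in how the transfer is justified. The paper invokes \cref{fact:dimension-of-volume-form} — the space of constant top-degree forms on $\R^{2m}$ is one-dimensional — to conclude $\Lambda_\text{can} = c\cdot\Lambda_\text{mag}$ for some nonzero constant $c$, and then pulls the constant through the pullback, never needing to know $c$. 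You instead prove the sharper identity $\Lambda_\text{mag} = \Lambda_\text{can}$ outright (i.e.\ $c=1$) by a binomial expansion of $(\Omega_\text{can}+B)^{\wedge m}$ and a pigeonhole argument showing every cross term with $k\geq 1$ copies of $B=\frac12\,\mathrm{d}q\wedge\mathrm{L}\,\mathrm{d}q$ must repeat a $\mathrm{d}q_i$ and therefore vanish; your counting ($m-k$ distinct $\mathrm{d}q$'s from $\Omega_\text{can}^{\wedge(m-k)}$ plus $2k$ from $B^{\wedge k}$ exceeds $m$) is sound. The paper's route is shorter and requires no combinatorics, at the cost of being non-constructive about $c$; yours is more explicit and incidentally proves the stronger and perhaps clarifying fact that the magnetic term leaves the Liouville volume form \emph{unchanged}, not merely rescaled. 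Both correctly reuse \cref{thm:symplectic-volume-preservation}, and both share the same caveat — which you flag but the paper does not — that the wedge computation is carried out in the ambient $\R^{2m}$ and is then restricted to $\mathrm{T}^*M$.
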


\begin{proof}
From \cref{thm:symplectic-volume-preservation} $\Phi_\text{mag}$ conserves the magnetic Liouville volume form
\begin{align}
    \Lambda_\text{mag} \defeq  \frac{(-1)^{m(m-1)/2}}{m!} \Omega_\text{mag}\wedge \cdots\wedge\Omega_\text{mag}
\end{align}
Now recall \cref{fact:dimension-of-volume-form} which says that the space of volume forms is one-dimensional. Hence any constant (see \cref{def:constant-k-form}), non-vanishing (see \cref{def:non-vanishing}) volume form is proportional to any other constant, non-vanishing volume form. Let $\Lambda_\text{can} = c \cdot\Lambda_\text{mag}$ for some $c\in\R$ with $c\neq 0$. Then,
\begin{align}
    \Phi^*\Lambda_\text{can} &=\Phi^*(c\cdot\Lambda_\text{mag}) \\
    &= c\cdot (\Phi^*\Lambda_\text{mag}) \\
    &= c\cdot \Lambda_\text{mag} \\
    &= \Lambda_\text{can}.
\end{align}
By identification, the determinant from \cref{def:determinant} is $\text{det}(\Phi) = 1$ so that $\Phi$ also conserves volume with respect to $\Lambda_\text{can}$ from \cref{fact:determinant}.
\end{proof}
\newpage
\section{Uniquely Defined Lagrange Multipliers}\label{app:lagrange-multipliers}

\begin{theorem}
Let $M = \set{q \in \R^m : g(q) = 0}$ be a connected manifold such that $G(q)$ has full-rank. Then the Lagrange multipliers $\lambda$ in the equations of motion
\begin{align}
    \frac{\mathrm{d}}{\mathrm{d}t} q_t &= \nabla_p H(q_t, p_t) \\
    \frac{\mathrm{d}}{\mathrm{d}t} p_t &= -\nabla_q H(q_t, p_t) - \mathrm{L}\nabla_p H(q_t, p) - G(q_t)^\top \lambda\label{eq:acceleration} \\
    g(q_t) &= 0
\end{align}
are uniquely defined.
\end{theorem}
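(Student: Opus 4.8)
The plan is to reduce the claim to the invertibility of a single $k\times k$ matrix. Fix a state $(q,p)\in\mathrm{T}^*M$. By \cref{def:cotangent-embedding,def:cotangent-space}, lying on $\mathrm{T}^*M$ is exactly the conjunction of $g(q)=0$ and the velocity constraint $\phi(q,p)\defeq G(q)\nabla_pH(q,p)=0$. A solution $(q_t,p_t)$ of the equations of motion stays on $\mathrm{T}^*M$ if and only if the vector field on the right-hand side is tangent to $\mathrm{T}^*M$ at each point of the trajectory, i.e. if and only if $\frac{\mathrm{d}}{\mathrm{d}t}g(q_t)=0$ and $\frac{\mathrm{d}}{\mathrm{d}t}\phi(q_t,p_t)=0$ along the curve. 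The first of these is automatic and contains no multiplier: $\frac{\mathrm{d}}{\mathrm{d}t}g(q_t)=G(q_t)\dot q_t=G(q_t)\nabla_pH(q_t,p_t)=\phi(q_t,p_t)=0$ (cf.\ \cref{fact:velocity-constraint,fact:velocity-and-hamiltonian}). So $\lambda$ can only be pinned down by the acceleration-level condition $\frac{\mathrm{d}}{\mathrm{d}t}\phi(q_t,p_t)=0$.

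I would then differentiate $\phi$ once more by the chain rule. Writing $\phi_q,\phi_p$ for the Jacobians of $\phi$ in $q$ and $p$, the condition reads $\phi_q(q_t,p_t)\dot q_t+\phi_p(q_t,p_t)\dot p_t=0$, where $\phi_p(q,p)=G(q)\nabla_p^2H(q,p)$. Substituting $\dot p_t$ from \cref{eq:acceleration} and collecting the terms containing $\lambda$ on the left produces the linear system
\begin{align}
    G(q_t)\,\nabla_p^2H(q_t,p_t)\,G(q_t)^\top\,\lambda = \phi_q(q_t,p_t)\dot q_t - G(q_t)\,\nabla_p^2H(q_t,p_t)\bigl(\nabla_qH(q_t,p_t)+\mathrm{L}\nabla_pH(q_t,p_t)\bigr),
\end{align}
whose right-hand side does not involve $\lambda$. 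For the Hamiltonians used throughout this paper, those of the form in \cref{eq:hamiltonian-form}, one has $\nabla_p^2H\equiv\mathrm{Id}_m$ (and the mixed partials $\nabla_q\nabla_pH$ vanish, which is what keeps $\phi_q\dot q_t$ the only surviving non-multiplier term from the $q$-derivative), so the coefficient matrix simplifies to $G(q_t)G(q_t)^\top$.

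The only substantive point is the invertibility of $G(q_t)G(q_t)^\top$, and this is precisely where the full-rank hypothesis on $G$ enters: since $G(q)\in\R^{k\times m}$ has rank $k$ for every $q\in M$, the Gram matrix $G(q_t)G(q_t)^\top\in\R^{k\times k}$ is symmetric positive definite, hence invertible. Therefore the linear system has the unique solution $\lambda=\bigl(G(q_t)G(q_t)^\top\bigr)^{-1}\bigl(\phi_q(q_t,p_t)\dot q_t-G(q_t)(\nabla_qH(q_t,p_t)+\mathrm{L}\nabla_pH(q_t,p_t))\bigr)$, exhibiting $\lambda\equiv\lambda(q_t,p_t)$ as a smooth function of the state that is uniquely determined along the trajectory. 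For completeness I would note the converse: with $\lambda$ chosen this way the right-hand side is tangent to $\mathrm{T}^*M$, so, starting from a consistent initial condition $(q_0,p_0)\in\mathrm{T}^*M$, the standard uniqueness theorem for ODEs guarantees the solution remains on $\mathrm{T}^*M$ and the constraint $g(q_t)=0$ is genuinely maintained --- this is the Dirac-style index reduction referenced in the main text. I do not anticipate any real obstacle; the only care needed is the bookkeeping in the chain-rule expansion of $\frac{\mathrm{d}}{\mathrm{d}t}\phi$.
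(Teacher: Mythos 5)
Your argument is correct and essentially matches the paper's own proof: differentiate the constraint along the flow, substitute the equations of motion, collect the $\lambda$-terms into a linear system with coefficient matrix $G(q_t)G(q_t)^\top$, and invoke full rank of $G$ to invert it. The only cosmetic difference is that you phrase it as differentiating the momentum-level constraint $\phi=G\nabla_pH$ once, whereas the paper differentiates $g(q_t)=0$ twice; for the Hamiltonian in \cref{eq:hamiltonian-form} these are the same computation, since $\tfrac{\mathrm{d}}{\mathrm{d}t}g(q_t)=\phi(q_t,p_t)$.
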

\begin{proof}
Write $g(q)$ in terms of the individual constraint functions by identifying $g(q) = (g_1(q),\ldots, g_k(q))$. By definition, $g(q_t) = 0$ along a solution of the equations of motion. Therefore,
\begin{align}
    \frac{\mathrm{d}}{\mathrm{d}t} g(q_t) = G(q_t)~\dot{q}_t = 0.
\end{align}
Differentiating the constraint twice with respect to time yields,
\begin{align}
    \frac{\mathrm{d}^2}{\mathrm{d}t^2} g(q_t) &= \frac{\mathrm{d}}{\mathrm{d}t} G(q_t)~\dot{q}_t \\
    &= \left[\nabla G(q_t) \cdot\dot{q}_t\right]\dot{q}_t + G(q_t)~\ddot{q}_t \label{eq:double-time-derivative} \\ 
    &= 0
\end{align}
From Hamilton's equations of motion for constrained motion with a separable Hamiltonian $H(q, p) = U(q) + \frac{1}{2} p^\top p$ we make the identifications:
\begin{align}
    p_t &\defeq \dot{q}_t
\end{align}
Using the same notation as in \cite{leimkuhler_reich_2005}, we define the $k$-dimensional vector $g_{qq}\langle p_t,p_t\rangle \defeq \left[\nabla G(q_t) \cdot p_t\right] p_t$ whose $i^\text{th}$ component is given by,
\begin{align}
    (g_{qq}\langle p_t,p_t\rangle)_i &\defeq \paren{\left[\nabla G(q_t)\cdot p_t\right] p_t}_i \\
    &= \sum_{i=1}^k p_t^\top (\nabla^2 g_i(q_t)) p_t.
\end{align}
Using the fact that $\ddot{q}_t = -\nabla U(q_t) - \mathrm{L}p_t - G(q_t)^\top \lambda$ from \cref{eq:acceleration} and $\left[\nabla G(q_t) \cdot\dot{q}_t\right]\dot{q}_t = -G(q_t)~\ddot{q}_t$ from \cref{eq:double-time-derivative} we obtain,
\begin{align}
    & G(q_t) \left[-\nabla U(q_t) - \mathrm{L}p_t - G(q_t)^\top \lambda\right] = -g_{qq}\langle p_t,p_t\rangle \\
    \implies& -G(q_t)G(q_t)^\top\lambda = G(q_t)\nabla U(q_t) + G(q_t)\mathrm{L}p_t -g_{qq}\langle p_t,p_t\rangle \\
    \implies& \lambda = -(G(q_t)G(q_t)^\top)^{-1}\left[G(q_t)\nabla U(q_t) + G(q_t)\mathrm{L}p -g_{qq}\langle p_t,p_t\rangle\right].
\end{align}
The matrix $G(q_t)G(q_t)^\top$ is invertible if $G(q_t)$ has full-rank and therefore $\lambda$ will be uniquely defined.
\end{proof}

\newpage
\section{Strang Splitting}\label{app:strang-splitting}

This result requires \cref{fact:hamiltonian-symplectic,fact:symplectic-composition-group}.

Let $H(q, p)$ be a smooth Hamiltonian. The purpose of a numerical integrator is to approximate the Hamiltonian vector field flow (\cref{def:vector-field-flow}) of $H$ to time $t$, denoted $\Phi(\cdot; t)$.
\begin{definition}[Strang Splitting]
Suppose $H(q, p)$ is a Hamiltonian of the form,
\begin{align}
    H(q, p) = H_1(q, p) + \cdots + H_k(q, p)
\end{align}
and that the Hamiltonian vector field flow $\Phi_i$ for each $H_i(q, p)$ has a closed-form expression. The technique known as Strang splitting constructs a numerical integrator of $\Phi$ via the composition
\begin{align}
    \hat{\Phi} = \Phi_1\circ \cdots\circ \Phi_k.
\end{align}
\end{definition}
{\it An integrator derived from Strang splitting is a composition of exact solutions to Hamilton's equations of motion}. This fact makes it easy to show that the {\it integrator} has certain desirable properties. For instance, they are symplectic.
\begin{lemma}
Strang Splitting Integrators are symplectic.
\end{lemma}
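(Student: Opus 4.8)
The plan is to observe that this is an immediate consequence of two facts already established in the preliminaries: that the flow of a Hamiltonian vector field is a symplectic transformation (\cref{fact:hamiltonian-symplectic}), and that symplectic transformations of $\mathrm{T}^*M$ are closed under composition (\cref{fact:symplectic-composition-group}). First I would recall the definition: given a splitting $H = H_1 + \cdots + H_k$, the Strang splitting integrator is the map $\hat{\Phi} = \Phi_1 \circ \cdots \circ \Phi_k$, where each $\Phi_i$ is the \emph{exact} Hamiltonian vector field flow (to the relevant time) of the sub-Hamiltonian $H_i$ with respect to the fixed symplectic structure $\Omega$ on $\mathrm{T}^*M$.

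Next I would note that, by \cref{fact:hamiltonian-symplectic}, each $\Phi_i$ satisfies $\Phi_i^* \Omega = \Omega$, i.e. each $\Phi_i$ is a symplectic transformation with respect to $\Omega$ (\cref{def:symplectic-transformation}). Then, by \cref{fact:symplectic-composition-group}, the set of all maps $\mathrm{T}^*M \to \mathrm{T}^*M$ preserving $\Omega$ under pullback is a group under composition; in particular it is closed under composition, so the finite composition $\Phi_1 \circ \cdots \circ \Phi_k$ is again symplectic. Hence $\hat{\Phi}^* \Omega = \Omega$, which is precisely the assertion that $\hat{\Phi}$ is symplectic.

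There is essentially no obstacle here: the only point requiring a moment of care is that all of the sub-flows $\Phi_i$ and the composite $\hat{\Phi}$ must be symplectic \emph{with respect to the same} symplectic structure $\Omega$ — this holds because the Hamiltonian vector fields $X_{H_i}$ are all defined via the same $\Omega$ in \cref{def:hamiltonian-vector-field}, so \cref{fact:hamiltonian-symplectic} and \cref{fact:symplectic-composition-group} are both applied relative to that single $\Omega$. (This same argument is exactly what underlies \cref{lem:euclidean-symplectic} in the proof of \cref{lemma:euclidean-symmetric-symplectic}, where $\Phi_1^\epsilon \circ \Phi_2^\epsilon \circ \Phi_1^\epsilon$ is shown symplectic for the magnetic structure.) I would therefore keep the proof to two or three sentences, simply chaining the two cited facts.
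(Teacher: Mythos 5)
Your proposal is correct and matches the paper's proof essentially verbatim: both cite \cref{fact:hamiltonian-symplectic} to see each sub-flow $\Phi_i$ is symplectic and \cref{fact:symplectic-composition-group} to conclude the composition is symplectic. The extra remark about all flows being taken with respect to the same $\Omega$ is a reasonable clarification but not a departure from the paper's argument.
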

\begin{proof}
A composition of Hamiltonian flows is symplectic since each $\Phi_i$ is symplectic from \cref{fact:hamiltonian-symplectic} and the composition of symplectic transformations forms a group from \cref{fact:symplectic-composition-group}.
\end{proof}

The leapfrog integrator can be derived from a Strang splitting argument. Let $H(q, p) = U(q) + \frac{1}{2} p^\top p$ and let $\Omega_\text{can}$ be the symplectic structure (with matrix from \cref{eq:canonical-symplectic-matrix}). Let the splitting of $H$ be
\begin{align}
    H(q, p) = \underbrace{\frac{1}{2} U(q)}_{H_1(q, p)} + \underbrace{\frac{1}{2} p^\top p}_{H_2(q, p)} + \underbrace{\frac{1}{2} U(q)}_{H_1(q, p)}.
\end{align}

\begin{lemma}\label{lem:leapfrog-splitting-i}
The Hamiltonian vector field flow of $H_1$ to time $t$ is
\begin{align}
    (q_0, p_0 - \frac{t}{2} \nabla U(q_0)) = \Phi_1(q_0, p_0; t).
\end{align}
\end{lemma}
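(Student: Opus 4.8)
The plan is to directly integrate Hamilton's equations of motion for the sub-Hamiltonian $H_1(q,p) = \frac{1}{2}U(q)$ under the canonical symplectic structure $\Omega_\text{can}$. First I would invoke the Example following \cref{def:hamiltonian-vector-field}, which identifies the Hamiltonian vector field associated with $\Omega_\text{can}$ (matrix $\mathbb{J}_\text{can}$ from \cref{eq:canonical-symplectic-matrix}) as $\dot{q}_t = \nabla_p H_1(q_t, p_t)$ and $\dot{p}_t = -\nabla_q H_1(q_t, p_t)$. Since $H_1$ depends on $q$ alone, we have $\nabla_p H_1 \equiv 0$ and $\nabla_q H_1(q,p) = \frac{1}{2}\nabla U(q)$, so the equations of motion reduce to $\dot{q}_t = 0$ and $\dot{p}_t = -\frac{1}{2}\nabla U(q_t)$.

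Next I would solve this system explicitly. The first equation forces $q_t \equiv q_0$ to be constant along the trajectory; substituting into the second equation gives $\dot{p}_t = -\frac{1}{2}\nabla U(q_0)$, a constant right-hand side, whose solution is $p_t = p_0 - \frac{t}{2}\nabla U(q_0)$. Evaluating at $t = 0$ yields $(q_0, p_0)$, which confirms the initial condition required by \cref{def:vector-field-flow}, and uniqueness of the flow follows from the standard existence and uniqueness theorem for ordinary differential equations (the right-hand side is smooth because $U$ is assumed smooth), so the solution is globally defined. Hence $\Phi_1(q_0, p_0; t) = (q_0,\, p_0 - \frac{t}{2}\nabla U(q_0))$, as claimed.

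There is no substantial obstacle here; the statement is essentially a one-line computation. The only point requiring mild care is to verify that we are working with the canonical symplectic structure $\Omega_\text{can}$ rather than the magnetic one, so that no term $\mathrm{L}\nabla_p H_1$ enters the equations of motion — although, even if such a term were present, it would vanish identically because $\nabla_p H_1 = 0$. The factor of $\frac{1}{2}$ in the momentum update is a consequence of the Strang splitting having divided the potential energy $U(q)$ into two equal halves, each contributing a half-step momentum kick in \cref{alg:euclidean-single-step}.
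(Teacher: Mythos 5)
Your proof is correct and follows essentially the same route as the paper's: derive the equations of motion for $H_1$ under the canonical symplectic structure, observe that $q$ is constant, and integrate the constant momentum equation to get $p_t = p_0 - \tfrac{t}{2}\nabla U(q_0)$. The extra remarks you include (uniqueness from ODE theory, and that the magnetic term $\mathrm{L}\nabla_p H_1$ would vanish anyway since $\nabla_p H_1 \equiv 0$) are accurate but not needed; the paper simply writes the two equations of motion and integrates.
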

\begin{proof}
The equations of motion (\cref{def:hamilton-equations-of-motion}) of $H_1$ are
\begin{align}
    \dot{q} &= 0 \\
    \dot{p} &= -\frac{1}{2} \nabla_qU(q).
\end{align}
Noting that $q$ is constant during the motion, the flow of these equations of motion is seen to have a closed-form expression as
\begin{align}
    q_t &= q_0 + \int_0^t 0 ~\mathrm{d}s = q_0 \\
    p_t &= p_0 - \int_0^t \paren{\frac{1}{2} \nabla_qU(q_0)} ~\mathrm{d}s = p_0 - \frac{t}{2} \nabla U(q_0).
\end{align}
\end{proof}

\begin{lemma}\label{lem:leapfrog-splitting-ii}
The Hamiltonian vector field flow of $H_2$ to time $t$ is
\begin{align}
    (q_0 + tp, p_0) = \Phi_2(q_0, p_0; t).
\end{align}
\end{lemma}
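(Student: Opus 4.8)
The plan is to follow exactly the template used in the proof of \cref{lem:leapfrog-splitting-i}: instantiate Hamilton's equations of motion for the sub-Hamiltonian $H_2$ under the canonical symplectic structure, observe that the momentum is conserved along the resulting flow, and integrate the position equation in closed form.

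First I would apply \cref{def:hamilton-equations-of-motion} to $H_2(q,p) = \tfrac{1}{2} p^\top p$ with the canonical symplectic structure $\Omega_\text{can}$; as recorded in the example following \cref{eq:canonical-symplectic-matrix}, this yields $\dot q = \nabla_p H_2(q,p)$ and $\dot p = -\nabla_q H_2(q,p)$. Since $H_2$ depends on $p$ alone, we have $\nabla_q H_2(q,p) = \mathbf{0}$ and $\nabla_p H_2(q,p) = p$, so the equations of motion reduce to $\dot q = p$, $\dot p = \mathbf{0}$.

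Next I would solve this system directly. From $\dot p = \mathbf{0}$ the momentum is constant along the motion, i.e.\ $p_t = p_0$ for all $t$. Substituting into $\dot q = p_t = p_0$ and integrating gives $q_t = q_0 + \int_0^t p_0 \,\mathrm{d}s = q_0 + t\, p_0$. Hence the flow is $\Phi_2(q_0, p_0; t) = (q_0 + t\, p_0, \, p_0)$, and the initial condition $\Phi_2(q_0, p_0; 0) = (q_0, p_0)$ holds by inspection, so the map satisfies \cref{def:vector-field-flow}.

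There is essentially no obstacle here; the only point to verify is that the stated closed form does satisfy the defining differential equation and initial condition of \cref{def:vector-field-flow}, which is immediate by differentiation. As a sanity check, this lemma, like \cref{lem:leapfrog-splitting-i}, is the $\mathrm{L} = \mathbf{0}_m$ special case of the closed-form flows $\Phi_1^\epsilon$ and $\Phi_2^\epsilon$ in \cref{lem:euclidean-split-i,lem:euclidean-split-ii} (noting that the magnetic symplectic structure coincides with $\Omega_\text{can}$ when $\mathrm{L} = \mathbf{0}_m$), so it could alternatively be read off from there.
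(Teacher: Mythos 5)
Your proposal is correct and matches the paper's own proof essentially verbatim: derive $\dot q = p$, $\dot p = 0$ from Hamilton's equations for $H_2$ under $\Omega_\text{can}$, observe that $p$ is constant, and integrate to get $q_t = q_0 + t p_0$. The closing remark relating the lemma to the $\mathrm{L} = \mathbf{0}_m$ specialization of \cref{lem:euclidean-split-ii} is a nice sanity check but not part of the paper's argument.
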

\begin{proof}
The equations of motion of $H_2$ are
\begin{align}
    \dot{q} = p \\
    \dot{p} = 0
\end{align}
Noting that $p$ is constant during the motion, the flow of these equations of motion is seen to have a closed-form expression as
\begin{align}
    q_t &= q_0 + \int_0^t p_0 ~\mathrm{d}s = q_0 + tp_0 \\
    p_t &= p_0 - \int_0^t 0 ~\mathrm{d}s = p_0.
\end{align}
\end{proof}

\begin{theorem}
The leapfrog integrator is the Strang splitting composition $\Phi_1(\cdot; t)\circ \Phi_2(\cdot;t)\circ \Phi_1(\cdot;t)$.
\end{theorem}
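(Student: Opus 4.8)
The plan is to compute the composition $\Phi_1(\cdot;t)\circ\Phi_2(\cdot;t)\circ\Phi_1(\cdot;t)$ explicitly using the closed-form flows already established in \cref{lem:leapfrog-splitting-i,lem:leapfrog-splitting-ii}, and then observe that the resulting map is precisely the Störmer--Verlet (leapfrog) update. Concretely, starting from $(q_0,p_0)$, the first application of $\Phi_1(\cdot;t)$ produces $(q_0,\,p_{1/2})$ with $p_{1/2} = p_0 - \tfrac{t}{2}\nabla U(q_0)$; applying $\Phi_2(\cdot;t)$ then leaves the momentum fixed and updates the position to $q_1 = q_0 + t\,p_{1/2}$; and the final application of $\Phi_1(\cdot;t)$ leaves $q_1$ fixed and updates the momentum to $p_1 = p_{1/2} - \tfrac{t}{2}\nabla U(q_1)$. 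Chaining these three steps gives exactly
\begin{align}
    p_{1/2} &= p_0 - \tfrac{t}{2}\nabla U(q_0), \\
    q_1 &= q_0 + t\,p_{1/2}, \\
    p_1 &= p_{1/2} - \tfrac{t}{2}\nabla U(q_1),
\end{align}
which is the standard form of the leapfrog integrator with step-size $t$.

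The key steps, in order, are: (i) recall the definition of the leapfrog integrator in the half-step momentum / full-step position form written above (this is the reference object we must match); (ii) substitute the closed form of $\Phi_1(\cdot;t)$ from \cref{lem:leapfrog-splitting-i} to get the first half-kick; (iii) substitute the closed form of $\Phi_2(\cdot;t)$ from \cref{lem:leapfrog-splitting-ii} to get the drift, noting that it acts on the already-updated momentum $p_{1/2}$; (iv) apply \cref{lem:leapfrog-splitting-i} once more, emphasizing that the gradient in this second half-kick is evaluated at the \emph{new} position $q_1$, not at $q_0$; (v) read off that the composed map coincides term-by-term with the leapfrog update.

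I do not expect a genuine obstacle here; the proof is a direct verification. The only point that requires care is the bookkeeping of which argument the potential gradient is evaluated at in each of the two $\Phi_1$ half-steps: because $q$ is held constant throughout $\Phi_1$ and $p$ is held constant throughout $\Phi_2$, the first kick uses $\nabla U(q_0)$ while the second uses $\nabla U(q_1)$, and it is this asymmetry (inherited from the symmetric splitting $H_1 + H_2 + H_1$) that reproduces the familiar leapfrog scheme rather than a naive Euler step. One should also note in passing that the same argument shows the composition is independent of the choice of sub-Hamiltonian labeling, so that $\Phi_1\circ\Phi_2\circ\Phi_1$ is well defined, and that symplecticness of the leapfrog map is then immediate from the Strang-splitting lemma proved above.
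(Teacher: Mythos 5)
Your proposal is correct and follows essentially the same approach as the paper: both compute the composition $\Phi_1(\cdot;t)\circ\Phi_2(\cdot;t)\circ\Phi_1(\cdot;t)$ explicitly using \cref{lem:leapfrog-splitting-i,lem:leapfrog-splitting-ii} and then identify the result with the three-step leapfrog update. The only cosmetic difference is that the paper collapses the leapfrog update into a single closed-form expression before comparing, whereas you compare step-by-step, which is arguably cleaner bookkeeping.
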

\begin{proof}
Let $(q_0, p_0)\in\R^{2m}$. Recall that the leapfrog integrator to time $t$ is defined as the following series of updates. 
\begin{enumerate}
    \item Compute $p_{t/2} = p_0 - \frac{t}{2} \nabla_qU(q_0)$.
    \item Compute $q_t = q_0 + tp_{t/2}$.
    \item Compute $p_t = p_{t/2} - \frac{t}{2} \nabla_q U(q_1)$.
\end{enumerate}
Collapsing these updates into a single statement gives:
\begin{align}\label{eq:leapfrog-collapse-q}
    q_t &= q_0 + t\paren{p_0 - \frac{t}{2} \nabla_qU(q_0)} \\\label{eq:leapfrog-collapse-p}
    p_t &= p_0 - \frac{t}{2} \nabla_qU(q_0) - \frac{t}{2} \nabla_qU\paren{q_0 + t\paren{p_0 - \frac{t}{2} \nabla_qU(q_0)}}
\end{align}

From \cref{lem:leapfrog-splitting-i,lem:leapfrog-splitting-ii}, we have
\begin{align}
    \Phi_2(\cdot;t) \circ \Phi_1(\cdot; t)(q_0, p_0) &= \Phi_2\paren{q_0, p_0 - \frac{t}{2}\nabla_q U(q_0)} \\
    &= \paren{q_0 + t\paren{p_0 - \frac{t}{2}\nabla_q U(q_0)}, p_0 - \frac{t}{2}\nabla_q U(q_0)}.
\end{align}
Therefore,
\begin{align}\label{eq:strang-leapfrog}
    \Phi_1(\cdot; t)\circ\Phi_2(\cdot;t) \circ \Phi_1(\cdot; t)(q_0, p_0) &= \paren{q_0 + t\paren{p_0 - \frac{t}{2}\nabla_q U(q_0)}, p_0 - \frac{t}{2}\nabla_q U(q_0) - \frac{t}{2} \nabla_q U\paren{q_0 + t\paren{p_0 - \frac{t}{2}\nabla_q U(q_0)}}}
\end{align}
One sees by inspection that \cref{eq:strang-leapfrog} has components equal to \cref{eq:leapfrog-collapse-q,eq:leapfrog-collapse-p}.
\end{proof}
\newpage
\section{Observations on Magnetic HMC}

{\bf Specialization to canonical HMC}. When using the choice $\mathrm{L}=\mathbf{0}_m$, one observes that magnetic manifold HMC reduces to canonical HMC wherein Lagrange multipliers are used to enforce manifold constraints. This is because, when $\mathrm{L}=\mathbf{0}_m$, the unconstrained integrator in \cref{alg:euclidean-single-step} reduces to a standard leapfrog step. Note that for the variety of Hamiltonian we have considered, it is not necessary to use an implicitly defined numerical integrator, which was the approach in \cite{pmlr-v22-brubaker12}.

{\bf Ergodicity of the Markov chain}. There exist pathological cases afflicting canonical HMC which cause it to not be ergodic. For instance, for certain choices of step-size and number of steps, the chain may never move from its initial position regardless of the sampled momentum variable. Refer to \cite{10.5555/1162264,livingstone2019} for a discussion. This issue may be averted by combining HMC with a Metropolis-adjusted Langevin diffusion. We note that for $\mathrm{L}=\mathbf{0}_m$, a single step of manifold HMC is equivalent (in the $q$-variable) to a discretization of Langevin diffusion. Therefore, one can obtain an ergodic Markov chain by interspersing single steps of canonical HMC into steps of magnetic manifold HMC. Since both procedures satisfy detailed balance with respect to $\pi(q, p)$, the combination of the two will also satisfy detailed balance.

\end{document}